\DeclareMathOperator*{\argmax}{arg\,max}
\def\d{\mathrm{d}}
\def\laweq{\buildrel \d \over =}
\newcommand{\E}{\mathbb{E}}
\newcommand{\R}{\mathbb{R}}
\newcommand{\M}{\mathcal{M}}
\newcommand{\N}{\mathbb{N}}
\newcommand{\p}{\mathbb{P}}
\newcommand{\id}{\mathds{1}}
\newcommand{\esssup}{\mathrm{ess\mbox{-}sup}}
\renewcommand{\ge}{\geqslant}
\renewcommand{\le}{\leqslant}
\renewcommand{\geq}{\geqslant}
\renewcommand{\leq}{\leqslant}
\renewcommand{\epsilon}{\varepsilon}
\newcommand{\bb}{\boldsymbol{\beta}}
\theoremstyle{plain}
\newtheorem{theorem}{Theorem}%[section]
\newtheorem{corollary}{Corollary}%[section]
\newtheorem{lemma}{Lemma}%[section]
\newtheorem{proposition}{Proposition}%[section]
\newtheorem{assumption}{Assumption}%[section]
\theoremstyle{definition}
\newtheorem{definition}{Definition}%[section]
\newtheorem{example}{Example}%[section]
\theoremstyle{remark}
\newtheorem{remark}{Remark}%[section]
\theoremstyle{definition}
\renewcommand{\cite}{\citet}
\renewcommand{\cdots}{\dots}
\DeclareMathOperator*{\argmin}{arg\,min}
\begin{document}
%\linenumbers

\title{On Generalization and Regularization via Wasserstein Distributionally Robust Optimization}
\author{Qinyu Wu\thanks{Department of Statistics and Finance,
University of Science and Technology of China, China. E-mail: wu051555@mail.ustc.edu.cn}
\and Jonathan Yu-Meng Li\thanks{Telfer School of Management, University of Ottawa, Ottawa, Ontario K1N 6N5, Canada. E-mail: jonathan.li@telfer.uottawa.ca}
\and Tiantian Mao\thanks{Department of Statistics and Finance,
University of Science and Technology of China, China. E-mail: tmao@ustc.edu.cn}
%\and Taizhong Hu\thanks{Department of Statistics and Finance,
%University of Science and Technology of China, China. E-mail: thu@ustc.edu.cn}
}

 \date{\today}

\maketitle

\begin{abstract}
Wasserstein distributionally robust optimization (DRO) has gained prominence in operations research and machine learning as a powerful method for achieving solutions with favorable out-of-sample performance. Two compelling explanations for its success are the generalization bounds derived from Wasserstein DRO and its equivalence to regularization schemes commonly used in machine learning. However, existing results on generalization bounds and regularization equivalence are largely limited to settings where the Wasserstein ball is of a specific type, and the decision criterion takes certain forms of expected functions. In this paper,  we show that generalization bounds and regularization equivalence can be obtained in a significantly broader setting, where the Wasserstein ball is of a general type and the decision criterion accommodates any form, including general risk measures. This not only addresses important machine learning and operations management applications  but also expands to general decision-theoretical frameworks previously unaddressed by Wasserstein DRO. Our results are strong in that the generalization bounds do not suffer from the curse of dimensionality and the equivalency to regularization is exact. As a by-product, we show that Wasserstein DRO coincides with the recent max-sliced Wasserstein DRO for {\it any} decision criterion under affine decision rules -- resulting in both being efficiently solvable as convex programs via our general regularization results. These general assurances provide a strong foundation for expanding the application of Wasserstein DRO across diverse domains of data-driven decision problems.

\noindent \begin{bfseries}Key-words\end{bfseries}: distributionally robust optimization, Wasserstein metrics, finite-sample guarantees, regularization
\end{abstract}

\section{Introduction}
Stochastic optimization problems of the form
\begin{align}\label{eq-main01}
	\inf_{\bb \in \mathcal D} \rho(Y \cdot \bb^\top \mathbf X)
\end{align}
naturally arise in many machine learning (ML) and operations research (OR) applications, where $Y$ denotes a binary random variable, taking values from $\{-1,1\}$, and $\mathbf X$ is a random vector in ${\mathbb R}^n$. The function $
f_{\bb}(\mathbf X)=\bb^\top \mathbf X$, parameterized by the decision variable $\bb$ $\in \mathcal D\subseteq \R^n$, can generally be interpreted as an affine decision rule.
In ML, the binary random variable $Y$ often represents outcomes in classification problems.   When $Y$ is constant, the formulation \eqref{eq-main01} applies to a broad range of regression problems in ML and risk minimization problems in OR. The function $\rho$ represents a measure of risk, mapping a random variable to a real number that quantifies its risk. The notion of risk in this paper is broadly defined as the undesirability of a random variable $Z$; that is, a larger value of $\rho(Z)$ indicates that $Z$ is less preferable. 

%In ML, the measure $\rho$ typically takes the form of an expected value function, i.e., $ \rho(Z) = \mathbb{E}[\ell(Z)]$,where $\ell$ is a loss function used to penalize regression or classification errors. The expected penalty $\mathbb{E}[\ell(Z)]$ is considered a measure of the risk of prediction errors. In OR applications, the function $\ell$ often represents a disutility function, and the expected disutility is considered a measure of risk. However, in many important ML and OR applications, the measure $\rho$ can go beyond the expected value function. For example, in the $\nu$-support vector machine (\cite{SSWB00}) in ML, or in Conditional Value-at-Risk (CVaR) minimization (\cite{RU02}) in portfolio optimization, the measure $\rho$ is a more general, nonlinear function of the distribution.

In this paper, we study the Wasserstein distributionally robust counterpart of  \eqref{eq-main01}, specifically: 
\begin{align}\label{eq-main02}
	\inf_{\bb \in \mathcal D} \sup_{F\in {\cal B}_p(F_0,\epsilon)} \rho^F(Y \cdot \bb^\top \mathbf X),
\end{align}
where ${\cal B}_p(F_0,\epsilon)$ denotes a type-$p$ Wasserstein ball of radius $\epsilon$ centred at a reference distribution $F_0$ (c.f. \cite{K19}).
%{\color{red}(Use $\overline{\mathcal B}_p(F_0,\epsilon)$?)}. 
The superscript $F$ in $\rho^F$ indicates that the measure depends on a joint distribution $F$ of  $(Y,\mathbf X)$. This problem has been extensively studied and applied in various fields when $\rho^F$ takes the form of an expected value function, i.e., $ \rho^F(Z) = \mathbb{E}^F[\ell(Z)]$ for some $\ell: {\mathbb R} \rightarrow {\mathbb R}$ (see e.g., \cite{K19, SKE19, GCK17, G22}). This setup reveals two key findings that underscore the potential strength of the Wasserstein  distributionally robust optimization (DRO)  model in achieving strong out-of-sample performance. First, the model enjoys generalization bounds under mild assumptions (\cite{SKE19}). More notably, \cite{G22} shows that when the data-generating distribution satisfies transportation inequalities, $O(N^{-1/2})$-generalization bounds for the type-$p$ Wasserstein ball ${\cal B}_p(F_0,\epsilon)$ with $p \in [1,2]$ can be established, free from the curse of dimensionality. Second, the model has an exact regularization equivalent to the nominal problem \eqref{eq-main01} with an added regularizer on the decision variable $\bb$, when the Wasserstein ball is of type-1 and the loss function $\ell$ is Lipschitz continuous (\cite{SKE19}), as well as when the ball is of type-2 and the loss function $\ell$ is quadratic (\cite{BKM19}). This connection to the classical regularization scheme, prevalent in machine learning, offers a compelling interpretation of the DRO model and has spurred considerable interest in its application within ML and OR (see e.g., \cite{BK21}, \cite{BKM19}, \cite{CP18}, \cite{GCK17}, \cite{GCK22}).

It remains largely unclear, however, whether the generalization bounds established for the specific case of the expected value function can be achieved in the broader setting of \eqref{eq-main02} -- namely, for a general measure $\rho$ and any type-$p$ Wasserstein ball ${\cal B}_p(F_0,\epsilon)$ with $p \in [1,\infty]$ -- without suffering from the curse of dimensionality. In particular, the result of \cite{G22}, aside from the limitations imposed by the transportation inequality assumption on the underlying distribution and the restriction to  $p\in [1,2]$,  fundamentally relies on the unbiased property of the sample average -- a property that does not extend to risk measures such as Conditional Value-at-Risk (CVaR) (\cite{RU02}). It is also worth noting that, even in the literature on ML, the question of how to obtain generalization bounds, possibly through the classical regularization scheme, for the nominal problem \eqref{eq-main01} has largely been left open (c.f. \cite{SB14}). In this paper, we shed new light on the efficacy of Wasserstein DRO in broad out-of-sample tests by showing that the Wasserstein DRO model \eqref{eq-main02} can circumvent the curse of dimensionality for virtually any decision criterion under affine decision rules. As a crucial insight, we uncover that the Wasserstein DRO remarkably coincides with the recent max-sliced Wasserstein DRO (\cite{ORVW22}) for {\it any} decision criterion, even though the ball defined in the former is a far less conservative choice -- specifically, it is smaller and tighter than the one defined in the latter. 
Additionally, we extend our results to general decision rules under the light-tail assumption by leveraging the measure concentration property of a Wasserstein ball.

Another focus of this work is to investigate whether the key equivalence of Wasserstein DRO to classical regularization in ML can be extended to a more general setting in \eqref{eq-main02}. To answer this, we first pay particular attention to the setting where the Wasserstein ball ${\cal B}_p(F_0,\epsilon)$ is of any type, i.e., $p \in [1,\infty]$, and the measure $\rho$ is an expected function $ \rho^F(Z) = \mathbb{E}^F[\ell(Z)]$ with a loss function $\ell$ having a growth order $p$. 
%The setting includes, as a special case, the distributionally robust least square regression problem studied in \cite{BKM19} when $p=2$ and the loss function $\ell$ is a square function, along with many other important instances of classification, regression, and risk minimization problems (see Section \ref{rcr}).  
We show that for many loss functions $\ell$ arising from practical applications, it is possible to establish an exact relation between the Wasserstein DRO model \eqref{eq-main02} and the classical regularization scheme in more general terms. Notably, we prove that no loss functions beyond those we identify satisfy this equivalence, offering a definitive answer to the extent to which Wasserstein DRO can be interpreted and solved from a regularization perspective. Our results also reveal the tractability of solving Wasserstein DRO for many non-Lipschitz continuous loss functions $\ell$ and higher-order Wasserstein balls ($p>1$). While \cite{SZZ21} approach such problems via approximation due to reformulation challenges, we show that exact solutions are possible through regularization reformulations. Moreover, we extend this equivalence to settings where $\rho$ goes beyond expected value functions. Previously, this exact relation was known only for variance (\cite{BCZ22}) or distortion risk measures (\cite{W14}). We expand it to a significantly broader class of measures, encompassing higher moment coherent risk measures (\cite{K07}), criteria emerging more recently from OR and ML (\cite{RU13}, \cite{RUZ08}, \cite{GU17}), and notably, general decision-theoretical models such as the celebrated rank-dependent expected utility (RDEU) (\cite{Q82}).

%Our regularization results also reveal the tractability of solving the Wasserstein DRO model for many non-Lipschitz continuous loss functions $\ell$ and higher-order Wasserstein balls, i.e., $p>1$. We note that the recent work of \cite{SZZ21} proposes to solve Wasserstein distributionally robust classification and regression problems for non-Lipschitz continuous loss functions $\ell$ and higher-order Wasserstein balls. However, their focus is on solving these problems through an approximation approach, given the challenges of identifying tractable reformulations for the Wasserstein DRO problems. Our results, in contrast, show the cases that can be solved exactly via regularization reformulations. These findings also enable us to explore the equivalency relationship in settings where the measure $\rho$ extends beyond an expected value function. To our knowledge, in this setting, the exact relation between the Wasserstein DRO model \eqref{eq-main02} and classical regularization has previously been known only for cases where the measure $\rho$ is variance (\cite{BCZ22}) or a distortion risk measure (\cite{W14}). We show that the equivalency relation exists for a significantly larger family of measures, including, for instance, higher moment coherent risk measures (\cite{K07}), other measures emerging more recently from the literature of OR and ML (\cite{RU13}, \cite{RUZ08}, \cite{GU17}), and notably general decision-theoretical criteria such as the celebrated rank-dependent expected utility (RDEU) (\cite{Q82}). 

\subsection*{Related Work}
%As shown in the introduction, our main contributions are in two aspects. The first is that we give generalization bounds for Wasserstein distributionally robust optimization (DRO) without the curse of dimensionality and regardless of the measure $\rho^F$. The second is that we take a systematic approach to study the equivalency between Wasserstein DRO and the regularized versions for many other measures $\rho^F$ including expected functions with the loss functions that are not Lipschitz continuous, and non-expected functions.
%We show a brief literature review about these two aspects below.

\emph{From the perspective of generalization bounds}.
%The first generalization out-of-sample performance guarantee for Wasserstein DRO dates bake to \cite{EK18}. Inspired by the concentration of empirical Wasserstein metric (\cite{FG15}),
%their principle is to find a radius $\epsilon_N$ ($N$ is the sample size) such that the true distribution is contained in the Wasserstein ball with high probability.
%%based on the concentration result of empirical Wasserstein metric (\cite{FG15}).
%Thus, the Wasserstein robust loss can be an upper bound of the true loss for any feasible solutions. But unfortunately, this generalization bound suffers from a curse of dimensionality and converges slowly for problems with high dimensions.
A series of works done by \cite{BKM19}, \cite{BK21}, \cite{BMS22} take a different approach to tackle the curse of dimensionality. They study the classical setting of Wasserstein DRO, where the measure $\rho$ is an expected function, and propose a radius selection rule for Wasserstein balls. They show the rule can be applied to build a confidence region of the optimal solution, and the radius can be chosen in the square-root order as the sample size goes to infinity. Although this allows for bypassing the curse of dimensionality, the bounds derived from the rule are only valid in the asymptotic sense. \cite{BCZ22} also takes this approach to obtain generalization bounds for mean-variance portfolio selection problems. On the other hand, the generalization bounds established in this paper, like those in \cite{SKE19}, \cite{CP18}, and \cite{G22}, break the curse of dimensionality in a non-asymptotic sense for finite samples.

\emph{From the perspective of the equivalency between Wasserstein DRO and regularization}. While the focus of this work is on studying the exact equivalency between Wasserstein DRO and regularization, there is an active stream of works studying the asymptotic equivalence in the setting where the measure $\rho$ is an expected function
(see \cite{GCK17, BKM19, BMS22, VNSDMS18, BDSW20}). In particular, \cite{GCK22} introduce the notion of variation regularization and show that for a broad class of loss functions, Wasserstein DRO is asymptotically equivalent to a variation regularization problem.

%It has been shown that the Wasserstein DRO can be reformulated as a regularized version in some special cases such as \cite{SEK15} for logisitic regression, \cite{BKM19} for linear regression and support vector machines, and \cite{BCZ22} for mean-variance portfolio selection problem. Such exact equivalency also has been established in the case of expected functions with Lipschitz continuous loss functions (\cite{GCK17}, \cite{SKE19}, \cite{EK18}).
%
%According to the belief that the equivalency may not hold for general measure $\rho^F$,
%\cite{GCK17} show an asymptotic equivalency for the classes of smooth loss functions.

\subsection*{Recap of Major Contributions and Practical Implications}
\begin{enumerate}
    \item We underscore the fundamental gap in achieving $O(N^{-1/2})$-generalization bounds for diverse decision criteria in Wasserstein DRO and novelly establish such guarantees by leveraging the projection set of the Wasserstein ball, revealing a broad equivalence with the recent max-sliced Wasserstein DRO for any decision criterion.
    \item By proving the precise conditions under which Wasserstein DRO has an exact regularization equivalent, we provide key insights into a wide range of OR/MS applications, highlighting when a simpler, intuitive, and computationally efficient regularization approach suffices to achieve the generalization guarantees of Wasserstein DRO -- and when solving the full Wasserstein DRO formulation is necessary. 
    \item By extending generalization bounds beyond affine decision rules, we uncover important OR/MS applications, such as foundational two-stage stochastic programs and the increasingly popular feature-based newsvendor problem, with generalization guarantees that scale effectively across diverse decision criteria.
\end{enumerate}

\section{Wasserstein Distributionally Robust Optimization Model} \label{WDRO}\label{rcr}
Let  $(\Omega, \mathcal A,\p)$ be an atomless probability space.  A random vector $\bm\xi$ is a measurable mapping from $\Omega$ to $\R^{n+1}$, $n\in\N$. Denote by $F_{\bm\xi}$ the distribution of  $\bm\xi$ under $\p$. %, {\color{red}and we use $\bm\xi\sim F$ to indicate that $\bm\xi$ follows the distribution $F$ under $\p$.}
For $p\ge 1$, let $L^p:=L^p(\Omega, \mathcal A,\p)$ be the set of all random variables with a finite $p$th moment, and 
$\M_p(\Xi)$ be the set of all distributions on $\Xi\subseteq\R^{n+1}$ with finite $p$th moments  in each component.  Let $L^\infty:=L^\infty(\Omega, \mathcal A,\p)$ be the set of all bounded random variables, and 
$\M_\infty(\Xi)$ be the set of all distributions on $\Xi\subseteq\R^{n+1}$ with bounded support.
Let $q$ denote the H\"{o}lder conjugate of $p$, i.e., $1/p+1/q=1$. We use $\esssup Z$ to represent the essential supremum of the random variable $Z$, i.e., $\esssup Z=\inf\{t: F_Z(t)\ge 1\}$. 
Recall that for any two distributions $F_1 ,\,F_2 \in \M_p(\Xi)$, the type-$p$ Wasserstein metric is defined as
\begin{align} \label{eq:dwasser}
	W_{d,p}\left(F_1, F_2\right)= \inf_{\pi \in \Pi(F_1,F_2)}
	 \big(\mathbb{E}^\pi [d(\bm\xi_1, \bm\xi_2)^p] \big)^{ {1}/{p}},
\end{align}
where  $d(\cdot,\cdot):\Xi\times\Xi\to\R_+\cup\{\infty\}$ is a metric on $\Xi$.
The set $\Pi(F_1,F_2)$ denotes the set of all joint distributions of $\bm\xi_1$ and $\bm\xi_2$ with marginal distributions $F_1$ and $F_2$, respectively.   In this paper, we consider $\bm\xi=(Y,{\bf X})\in \Xi$  with  $\Xi:=\{-1,1\}\times\R^n\subseteq \R^{n+1}$ and  apply the type-$p$ Wasserstein metric \eqref{eq:dwasser} with $d( \bm\xi_1 ,\bm\xi_2 )= d ((Y_1,\mathbf X_1),(Y_2,\mathbf X_2)) $, defined by the following additively separable form
\begin{align}\label{eq-classificationmetric}
	d((Y_1,\mathbf X_1), (Y_2,\mathbf X_2)) =
	\Vert \mathbf X_1-\mathbf X_2 \Vert + \Theta(Y_1-Y_2),
\end{align}
where $\Vert \cdot \Vert$ is any given norm on $\R^n$ with its dual norm $\|\cdot\|_*$ defined by $\|\mathbf y\|_*=\sup_{\|\mathbf x\|\le 1}\mathbf x^\top \mathbf y$. 
%{\color{blue} Throughout the paper, we assume that $\|\mathbf e_i\|=1$ for $i=1,\dots,n$, where $\mathbf e_i$ has $1$ at $i$th position and $0$ elsewhere.}
The function $\Theta: \R \rightarrow \{0,\infty\}$ satisfies $\Theta(s) = 0$ if $s=0$ and $\Theta(s) = \infty$ otherwise. Thus, the function \eqref{eq-classificationmetric} assigns an infinitely large cost to any discrepancy in $Y$, i.e., $Y_1-Y_2 \neq 0$, and reduces to a general norm on $\mathbf X$ when there is no discrepancy in $Y$, i.e., $Y_1-Y_2=0$. Using this norm,  for $F_0\in \mathcal M_p(\Xi)$ and $\epsilon\ge 0$,  we define the ball of distributions $\overline{{\cal B}}_p(F_0,\epsilon)$ as
\begin{align}\label{eq-WU-multi}
	\overline{{\cal B}}_{p}(F_0,\epsilon) =\left\{F\in \M_p(\Xi): W_{d,p}(F,F_0) \le\epsilon\right\},
\end{align}
which we refer to as the type-$p$ Wasserstein ball throughout this paper. 

In the remainder of this paper, we first demonstrate that the DRO problem  \eqref{eq-main02}, using the above definition of the Wasserstein ball:
\begin{align}\label{eq-main05}
\inf_{\bb \in \mathcal{D}} \sup_{F \in \overline{{\cal B}}_{p}(F_0,\epsilon)} \rho^F(Y \cdot \bb^\top \mathbf{X}),
\end{align}
can achieve generalization guarantees for any measure $\rho$ and  type-$p$ Wasserstein ball.
We then elucidate this guarantee for the widely used affine decision rules by exploring the connection between the optimization model \eqref{eq-main05} and the classical regularization scheme in Section \ref{reg}.  This framework covers various problem types, including:
\begin{itemize}
\item {\bf Classification}: 
$$\inf_{\bb\in \mathcal D} \sup_{F\in \overline{{\cal B}}_{p}(F_0,\epsilon)} \rho^F(Y \cdot  \bb^\top \mathbf X),$$
\item {\bf Regression}: 
$$\inf_{\bb_r\in \mathcal D}\sup_{F\in \overline{{\cal B}}_{p}(F_0,\epsilon)} \rho^F((1, - \bb_r)^{\top}\mathbf X),
$$
\item {\bf Risk minimization}:
$$\inf_{\bb\in \mathcal D} \sup_{F\in \overline{{\cal B}}_{p}(F_0,\epsilon)} \rho^F(\bb^\top \mathbf X), $$
\end{itemize}
 where $F_0(\{Y=1\})=1$ in the cases of regression and risk minimization. These formulations accommodate numerous machine learning and operations research applications not yet addressed in the existing literature on Wasserstein DRO. Table 1 highlights examples considered in this paper.

\begin{table}

\caption{Examples of measures $\rho$}\label{tab-R}\def\arraystretch{2}
	\begin{center}
		\resizebox{\textwidth}{!}{
%\small
			\begin{tabular}{llcc}
				\hline
				\textbf{Applications} &    \textbf{Measure}   & \textbf{Formulation} & \textbf{References}\\[8pt]
				\hline
				\textbf{Classification} & \makecell[l]{HO hinge loss\\[5pt] HO SVM\\[5pt] $\nu$-SVM} & \makecell{$\E[(1-Z)_+^s]$\\[5pt] $\E[|1-Z|^s]$\\[5pt] ${\rm CVaR}_\alpha(-Z)$} & \makecell{\cite{LM01}, $s=2$\\[5pt] \cite{SV99}, $s=2$\\[5pt] \cite{SSWB00}; \cite{GU17}}\\[5pt]
				\hline
				\textbf{Regression} & \makecell[l]{HO regression\\[5pt] HO $c$-insensitive\\[5pt] $\nu$-SVR} & \makecell{$\E[|Z|^s]$\\[5pt] $\E[(|Z|-r)^s_+]$\\[5pt] ${\rm CVaR}_\alpha(|Z|)$} & \makecell{the well-known least-square regression for $s=2$\\[5pt] \cite{DBKSV97}, $s=1$; \cite{LHH05}, $s=2$\\[5pt] \cite{S98}}\\[5pt]
				\hline
				\textbf{Risk minimization} & \makecell[l]{LPM\\[5pt] CVaR-Deviation\\[5pt] HM risk measure} &  \makecell{$\E[(Z-c)_+^s]$\\[5pt] ${\rm CVaR}_\alpha(Z-\E[Z])$\\[5pt] $\inf_{t\in\R}\{t+k(\E[(Z-t)_+^s])^{1/s}\}$} & \makecell{\cite{B75}; \cite{F77}; \cite{CSS11}\\[5pt] \cite{RUZ08}\\[5pt] \cite{K07}}\\[5pt]
				\hline
			\end{tabular}
		}
	\end{center}
~~\\
\footnotesize
{\it Notes}: HO: higher-order; SVM: support vector machine; SVR: support vector regression; LPM: lower partial moments;  HM: higher moment; CVaR: conditional Value-at-Risk,  ${\rm CVaR}_{\alpha}(Z)=\int_{\alpha}^1F_{Z}^{-1}(t)\d t/(1-\alpha)$, $Z\in L^1$, where $F_Z^{-1}$ is the left-quantile function of $Z$.
The range of parameters: $s\ge 1$; $\alpha\in(0,1)$; $r\ge 0$; $c\in\R$; $k\ge 1$.

\end{table}

\section{Generalization Bounds} \label{Gen}\label{sec:Gen}
Wasserstein DRO is typically applied in a data-driven setting, where the data-generating distribution $F^*$ of random variables $(Y,\mathbf X) \in\Xi= \{-1,1\} \times {\mathbb R}^n$ is only partially observed through  sample data points $(\widehat{y}_i,\widehat{\mathbf x}_i)$ for $i=1,...,N$, independently drawn from $F^*$. In this context, the empirical distribution $\widehat{F}_N$ is used as the reference distribution $F_0 \in\mathcal M_p(\Xi)$ in Wasserstein DRO, where 
$\widehat{F}_N:= \frac{1}{N} \sum_{i=1}^N \delta_{(\widehat{y}_i,\widehat{\mathbf x}_i)}$  
and  $\delta_{\mathbf x}$ denotes the point-mass at $\mathbf x$. The central question is whether 
the (Wasserstein) in-sample risk
$$\sup_{F\in \overline{{\cal B}}_p(\widehat{F}_N,\epsilon_N)}\rho^{F}(Y \cdot \bb^\top \mathbf X),$$
can provide an upper confidence bound on the out-of-sample risk
$$\rho^{F^*}(Y \cdot \bb^\top \mathbf X) $$
across all possible decisions $\bb \in {\cal D}$, with a radius $\epsilon_N$ that decays at a rate scaling gracefully with the dimensionality of the random vector $(Y,\mathbf X)$ -- in other words, generalization bounds that overcome the  curse of dimensionality. 

Current methods face significant limitations. While generalization bounds based on the measure concentration properties of type-$p$ Wasserstein balls offer flexibility in accommodating general risk measures $\rho$, they are constrained by the curse of dimensionality inherent in high-dimensional Wasserstein balls (see e.g., \cite{EK18}). Approaches such as \cite{G22} address this issue by focusing on expected value functions, i.e., $\rho^F(Z) = \mathbb{E}^F[\ell(Z)]$, leveraging the concentration properties of sample averages. However, extending these methods beyond expected value functions proves challenging, as most risk measures lack analogous concentration properties. 

We take a novel perspective to tackle this challenge by focusing on the structural properties of the projection set:
\begin{align}\label{eq-HONE}
 \overline{\mathcal B}_{p|\bb} (F_0,\epsilon):=\{F_{Y\cdot \bb^\top \mathbf X}: F_{(Y,\mathbf X)}\in\overline{\mathcal B}_p(F_0,\epsilon)\},
\end{align}
where $F_0\in\mathcal M_p(\Xi)$, $\bb\in\R^n$ and $\epsilon\ge 0$.
We uncover that this set coincides with two particularly revealing one-dimensional sets: a one-dimensional Wasserstein ball and the projection set of a high-dimensional max-sliced Wasserstein ball. To formalize this,   for $G_0\in \mathcal M_p(\R^n)$,  we first recall the definition of a type-$p$ Wasserstein ball  on $\R^n$ with the metric $d(\cdot,\cdot)$  being a norm:  %{\footnote{\color{blue} We denote the Wasserstein balls supported on $\{-1,1\} \times \R^n$ and $\R^n$ as $\overline{\mathcal B}_p$ and $\mathcal B_p$, respectively. For simplicity, the notation $F_0$ is used consistently as the reference distribution for both $\overline{\mathcal B}_p$ and $\mathcal B_p$. Specifically, $F_0$ refers to a distribution on $\{-1,1\} \times \R^n$ in $\overline{\mathcal B}_p(F_0, \epsilon)$, and on $\R^n$ in $\mathcal B_p(F_0, \epsilon)$.  \color{purple}\bf Maybe we can use $F_0$ on $\{-1,1\} \times \R^n$  and  $G_0$ on $\R^n$ to distinguish this as both are mentioned in Section 4. I also recall a referee commenting on this issue.}}
% 	\begin{align}\label{eq-WU-multi-d}
% 		{\cal B}_{p}(G_0,\epsilon) =\left\{G\in \M_p(\R^n): W_{p}(G,G_0) \le\epsilon\right\},
% 	\end{align}
% and a type-$p$ max-sliced Wasserstein ball (\cite{ORVW22}):
% \begin{align*}%\label{eq-WU-multi-d2}
% 		{\cal B}_{p}^{\rm ms}(G_0,\epsilon) =\left\{G\in \M_p(\R^n):  {W}_{p}^{\rm ms}(G,G_0) \le\epsilon\right\},
% 	\end{align*}
% where  %$G_0\in \mathcal M_p(\R^n)$, and 
% for $G_1,G_2\in \mathcal M_p(\R^n)$, the Wasserstein distance is defined as:
% \begin{align*}%\label{eq:dwasser-d}
% 		W_{p}\left(G_1,G_2\right)=  \inf_{\pi \in \Pi(G_1,G_2)}
% 		\left(\mathbb{E}^\pi [\|\bm\xi_1- \bm\xi_2\|^p] \right)^{ {1}/{p}},
% 	\end{align*}
% and the max-sliced Wasserstein distance as:  
%   \begin{align*} %\label{eq:dwasser-d}
%  {W}_p^{\rm ms}(G_1,G_2) =\sup_{\bm\beta:\|\bm\beta\|_*=1}\inf_{\pi\in\Pi(G_1,G_2)}\left(\E^{\pi}[|\bm\beta^\top \bm\xi_1-\bm\beta^\top \bm\xi_2|^p]\right)^{1/p}.
% 	\end{align*}
% Here, $\Vert \cdot \Vert$ is the norm defined in \eqref{eq-classificationmetric} on $\R^n$, with its dual norm $\|\cdot\|_*$. Without loss of generality  and to be consistent with the definition of classic one-dimensional Wasserstein metric, for $n=1$, i.e., on $\R$, we assume that $\|\cdot\|=|\cdot|$ is the absolute-value norm.
% %\footnote{\color{red} Assuming the norm on $\R$ is the absolute-value norm is necessary for the equivalence between the projection set and a one-dimensional Wasserstein ball (Proposition \ref{prop-equi}); see Appendix \ref{sec:EC1} for the details in the proof.}
 \begin{align}\label{eq-WU-multi-d}
		{\cal B}_{p}(G_0,\epsilon) =\left\{G\in \M_p(\R^n):W_p(G,G_0)   \le\epsilon\right\} ~~{\rm with}~~W_p(G,G_0):=\inf_{\pi \in \Pi(G,G_0)}
		\left(\mathbb{E}^\pi [\|\bm\xi_1- \bm\xi_2\|^p] \right)^{ {1}/{p}},
	\end{align}
and a type-$p$ max-sliced Wasserstein ball (\cite{ORVW22}):
\begin{align}\label{eq-WU-multi-d2}
		{\cal B}_{p}^{\rm ms}(G_0,\epsilon) =\left\{G\in \M_p(\R^n):  \sup_{\bm\beta:\|\bm\beta\|_*=1}\inf_{\pi\in\Pi(G ,G_0)}\left(\E^{\pi}[|\bm\beta^\top \bm\xi_1-\bm\beta^\top \bm\xi_2|^p]\right)^{1/p} \le\epsilon\right\}.
	\end{align}
% where  %$G_0\in \mathcal M_p(\R^n)$, and 
% for $G_1,G_2\in \mathcal M_p(\R^n)$, the Wasserstein distance is defined as:
% \begin{align*}%\label{eq:dwasser-d}
% 		W_{p}\left(G_1,G_2\right)=  \inf_{\pi \in \Pi(G_1,G_2)}
% 		\left(\mathbb{E}^\pi [\|\bm\xi_1- \bm\xi_2\|^p] \right)^{ {1}/{p}},
% 	\end{align*}
% and the max-sliced Wasserstein distance as:  
%   \begin{align*} %\label{eq:dwasser-d}
%  {W}_p^{\rm ms}(G_1,G_2) =\sup_{\bm\beta:\|\bm\beta\|_*=1}\inf_{\pi\in\Pi(G_1,G_2)}\left(\E^{\pi}[|\bm\beta^\top \bm\xi_1-\bm\beta^\top \bm\xi_2|^p]\right)^{1/p}.
% 	\end{align*}
Here, $\Vert \cdot \Vert$ is the norm defined in \eqref{eq-classificationmetric} on $\R^n$. %, with its dual norm $\|\cdot\|_*$. 
Without loss of generality  and to be consistent with the definition of classic one-dimensional Wasserstein metric, for $n=1$, i.e., on $\R$, we assume that $\|\cdot\|=|\cdot|$ is the absolute-value norm.
%\footnote{\color{red} Assuming the norm on $\R$ is the absolute-value norm is necessary for the equivalence between the projection set and a one-dimensional Wasserstein ball (Proposition \ref{prop-equi}); see Appendix \ref{sec:EC1} for the details in the proof.}
 
With these definitions in place, we can now present the following equivalence results.

\begin{proposition} \label{prop-equi}\label{thm:221010-1}
Suppose that $p\in[1,\infty]$, $\epsilon\ge 0$, $\bm\beta\in \R^n$,  $F_0\in\mathcal M_p(\Xi)$ and $(Y_0,\mathbf X_0)\sim F_0$. Then,  we have 
%\begin{itemize}
%\item[(i)] the following equivalence hold:
\begin{align}\label{eq:221010-1}
		\{ F_{Y\bf X}: F_{(Y,{\bf X})} \in \overline{\cal B}_p (F_0,\epsilon)\}
		%=\left\{F_{\bf Z}: \E[\|{\bf Z}- Y_0{\bf X_0}\|^p]\le\epsilon^p\right\} =:
		={\cal B}_p (F_{Y_0\bf X_0},\epsilon)
\end{align}
and
%\item[(ii)] the following three sets of univariate distributions are equal:
\begin{align}\label{eq:1010-2}
%\mathcal B_1=\{F_{Y \cdot \bm \beta^\top \mathbf X}: F_{(Y,\mathbf X)}\in \overline{\cal B}_p (F_0,\epsilon)\}
 \overline{ \mathcal B}_{p|\bb} ( F_0,\epsilon) =\mathcal B_p\left(F_{Y_0\cdot\bm\beta^{\top}\bf X_0},\epsilon\|\bm\beta\|_*\right) =\{F_{\bm \beta^\top \mathbf Z}: F_{\bf Z}\in  \mathcal B_p^{\rm ms}(F_{Y_0\mathbf X_0},\epsilon)\},
\end{align}
where $\overline{\mathcal B}_{p|\bb}$, ${\cal B}_p  $, and $\mathcal B_p^{\rm ms}$ are defined by  \eqref{eq-HONE}, \eqref{eq-WU-multi-d}, and \eqref{eq-WU-multi-d2}, respectively.
\end{proposition}
% With these definitions in place, we can now present the following equivalence results.

% \begin{proposition} \label{prop-equi}\label{thm:221010-1}
% Suppose that $p\in[1,\infty]$, $\epsilon\ge 0$, $\bm\beta\in \R^n$,  $F_0\in\mathcal M_p(\Xi)$ and $(Y_0,\mathbf X_0)\sim F_0$. Then,  we have 
% %\begin{itemize}
% %\item[(i)] the following equivalence hold:
% \begin{align}\label{eq:221010-1}
% 		\{ F_{Y\bf X}: F_{(Y,{\bf X})} \in \overline{\cal B}_p (F_0,\epsilon)\}
% 		%=\left\{F_{\bf Z}: \E[\|{\bf Z}- Y_0{\bf X_0}\|^p]\le\epsilon^p\right\} =:
% 		={\cal B}_p (F_{Y_0\bf X_0},\epsilon)
% \end{align}
% and
% %\item[(ii)] the following three sets of univariate distributions are equal:
% \begin{align}\label{eq:1010-2}
% %\mathcal B_1=\{F_{Y \cdot \bm \beta^\top \mathbf X}: F_{(Y,\mathbf X)}\in \overline{\cal B}_p (F_0,\epsilon)\}
%  \overline{ \mathcal B}_{p} (\bb, F_0,\epsilon) =\mathcal B_p\left(F_{Y_0\cdot\bm\beta^{\top}\bf X_0},\epsilon\|\bm\beta\|_*\right) =\{F_{\bm \beta^\top \mathbf Z}: F_{\bf Z}\in  \mathcal B_p^{\rm ms}(F_{Y_0\mathbf X_0},\epsilon)\},
% \end{align}
% where $ \overline{\mathcal B}_{p|\bb}( F_0,\epsilon)$  is defined by  \eqref{eq-HONE}.
% % and
% % \begin{align*}
% % \mathcal B_3=\{F_{\bm \beta^\top \mathbf Z}: \overline{W}_p(F_{\mathbf Z},F_{Y_0\cdot\mathbf X_0})\le\epsilon\}.
% % \end{align*}
% %\end{itemize}
% \end{proposition}

The equivalence established above has profound implications for both Wasserstein DRO and max-sliced Wasserstein DRO. It shows that the two frameworks coincide for \emph{any} decision criterion $\rho$ under affine decision rules, both reducing to the problem of minimizing worst-case risk over a one-dimensional type-$p$ Wasserstein ball -- an inherently more tractable problem, as detailed in Section \ref{reg}. Notably, this equivalence reveals that, although the Wasserstein distance is a significantly stronger metric than the max-sliced Wasserstein distance, i.e., the type-$p$ Wasserstein ball is a much less conservative choice of an uncertainty set than its max-sliced counterpart (\cite{ORVW22}), Wasserstein DRO can still achieve the same generalization bounds. This insight directly leads to the following generalization bounds for general measures $\rho$, free from the curse of dimensionality.

\begin{theorem}\label{eq-UGB}\label{th-FSG}
Given $p\in[1,\infty)$, $\eta\in(0,1)$ and $\mathcal D\subseteq \R^n$,
%, and let $\rho:L^p\to\R$ be a general measure.
if $\Gamma:=\E^{F^*}[\|\mathbf X\|^s]<\infty$ for some $s>2p$, %By setting the radius {\color{red},}
then we have
	\begin{align*}
		\p\left(\rho^{F^*}(Y\cdot\bm \beta^\top\mathbf X)\le \sup_{F\in \overline{{\cal B}}_p(\widehat{F}_N,\epsilon_{p,N}(\eta))}\rho^{F}(Y\cdot\bm \beta^\top\mathbf X),~~\forall \bm\beta\in\mathcal D\right)\ge 1-\eta,
	\end{align*}
where $\epsilon_{p,N}(\eta)^p=C\log(2N+1)^{p/s}/\sqrt{N}$ and $$C=2^p p\left(180\sqrt{n+2}+\sqrt{2\log\left(\frac{3}{\eta}\right)}+\sqrt{\frac{3\Gamma}{\eta}}\frac{8}{s/2-p}\sqrt{\log{\left(\frac{24}{\eta}\right)}+2(n+2)}\right).$$
\end{theorem}

The decay rate of the radius is approximately of the order $N^{-1/(2p)}$. While this rate is unaffected by dimensionality, its dependency on the type of Wasserstein ball, specifically the parameter $p$, is noteworthy. This dependency, often overlooked in the literature compared to the curse of dimensionality, is potentially concerning, as it suggests that the radius may shrink very slowly for higher-order Wasserstein balls. This issue arises from the use of stronger metrics -- specifically, norms raised to higher powers -- when defining any ball constructed similarly to the Wasserstein ball,  including the max-sliced Wasserstein ball (\cite{ORVW22}). Intuitively, a higher-order Wasserstein ball is smaller than a lower-order one, requiring a larger radius to ensure the data-generating distribution is covered with high probability. In the extreme case of a type-$\infty$ Wasserstein ball, it would never contain a data-generating distribution with unbounded support, regardless of the radius. 

%This also explains why, to date, there are no finite-sample guarantees for Wasserstein DRO with a type-$\infty$ Wasserstein ball, when the data-generating distribution has unbounded support. {\color{red}(Are you confident on this point?)}

We circumvent this issue, which we refer to as the ``curse of the order $p$", by recognizing that ensuring high-probability coverage of the data-generating distribution, while sufficient, is not necessary for developing generalization bounds.  In fact, we show that even for the type-$\infty$ Wasserstein ball -- where the probability of covering a data-generating distribution with unbounded support is effectively zero -- the worst-case risk from the type-$\infty$ ball can still bound the out-of-sample risk of the data-generating distribution with high probability. This guarantee requires only the following mild assumption about the measure $\rho$.

\begin{assumption}\label{ass:order1}
There exists $\lambda\ge 1$ such that 
\begin{align*}
\sup_{|V|\le \lambda\epsilon}\rho(Z+V)\ge \sup_{\E[|V|]\le \epsilon}\rho(Z+V),~~\forall Z\in L^1,~\epsilon\ge 0.
\end{align*}
\end{assumption}

%As detailed {\color{red}in Appendix \ref{sec:EC1}}, 

%{\color{red}From Theorem \ref{co-Lnonunion} below,}
 For any measures satisfying Assumption \ref{ass:order1}, one can leverage the generalization bounds for the type-1 Wasserstein ball to derive similar bounds for the  type-$\infty$ Wasserstein ball, with a radius scaled by a constant $\lambda$. These bounds apply to any type-$p$ Wasserstein ball, for $p \in [1,\infty]$, resulting in generalization bounds where the decay rate of the radius  is no longer dependent on the order $p$.

\begin{theorem}\label{co-Lnonunion}
Let $p\in[1,\infty]$, $\eta\in(0,1)$ and $\mathcal D\subseteq \R^n$.
%, and let $\rho:L^p\to\R$ be a general measure.
Suppose that $\Gamma:=\E^{F^*}[\|\mathbf X\|^s]<\infty$ for some $s>2$ and Assumption \ref{ass:order1} holds. Then, we have
	\begin{align*}
		\p\left(\rho^{F^*}(Y\cdot\bm \beta^\top\mathbf X)\le \sup_{F\in \overline{{\cal B}}_p(\widehat{F}_N,\lambda\epsilon_{1,N}(\eta))}\rho^{F}(Y\cdot\bm \beta^\top\mathbf X),~~\forall \bm\beta\in\mathcal D\right)\ge 1-\eta,
	\end{align*}
where $\epsilon_{1,N}(\eta)$ is defined in Theorem \ref{eq-UGB}.
\end{theorem}

The core idea presented in this section can be extended beyond affine decision rules, as discussed in greater detail in Section \ref{beyond}. Here, we further illustrate the generality of Assumption \ref{ass:order1} by providing additional examples.

\begin{example}[Expected function]
Let  $\rho(Z)=\E[\ell(Z)]$, % ， % be an expected function, which has the form  
%where  $\ell:\R\to\R$ satisfies $\ell(x)+c_1\epsilon\le \max\{\ell(x+\epsilon), \ell(x-\epsilon)\}\le \ell(x) +c_2\epsilon$ for all $x$ and $\epsilon>0$ and some $0<c_1\le c_2$.
where $\ell:\R\to\R$ is   quasi-convex and Lipschitz continuous with Lipschitz constant ${\rm Lip}(\ell)$,
and satisfies, for any differentiable point $z\in\R$, $|\ell'(z)|\ge M>0$.  
 By $\ell$ is quasi-convex, there exists  $\alpha\in[-\infty,\infty]$ such that $\ell$ is decreasing on $(-\infty,\alpha)$ and increasing otherwise, and thus $\ell$ is differentiable almost everywhere. It follows that 
\begin{align*}
   \sup_{|V|\le \epsilon}\E[\ell(Z+V)]
 &= \E[\ell(Z-\epsilon)\id_{\{Z<\alpha\}}]+\E[\ell(Z+\epsilon)\id_{\{Z\ge \alpha\}}] \\
 &\ge \E[(\ell(Z)+M \epsilon)\id_{\{Z<\alpha\}}]+ \E[(\ell(Z)+M \epsilon)\id_{\{Z\ge\alpha\}}]
 =\E[\ell(Z)]+M \epsilon.  
\end{align*}  
%where the second inequality holds because $|\ell'(z)|\ge M$ for any derivable point $z\in\R$, and $\ell$ is decreasing on $(-\infty,\alpha)$ and increasing otherwise.
This, together with     
$ 
\sup_{\E[|V|]\le \epsilon}\E[\ell(Z+V)]\le \sup_{\E[|V|]\le \epsilon} \E[\ell(Z) +{\rm Lip}(\ell) |V|] \le \E[\ell(Z)]+ {\rm Lip}(\ell)\epsilon 
$ by the Lipschitz continuity of $\ell$, implies
 $\rho$ satisfies Assumption \ref{ass:order1} with $\lambda={\rm Lip}(\ell)/M $. 
%\hfill\Halmos
% One can verify that for any $\epsilon>0$, 
% \begin{align}\label{eq:1010-1}
% \sup_{\E[|V|]\le \epsilon}\E[\ell(Z+V)]\le    \E[\ell(Z)]+ {\rm Lip}(\ell)\epsilon ~~{\rm and}~~\sup_{|V|\le \epsilon}\E[\ell(Z+V)]\ge
%  \E[\ell(Z)]+M \epsilon,
% \end{align}
% which implies that  $\rho$ satisfies Assumption \ref{ass:order1} with $\lambda={\rm Lip}(\ell)/M$.
In particular, the function $\ell$, defined as $\ell(x)  = a (x-c)_+ + b(x-c)_-$ with $a>0$, $b,c\in\R$, satisfies Assumption \ref{ass:order1} with $\lambda=\max \{a,|b|\}/\min\{a,|b|\}$.
\end{example}

\begin{example}[Risk measure]
Suppose that $\rho: L^1\to\R$ satisfies monotonicity and translation invariance, i.e.,  $\rho(Z_1) \leq \rho(Z_2)$ whenever $Z_1 \leq Z_2$, and $\rho(Z + m) = \rho(Z) + m$ for all $Z \in L^1$ and $m \in \mathbb{R}$. Such functionals are referred to as monetary risk measures (see, e.g., \cite{FS16}). Further,  assume that $\rho$ is Lipschitz continuous with respect to the $L^1$-norm, i.e., there exists $M > 0$ such that  $|\rho(Z_1) - \rho(Z_2)| \leq M \mathbb{E}[|Z_1 - Z_2|]$ for all $Z_1, Z_2 \in L^1$. %Notably,  CVaR$_\alpha$ satisfies this condition with $M=1/(1-\alpha)$. %The Lipschitz continuity of $\rho$ with respect to other norms will also be discussed in Assumption \ref{assum-4}.
Note that for any $\epsilon>0$,
$ 
\sup_{\E[|V|]\le \epsilon}\rho(Z+V)\le \sup_{\E[|V|]\le \epsilon}\{\rho(Z)+M\E[|V|]\}=\rho(Z)+M\epsilon.
$  
This, together with  $\sup_{|V|\le \epsilon}\rho(Z+V)\ge  \rho(Z+\epsilon)=\rho(Z)+\epsilon$,     yields that $\rho$ satisfies Assumption \ref{ass:order1} with $\lambda=M$. In particular, CVaR, as defined in Table \ref{tab-R} by ${\rm CVaR}_{\alpha}(Z)=\int_{\alpha}^1F_{Z}^{-1}(t)\d t/(1-\alpha)$, $Z\in L^1$, satisfies Assumption \ref{ass:order1} with $\lambda=1/(1-\alpha)$, where $F_Z^{-1}$ is the quantile function of $Z$.
\end{example}

%%%%%%%%%%%%%%%%%%%%%%%%%%%%%%%%%%%%%%%%%%%%%%%%%%%%%%%%%%
\section{A Regularization Perspective} \label{reg}
Regularization generally refers to any means applied to avoid overfitting and enhance generalization. In this regard, with the generalization guarantees established in the previous section, the Wasserstein DRO model \eqref{eq-main05} is well justified as a general regularization model. In this section, we provide further insights into the regularization effect of model \eqref{eq-main05} on the decision variable $\bb$, offering a practically useful alternative interpretation. Specifically, we show that the previously observed equivalence between the Wasserstein DRO model and regularized empirical optimization in ML holds across much broader settings, while precisely identifying the boundary of its validity. This equivalence significantly broadens the range of both Wasserstein and max-sliced Wasserstein DRO problems that can now be solved efficiently through their regularization counterparts. Proposition \ref{thm:221010-1} plays a key role in facilitating this analysis. By \eqref{eq:221010-1} in Proposition \ref{thm:221010-1}, the Wasserstein DRO model \eqref{eq-main05} can be recast as
\begin{align}\label{eq-main5}
	\inf_{\bb \in \mathcal D} \sup_{F\in \mathcal  B_p(G_0,\epsilon)}\ \rho^F(\bb^\top \mathbf Z),
\end{align}
where $G_0\in\mathcal M_p(\R^n)$, and $\mathcal  B_p(G_0,\epsilon)$ is a Wasserstein ball on $\R^n$ defined by \eqref{eq-WU-multi-d}. %\footnote{\color{blue} Note that   $F_0$ in \eqref{eq-main05} represents a distribution on $\{-1,1\}\times\R^n$ while  $F_0$ in \eqref{eq-main5}  represents a distribution on $\R^n$. We slightly abuse the notation here for simplicity.
%}
%In the remainder of this section, we will proceed with \eqref{eq-main5}. We address first the cases where the measure $\rho^F$ takes the form of an expected function and then more general cases. The proofs of all the results in this section are deferred to the appendix.

%In particular, it is known that in some settings Wasserstein DRO is equivalent to a regularized empirical optimization model in ML, with a norm regularizer capturing the regularization effect on the decision variable $\bb$. We show that a similar equivalence relationship can be established more generally for many instances of the model \eqref{eq-main05}, such as those discussed in Section \ref{rcr}. In addition to explaining the regularization effect, the equivalence relationship reveals also how these instances, previously unknown for their tractability, can be solved efficiently via regularized empirical optimization models. Our results are general in that not only do they unify all the previously known equivalence relationships, they also provide necessary conditions under which such an equivalence relationship can exist. 
%%%%%%%%%%%%%%%%%%%%%%%%%%%%%%%%%%%%%%%%%%%%
\subsection{The case of expected function}\label{subsec:EU}
When $\rho$ is an expected function, the Wasserstein DRO model \eqref{eq-main5}:
\begin{align}\label{eq-main-ef}
	\inf_{\bb \in \mathcal D} \sup_{F\in \mathcal  B_p(G_0,\epsilon)} \mathbb{E}^F[\ell(\bb^\top \mathbf Z)],
\end{align}
is equivalent to the regularized model:
\begin{align}\label{eq-main21}
	\inf_{\bb \in \mathcal D} \left\{\mathbb{E}^{G_0}[\ell(\bb^\top \mathbf Z)] + {\rm Lip(\ell)}\epsilon\Vert\bm\beta\Vert_*\right\},
\end{align}
for $p=1$ (the type-1 Wasserstein ball), where ${\rm Lip(\ell)}$ is the Lipschitz constant of $\ell$. For higher-order Wasserstein balls ($p > 1$) or non-Lipschitz loss functions, this relationship is less understood, except for $p = 2$ and $\ell(x) = x^2$ (\cite{BKM19}). It remains an open question whether \eqref{eq-main-ef} can be tractably solved in these cases. Higher-order Wasserstein balls are less conservative than type-1 Wasserstein balls and offer practical appeal.

We show that an equivalence with a regularized model \eqref{eq-main21} exists for $p > 1$ if and only if $\ell$ is linear or takes the form of an absolute function.

%In the case where the Wasserstein ball is of a higher-order, i.e., $p>1$, and/or the loss function $\ell$ is not Lipschitz continuous, the relationship between the model \eqref{eq-main-ef} and a regularized model is largely unknown except the special case of $p=2$ and $\ell(y) = y^2$ (\cite{BKM19}). It remains largely open also whether \eqref{eq-main-ef} in this case can be tractably solved. Higher-order Wasserstein balls can be attractive from a practical standpoint, given that they are less conservative than the type-1 Wasserstein ball. It is natural to wonder whether there exists an equivalence relationship between the model \eqref{eq-main-ef} and a regularized model of the form \eqref{eq-main21} in the case of a higher-order Wasserstein ball, i.e., $p>1$. We show below that such an equivalence relationship exists if and only if the loss function $\ell$ is linear or takes the form of an absolute function.

\begin{theorem}\label{th-RegularizationEL}
	Let $\ell:\R\to\R$ be a convex function. For $p\in(1,\infty]$, suppose that $\E[|\ell(Z)|]<\infty$ for all $Z\in L^p$.
	Then the following statements are equivalent.
	%\begin{itemize}
	%\item[(i)] For $p=1$, $\E[\ell(\cdot)]: L^1\to \R$ satisfies regularization for any convex function $\ell$, and we have
	%    \begin{align}\label{eq-ELp=1}
		%    \sup_{F\in {\cal B}_1(F_0,\epsilon)}\E^F[\ell(X)]=\E^{F_0}[\ell(X)]
		%    +\epsilon\sup_{x\in\R}|\ell_-'(x)|,~~F_0\in\mathcal M_1.
		%    \end{align}
	%
	%%\item[(ii)] If $p=\infty$, then $\E[\ell(\cdot)]: L^\infty\to \R$ satisfies regularization if and only if $\ell(x)=$.
	%%
	%%
	%%\item[(iii)] If $p\in(1,\infty)$, then $\E[\ell(\cdot)]: L^p\to \R$ satisfies regularization if and only if $\ell(x)=$.
	%\item[(ii)] For $p\in(1,\infty]$, the following statements are equivalent.
	\begin{itemize}
		\item[(i)]  There exists $C>0$ such that for any $G_0\in\mathcal M_p(\R^n)$, $\epsilon\ge 0$ and $\mathcal D\subseteq \R^n$, it holds that
		\begin{align}\label{eq-EL1eq}
		\inf_{\bm\beta\in \mathcal D} \sup_{F\in \mathcal  B_p(G_0,\epsilon)} \mathbb{E}^F[\ell(\bb^\top \mathbf Z)]=\inf_{\bm\beta\in \mathcal D} \left\{\mathbb{E}^{G_0}[\ell(\bb^\top \mathbf Z)] + C\epsilon \Vert\bm\beta\Vert_*\right\}.
		\end{align}
		%$\E[\ell(\cdot)]: L^p\to \R$ satisfies regularization.
		
		\item[(ii)] The function $\ell$ takes one of the following two forms, multiplied by $C$:
		
		\begin{itemize}
		\item[(a)]  $\ell_1(x)=  x+b$ or $\ell_1(x)=- x+b$ with some $b\in\R$;
		
		\item [(b)]$\ell_2(x)= |x-b_1|+b_2$  with some $b_1,b_2\in \R$.
		\end{itemize}
		%		\item[(iii)] The function $|\ell'_-(\cdot)|: \R\to\R$ equals to $C$ on $\R$.
	\end{itemize}
	%Moreover, the regular multipliers of $\E[\ell_1(\cdot)]$ and $\E[\ell_2(\cdot)]$, which are defined in (b), are $|a_1|$ and $a_2$, respectively.
	%\end{itemize}
\end{theorem}

This result, which holds for any type-$p$ Wasserstein ball, is somewhat surprising as the Wasserstein DRO model is equivalent to the same regularized model, regardless of the order $p$. This equivalence occurs when the slope of the loss function $\ell$ takes values only from a constant and its negative. It further indicates that no regularized model in the form of \eqref{eq-main21} can be derived for any other loss function $\ell$. In other words, if there is an equivalence between the Wasserstein DRO model \eqref{eq-main-ef} and a regularized model for another loss function, the regularized model must take a different form from \eqref{eq-main21}. Before discussing other forms of regularization, it is worth noting the following application of this result in regression.

\begin{example} {\bf (Regression)}
	~
 
{\bf - (Least absolute deviation (LAD) regression)}
	Applying $\ell_2(x)=C|x-b_1|+b_2$ in Theorem \ref{th-RegularizationEL} and setting $b_1=0$, $b_2=0$, and $C=1$, we arrive at the distributionally robust counterpart of the least absolute deviation regression, i.e.,
	$ 
		\inf_{\bb_r\in \mathcal D}\sup_{F\in {\cal B}_{p}(G_0,\epsilon)}\E^F[|(1, - \bb_r)^{\top}\mathbf X|].
	$ 
	It is equivalent to
	$ 
		\inf_{\bm\beta_r\in \mathcal D} \{\E^{G_0}[|(1, - \bb_r)^{\top}\mathbf X|] + \epsilon \Vert (1, -\bb_r)\Vert_* \} 
	$ 
	for any $p \geq 1$.
\end{example}

We now explore whether an alternative form of regularization on $\bb$ exists that is equivalent to the Wasserstein DRO model \eqref{eq-main-ef}. Specifically, we focus on cases where the loss function $\ell$ is not Lipschitz continuous, such as higher-order loss functions arising from the 
examples presented in Table \ref{tab-R}. It is known that the worst case expectation problem based on the type-1 Wasserstein ball can become unbounded when the loss function is not Lipschitz continuous. To address cases beyond Lipschitz continuity, we consider the following formulation of  
   Wasserstein DRO model \eqref{eq-main-ef}:
\begin{align}\label{eq-main12}
	\inf_{\bb\in \mathcal D} \sup_{F\in \mathcal  B_p(G_0,\epsilon)} \mathbb{E}^F[\ell^p(\bb^\top \mathbf Z)],
\end{align}
where $\ell^p$ denotes the function $\ell$ raised to the power of $p > 1$, matching the order of the type-$p$ Wasserstein ball. Below, we present an alternative equivalency relationship and specify the exact cases where it holds.

\begin{theorem}\label{th-highorderloss} \label{th-necessity-highorderloss}
	% Suppose that $\ell$ has one of the following four forms:
	
	% \begin{itemize}
	% \item [(a)]$\ell_1(x)=(x-b)_+$ with some $b\in\R$;
	
	% \item [(b)]$\ell_2(x)=(x-b)_-$ with some $b\in\R$;
	
	% \item [(c)]$\ell_3(x)=(|x-b_1|-b_2)_+$ with some $b_1\in\R$ and $b_2\ge 0$;
	
	% \item [(d)]$\ell_4(x)=|x-b_1|+b_2$ with some $b_1\in\R$ and $b_2>0$.
	% \end{itemize}
	% Then for any $p\in(1,\infty)$, $F_0\in\mathcal M_p(\R^n)$, $\epsilon\ge 0$ and $\mathcal D\subseteq \R^n$, we have
	% $$
	% \inf_{\bb\in \mathcal D} \sup_{F\in \mathcal  B_p(F_0,\epsilon)} \mathbb{E}^F[\ell^p(\bb^\top \mathbf Z)]=\inf_{\bb\in \mathcal D} \left(\left(\mathbb{E}^{F_0}[\ell^p(\bb^\top \mathbf Z)]\right)^{1/p} + \epsilon\|\bb\|_*\right)^p.
	% $$
		Let $\ell:\R\to\R_+$ be a  Lipchitz continuous and convex function. For any  $p\in(1,\infty)$, the following statements are equivalent.
		\begin{itemize}
				\item[(i)] There exists $C>0$ such that for any $G_0\in\mathcal M_p(\R^n)$, $\epsilon\ge 0$ and $\mathcal D\subseteq \R^n$,  it holds that
				\begin{align}\label{eq-EL2eq}
				\inf_{\bb\in \mathcal D} \sup_{F\in \mathcal  B_p(G_0,\epsilon)} \mathbb{E}^F[\ell^p(\bb^\top \mathbf Z)]=\inf_{\bb\in \mathcal D} \left(\left(\mathbb{E}^{G_0}[\ell^p(\bb^\top \mathbf Z)]\right)^{1/p} + C\epsilon||\bb||_*\right)^p.
				\end{align}
				
				\item[(ii)] The function $\ell$ takes one of the following four forms,  multiplied by $C$:
				\begin{itemize}
			\item [(a)]$\ell_1(x)=(x-b)_+$ with some $b\in\R$;
	
	\item [(b)]$\ell_2(x)=(x-b)_-$ with some $b\in\R$;
	
	\item [(c)]$\ell_3(x)=(|x-b_1|-b_2)_+$ with some $b_1\in\R$ and $b_2\ge 0$;
	
	\item [(d)]$\ell_4(x)=|x-b_1|+b_2$ with some $b_1\in\R$ and $b_2>0$.
			\end{itemize}
				% -$\ell_1(x)=C(x-m)_+$ with some $m\in\R$;
				
				% -$\ell_2(x)=C(x-m)_-$ with some $m\in\R$;
				
				% -$\ell_3(x)=C(|x-m_1|-m_2)_+$ with some $m_1\in\R$ and $m_2\ge 0$;
				
				% -$\ell_4(x)=C|x-m|+b$ with some $m\in\R$ and $b>0$.
			\end{itemize}
\end{theorem}

This result is also an ``impossibility" theorem, providing a definitive answer to whether the equivalence relationship can hold more broadly for other loss functions. It settles any attempts to establish this equivalence for other convex Lipschitz continuous functions $\ell: \R \rightarrow \R_+$.  This theorem is fundamentally important to the study of Wasserstein DRO, especially given ongoing efforts to understand its relationship with a classical regularization perspective. It shows precisely the limits of how far this perspective can be extended.

\begin{remark} \label{re-app}
As previously discussed, Wasserstein DRO and empirical risk regularization are two widely used and competing approaches for data-driven decision-making in ML and OR/MS. Wasserstein DRO is valued for its universality, offering simple generalization guarantees under minimal assumptions, such as not relying on notions of hypothesis complexity (\cite{SKE19}), while empirical risk regularization is predominantly embraced by practitioners as a simple yet powerful heuristic, despite its desirable theoretical properties (\cite{SKE19,AML12}). While we have expanded the universality of Wasserstein DRO by establishing generalization guarantees across diverse decision criteria, free from the curse of dimensionality, a natural managerial question arises: can simpler, more intuitive heuristics, such as empirical risk regularization, achieve the same generalization guarantees? Our equivalency results provide a positive answer for many applications by specifying the exact form of regularization required. Conversely, our impossibility theorem identifies when Wasserstein DRO becomes essential for decision problems beyond these cases. To underscore the importance of our regularization results for OR/MS applications, we present below an extensive series of examples, including those highlighted in Table \ref{tab-R} and the celebrated rank-dependent expected utility (RDEU) model.
\end{remark}

\begin{remark}\label{re-SADK23}
It is worth noting that Proposition 3.9 of \cite{SADK23} provided a regularized version of   \eqref{eq-main-ef} and \eqref{eq-main12} based on a duality scheme. They showed that when  $\ell$  is proper, upper-semicontinuous and satisfies $\ell(x)\le C(1+|x|^p),$ $x\in\R$ for some $C>0$, then 
\begin{align} \label{Proposition39-D}
\inf_{\bb\in \mathcal D}\sup_{F\in\mathcal B_p(G_0,\varepsilon)} \E^F[\ell(\bm\beta^\top \mathbf Z)]=\inf_{\bb\in \mathcal D, \lambda\ge 0} \left\{\E^{G_0}[\ell_p(\bm\beta^\top \mathbf Z,\lambda)]+\lambda \varepsilon^p\|\bm\beta\|_*^p\right\},
\end{align}
where $\ell_p(x,\lambda)=\sup_{y\in\R} \{\ell(y)-\lambda |y-x|^p\}.$ % and $\|\cdot\|_*$ is the dual norm of $\|\cdot\|$.  
%This distinction becomes particularly clear when comparing Theorem 3 in our paper with Proposition 3.9 in [1]. While 
%Based on a duality scheme, Proposition 3.9 of \cite{SADK23} provided a regularized version of the left-hand side of \eqref{eq-EL1eq} and \eqref{eq-EL2eq}, which is described as replacing the loss function with its $c$-transform and tuning the corresponding regularization parameter.
While this formula can be applied to general loss functions and used to verify the implication (ii) $\Rightarrow$ (i) in both Theorems \ref{th-RegularizationEL} and \ref{th-necessity-highorderloss}, it offers limited insight into its connection with the empirical nominal problem and typically requires additional analysis to establish a regularized empirical risk minimization formulation. In contrast, our Theorems \ref{th-RegularizationEL} and \ref{th-necessity-highorderloss} directly leverage the structure of regularized empirical risk minimization to clearly  identify the necessary and sufficient conditions for its existence, where the necessary ``impossibility" would be difficult to derive through general duality arguments alone.

We  detail in Appendix \ref{app:EU} that the regularized models \eqref{eq-EL1eq} and \eqref{eq-EL2eq} can be established based on \eqref{Proposition39-D} when $\ell$ is given in  (ii) of Theorems \ref{th-RegularizationEL} and \ref{th-necessity-highorderloss}, respectively.
In particular, for the $\ell$  in  (ii) of Theorem \ref{th-RegularizationEL}, although the regularization in \eqref{Proposition39-D} appears to depend on $p$, 
we demonstrate that this regularization can, in fact, be reduced to a formulation independent of $p$ (Theorem \ref{th-RegularizationEL}).  
As will be shown, the calculation is lengthy, making the derivation from \eqref{Proposition39-D} somewhat cumbersome.

%Our approach not only achieves this result more directly and efficiently but also uncovers the necessary condition --specifically, the ``impossibility" theorem -- a conclusion that would be difficult to reach through general duality arguments alone.
\end{remark}

\begin{remark}
Our regularization results offer deeper and more practically significant insights than Proposition 3.9 in \cite{SADK23}, particularly regarding the required regularization term. While Proposition 3.9 suggests the need for a term of the form $\lambda \epsilon^p\|\bm\beta\|_*^p$ in \eqref{Proposition39-D}, with the regularization raised to the power $p$, this can create computational challenges, especially as $p\rightarrow \infty$. In contrast, our regularization result reveals that the regularization term is actually independent of $p$, namely $C\epsilon\|\bm\beta\|_*$. This insight significantly enhances the computational tractability of solving the regularization problem. Moreover, the regularized formulations \eqref{eq-EL2eq} in Theorem \ref{th-necessity-highorderloss}  are all convex programs in $\bm\beta$, with complexity comparable to nominal problems. 
% In contrast, the program \eqref{Proposition39-D} is nonconvex in $\bm\beta$ and $\lambda$, 
 In contrast, the right-hand side of (17) is nonconvex in $\bm\beta$ and $\lambda$,
% \footnote{\color{red}
% Consider a simple example that the dimension $d=1$, $p=2$, $G_0=\delta_0$, $\mathcal D=[0,2]$, $\epsilon>0$ and $\ell(x)=\ell_1^2(x)=x_+^2$, where $\ell_1$ is given in Theorem \ref{th-highorderloss}  with $b=0$. Then, the right-hand side of \eqref{Proposition39-D} reduces to $\inf_{\beta\in[0,2],\lambda\ge 0}\{\sup_{y\in\R}\{y_+^2-\lambda y^2\}+\lambda\epsilon^2\beta^2\}$. Let $f(\lambda,\beta)$ be the objective function of this minimization problem, and we have $f(\lambda,\beta)=\infty\id_{\{\lambda<1\}}+\lambda\epsilon^2\beta^2\id_{\{\lambda\ge 1\}}$. 
% It is straightforward to check that $(f(1,2)+f(2,1))/2<f(3/2,3/2)$, which implies that $f(\lambda,\beta)$ is nonconvex on $\R_+\times [0,2]$.}
and its objective function $\ell_p(x, \lambda)$ is more costly to evaluate. Thus, Theorem \ref{th-necessity-highorderloss} can also be viewed as enabling a convex program solution to \eqref{Proposition39-D}. Finally, the equivalency between Wasserstein DRO and max-sliced Wasserstein DRO established in the previous section implies that max-sliced Wasserstein DRO can likewise be solved efficiently via convex programs for the cases covered in Theorem \ref{th-necessity-highorderloss}.
\end{remark}

%{\bf \color{red} For the following examples, do we need to use $\mathbb Z$ when there is no $Y$ variable? Also, we should give the support of $F_0$ such as $\R^n$ or $\{-1,1\}\times \R^n$ to be more clear.}
\begin{example} {\bf (Classification)} \label{ex1}
	~
	
{\bf - (Higher-order hinge loss)}
	Applying $\ell_2(x)=(x-b)_-$ and setting $b=1$, we have that the following classification problem with a higher-order hinge loss
	$ 
		\inf_{\bm\beta\in \mathcal D}\sup_{F\in \overline{\cal B}_{p}(F_0,\epsilon)} \E^F[(1-Y\cdot \bm \beta^{\top}\mathbf X)_+^p]
	 $ 
	is equivalent to the regularization problem
	$ 
		\inf_{\bm\beta\in \mathcal D}  ( (\E^{F_0}[(1-Y\cdot \bm \beta^{\top}\mathbf X)_+^p] )^{1/p}+ \varepsilon \|\bm \beta\|_*  )^p.
	$ 
	
	{\bf - (Higher-order SVM)}
	Applying $\ell_3(x)=(|x-b_1|-b_2)_+$ and setting $b_1=1$ and $b_2=0$, we have that the higher-order SVM classification problem
 $ 
		\inf_{\bm\beta\in \mathcal D}\sup_{F\in \overline{\cal B}_{p}(F_0,\epsilon)} \E^F[|1-Y\cdot \bm \beta^{\top}\mathbf X|^p]
	 $ 
	is equivalent to the regularization problem
	 $ 
		\inf_{\bm\beta\in \mathcal D} ( (\E^{F_0}[|1-Y\cdot \bm \beta^{\top}\mathbf X|^p])^{1/p}+ \varepsilon \|\bm \beta\|_*  )^p.
	 $ 
\end{example}

\begin{example} {\bf (Regression)} \label{ex2}
	~
	
	{\bf - (Higher-order measure)}
	Applying $\ell_3(x)=(|x-b_1|-b_2)_+$ and setting $b_1=0$ and $b_2=0$, we have that the regression with a higher-order measure
	 $ 
		\inf_{\bm\beta_r\in \mathcal D}\sup_{F\in {\cal B}_{p}(G_0,\epsilon)} \E^F[|(1, - \bb_r)^{\top}\mathbf X|^p],
	 $ 
	is equivalent to the regularization problem
 $ 
		\inf_{\bm\beta_r\in \mathcal D}  ( \left(\E^{G_0}[|(1, - \bb_r)^{\top}\mathbf X|^p]\right)^{1/p}+ \varepsilon \|(1,-\bm \beta_r)\|_*  )^p.
	 $ 
	
	{\bf - (higher-order $c$-insensitive measure)}
	Applying $\ell_3(x)=(|x-b_1|-b_2)_+$ and setting $b_1=0$ and $b_2=c$, we have that the following regression problem with a higher-order $c$-insensitive measure
	 $ 
		\inf_{\bm\beta_r\in \mathcal D}\sup_{F\in {\cal B}_{p}(G_0,\epsilon)} \E^F[(|(1, - \bb_r)^{\top}\mathbf X|-c)_+^p]
 $ 
	is equivalent to the regularization problem
 $ 
		\inf_{\bm\beta_r\in \mathcal D} ( (\E^{G_0}[(|(1, - \bb_r)^{\top}\mathbf X|-c)_+^p] )^{1/p}+ \varepsilon \|(1,-\bm \beta_r)\|_*  )^p.
 $ 
\end{example}

\begin{example} {\bf (Risk minimization)} \label{ex3}
	~
	
	{\bf - (Lower partial moments)}
	Applying $\ell_1(x)=(x-b)_+$ and setting $b=c$, we have that the risk minimization with lower partial moments
	 $ 
		\inf_{\bm\beta\in \mathcal D}\sup_{F\in {\cal B}_{p}(G_0,\epsilon)} \E^F[(\bm\beta^{\top}\mathbf X-c)_+^p]
 $ 
	is equivalent to the regularization problem
	 $ 
		\inf_{\bm\beta\in \mathcal D} (  (\E^{G_0}[(\bm\beta^{\top}\mathbf X-c)_+^p] )^{1/p}+ \varepsilon \|\bm\beta\|_*  )^p.
 $ 
\end{example}

% \begin{theorem}\label{th-necessity-highorderloss}
% 	Let $\ell:\R\to\R_+$ be a Lipschitz continuous and convex function. For an integer $p\in(1,\infty)$, suppose that for any $F_0\in\mathcal M_p(\R^n)$, $\epsilon\ge 0$ and $\mathcal D\subseteq \R^n$, we have
% 	\begin{equation} \label{imposs}
% 		\inf_{\bb\in \mathcal D} \sup_{F\in \mathcal  B_p(F_0,\epsilon)} \mathbb{E}^F[\ell^p(\bb^\top \mathbf Z)]=\inf_{\bb\in \mathcal D} \left(\left(\mathbb{E}^{F_0}[\ell^p(\bb^\top \mathbf Z)]\right)^{1/p} + C\epsilon\|\bb\|_*\right)^p
% 	\end{equation}
% 	with some $C>0$. Then $\ell$ must be one of the four forms in Theorem \ref{th-highorderloss} multiplying   $C$.
% \end{theorem}

%On the one hand, this perspective, as shown in Examples \ref{ex1}-\ref{ex3}, is indeed applicable to many problems of practical interest; on the other hand, it is ...

%\subsection{The case of non-expected function}
Even though Theorem \ref{th-necessity-highorderloss} highlights the impossibility of establishing an equivalence relation for a more general loss function $\ell$, Theorem \ref{th-highorderloss} can still be applied more broadly as a powerful foundation to derive alternative equivalence relations for a richer family of measures. Specifically, there is a large family of measures that can be generally expressed in the following two forms:
%\begin{itemize}
%\item  $ \rho^F(Z) = \inf_{t \in \R} \left\{t+\left(\E^F[\ell^p(Z,t)]\right)^{1/p}\right\}. $
%\item  $ \rho^F(Z) = \inf_{t \in \R} \E^F[\ell(Z,t)]. $
%\end{itemize}
\begin{align}\label{eq-infEU}%\label{eq-infEUdev}{eq-infEUdev}
	\mathcal V^F(Z) = \inf_{t \in \R} \E^F[\ell^p(Z,t)]~~~
	{\rm and}~~~
	\rho^F(Z) = \inf_{t \in \R} \left\{t+\left(\E^F[\ell^p(Z,t)]\right)^{1/p}\right\}
\end{align}
for some loss function $\ell$. 
% , such as variance and higher moment coherent risk measures in Section \ref{rcr}.
% in \eqref{eq-infEU} yields the measure applied in $\nu$-support vector regression, and higher moment risk measures in Section \ref{rcr}. While Variance is a special case of \eqref{eq-infEUdev}.
%for some loss functions $\ell$, such as the measure applied in $\nu$-support vector regression, and higher moment coherent risk measures in Section \ref{rcr}.

We show in the appendix (see Lemma \ref{lm-minmaxeq}) that for a wide range of loss functions $\ell$, the following switching of $\sup$ and $\inf$ is valid
$$\sup_{F\in \mathcal  B_p(G_0,\epsilon)} \inf_{t \in \R} \pi_{i,\ell}(F,t)=
\inf_{t \in \R} \sup_{F\in \mathcal  B_p(G_0,\epsilon)}\pi_{i,\ell}(F,t),~~i=1,2,$$
where
$$
\pi_{1,\ell}(F,t)=\E^F[\ell^p(\bb^\top \mathbf Z,t)] ~~{\rm and}~~\pi_{2,\ell}(F,t)=t+\left(\E^F[\ell^p(\bb^\top \mathbf Z,t)]\right)^{1/p} .
$$
This, combined with Theorem \ref{th-highorderloss}, leads to the following.

% \begin{corollary}
	%  ($\ell_3(x) = (|x-m_1|-m_2)_+$ with some $m_1\in\R$ and $m_2\ge 0$)
	% For any $p\in [1,\infty)$ and $c>1$, let $$\rho_*^F(X) = \inf_{t\in \R} \left\{t+ c \left( \E^F[(|X|-t)_+^p] \right)^{1/p} \right\}.$$
	%  It holds that for $G_0\in\mathcal M_p(\R^n)$ and $\epsilon\ge 0$
	%$$
	%  \sup_{F\in \mathcal  B_p(G_0,\epsilon)} \rho_*^F (\bb^\top \mathbf Z)=   \rho_*^{G_0}(\bb^\top \mathbf Z) +  c \epsilon||\bb||_* .
	%$$
	%\end{corollary}
	
	\begin{corollary} \label{cor1} For any $p\in [1,\infty)$ and $c>0$, let $\mathcal V$ and  $\rho$ be defined by \eqref{eq-infEU} with   $\ell(z,t)=c\ell(z-t)$. Take $G_0\in\mathcal M_p(\R^n)$ and $\epsilon\ge 0$.
 
		(i)   If $\ell$ is  $\ell_3$ or $\ell_4$ in Theorem \ref{th-highorderloss}, %Define
		%$$	\mathcal V^F(Z) = \inf_{t\in \R} \E^F[\ell^p(Z,t)].$$
		%  where $\ell$ is  $\ell_3$ or $\ell_4$ in Theorem \ref{th-highorderloss}.
		then
		$$
		\inf_{\bb\in\mathcal D}\sup_{F\in \mathcal  B_p(G_0,\epsilon)}  	\mathcal V^F (\bb^\top \mathbf Z)= \inf_{\bb\in\mathcal D}\left( \left(  	\mathcal V^{G_0}(\bb^\top \mathbf Z) \right)^{1/p} +  c\epsilon\|\bb\|_* \right) ^p.
		$$
	 {If $\ell$ is  $\ell_1$ or $\ell_2$ in Theorem \ref{th-highorderloss}, then 
  $\mathcal V^F (\bb^\top \mathbf Z)=0$ for any $\bm\beta\in\R^n$ and $F\in\mathcal M_p(\R^n)$.}
		
		(ii)  %For any $p\in [1,\infty)$ and $c>1$, let $\rho$ be defined by \eqref{eq-infEU},   and take $G_0\in\mathcal M_p(\R^n)$ and $\epsilon\ge 0$. If $\ell(z,t)=c\ell(z-t)$ where 
  If $c>1$ and $\ell$ is one of  $\ell_1,\ell_3,\ell_4$ in Theorem \ref{th-highorderloss} or $\ell(z,t)=c(|z|-t)_+$,  %Define
		%$$\rho^F(Z) = \inf_{t\in \R} \left\{t + c \left( \E^F[\ell^p(Z,t)] \right)^{1/p} \right\}.$$
		%where $\ell(z,t):=\ell(z-t)$
		%is one of   $\ell_1,\ell_3,\ell_4$ in Theorem \ref{th-highorderloss}.
		then it holds that 
		$$
		\inf_{\bb\in\mathcal D}\sup_{F\in \mathcal  B_p(G_0,\epsilon)} \rho^F (\bb^\top \mathbf Z)=   \inf_{\bb\in\mathcal D}\left\{\rho^{G_0}(\bb^\top \mathbf Z) +  c \epsilon\|\bb\|_*\right\}.
		$$
   {If  $\ell=\ell_2$ in Theorem \ref{th-highorderloss}, then 
  $\rho^F (\bb^\top \mathbf Z)=-\infty$ for any $\bm\beta\in\R^n$ and $F\in\mathcal M_p(\R^n)$.}
		\end{corollary}
  
%	{\color{red} We exclude $\ell_2$ from the above corollary as its inclusion is trivial.}
Corollary \ref{cor1} can accommodate cases where variance is used as the measure. Variance is not listed as an example in Section \ref{rcr} because it has already been studied in \cite{BCZ22}. However, our method for deriving the equivalency relation is different from, and significantly more general than, the approach in \cite{BCZ22}. The following example illustrates the broader applicability of our approach.
	
	\begin{example}  {(\cite{BCZ22})} \label{ex7}
		When $p=2$ and $\ell(z,t) = |z-t|$ (i.e., $\ell_3(z-t)$ with $b_1=0$ and  $b_2=0$), %or equivalently, $\ell(z,t) =\ell_4(z-t) = |z-t|$ (with $m=0$ and $b=0$),
		we have $\mathcal V^F = {\rm\mathbb{V}ar}^F$, where ${\rm\mathbb{V}ar}$ represents the variance. That is,
		 $ 
			%\sup_{F\in \mathcal B_{2}(G_0,\epsilon)}
            {\rm\mathbb{V}ar}^F(\bm\beta^\top\mathbf X)=%\sup_{F\in \mathcal B_{2}(G_0,\epsilon)}
            \inf_{t\in\R} \E^F[(\bm\beta^\top\mathbf X-t)^2].
		 $ 
		Applying Corollary \ref{cor1} (i) yields
		$ 
			\sup_{F\in \mathcal B_{2}(G_0,\epsilon)}{\rm\mathbb{V}ar}^F(\bm\beta^\top\mathbf X)
			 = ( ({\rm\mathbb{V}ar}^{G_0}(\bm\beta^\top\mathbf X))^{1/2}
			+\epsilon\|\bm\beta\|_* )^2.
		$
	\end{example}
	
The following example demonstrates how Corollary \ref{cor1} can be applied to derive the regularization counterpart for higher moment risk measures in risk minimization, a previously unknown result. 
	
	\begin{example} {\bf (Risk minimization)} \label{ex6}
		For $c>1$, setting $\ell(z,t)=c(z-t)_+$ (i.e., $c\ell_1(z-t)$ with $b=0$),  we have that the following problem of minimizing higher moment risk measures
		$  %\label{high1}
			\inf_{\bm\beta\in  \mathcal D}\sup_{F\in {\cal B}_{p}(G_0,\epsilon)}
			\inf_{t\in\R} \{t+c (\E^F[(\bm\beta^{\top}\mathbf X-t)_+^p] )^{1/p} \}
		$ 
		is equivalent to the regularization problem
		$ 
			\inf_{\bm\beta\in  \mathcal D, t\in\R} \{t+c (\E^{G_0}[(\bm\beta^{\top}\mathbf X-t)_+^p] )^{1/p}
			+ c\varepsilon   \|\bm\beta\|_* \}.
		 $ 
	\end{example}
	
% Clearly, the corollary can be directly applied to derive equivalent regularization formulations for the $\nu$-support vector machine and regression under the type-1 Wasserstein ball, given that ${\rm CVaR}$ is known as a special case of higher moment risk measures with order one ($p=1$). It is possible to show the regularization formulations in fact exist for any type-$p$ Wasserstein ball. We show this by considering below non-expected functions built upon distortion functionals, which include ${\rm CVaR}$ as a special case, and derive equivalent regularization formulations for any type-$p$ Wasserstein ball. 

\vspace{0.1in}	
\subsection{The case of non-expected function with distortion}\label{subsec:DT}
A powerful perspective in the risk measure literature, especially when studying non-expected functions such as {\color{blue}Conditional Value-at-Risk (CVaR)}, is to view these functions as equivalent to taking an expectation with respect to a distorted probability distribution. The concept of distorted expectation is central to foundational theories, including Yaari's dual utility theory (\cite{Y87}), Choquet Expected Utility (\cite{S89}), and various business applications. In this section, we adopt this perspective to study the problem \eqref{eq-main-ef} with a distorted expectation:
\begin{align} \label{distorted}
\inf_{\bb \in \mathcal D} \sup_{F\in \mathcal  B_p(G_0,\epsilon)} \rho_h^F(\ell(\bm\beta^\top \mathbf Z)),
\end{align}
where 
\begin{align*}
		\rho_h^F(Z)=\int_0^1 F^{-1}(s)\d h(s)
\end{align*}
is an ``expectation" taken with respect to a distortion function $h:[0,1]\to \R$ \footnote{Strictly speaking, a distortion function $h$ should additionally satisfy $h(0)=1-h(1)=0$ and be increasing, so that $\rho_h^F$ can be considered as distorted expectation. However, our results do not hinge on these requirements.}.  Here, $F^{-1}$ represents the left-quantile function of $F$, i.e., $F^{-1}(s)=\inf\{x: F(x)\ge s\}$ for $s\in(0,1]$, and $F^{-1}(0)=\inf\{x: F(x)>0\}$. The problem \eqref{distorted} encompasses \eqref{eq-main-ef} as a special case, since $\rho_h^F(Z) = \E^F[Z]$ when $h(s)=s$, $s\in [0,1]$, and accommodates CVaR, since $\rho_h^F(Z) = {\rm CVaR}_\alpha^F(Z)$ when $h(s)= (s-\alpha)_+/(1-\alpha)$, $s\in [0,1]$. A comprehensive list of risk measures with distorted expectation representations can be found in \cite{WWW20} and \cite{CLM23}.

It is intriguing to consider how the problem \eqref{distorted} might be solved, particularly whether a regularization counterpart, similar to that of the expected function \eqref{eq-main-ef}, remains available. Our findings are surprisingly general: for any problem \eqref{distorted} with an increasing and convex distortion function $h$, a regularization counterpart always exists for the type-1 Wasserstein ball. This similarity to the classical regularization result for \eqref{eq-main-ef} is particularly notable, given the broader applicability of \eqref{distorted}. This insight even extends to the general type-$p$ Wasserstein ball. As we show below, the necessary and sufficient conditions for a regularization counterpart to exist are remarkably similar for both \eqref{distorted} and \eqref{eq-main-ef} in the general type-$p$ case.

Before presenting the result, we introduce the notation that $\|g\|_a=(\int_{A} |g(x)|^a\d x)^{1/a}$ for a function $g:A\to\R$ with $A\subseteq \R$ and $a\in[1,\infty)$, and $\|g\|_\infty=\sup_{x\in A} |g(x)|$.
For a convex distortion function $h:[0,1]\to\R$ with $\lim_{x\to1-}h(x)=h(1)$, we denote by $h'_-$ the left-derivative of $h$ on $(0,1]$, noting that $h'_-(1)$ may be infinity.

\begin{theorem}\label{th-DTp=1} \label{th-uniqueDT}
For $p\in [1,\infty]$, let $h:[0,1]\to\R$ be an increasing and convex distortion function satisfying
$\lim_{x\to 1-}h(x)=h(1)$ and
$\|h'_-\|_q\in(0,\infty)$,
and let $\ell:\R\to\R$ be a convex function.

%\begin{itemize}\item [(1)]
For $p=1$, if
 ${\rm Lip}(\ell)<\infty$,
then for any $G_0\in\mathcal M_1(\R^n)$, $\epsilon\ge 0$ and $\mathcal D\subseteq \R^n$, we have
\begin{align*}%\label{eq:231126-1}
\inf_{\bm\beta\in \mathcal D}\sup_{F\in \mathcal B_1(G_0,\epsilon)}\rho_h^F(\ell(\bm\beta^\top \mathbf Z))=\inf_{\bm\beta\in\mathcal D}\left\{\rho_h^{G_0}(\ell(\bm\beta^\top\mathbf Z))+{\rm Lip}(\ell)\|h'_-\|_\infty\epsilon\|\bm\beta\|_*\right\}.
\end{align*}
%Let $h:[0,1]\to\R$ be an increasing and convex distortion function with $\lim_{x\to 1-}h(x)=h(1)$, and let $\ell:\R\to\R$ be a convex function. For $p\in(1,\infty]$, suppose that $\|h'_-\|_q\in(0,\infty)$ and 

For $p\in(1,\infty]$, if $\rho_h(|\ell(Z)|)<\infty$ for all $Z\in L^p$, then the following statements are equivalent.
	\begin{itemize}
		\item[(i)] For any $G_0\in\mathcal M_p(\R^n)$, $\epsilon\ge 0$ and $\mathcal D\subseteq \R^n$,  there exists $C>0$ such that
		$$
		\inf_{\bm\beta\in \mathcal D} \sup_{F\in \mathcal  B_p(G_0,\epsilon)} \rho_h^F(\ell(\bb^\top \mathbf Z))=\inf_{\bm\beta\in \mathcal D} \left\{\rho_h^{G_0}(\ell(\bb^\top \mathbf Z)) + C\|h'_-\|_q\epsilon \Vert\bm\beta\Vert_*\right\}.
		$$
		%$\E[\ell(\cdot)]: L^p\to \R$ satisfies regularization.
		
		\item[(ii)] The function $\ell$ takes one of the following two forms, multiplied by $C$:
		
		\begin{itemize}
		\item[(a)]  $\ell_1(x)=  x+b$ or $\ell_1(x)=-  x+b$ with some $b\in\R$;
		
		\item [(b)]$\ell_2(x)= |x-b_1|+b_2$  with some $b_1,b_2\in \R$.
		\end{itemize}
	\end{itemize}
\end{theorem}

Interestingly, the ``exact" class of loss functions $\ell$ that admit a regularization counterpart is the same as in the previous section, despite distorted expectations being significantly more general than expected functions. This result is particularly significant from a decision-theoretical perspective, as the formulation \eqref{distorted} closely resembles the celebrated rank-dependent expected utility (RDEU) model, known for resolving the paradox in expected utility (\cite{Q82}). Given its importance, we highlight below the application of Theorem \ref{th-DTp=1} in RDEU.

\begin{example} {\bf (Rank-Dependent Expected Utility (RDEU))} \label{RDEU}
The decision criteria of RDEU admits the form $V_{u,h} (Z) = \rho_h(u(Z))$, where $\rho_h$ is a distortion function and $u: \R\to\R$ is a dis-utility function. For decision makers who are risk-averse (\cite{CKS87}), i.e.,  
$h$ and $u$ are increasing convex with ${\rm Lip}(u)\in (0,\infty)$, 
we have
\begin{align*}
\inf_{\bm\beta\in \mathcal D}\sup_{F\in \mathcal B_1(G_0,\epsilon)}V_{u,h}^F(\ell(\bm\beta^\top \mathbf X))=\inf_{\bm\beta\in\mathcal D}\left\{V_{u,h}^{G_0}(\ell(\bm\beta^\top\mathbf X))+ {\rm Lip}(u)\cdot{\rm Lip}(\ell)\|h'_-\|_\infty\epsilon\|\bm\beta\|_*\right\},
\end{align*}
for any convex loss function $\ell$ with ${\rm Lip}(\ell)\in(0,\infty)$, $G_0\in\mathcal M_1(\R^n)$, $\epsilon\ge 0$ and $\mathcal D\subseteq \R^n$.
\end{example}

%The result in Theorem \ref{th-DTp=1} can be generalized to the rank-dependent expected utility  (RDEU) model.  
% It is straightforward that the composition $u\circ\ell$ is convex, and we assert that ${\rm Lip}(u\circ \ell)={\rm Lip}(u)\cdot{\rm Lip}(\ell)$. Note that ${\rm Lip}(u)=\lim\limits_{x\to\infty}|u_-'(x)|=\lim\limits_{x\to\infty}|u_+'(x)|$ and ${\rm Lip}(\ell)=\max\{\lim\limits_{x\to-\infty}|\ell'_-(x)|,\lim\limits_{x\to\infty}|\ell'_-(x)|\}$, where $f'_-$ and $f'_+$ are the left- and right-derivative of the function $f$, respectively. The assertion can be verified by the following fact 
% \begin{align*}
% {\rm Lip}(u\circ\ell)&=\max\left\{\lim\limits_{x\to-\infty}|(u\circ\ell)'_-(x)|,\lim\limits_{x\to\infty}|(u\circ\ell)'_-(x)|\right\}\\
% &=\begin{cases}
% \lim\limits_{x\to-\infty} |u'_+(\ell(x))\ell'_-(x)|={\rm Lip}(u)\cdot{\rm Lip}(\ell)~~&{\rm if}~{\rm Lip}(\ell)=\lim\limits_{x\to-\infty}|\ell'_-(x)|,\\
% \lim\limits_{x\to\infty} |u'_-(\ell(x))\ell'_-(x)|={\rm Lip}(u)\cdot{\rm Lip}(\ell)~~&{\rm if}~{\rm Lip}(\ell)=\lim\limits_{x\to\infty}|\ell'_-(x)|.
% \end{cases}
% \end{align*}

We further demonstrate the practical applicability of the above theorem by showing how it can be directly applied to derive the regularization counterpart for the $\nu$-support vector regression example.

\begin{example} {\bf (Regression)} \label{ex-dis-reg}
	~
	
{\bf - ($\nu$-support vector regression)}
For $\alpha\in [0,1)$, let $h(s)=(s-\alpha)_+/(1-\alpha)$, $s\in [0,1]$.  Applying  $\ell_2(x)=C|x-b_1|+b_2$ and setting $C=1$ and $b_1=b_2=0$,
%and $\ell(x)=|x|$, 
we have from Theorem \ref{th-DTp=1} that
the $\nu$-support vector regression
	$ 
		\inf_{\bm\beta_r\in \mathcal D}\sup_{F\in {\cal B}_{p}(G_0,\epsilon)} {\rm CVaR}_\alpha^F(|(1, - \bb_r)^{\top}\mathbf X|)
	$ 
is equivalent to the regularization problem
	$ 
		\inf_{\bm\beta_r\in \mathcal D}  \{{\rm CVaR}_\alpha^{G_0}|((1, - \bb_r)^{\top}\mathbf X|)+ \varepsilon (1-\alpha)^{-1/p} \|(1,-\bm \beta_r)\|_* \}.
	$ 
\end{example}

Measures like the CVaR-Deviation example in risk minimization, as shown below, require a convex distortion function $h$, which is not covered by the theorem above. We now show that when the loss function $\ell$ in \eqref{distorted} is linear, a regularization counterpart can also be found for any convex distortion function $h$.

\begin{proposition}\label{prop:disabsolute} 
% \begin{itemize}
%		\item [(i)] 
  For $p\in[1,\infty]$,  let $h: [0,1]\to\R$ be a convex function with $\lim_{x\to0+}h(x)=h(0)$ and $\lim_{x\to 1-}h(x)=h(1)$.
		Then for any $G_0\in\mathcal M_p(\R^n)$, $\epsilon\ge 0$ and $\mathcal D\subseteq \R^n$, we have
\begin{align*}
\inf_{\bm\beta\in \mathcal D}\sup_{F\in \mathcal B_p(G_0,\epsilon)}\rho_h^F(\bm\beta^\top \mathbf Z)=\inf_{\bm\beta\in\mathcal D}\left\{\rho_h^{G_0}(\bm\beta^\top\mathbf Z)+\|h'_-\|_q\epsilon\|\bm\beta\|_*\right\}.
\end{align*}
		
% 			\item [(ii)] Let $h: [0,1]\to\R$ be a convex function. For $p\in[1,\infty]$,
% 			the problem \eqref{gencvar} is equivalent to
% 			\begin{align*}
% 				\inf_{\bm\beta\in \mathcal D} \left\{\rho^{F_0}_h(\bm\beta^{\top}\mathbf X
% 				-\E^{F_0}[\bm\beta^{\top}\mathbf X])
% 				+  \|h'_-+h(0)-h(1)\|_q  \varepsilon||\bm\beta||_*\right\}.
% 			\end{align*}
% \end{itemize}
	\end{proposition}
 
%For a distortion function $h$, we have $\rho_h(-Z)=\rho_{\widetilde{h}}(Z)$, where $\widetilde{h}(s)=h(1-s)$ for $s\in[0,1]$. Note that $\widetilde{h}$ is convex whenever $h$ is convex.

Note first that applying Proposition \ref{prop:disabsolute} to classification problems is fairly straightforward, namely yielding the following equivalency:
\begin{align*}
\inf_{\bm\beta\in \mathcal D}\sup_{F\in \overline{\mathcal B}_p(F_0,\epsilon)}\rho_h^F(-Y\cdot\bm\beta^\top \mathbf X)=\inf_{\bm\beta\in\mathcal D}\left\{\rho_h^{F_0}(-Y\cdot \bm\beta^\top\mathbf X)+\|h'_-\|_q\epsilon\|\bm\beta\|_*\right\}.
\end{align*}
This equivalency is readily applicable to specific cases, such as the $\nu$-support vector machine.

\begin{example} {\bf (Classification)} \label{ex-dis-cf}

{\bf - ($\nu$-support vector machine)}
% Let $h: [0,1]\to\R$ be a convex function with $\lim_{x\to0+}h(x)=h(0)$ and $\lim_{x\to 1-}h(x)=h(1)$. Applying Proposition \ref{prop:disabsolute}, we have the following equivalency 
% \begin{align*}
% \inf_{\bm\beta\in \mathcal D}\sup_{F\in \overline{\mathcal B}_p(F_0,\epsilon)}\rho_h^F(-Y\cdot\bm\beta^\top \mathbf X)=\inf_{\bm\beta\in\mathcal D}\left\{\rho_h^{F_0}(-Y\cdot \bm\beta^\top\mathbf X)+\|h'_-\|_q\epsilon\|\bm\beta\|_*\right\}.
% \end{align*}
% Further, 
Setting $h(s)=(s-\alpha)_+/(1-\alpha)$, $s\in [0,1]$ with $\alpha\in [0,1),$
we have that the classification problem with $\nu$-support vector machine
$ 
		\inf_{\bm\beta\in \mathcal D}\sup_{F\in \overline{\cal B}_{p}(F_0,\epsilon)} {\rm CVaR}_\alpha^F(-Y\cdot\bm\beta^{\top}\mathbf X)
	$ 
	is equivalent to the regularization problem
$ 
		\inf_{\bm\beta\in \mathcal D}  \{{\rm CVaR}_\alpha^{F_0}(-Y\cdot\bm\beta^{\top}\mathbf X)+\varepsilon (1-\alpha)^{-1/p} \|\bb\|_* \}.
$ 
\end{example}

Proposition \ref{prop:disabsolute} can be further applied to general deviation measures defined by a distorted expectation, such as the CVaR-Deviation example in risk minimization. Observe that $\rho_h(Z - \E[Z]) = \rho_{\widetilde h}(Z)$ holds for any distortion function $h$, where ${\widetilde h}(s) := h(s) + (h(0) - h(1))s$ for $s \in [0,1]$. Moreover, ${\widetilde h}$ is convex whenever $h$ is convex. Thus, we apply Proposition \ref{prop:disabsolute} and arrive at the following equivalency:
\begin{align*}
\inf_{\bm\beta\in \mathcal D}\sup_{F\in \mathcal B_p(G_0,\epsilon)}\rho_h^F(\bb^\top\mathbf Z-\E^F[\bb^\top\mathbf Z])=
	\inf_{\bm\beta\in \mathcal D} \left\{\rho^{G_0}_h(\bm\beta^{\top}\mathbf Z
				-\E^{G_0}[\bm\beta^{\top}\mathbf Z])
				+  \|h'_-+h(0)-h(1)\|_q  \varepsilon\|\bm\beta\|_*\right\}.
\end{align*}	
This leads us to our final example.
\begin{example} {\bf (Risk minimization)} \label{ex-dis-rm}

{\bf - (CVaR-Deviation)}
Setting $h(s)=(s-\alpha)_+/(1-\alpha)$, $s\in [0,1]$ with $\alpha\in [0,1),$
we have that the risk minimization with CVaR-Deviation
	$ 
		\inf_{\bm\beta\in \mathcal D}\sup_{F\in {\cal B}_{p}(G_0,\epsilon)} {\rm CVaR}_\alpha^F(\bm\beta^{\top}\mathbf X-\E^F[\bm\beta^{\top}\mathbf X])
	$ 
	is equivalent to the regularization problem
	$ 
		\inf_{\bm\beta\in \mathcal D}  \{{\rm CVaR}_\alpha^{G_0}(\bm\beta^{\top}\mathbf X-\E^{G_0}[\bm\beta^{\top}\mathbf X])+\varepsilon  (\alpha+\alpha^q(1-\alpha)^{1-q} )^{1/q} \|\bb\|_* \}.
$ 
\end{example}

\begin{remark} We offer some comments here on the recent work by \cite{CLT24}, which was released well after our work had been made available online. While \cite{CLT24} initially set out to establish an equivalency relationship for a general setting of Wasserstein DRO -- considering a general loss function $\ell$ and cost function $d$ as defined in \eqref{eq:dwasser} -- their actual results do not extend much beyond our settings and findings in Section \ref{subsec:EU} (expected functions and extensions) and are limited to discussing sufficient conditions for the equivalency. Although some examples presented in Table 2 of \cite{CLT24} are not covered in Section \ref{subsec:EU}, we attribute this largely to our focus on cases that admit a final convex optimization formulation, rather than a limitation of our analysis. Indeed, we believe our analysis is foundational, reaching the very core of the equivalency problem. This is evident in our ability to prove not only sufficient conditions but also necessary ones for both the case of expected functions and general distorted expectations -- an achievement that appears beyond the reach of \cite{CLT24}, as it inevitably hinges on Lemma \ref{lm-onehigheq} provided in our analysis. To substantiate our claim that our analysis is not limited to the examples presented in this paper, we demonstrate in Proposition \ref{prop:noncx} in Appendix \ref{subsec:appDT} how to generalize Theorem \ref{th-DTp=1} (the case $p=1$) to non-convex loss functions $\ell$, including the truncated pinball loss from \cite{CLT24} as a special case.\footnote{In \cite{CLT24}, the center of the Wasserstein ball is assumed to be the empirical distribution, which has a finite set of vectors as its support. Consequently, Proposition \ref{prop:noncx} is directly applicable.}
As another example, it is not difficult to confirm that our findings in Proposition \ref{thm:221010-1}, Lemma \ref{lm-onehigheq}, and all the results in Section \ref{reg} can be readily extended to a general Hilbert space $\mathcal H$, with the norm induced by its inner product. This extension covers the case of nonparametric scalar-on-function linear regression $\E[\ell^p(Y - \int_0^1 \mathbf X(t)\bb(t)\d t)]$ from \cite{CLT24}, where $\ell$ takes the forms outlined in our Theorem \ref{th-highorderloss}.\footnote{
% The expression $y-\int_0^1\mathbf x(t)\bb(t)\d t$ can be interpreted as the inner product between $(y,\mathbf x)$ and $(1,-\bb)$ on $\R\times \mathcal L^2([0,1])$, where $\mathcal L^2([0,1])$ is the space of all square integrable functions on $[0,1]$. In this setting, {\color{red} the cost function} is defined as the inner product $\langle \bb,\mathbf X\rangle$ for a decision variable $\bb \in \mathcal H$ and the Wasserstein ball $\overline{\cal B}_p (F_0,\epsilon)$, similar in form to \eqref{eq-WU-multi}, is now encompassed within the set of probability measures on $\R \times \mathcal H$.\\
The expression $y-\int_0^1\mathbf x(t)\bb(t)\d t$ can be interpreted as the inner product between $(y,\mathbf x)$ and $(1,-\bb)$ on $\mathcal H:=\R\times \mathcal L^2([0,1])$, where $\mathcal L^2([0,1])$ is the space of all square integrable functions on $[0,1]$. In this setting, the decision rule is the inner product $\langle  (y,\mathbf x),   (1,-\bb)\rangle$ for each decision vector $\bb$ which is affine  and the Wasserstein ball $\overline{\cal B}_p (F_0,\epsilon)$, similar in form to \eqref{eq-WU-multi}, is now encompassed within the set of probability measures on $\mathcal H$.
}
\end{remark}

%%%%%%%%%%%%%%%%%%%%%%%%%%%%%%%%%%%%%%%%%%%%%%%%%%%%%%%%%%
\section{Numerical Demonstration of Generalization Bounds}
In this section, we seek to provide a further demonstration of the generalization result obtained in this paper. In particular, we numerically illustrate the decay rate of the Wasserstein radius for measures $\rho$ that are non-expected functions.
We closely follow the experimental setup presented in \cite{SKE19}, where the authors investigate the scaling behaviour of the smallest Wasserstein radius for the synthetic \texttt{threenorm}
classification problem (\cite{B96}). While \cite{SKE19} uses classical support vector machine for classification, in which the measure $\rho$ is an expected function defined by a hinge loss function $\ell$, we employ $\nu$-support vector machine, where $\rho$ is represented by CVaR. Specifically, we consider the distributionally robust counterpart of the standard $\nu$-support vector machine formulation, 
\begin{align}\label{eq-NRnu}
\min_{\bm\beta\in\R^n}~~{\rm CVaR}_{\alpha}^{\widehat{F}_N}(-Y\cdot \bm\beta^\top \mathbf X)+\frac{1}{2}\|\bb\|_2^2.
\end{align}
By choosing $\infty$-norm as the norm $\|\cdot\|$ on the input space in the transport cost and setting $p=1$, like \cite{SKE19}, and following Proposition \ref{prop:disabsolute} or
Example \ref{ex-dis-cf},
we have the following regularized form for the Wasserstein robust $\nu$-support vector machine problem:
\begin{align}\label{eq-WRnu}
	\min_{\bm\beta\in\R^n}~~{\rm CVaR}_{\alpha}^{\widehat{F}_N}(-Y\cdot \bm\beta^\top \mathbf X)+\frac{1}{2}\|\bb\|_2^2+\frac{\epsilon \|\bb\|_1}{1-\alpha}.
\end{align}

The experiment is based on an output $y$ drawn uniformly from $\{-1,1\}$ and a $20$-dimensional input $\mathbf x$ from a multivariate normal distribution. Specifically, if $y=-1$, then $\mathbf x$ is drawn from standard multivariate
normal distribution shifted by $(c,\dots,c)$ or $(-c,\dots,-c)$ with equal probabilities, where $c=2/\sqrt{20}$. If $y=1$, then $\mathbf x$ is drawn from standard multivariate
normal distribution shifted by $(c,-c,\dots, c,-c)$. We consider different sizes of training samples $N$ in the set $\{10,\dots,90\}\cup\{100,\dots,900\}\cup\{1000,\dots,10000\}$ as well as $10^5$ test samples. Each size of training sample involves $100$ simulation trials. 

Throughout the experiment, the search space of radius is chosen as $\mathcal E=\{a\cdot 10^{-b}:~a\in \{1,\dots,10\}, ~b\in\{1,2,3\}\}\cup\{1.1,1.2,\dots,2\}$. Similar to \cite{SKE19}, we use the following three approaches to choose the smallest Wasserstein radius:
\begin{itemize}
\item \emph{Cross validation}: For a set of $N$ training samples, denoted as $\{(y_i,\mathbf x_i)\}_{i=1}^N$, we partition them into $k=5$ subsets. We use one subset as the validation dataset and combine  the remaining $k-1$ subsets as the training dataset. This results in $k$ pairs of validation and training datasets. We choose the radius in $\mathcal E$ such that the average validation error over these $k$ pairs is the smallest. This operation is repeated across all $100$ trials, and we then report the average of the radii.

\item \emph{Optimal radius}: In each trial, we choose the radius in $\mathcal E$ that has the smallest testing error and then report the average of the radii across all $100$ trials.

\item \emph{Generalization bound}: We choose the smallest radius such that the optimal value of \eqref{eq-WRnu} exceeds the value of the nominal problem on the test samples in at least $95\%$ of all 100 trials.
\end{itemize}

%As the test samples are not available in the training phase, the cross validation approach is what would typically be done in practice. Although the other two approaches are not available in practice, we could study them to gain insights. 

In our initial experiments, we set $\alpha=0.2$. Figure \ref{fig-nuSVM} displays the selected Wasserstein radii in relation to various sample sizes $N$ for the three approaches above. Note that while the first two approaches determine the radius by averaging the radii induced by all simulation trials (potentially leading to a radius not necessarily in the set $\mathcal E$), the third approach specifically selects a radius from $\mathcal E$ based on the percentage criteria, utilizing 
the testing samples across all 100 trials.  One can observe that the Wasserstein radii of all three approaches decay approximately as $1/\sqrt{N}$, which is consistent with the theoretical generalization bound of Theorem \ref{th-FSG}.

\begin{figure}[pht]
\caption{Wasserstein radius versus the number of training samples, with $\alpha=0.2$}
	\begin{center}
		\includegraphics[width=0.45\textwidth]{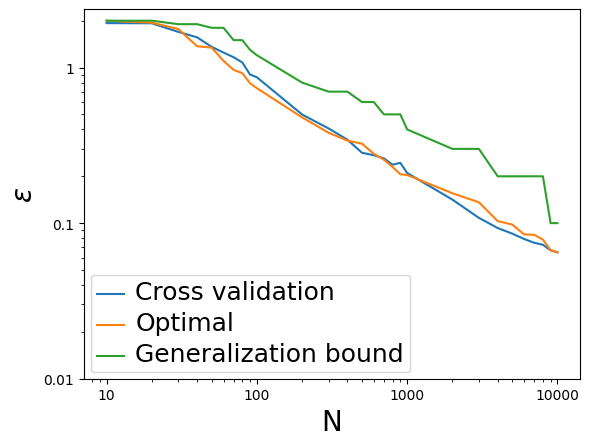}
		\label{fig-nuSVM}
		\medskip
	\end{center}
\end{figure}
It is natural to wonder if varying choices of $\alpha$ could affect the scaling behavior. This becomes particularly intriguing when considering higher values of $\alpha$. However, literature on the $\nu$-support vector machine suggests that the optimal solution of \eqref{eq-NRnu} can become degenerate (i.e., equal to zero) when $\alpha$ surpasses a specific threshold. We indeed observe that the experiment setup of \cite{SKE19} only allows us to generate meaningful solutions for lower values of $\alpha$. To address the potential degeneracy issue at higher $\alpha$ values, we modified the method for generating simulated data. Specifically, for each sample $(y,\mathbf x)\in  \{-1,1\}\times\R^{20}$, if $y=-1$, then $\mathbf x$ is drawn from standard multivariate normal distribution shifted by $(-c,\dots,-c)$ or $(-2c,\dots,-2c)$ with equal probabilities, where $c=1/3$, whereas if $y=1$, then $\mathbf x$ is drawn from standard multivariate
normal distribution shifted by $(c,2c,\dots, c,2c)$. Figure \ref{fig-nuSVM_alpha} contains three panels, each displaying the selected radii for a distinct level of $\alpha$: $\alpha=0.2,~0.5~{\rm or}~0.8$. Notably, the decay rate of the radius is remarkably similar across different levels of $\alpha$, approximately as $1/\sqrt{N}$. This is in line with our theoretical finding that the rate in general does not hinge on the specification of the measure $\rho$.  

\begin{figure}[pht]
	\caption{Wasserstein radius versus the number of training samples, with different $\alpha$ level (Left: $\alpha=0.2$; Middle: $\alpha=0.5$; Right: $\alpha=0.8$).}
	\begin{center}
		\includegraphics[width=0.32\textwidth]{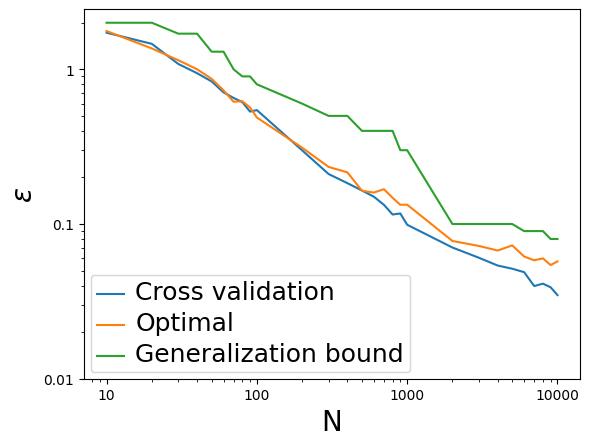}
		\includegraphics[width=0.32\textwidth]{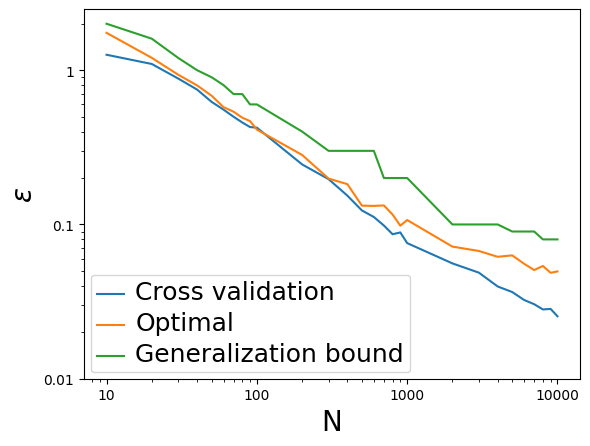}
		\includegraphics[width=0.32\textwidth]{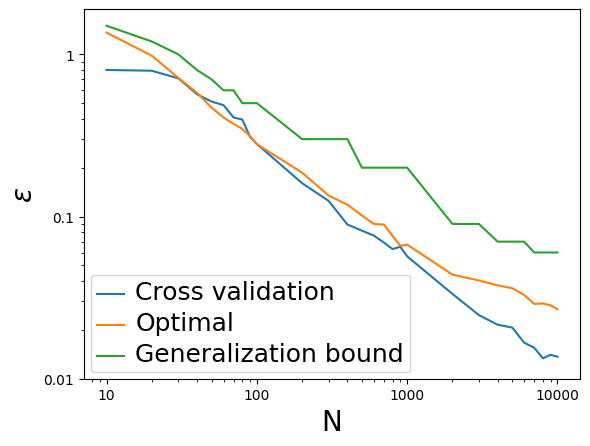}
		\label{fig-nuSVM_alpha}
		\medskip
	\end{center}
\end{figure}

%%%%%%%%%%%%%%%%%%%%%%%%%%%%%%%%%%%%%%%%%%%%%%%%%%%%%%%

\section{Beyond Affine Decision Rules} \label{beyond}
While our results thus far have focused on affine decision rules, the underlying principles of our approach to establishing generalization bounds for a broad class of measures $\rho$ can be extended to a wider range of decision rules. In this section, we illustrate this extension by deriving generalization bounds for the following Wasserstein DRO problem with a decision rule denoted by $f_{\bb}$:
\begin{align*}%\label{eq-main02}
	\inf_{\bb \in \mathcal D} \sup_{F\in \overline{\cal B}_p(F_0,\epsilon)} \rho^F(Y \cdot f_{\bb}(\mathbf X)),
\end{align*}
where $F_0\in \M_p(\Xi)$ with $\Xi=\{-1,1\}\times\R^n$ and $\overline{\cal B}_p(F_0,\epsilon)$ is the type-$p$ Wasserstein ball defined by \eqref{eq-WU-multi}.  
%Unlike affine decision rules, the dimension of the decision variable $\bb$ is not necessarily equal to the dimension of $\mathbf X$. 
Throughout this section, we assume that $\mathcal D\subseteq \R^m$ for some $m\in\N$, with $\|\cdot\|_{\mathcal D}$ representing any given norm on $\mathcal D$.
%Throughout this section, we assume that $\mathcal D\subseteq \R^m$ for some $m\in\N$, with $\|\cdot\|_{\mathcal D}$ representing any given norm on $\mathcal D\cup\bb_0$ where $\bb_0=(0,\dots,0)$ is the zero vector.

The key to establishing generalization bounds applicable to a broad class of measures continues to lie in focusing on the projection set, now expressed in a more general form:
\begin{align}\label{eq-HONE-1}
\overline{\mathcal B}_{p|f_{\bb}}(F_0,   \epsilon)=\{F_{Y\cdot f_{\bb}(\mathbf X)}: F_{(Y,\mathbf X)}\in\overline{\mathcal B}_p(F_0,\epsilon)\}.
\end{align}
Although the exact equivalence of this set to a one-dimensional Wasserstein ball or the projection set of a max-sliced Wasserstein ball, as shown in Proposition \ref{prop-equi}, no longer holds in the general case, we can still establish generalization bounds by instead identifying a one-dimensional Wasserstein ball dominated by $\overline{\mathcal B}_{p|f_{\bb}}(F_0, \epsilon)$, i.e. 
\begin{equation} \label{incl}
\overline{\mathcal B}_{p|f_{\bb}}(F_0, \epsilon) \supseteq \mathcal B_p(F_{Y_0\cdot f_{\bb}(\mathbf X_0)},g(\epsilon)),
\end{equation}
for some increasing function $g:\R_+\to\R_+$, where $(Y_0,\mathbf X_0)\sim F_0$.  Indeed, if this dominance relationship \eqref{incl} can be established, one can then leverage the measure concentration property of a one-dimensional Wasserstein ball to determine the radius of this one-dimensional ball
$\epsilon_N$
%$\epsilon:=g^{-1}(\epsilon_N)$  of the Wassertein ball $\overline{\mathcal B}_{p}(F_0, \epsilon)$ 
such that
$$\p\left( F_{Y^* \cdot f_{\bb}(\mathbf X^*)} \in \overline{\mathcal B}_{p|f_{\bb}}( \widehat{F}_N,  g^{-1}(\epsilon_N) ) \right)\ge \p\left( F_{Y^* \cdot f_{\bb}(\mathbf X^*)} \in \mathcal B_p(F_{Y_0\cdot f_{\bb}(\mathbf X_0)},\epsilon_N)\right) \geq 1-\eta$$ with $(Y^*,\mathbf X^*) \sim F^*$, where $F^*$ and $\widehat{F}_N$, defined in Section \ref{sec:Gen},  are the data-generating distribution  and  empirical distribution, respectively. This leads to the confidence bound:\footnote{
One may wish to further gauge how tight (i.e., small) the Wasserstein in-sample risk, $\sup_{F \in \overline{{\cal B}}p(\widehat{F}_N, \epsilon)} \rho^{F}(Y \cdot f_{\bb}(\mathbf{X}))$, is -- e.g., relative to the empirical in-sample risk -- when the radius $\epsilon$ is sufficiently small. To this end, we note that, under Assumption \ref{assum-4} and the condition that $f_{\bb}$ is Lipschitz continuous with a Lipschitz constant $L_{\bb}$ (a condition satisfied in the affine case), the relationship
$\sup_{F \in \overline{\mathcal{B}}_p(\widehat{F}_N, \epsilon)} \rho^F(Y \cdot f_{\bb}(\mathbf{X})) \leq \rho^{\widehat{F}_N}(Y \cdot f_{\bb}(\mathbf{X})) + M L_{\bb} \epsilon$ holds.
%Combined with \eqref{eq-nonaffineconfidence}, this implies that, with probability at least $1 - \eta$, the following holds: $\rho^{F^*}(Y \cdot f_{\bb}(\mathbf{X})) \leq \rho^{\widehat{F}_N}(Y \cdot f_{\bb}(\mathbf{X})) + M L_{\bb} \epsilon$.
}
\begin{align}\label{eq-nonaffineconfidence}
\p\left(\rho^{F^*}(Y \cdot f_{\bb}(\mathbf X))\le \sup_{F\in \overline{{\cal B}}_p(\widehat{F}_N,  g^{-1}(\epsilon_N) )}\rho^{F}(Y \cdot f_{\bb}(\mathbf X))\right)\ge 1-\eta,
\end{align}
for a fixed $\bb$.
\footnote{
Notably, \cite{ALCD02} also studied the set inclusion property between the projection of a Wasserstein ball and a lower-dimensional Wasserstein ball. However, their focus was on establishing the existence of a lower-dimensional Wasserstein ball as a superset of the projection (i.e., the converse of \eqref{incl}), which cannot be used to derive generalization bounds based on (Wasserstein) in-sample risk, i.e. $\sup_{F\in \overline{\cal B}_p(\widehat{F}_N,\epsilon)}\rho^{F}(Y\cdot f_{\bb}(\mathbf X))$.}  % proving generalization bounds requires demonstrating the existence of a Wasserstein ball as a subset, as shown in \eqref{incl}.}
Extending this to generalization bounds -- i.e., the union bound of the above -- is achievable using covering number techniques. %(see Appendix \ref{sec:proof6} for the details). 
We leave the exact technical details and derivations to Appendix \ref{sec:proof6}, and present here the final generalization bounds along with the necessary assumptions.

\begin{assumption}\label{ass:UB021}\label{ass:UB2}
There exists $a>p$ such that 
\begin{align*}
A:=\sup_{\bb\in\mathcal D}\E^{F^*}[\exp(|f_{\bb}(\mathbf X)|^a)]<\infty.
\end{align*}
\end{assumption} 
\begin{assumption}\label{ass:UB03}
The decision rule $f_{\bb}:\R^n\to\R$ is continuous for each $\bb \in\mathcal D$, and there exists a strictly increasing convex function $g:\R_+\to\R_+$  such that for every $\bb \in\mathcal D$,  $\mathbf x\in \R^n,$ and $\epsilon\ge 0$,
\begin{align*}%\label{eq-strincreasing1}
\sup_{\|\mathbf y\|\le \epsilon} f_{\bb}(\mathbf x+\mathbf y)-f_{\bb}(\mathbf x)\ge g(\epsilon)%,%~~ \forall ,
%\end{align*}
~~{and}~~ 
%\begin{align*}%\label{eq-strincreasing2}
 f_{\bb}(\mathbf x) -\inf_{\|\mathbf y\|\le \epsilon} f_{\bb}(\mathbf x+\mathbf y)\ge g(\epsilon) .
\end{align*}
\end{assumption}

\begin{assumption}\label{ass:UB01}\label{ass:UB02}
There exist $a_1,a_2\ge 0$ and $r\in[1,p]$ such that 
\begin{align*}
|f_{\bb}(\mathbf x)-f_{\widetilde{\bb}}(\mathbf x)|\le (a_1\|\mathbf x\|^r+a_2)\|\bb-\widetilde{\bb}\|_{\mathcal D},~~\forall\, \mathbf x\in\R^n~{\rm and}~\bb,\widetilde{\bb}\in\mathcal D.
\end{align*}
\end{assumption}

\begin{assumption}\label{assum-4}
%	Let $p\in[1,\infty]$ and $\rho: L^p\to \R$. 
% There exist $M>0$ and $k\in[1,p]$ such that $\rho$ is Lipschitz continuous with respect to the $L^k$-norm, with a Lipschitz constant $M$. That is
There exist $M>0$ and $k\in[1,p]$ such that
	\begin{align*}
		|\rho(Z_1)-\rho(Z_2)|\le M \left(\E[|Z_1-Z_2|^k]\right)^{1/k},~~\forall\, Z_1,Z_2\in L^p.
	\end{align*}
\end{assumption}

Assumption \ref{ass:UB021}  is a fairly standard light-tail condition, necessary for invoking the measure concentration property of a one-dimensional Wasserstein ball;  see Proposition \ref{prop-lighttail} in Appendix \ref{sec:sufcoU2} for a sufficient condition for Assumption \ref{ass:UB021} for  a class of decision rules $\{f_{\bb}\}_{\bb\in\mathcal D}$.
%\footnote{\color{red} We will present a sufficient condition (Proposition \ref{prop-lighttail}) for Assumption \ref{ass:UB2} that is more straightforward to verify in Appendix \ref{sec:sufcoU2}. For two specific examples introduced later (Example \ref{ex:quadratic} and Example \ref{ex-maxaffine}), we will explicitly demonstrate how they satisfy Assumption \ref{ass:UB2} based on this sufficient condition.}
Assumption \ref{ass:UB03}, however, is more notable, as it successfully characterizes decision rules that ensure the set inclusion \eqref{incl} holds. Simply put, decision rules must exhibit sufficiently increasing and decreasing behaviour; otherwise, a Wasserstein ball dominated by the projection set would not exist. Assumptions \ref{ass:UB01} and \ref{assum-4} are regularity conditions required for constructing union bounds through covering number techniques.

\begin{theorem}\label{co-nonunion}
For $p\in[1,\infty)$, $\eta\in(0,1)$ and $\mathcal D\subseteq \R^m$ with some $m\in\N$, 
assume that Assumptions \ref{ass:UB021} and \ref{ass:UB03} hold, Assumptions \ref{ass:UB01} and
\ref{assum-4} hold with $rk\le p$, and $U_{\mathcal D}:=\sup_{\bb\in\mathcal D}\|\bb\|_{\mathcal D}<\infty$. Then  we have
\begin{align*}
\p\left(\rho^{F^*}(Y\cdot f_{\bb}(\mathbf X))\le \sup_{F\in \overline{\cal B}_p(\widehat{F}_N,\epsilon_N)}\rho^{F}(Y\cdot f_{\bb}(\mathbf X))+\tau_N,~~\forall~\bm \beta\in\mathcal D \right) \ge 1-\eta-\frac1N,
\end{align*}
where $\tau_N= {M[(2^{r-1}+1)a_1(\E^{F^*}[\|\mathbf X\|^{rk}])^{1/k}+2^{r-1}a_1({\rm\mathbb{V}ar}^{F^*}(\|\mathbf X\|^{rk}))^{1/(2k)}+2^{r-1}a_1\epsilon_N^r+2a_2]}/{N}$, 
% \begin{align*}
% \tau_N= \frac{M[(2^{r-1}+1)a_1\left(\E^{F^*}[\|\mathbf X\|^{rk}]\right)^{1/k}+2^{r-1}a_1\left({\rm\mathbb{V}ar}^{F^*}(\|\mathbf X\|^{rk})\right)^{1/(2k)}+2^{r-1}a_1\epsilon_N(\eta)^r+2a_2]}{N},
% \end{align*}
\begin{align*}%\label{eq-epsilon}
 \epsilon_N=g^{-1}(\epsilon_{p,N})~~{\rm with}~~\epsilon_{p,N}=\begin{cases}
\left(\frac{\log(c_1/\eta)+m\log(1+2 U_{\mathcal D}N)}{c_2 N}\right)^{1/(2p)}&\text{if}~N\ge \frac{\log(c_1/\eta)+m\log(1+2 U_{\mathcal D}N)}{c_2},\\
\left(\frac{\log(c_1/\eta)+m\log(1+2 U_{\mathcal D}N)}{c_2 N}\right)^{1/a}&\text{if}~N< \frac{\log(c_1/\eta)+m\log(1+2 U_{\mathcal D}N)}{c_2},
\end{cases}
\end{align*}
and
$c_1,c_2$ are positive constants that only depend on $p$, $a$ and $A$.
\end{theorem}

The decay rate here, $g^{-1}\left((m/N)^{1/(2p)}\right)$, maintains an order independent of dimensionality, mirroring the $(m/N)^{1/2}$ rate reported in \cite{G22}. Although our rate may not be as rapid, due to its dependence on the function $g$ and the parameter $p$, 
it is derived under less restrictive distributional assumptions and is valid for any type-$p$ Wasserstein ball. Most importantly, it accommodates a significantly broader class of measures $\rho$. A notable class of applications for these bounds includes any regression problem and decision rule, with a rate that, as demonstrated below, is independent of the function $g$.

\begin{example}[Regression]\label{ex-reg}
In the regression problem,  the data-generating distribution $F^*$ satisfies $F^*(\{Y=1\})=1$  and the decision rule has the form: $f_{\bb}(\mathbf x)=x_1-h_{\bb}(\mathbf x_{(2,:)})$, ${\bf x}\in\R^n$, where $x_1$ is the first component of $\mathbf x$ and $\mathbf x_{(2,:)}$ represents the rest components, i.e., we use $h_{\bb}(\mathbf x_{(2,:)})$ to predict the output $x_1$. 
%Note that the decision rule has a same form to Example \ref{ex:FBnews}. Hence, 
Suppose that the norm on $\R^n$ satisfies $\|(1,0,\dots,0)\|=1$.
%$(R\times\R^n,\|\cdot\|)$ is the space of feature-demand vectors  satisfying $\|(\mathbf 0,1)\|=1$.
Then, Assumption \ref{ass:UB03} holds for any $h_{\bb}$ with $g(\epsilon)=\epsilon$. Indeed, we have  for any $\mathbf x\in\R^n$ and $\epsilon\ge 0$,
$ 
 \sup_{\|\mathbf y\|\le\epsilon} f_{\bb}(\mathbf x+\mathbf y)-f_{\bb}(\mathbf x)
 \ge f_{\bb}(\mathbf x+(\epsilon,0,\dots,0))-f_{\bb}(\mathbf x)
 = \epsilon %,~~\forall \mathbf x\in\R^n,~\epsilon\ge 0
$ 
 and 
$ 
f_{\bb}(\mathbf x)-\inf_{\|\mathbf y\|\le\epsilon} f_{\bb}(\mathbf x+\mathbf y)
 \ge  f_{\bb}(\mathbf x) - f_{\bb}(\mathbf x-(\epsilon,0,\dots,0))= \epsilon.
$ 
%  \begin{align*}
%  \sup_{\|\mathbf y\|\le\epsilon} f_{\bb}(\mathbf x+\mathbf y)-f_{\bb}(\mathbf x)
%  =\sup_{\|\mathbf y\|\le\epsilon} 
%  \{y_1+h_{\bb}(\mathbf x_{(2,:)}+\mathbf y_{(2,:)})-h_{\bb}(\mathbf x_{(2,:)})\}
%  \ge \epsilon,~~\forall \mathbf x\in\R^n,~\epsilon\ge 0
%  \end{align*}
%  and 
%  \begin{align*}
% f_{\bb}(\mathbf x)-\inf_{\|\mathbf y\|\le\epsilon} f_{\bb}(\mathbf x+\mathbf y)
%  =\sup_{\|\mathbf y\|\le\epsilon} 
%  \{-y_1-h_{\bb}(\mathbf x_{(2,:)}+\mathbf y_{(2,:)})+h_{\bb}(\mathbf x_{(2,:)})\}\ge \epsilon,~~\forall \mathbf x\in\R^n,~\epsilon\ge 0,
%  \end{align*}
%  where the inequalities of the above two equations hold by letting $\mathbf y=(\epsilon,0,\dots,0)$ and $\mathbf y=(-\epsilon,0,\dots,0)$, respectively. 
Hence, any decision rule for a regression problem satisfies Assumption \ref{ass:UB03} with $g(\epsilon)=\epsilon$.
\end{example}

We further illustrate the application of this example in the context of operations management. Specifically, the feature-based newsvendor problem (\cite{ZYG24}), also known as the big-data newsvendor (\cite{BR18}), represents as a special case of regression problems.

\begin{example}[Feature-based newsvendor]\label{ex:FBnews}
In the newsvendor problem, the decision maker seeks the optimal order quantity for a product facing uncertain demand $X_1$, with holding costs of $a> 0$ and back-order costs of $b> 0$. In the world of big data, the decision maker has access to a feature vector $\mathbf x_{(2,:)}\in \R^{n-1}$, which can be leveraged to enhance ordering decisions based on this additional information. In this setting,  define the decision rule as $f_{\bb}(\mathbf x)=x_1-h_{\bb}(\mathbf x_{(2,:)})$ with $\bb\in\mathcal D$ and the data-driven distributionally robust feature-based newsvendor problem has the form 
\begin{align*}
\inf_{\bb\in\mathcal D}\sup_{F\in \overline{\mathcal B}_p(\widehat{F}_N,\epsilon)} \rho^F(b (f_{\bb}(\mathbf X))_++a (f_{\bb}(\mathbf X))_-).
\end{align*}
%{\color{blue}where the superscript $F$ in $\rho^F$ indicates that the value of the measure depends on a joint distribution $F \in \mathcal M_p(\R^n)$ of  $(Z,\mathbf X)$,  $\widehat{F}_N$ is the empirical distribution of $(Z,\mathbf X)$, and ${\cal B}_p(\widehat{F}_N,\epsilon)$ is defined by \eqref{eq-WU-multi-d}.}
Applying Example \ref{ex-reg}, we know that any decision rule of feature-based newsvendor problem satisfies Assumption \ref{ass:UB03} with $g(\epsilon)=\epsilon$.
%Let $(\R^n\times \R,\|\cdot\|)$ be the space of feature-demand vectors  satisfying $\|(\mathbf 0,1)\|=1$.
% With such a decision rule, we have for all $\bb\in\mathcal D$, $(\mathbf x,z)\in\R^n\times \R$ and $\epsilon\ge 0$
% \begin{align*}
% \sup_{\|(\mathbf x',z')\|\le \epsilon}f_{\bb}(\mathbf x+\mathbf x',z+z')-f_{\bb}(\mathbf x,z)
% \ge \sup_{\|(\mathbf 0,z')\|\le \epsilon} f_{\bb}(\mathbf x,z+z')-f_{\bb}(\mathbf x,z)=\sup_{\|(\mathbf 0,z')\|\le \epsilon} z'=\epsilon
% \end{align*}
% and 
% \begin{align*}
% f_{\bb}(\mathbf x,z)-\inf_{\|(\mathbf x',z')\|\le \epsilon}f_{\bb}(\mathbf x+\mathbf x',z+z')
% \ge f_{\bb}(\mathbf x,z)-\inf_{\|(\mathbf 0,z')\|\le \epsilon} f_{\bb}(\mathbf x,z+z')
% =\sup_{\|(\mathbf 0,z')\|\le \epsilon} -z'=\epsilon.
% \end{align*}
% Thus, Assumption \ref{ass:UB03} holds with $g(\epsilon)=\epsilon$.
\end{example}

The above generalization bound can be extended to an even broader class of decision rules when % {\color{red}assuming that the true distribution $F^*$ satisfies  $F^*(\{Y=1\})=1$}, and 
 the measure $\rho$ satisfies monotonicity, i.e., $\rho(Z_1)\le \rho(Z_2)$ whenever $Z_1\le Z_2$, a natural property in problems such as risk minimization.
Specifically, we consider the following data-driven Wasserstein DRO problem:
 \begin{align*}
\inf_{\bb \in \mathcal D} \sup_{F\in {\cal B}_p(\widehat{F}_{\mathbf X,N},\epsilon)} \rho^F(f_{\bb}(\mathbf X)),
\end{align*}
where $\rho$ satisfies monotonicity, 
$\widehat{F}_{\mathbf X,N}=\frac{1} {N}\sum_{i=1}^N\delta_{\widehat{\mathbf x}_i}$ represents the empirical distribution of the component $\mathbf X$,
and ${\cal B}_p(\widehat{F}_{\mathbf X,N},\epsilon)$ is defined by \eqref{eq-WU-multi-d}.
In these cases,
Assumption \ref{ass:UB03} can be relaxed, requiring only that each decision rule $f_{\bb}$ exhibits sufficiently increasing behavior. 

\begin{assumption}\label{ass:UB05}\label{ass:UB04}
The decision rule $f_{\bb}:\R^n\to\R$ is continuous, and there exists a strictly increasing convex function $g:\R_+\to\R_+$ such that for every $\bb \in\mathcal D$, 
\begin{align*}%\label{eq-strincreasing1}
\sup_{\|\mathbf y\|\le \epsilon} f_{\bb}(\mathbf x+\mathbf y)-f_{\bb}(\mathbf x)\ge g(\epsilon),~~ \forall \mathbf x  \in \R^n,~\epsilon\ge 0.
\end{align*}
\end{assumption}

\begin{theorem}\label{co-nonunion1}
For $p\in[1,\infty)$, $\eta\in(0,1)$ and $\mathcal D\subseteq \R^m$ with some $m\in\N$, assume that
Assumptions \ref{ass:UB2} and \ref{ass:UB05} hold, Assumptions \ref{ass:UB01} and \ref{assum-4} hold with $rk\le p$, $U_{\mathcal D}:=\sup_{\bb\in\mathcal D}\|\bb\|_{\mathcal D}<\infty$, and $\rho:L^p\to\R$ satisfies monotonicity. Then, 
\begin{align*}
		\p\left(\rho^{F^*}(f_{\bb}(\mathbf X))\le \sup_{F\in {\cal B}_p(\widehat{F}_{{\bf X},N},\,\epsilon_N)}\rho^{F}(f_{\bb}(\mathbf X))+\tau_N,~~\forall~\bm \beta\in\mathcal D \right) \ge 1-\eta-\frac1N,
\end{align*}
where  $\tau_N= {M[(2^{r-1}+1)a_1(\E^{F^*}[\|\mathbf X\|^{rk}])^{1/k}+2^{r-1}a_1({\rm\mathbb{V}ar}^{F^*}(\|\mathbf X\|^{rk}))^{1/(2k)}+2^{r-1}a_1\epsilon_N^r+2a_2]}/{N}$ and $\epsilon_N=g^{-1}(\epsilon_{p,N})$ with $\epsilon_{p,N}$  defined in Theorem \ref{co-nonunion}.
%and {\color{red} $\widehat{F}_{{\bf X},N}$ is the empirical distribution of ${\bf X}$}.
% \begin{align*}%\label{eq-epsilon}
% 		\epsilon_{N}=\begin{cases}
% \left(\frac{\log(c_1/\eta)+d\log(1+2 U_{\mathcal D}N)}{c_2 N}\right)^{1/(2p)}~~&\text{if}~N\ge \frac{\log(c_1/\eta)+d\log(1+2 U_{\mathcal D}N)}{c_2},\\
% \left(\frac{\log(c_1/\eta)+d\log(1+2 U_{\mathcal D}N)}{c_2 N}\right)^{1/a}~~&\text{if}~N< \frac{\log(c_1/\eta)+d\log(1+2 U_{\mathcal D}N)}{c_2},
% 		\end{cases}
% \end{align*}
% and
% $c_1,c_2$ are positive constants that only depend on $a$, $A$ in Assumption \ref{ass:UB2}.
\end{theorem}

In particular, Assumption \ref{ass:UB05} accommodates non-linear decision rules, including quadratic and max-affine functions (cf. \cite{EK18} and Example \ref{ex-maxaffine}), which can be bounded below.

\begin{example}[Quadratic function]\label{ex:quadratic}
Define the decision variable as $\bb=(\Sigma,\mathbf a,b)$, where $\mathbf a\in\R^n$, $b\in\R$, and $\Sigma\in\R^{n\times n}$ is a positive semi-definite matrix. 
Let $\mathcal D$ be the feasible set of $\bb$ such that $M:=\inf_{\bb\in\mathcal D}\|\Sigma\|_S>0$, where
$\|\Sigma\|_{S}$ is the spectral norm of $\Sigma$ and is defined as the largest absolute value of all eigenvalues. 
Define the norm on $\mathcal D$ as 
% with the norm 
$ 
 \|\bb\|_{\mathcal D}
 %=\|(\Sigma,\mathbf a,b)\|_{\mathcal D}
 =\|\Sigma\|_S+\|\mathbf a\|_2+|b|~{\rm with}~\bb=(\Sigma,\mathbf a,b)\in\mathcal D.
 $
% where $\|\Sigma\|_{S}$ is the spectral norm of $\Sigma$, which is defined as the largest eigenvalue, and $\|\cdot\|$ is a norm on $\R^n$.
Let
$f_{\bb}(\mathbf x)=\mathbf x^{\top}\Sigma\mathbf x+\mathbf a^\top \mathbf x+b$ for $x\in\R^n$ and $\bb\in\mathcal D$. 
For this specific case,
we first show that $\{f_{\bb}\}_{\bb\in\mathcal D}$
satisfies
Assumption \ref{ass:UB02} with $a_1=a_2=r=2$ and Assumption \ref{ass:UB04}
with $g(\epsilon)=M\epsilon^2$. 
%{\color{red}Moreover, we will }
For $\bb=(\Sigma,\mathbf a,b)\in \mathcal D$ and $\widetilde{\bb}=(\widetilde\Sigma,\widetilde{\mathbf a},\widetilde b)\in \mathcal D$, we have
\begin{align*}
|f_{\bb}(\mathbf x)-f_{\widetilde{\bb}}(\mathbf x)|
%&=|\mathbf x^\top(\Sigma-\widetilde{\Sigma})\mathbf x+(\mathbf a-\widetilde{\mathbf a})^\top\mathbf x+(b-\widetilde b)|\\
&\le |\mathbf x^\top(\Sigma-\widetilde{\Sigma})\mathbf x|+|(\mathbf a-\widetilde{\mathbf a})^\top\mathbf x|+|b-\widetilde b|\\
&\le \|\mathbf x\|_2^2\|\Sigma-\widetilde{\Sigma}\|_S+\|\mathbf x\|_2\|\mathbf a-\widetilde{\mathbf a}\|_2+|b-\widetilde b|
\le 2(\|\mathbf x\|_2^2+1)\|\bb-\widetilde{\bb}\|_{\mathcal D}.
\end{align*}
Hence, Assumption \ref{ass:UB02} holds with $a_1=a_2=r=2$.
For any $\mathbf x\in\R^n$, $\epsilon\ge 0$ and $\bb=(\Sigma,\mathbf a,b)\in \mathcal D$, it follows from the orthogonal decomposition of $\Sigma$, i.e., $\Sigma=Q\Lambda Q^\top$ where $Q$ is an orthogonal matrix and $\Lambda$ is a diagonal matrix, that
\begin{align*}
\sup_{\|\mathbf y\|_2\le \epsilon} f_{\bb}(\mathbf x+\mathbf y)-f_{\bb}(\mathbf x)&=\sup_{\|\mathbf y\|_2\le \epsilon} \{\mathbf y^{\top}\Sigma\mathbf y+(2\mathbf x^\top \Sigma+\mathbf a^\top) \mathbf y\}\\
&=\sup_{\|\mathbf y\|_2\le \epsilon}\left\{(Q^\top\mathbf y)^\top \Lambda(Q^\top\mathbf y)+(2\mathbf x^\top Q\Lambda Q^\top+\mathbf a^\top)\mathbf y \right\}\\
&=\sup_{\|\mathbf z\|_2\le \epsilon}\left\{\mathbf z^\top \Lambda\mathbf z+(2\mathbf x^\top Q\Lambda+\mathbf a^\top Q)\mathbf z \right\}\ge \|\Sigma\|_S \epsilon^2\ge M\epsilon^2.
\end{align*}
Hence, Assumption \ref{ass:UB04} holds with $g(\epsilon)=M\epsilon^2$. 
Further, if we assume that $U_{\mathcal D}:=\sup_{\mathcal D}\|\bb\|_{\mathcal D}<\infty$ and there exists $a>p$ such that $\E^{F^*}[\exp(2^{2a-1}U_{\mathcal D}^a\|\mathbf X\|^{2a})]<\infty$, then  one can verify that Assumption \ref{ass:UB2} holds by
%note that Assumption \ref{ass:UB02} has been verified with $a_1=a_2=r=2$, and 
Proposition \ref{prop-lighttail} in Appendix \ref{sec:sufcoU2}.
\end{example}

To further emphasize its implications for operations management, we next illustrate how Theorem \ref{co-nonunion1} applies to two-stage stochastic programs.

\begin{example}[Two-stage stochastic program]\label{ex-maxaffine} In the context of two-stage stochastic programs with right-hand-side uncertainty, the recourse function is defined as (\cite{EK18}):
$f_{\bb}(\mathbf x) = \inf_{y} \left\{ q^\top y \;|\; 
Wy \geq H \mathbf x+\bb
\right\},
$
where $\bb \in\mathcal D\subseteq \R^{m}$ with $\|\bb\|_\mathcal D=\|\bb\|$, denotes the first-stage decision variable, and $y$ represents the recourse decision. As shown in \cite{EK18}, this function can be reformulated as a max-affine function 
$f_{\bb}(\mathbf x)=\max_{1 \le k\le K}(v_k^\top H\mathbf x + v_k^\top \bb),$
where $v_k\in \R^m$, $k=1,\ldots,K$ are the vertices of the dual feasible set $\left\{\theta \geq 0\;|\; W^\top \theta = q\right\}$. Suppose $M:= \min_{1\le k\le K}\|H^\top v_k   \|_*>0$.  
We show that $f_{\bb}$ for $\bb\in\mathcal D$ satisfies Assumption \ref{ass:UB02} with $a_1=0$ and $a_2=\max_{1 \le k\le K}  \| v_k\|_*$ and Assumption \ref{ass:UB04} with $g(\epsilon)=M\epsilon$. 
 For $\bb,\,\widetilde{\bb} \in\mathcal D$, we have
\begin{align*}
|f_{\bb}(\mathbf x)-f_{\widetilde{\bb}}(\mathbf x)|
& =\left|\max_{1 \le k\le K}(v_k^\top H  \mathbf x  + v_k^\top \bb) -\max_{1 \le k\le K}(v_k^\top H  \mathbf x  + v_k^\top \widetilde{\bb}) \right|\\
& \le \max_{1 \le k\le K}\left|(v_k^\top H  \mathbf x  + v_k^\top \bb) - (v_k^\top H \mathbf x  + v_k^\top \widetilde{\bb})\right|\\
&= \max_{1 \le k\le K} \left| v_k^\top (\bb -   \widetilde{\bb})\right|
 \le\max_{1 \le k\le K}  \| v_k\|_* \|\bb -  \widetilde{\bb} \|   .
\end{align*}
Hence, Assumption \ref{ass:UB02} holds with $a_1=0$ and $a_2=\max_{1 \le k\le K}  \| v_k\|_*$.
For any $\mathbf x\in\R^n$, $\epsilon\ge 0$ and $\bb \in\mathcal D$, we have
\begin{align*}
\sup_{\|\mathbf y\|\le \epsilon} f_{\bb}(\mathbf x+\mathbf y)-f_{\bb}(\mathbf x)
&=\sup_{\|\mathbf y\|\le \epsilon} \max_{1 \le k\le K}(v_k^\top H  (\mathbf x+\mathbf y) + v_k^\top \bb) 
-\max_{1 \le k\le K}(v_k^\top H   \mathbf x + v_k^\top \bb) \\
&=\max_{1 \le k\le K} \sup_{\|\mathbf y\|\le \epsilon} (v_k^\top H  (\mathbf x+\mathbf y) + v_k^\top \bb) 
-\max_{1 \le k\le K}(v_k^\top H   \mathbf x + v_k^\top \bb) \\
&=\max_{1 \le k\le K}   (v_k^\top H  \mathbf x+ \epsilon \|  H^\top v_k \|_*+ v_k^\top \bb) 
-\max_{1 \le k\le K}(v_k^\top H   \mathbf x + v_k^\top \bb) \\
&\ge \min_{1 \le k\le K}   \epsilon \|  H^\top v_k \|_*  = M\epsilon.
\end{align*}
Hence, Assumption \ref{ass:UB04} holds with $g(\epsilon)=M\epsilon$.
Further, if we assume that $U_{\mathcal D}:=\sup_{\bb\in\mathcal D}\|\bb\|_{\mathcal D}<\infty$ and there exists $a>p$ such that $\E^{F^*}[\exp(2^{a-1}R^a\|\mathbf X\|^{a})]<\infty$ with $R:=\max_{1\le k\le K}\|H^{
\top}v_k\|_*$, then one can verify that Assumption \ref{ass:UB2} holds by
Proposition \ref{prop-lighttail} in Appendix \ref{sec:sufcoU2}. We note that extending this discussion to cases where the matrix $H$ also represents a first-stage decision is straightforward, and verifying this extension closely mirrors our existing analysis, thus it is omitted here for brevity.
%the result $g(\epsilon)=\widetilde{M}\epsilon$ for some $\widetilde{M}$ continues to hold.
\end{example}

Finally, we show that for problems such as regression or risk minimization, the ``curse of the order $p$" -- i.e., the dependence of the decay rate on $p$ -- can be overcome for any non-linear decision
rules covered by Assumption \ref{ass:UB05}.
% can be overcome for non-linear decision rules that satisfy Assumption \ref{ass:UB05} and the following uniform Lipschitz continuity condition. 

% \begin{assumption}\label{ass:ULip}
% There exists $L>0$ such that for every $\bb\in\mathcal D$,
% \begin{align*}
% |f_{\bb}(\mathbf x)-f_{\bb}(\mathbf y)|\le L\|\mathbf x-\mathbf y\|,~~\forall \mathbf x,\mathbf y\in\R^n.
% \end{align*}
% \end{assumption}

\begin{theorem}\label{co-nonunion2}
For $p\in[1,\infty]$, $\eta\in(0,1)$ and $\mathcal D\subseteq \R^m$ with some $m\in\N$,  
assume that Assumptions \ref{ass:order1} and \ref{ass:UB05} hold, Assumption \ref{ass:UB2} holds with $a>1$, Assumption \ref{ass:UB01} holds with $r=1$ and
$U_{\mathcal D}:=\sup_{\bb\in\mathcal D}\|\bb\|_{\mathcal D}<\infty$, and $\rho:L^1\to\R$ satisfies monotonicity and  %is Lipschitz continuous with respect to   $L^\infty$-norm  with  Lipschitz constant $M$, i.e., 
$|\rho(Z_1)-\rho(Z_2)|\le M\esssup|Z_1-Z_2|$ for  $Z_1,Z_2\in L^1$. Then,  
\begin{align*}
		\p\left(	\rho^{F^*}(f_{\bb}(\mathbf X))\le \sup_{F\in  {\cal B}_p(\widehat{F}_{{\bf X},N},\epsilon_N)}\rho^{F}(f_{\bb}(\mathbf X))+\tau_N,~~\forall~\bm \beta\in\mathcal D \right) \ge 1-\eta-\frac1N,
\end{align*}
where 
$\tau_N= {\lambda M[a_1\E^{F^*}[\|\mathbf X\|]+a_1({\rm\mathbb{V}ar}^{F^*}\!\!(\|\mathbf X\|))^{1/2}+a_1\epsilon_N+2a_2]}/{N}$ and $\epsilon_N=g^{-1}(\lambda\epsilon_{1,N})$ with $\epsilon_{1,N}$  defined in Theorem \ref{co-nonunion}.
% \begin{align*}%\label{eq-epsilon}
% 		\epsilon_{N}=\begin{cases}
% \left(\frac{\log(c_1/\eta)+d\log(1+2 U_{\mathcal D}N)}{c_2 N}\right)^{1/(2p)}\big/m~~&\text{if}~N\ge \frac{\log(c_1/\eta)+d\log(1+2 U_{\mathcal D}N)}{c_2},\\
% \left(\frac{\log(c_1/\eta)+d\log(1+2 U_{\mathcal D}N)}{c_2 N}\right)^{1/a}\big/m~~&\text{if}~N< \frac{\log(c_1/\eta)+d\log(1+2 U_{\mathcal D}N)}{c_2},
% 		\end{cases}
% \end{align*}
% and
% $c_1,c_2$ are positive constants that only depend on $a$, $A$ in Assumption \ref{ass:UB2}.
\end{theorem}

We note that Assumption \ref{ass:order1} and the Lipschitz continuity with respect to the $L^\infty$-norm together imply Assumption \ref{assum-4} (with $k=1$). Hence, Assumption \ref{assum-4} is omitted in Theorem \ref{co-nonunion2} (see Appendix \ref{app:th10} for details).

%\begin{remark}
%In Theorem \ref{co-nonunion2}, Assumption \ref{assum-4} is omitted and replaced with Lipschitz continuity with respect to the $L^\infty$-norm, which represents a weaker continuity condition than Assumption \ref{assum-4}. This adjustment is justified because Lipschitz continuity with respect to the $L^1$-norm can be derived from Assumption \ref{ass:order1} and the Lipschitz continuity with respect to the $L^\infty$-norm (see Appendix \ref{app:th10} for details).
%\end{remark}

%%%%%%%%%%%%%%%%%%%%%%%%%%%%%%%%%%%%%%%%%%%%%%%%%%%%%%%%
\section{Conclusion}\label{sec:conclusion}
As ML and OR/MS applications increasingly adopt diverse decision criteria, identifying methods that ensure robust, data-driven solutions with strong generalization guarantees across a wide range of criteria is essential. This paper establishes the universality of Wasserstein DRO in achieving such guarantees. We show that Wasserstein DRO attains generalization bounds comparable to max-sliced Wasserstein DRO for any decision criterion under affine decision rules, even though the Wasserstein ball is a significantly less conservative ambiguity set. Furthermore, we extend these guarantees to general criteria and decision rules under light-tail assumptions, avoiding the curse of dimensionality. Our projection-based analysis of the Wasserstein ball offers key insights into why these guarantees require minimal reliance on specific properties of decision criteria.

Beyond developing generalization guarantees, our work emphasizes their practical significance for OR/MS problems. We demonstrate that these guarantees can often be achieved through regularization equivalents, as detailed in Theorems \ref{th-RegularizationEL}, \ref{th-necessity-highorderloss}, and \ref{th-uniqueDT}, all of which are efficiently solvable as convex programs with complexity comparable to nominal problems. Importantly, our impossibility theorems identify precisely when solving the full Wasserstein DRO problem becomes necessary, defining the boundaries where simpler regularization approaches no longer suffice. These results deepen the theoretical foundation of Wasserstein DRO while charting a clear course for developing more efficient methods to address full Wasserstein DRO formulations.

\newpage

\appendix
\renewcommand{\appendixname}{Appendix~\Alph{section}}

\section{Proofs of Section \ref{Gen}}\label{sec:EC1}
Before presenting the proofs in Section \ref{sec:EC1}, we first introduce a lemma inspired by the optimal coupling (see, e.g., Theorem 4.1 of \cite{V09}),\footnote{We thank a referee for bringing this to our attention.} which will be used throughout the proofs in the appendix. 

For $m\in \N$, a function $c:\R^m\times \R^m\to\R\cup\{+\infty\}$  is a \emph{cost function} if $c(\mathbf x,\mathbf y)=c(\mathbf y,\mathbf x)\ge 0$ for all $\mathbf x,\mathbf y\in\R^m$. For a given cost function $c$, define $W_{c}(F,F_0)= \inf_{\pi\in\Pi(F,F_0)}\E^{\pi}[c(\bm\xi,\bm\xi_0)]$  for two distributions $F,F_0$ on $\R^m$.
\begin{lemma}\label{lm-optcoupling}
Let $c:\R^m\times \R^m \to\R\cup\{+\infty\}$ with $m\in\N$ be a lower
semicontinuous cost function. Given two distributions $F,F_0$  on $\R^m$ and $\mathbf X_0\sim F_0$, 
 there exists $\mathbf X\sim F$ such that $W_{c}(F,F_0)=\E[c(\mathbf X,\mathbf X_0)]$. 
 As a result, we have 
 for $\epsilon\ge 0$,
 \begin{align}\label{eq-optc}
%\mathcal B_{c}(F_0,\epsilon):=
\left\{G: W_{c}(G,F_0)\le \epsilon\right\}=\left\{F_{\mathbf X}: \E[c(\mathbf X,\mathbf X_0)]\le \epsilon\right\}.
 \end{align}
\end{lemma}

\proof{Proof.} 
Note that by the optimal coupling (see e.g., Theorem 4.1 of \cite{V09}),  there exists $\mathbf Y_0\sim F_0$ and $\mathbf Y\sim F$ such that
 $ 
W_{c}(F,F_0)
=\E[c(\mathbf Y,\mathbf Y_0)].
$  
Since $\mathbf X_0\laweq \mathbf Y_0$,  by Theorem 8.17 of \cite{K21}, there exists $\mathbf X\laweq \mathbf Y\sim F$ such that $(\mathbf X,\mathbf X_0)\laweq (\mathbf Y,\mathbf Y_0)$, and thus, 
$  
\E[c(\mathbf X,\mathbf X_0)]=\E[c(\mathbf Y,\mathbf Y_0)]=
W_{c}(F,F_0).
$   %The \eqref{eq-optc} follows immediately. 
This completes the proof. 
% To show \eqref{eq-optc}, %Denote by 
% % \begin{align*}
% % \mathcal B_1=\left\{G: \inf_{\pi\in\Pi(G,F_0)}\E^{\pi}[c(\bm\xi,\bm\xi_0)]\le \epsilon\right\}~~{\rm and}~~
% % \mathcal B_2=\left\{F_{\mathbf X}: \E[c(\mathbf X,\mathbf X_0)]\le \epsilon\right\}.
% % \end{align*}
% %Note that $\mathbf X_0\sim F_0$.
% %It is straightforward to check 
% it suffices to show  $ \left\{G: W_{c}(G,F_0)\le \epsilon\right\}\subseteq  \{F_{\mathbf X}: \E[c(\mathbf X,\mathbf X_0)]\le \epsilon \} $.  Take $G$ such that $W_{c}(G,F_0)\le \epsilon$. The previous argument guarantees that there exists $\mathbf X\sim G$ such that 
% $
% \E[c(\mathbf X,\mathbf X_0)] =W_{c}(F,F_0)
% \le \epsilon,
% $ 
% which implies $G\in   \{F_{\mathbf X}: \E[c(\mathbf X,\mathbf X_0)]\le \epsilon \}$. Therefore, we have \eqref{eq-optc} holds and thus  complete  the proof.
\hfill $\Box$

Consider the type-$p$ Wasserstein balls $\overline{\mathcal B}_{p}$ and $\mathcal B_p$  defined by \eqref{eq-WU-multi}  and  \eqref{eq-WU-multi-d}, respectively. %with the cost function $d^p$ where $d$ is  
%defined by  \eqref{eq-classificationmetric} on $\Xi\subseteq \R^{n+1}$, that is, 
% $$
% 	d((Y_1,\mathbf X_1), (Y_2,\mathbf X_2)) :=
% 	\Vert \mathbf X_1-\mathbf X_2 \Vert + \Theta(Y_1-Y_2).$$
By
Lemma \ref{lm-optcoupling}, we have for   $(Y_0,\mathbf X_0)\sim F_0$, it holds that % gives the following representations of the Wasserstein ball in our framework on $\R^{n+1}$:
\begin{align*}
\overline{\cal B}_p (F_0,\epsilon)
=\{F_{(Y,\mathbf X)}: \E[d((Y,\mathbf X),(Y_0,\mathbf X_0))^p]\le \epsilon^p\}~~{\rm and}~~
%=\{F_{(Y_0,\mathbf X)}: \E[\|\mathbf X-\mathbf X_0\|^p]\le \epsilon^p\},
%\end{align*}
%where the last equality follows from the definition in \eqref{eq-classificationmetric} which implies  $Y=Y_0$ almost surely for any  $(Y, {\bf X})$ satisfying $ \E[d((Y,\mathbf X),(Y_0,\mathbf X_0))^p]\le \epsilon^p$,
%and on $\R^n$:
%\begin{align*}
{\cal B}_p (F_{{\bf X}_0},\epsilon)=\{F_{\mathbf X}: \E[\|\mathbf X-\mathbf X_0\|^p]\le \epsilon^p\}.
\end{align*}
The two formulas above will be used frequently throughout the appendix, and we may apply them directly without referencing Lemma \ref{lm-optcoupling}.

%\subsection{Proof of Proposition \ref{prop-equi}}

\noindent{\bf Proof of Proposition \ref{prop-equi}.} 
%{\color{red}
% For $F\in\mathcal M_p$, it follows from a result on optimal coupling (see e.g., Theorem 4.1 of \cite{V09}) that there exists $(Y,\mathbf X)\sim F$ such that
% \begin{align*}
% W_{d,p}(F,F_0)=\left(\E[d((Y,\mathbf X),(Y_0,\mathbf X_0))^p]\right)^{1/p}.
% \end{align*}
% }
% Hence, it holds that
Note that
\begin{align*}
	\{ F_{Y\bf X}: F_{(Y,{\bf X})} \in \overline{\cal B}_p (F_0,\epsilon)\}
	%&=\{F_{Y\cdot\bf X}: F_{(Y,\mathbf X)}\in {\cal B}_p(F_0,\epsilon)\}\\
	&=\left\{F_{Y\bf X}: \E[d((Y,\mathbf X),(Y_0,\mathbf X_0))^p]\le \epsilon^p\right\}\\
	&=\left\{F_{Y_0\bf X}: \E[\|\mathbf X-\mathbf X_0\|^p]\le \epsilon^p\right\}\\
	&=\left\{F_{Y_0\bf X}: \E[\|Y_0\mathbf X-Y_0\mathbf X_0\|^p]\le \epsilon^p\right\}=: \mathcal B_{(1)},
\end{align*}
where the first equality is due to Lemma \ref{lm-optcoupling},
the second equality follows from the definition in \eqref{eq-classificationmetric} which implies  $Y=Y_0$ almost surely for any  $(Y, {\bf X})$ satisfying $ \E[d((Y,\mathbf X),(Y_0,\mathbf X_0))^p]\le \epsilon^p$, and the third equality follows from $\|\mathbf X-\mathbf X_0\|=\|Y_0\mathbf X-Y_0\mathbf X_0\|$ as $|Y_0|=1$. 
Additionally, it follows from Lemma \ref{lm-optcoupling} that
\begin{align*}
	{\cal B}_p (F_{Y_0\bf X_0},\epsilon)=\left\{F_{\bf Z}: \E[\|{\bf Z}- Y_0{\bf X_0}\|^p]\le\epsilon^p\right\}=:\mathcal B_{(2)},
\end{align*}
it holds that $\mathcal B_{(1)}\subseteq\mathcal B_{(2)}$ obviously.
%\begin{align*}
%A=\left\{F_{(Y_0,{\bf X})}: \E[\|\mathbf X-\mathbf X_0\|^p]\le \epsilon^p\right\},~~B=\left\{F_{\bf Z}: \E[\|{\bf Z}- Y_0\cdot{\bf X_0}\|^p]\le\epsilon^p\right\}.
%\end{align*}
To see the converse inclusion, note that %for any $F\in \mathcal B_{(1)}$, there exists $\mathbf X$ such that $Y_0\cdot\mathbf X\sim F$ and
%$
%\E[\|{\bf X}-{\bf X}_0\|^p ]\le\epsilon^p,
%$ which implies
%$$
%\E[\| Y_0\cdot\mathbf X-Y_0\cdot{\bf X}_0\|^p ] = \E[\|{\bf X}-{\bf X}_0\|^p ]\le\epsilon^p.
%$$
%Hence, we have $F\in \mathcal B_{(2)}$ which implies $\mathcal B_{(1)}\subseteq \mathcal B_{(2)}$.
%On the other hand,
for any $F\in \mathcal B_{(2)}$, there exists $\mathbf Z$ such that $\mathbf Z\sim F$ and
$
\E[\| {\bf Z}-Y_0{\bf X}_0\|^p ]\le\epsilon^p.
$
Taking $(Y,{\bf X}) = (Y_0, {\bf Z}/Y_0 )$, we have $Y=Y_0$ and
$$
\E[\| Y_0{\bf X}-Y_0{\bf X}_0 \|^p ]
%= \E\left[\left\| \frac{{\bf Z}}{Y_0}-{\bf X}_0 \right\|^p \right]
=\E\left[\left\| {\bf Z}-Y_0{\bf X}_0 \right\|^p \right]
\le\epsilon^p.
$$
%where the second equality follows from $Y_0\in\{-1,1\}$.
Hence, we have $F \in \mathcal B_{(1)}$, and thus $\mathcal B_{(2)}\subseteq \mathcal B_{(1)}$.  Therefore, we have $\mathcal B_{(1)}=\mathcal B_{(2)}$. This completes the proof of \eqref{eq:221010-1}. 
To show \eqref{eq:1010-2},  denote by 
\begin{align*}
    \mathcal B_1=\overline{\mathcal B}_{p|\bb}( F_0,\epsilon),~~ 
\mathcal B_2=\mathcal B_p\left(F_{Y_0\cdot\bm\beta^{\top}\bf X_0},\epsilon\|\bm\beta\|_*\right)~~{\rm and}~~\mathcal B_3=\{F_{\bm \beta^\top \mathbf Z}: F_{\bf Z}\in  \mathcal B_p^{\rm ms}(F_{Y_0\mathbf X_0},\epsilon)\}.
\end{align*}We first verify that $\mathcal B_1=\mathcal B_2$.
With the aid of  \eqref{eq:221010-1},  we can verify this similarly as  the proof of  Theorem 5 in \cite{MWW22}. For ease of completeness, we give a proof of $\mathcal B_1=\mathcal B_2$ here. The case of $\|\bm\beta\|=0$ is trivial, and we assume now $\|\bm\beta\|>0$.
Note that
\begin{align}\label{eq-B_1}
\mathcal B_1&=
	\{F_{Y\cdot \bm \beta^\top \mathbf X}: F_{(Y,\mathbf X)}\in \overline{\cal B}_p(F_0,\epsilon)\}\notag\\
 &=\left\{F_{Y\cdot\bm\beta^\top \mathbf X}: \E[d((Y,\mathbf X),(Y_0,\mathbf X_0))^p]\le \epsilon^p\right\}
	=\left\{F_{Y_0\cdot\bm\beta^\top \mathbf X}: \E[\|\mathbf X-\mathbf X_0\|^p]\le \epsilon^p\right\},
\end{align}
where the last equality follows from the definition in \eqref{eq-classificationmetric} and Lemma \ref{lm-optcoupling}. 
%It suffices to show $\mathcal B=\mathcal B_p(F_{Y_0\cdot\bm\beta^{\top}\bf X_0},\epsilon\|\bm\beta\|_*)$.
On one hand, for $F\in \mathcal B_1$, there exists $\mathbf X$ such that $Y_0\cdot\bm\beta^\top \mathbf X\sim F$ and $\E[\|\mathbf X-\mathbf X_0\|^p]\le \epsilon^p$. Thus, we have
\begin{align}\label{eq-proone}
	\E[|Y_0\cdot\bm\beta^\top \mathbf X-Y_0\cdot\bm\beta^\top \mathbf X_0|^p]
	&=\E[|\bm\beta^\top \mathbf X-\bm\beta^\top \mathbf X_0|^p]
	\le \|\bm\beta\|_*^p\E[\|\mathbf X-\mathbf X_0\|^p]
	\le  \epsilon^p\|\bm\beta\|_*^p,
\end{align}
where we use H\"{o}lder's inequality in the first inequality. %{\color{red}Note that we have assumed that the norm for the cost function in a one-dimensional Wasserstein ball is the absolute-value norm, and combining with \eqref{eq-proone} yields 
This means
$F=F_{Y_0\cdot\bb^{\top}\mathbf X}\in \mathcal B_2$, and hence, $\mathcal B_1\subseteq \mathcal B_2$.
On the other hand, for $F\in \mathcal B_2$, using Lemma \ref{lm-optcoupling} yields that
there exists $Z\sim F$ such that $\E[|Z-Y_0\cdot\bm\beta^{\top}{\bf X_0}|^p]\le \epsilon^p\|\bm\beta\|_*^p$.
%Since $\|\beta\|>0$,
It follows from the definition of dual norm that there exists $\bm\beta_0\in\R^n$ such that $\|\bm\beta_0\|=1$ and $\|\bm\beta\|_*=\bm\beta^\top\bm\beta_0$.
Define
$$
T=Z-Y_0\cdot\bm\beta^{\top}{\bf X_0}~~\mbox{and}~~
{\mathbf X}={\mathbf X_0}+\frac{\bm\beta_0 T}{Y_0\|\bm\beta\|_*}.
$$
It holds that $\E[|T|^p]\le \epsilon^p\|\bm\beta\|_*^p$, and thus,
\begin{align*}
	\E[\|\mathbf X-\mathbf X_0\|^p]=\E\left[\left\|\frac{\bm\beta_0 T}{Y_0\|\bm\beta\|_*}\right\|^p\right]=\frac{\E[|T|^p]}{\|\bm\beta\|_*^p}\le \epsilon^p.
\end{align*}
It follows from \eqref{eq-B_1} that $F_{Y_0\cdot\bm\beta^{\top}{\bf X}}\in\mathcal B_1$.
Noting that $Z=Y_0\cdot\bm\beta^{\top}{\bf X}$ and $Z\sim F$, we have $F\in\mathcal B_1$. Hence, we conclude $ \mathcal B_2\subseteq \mathcal B_1$. This completes the proof of $\mathcal B_1=\mathcal B_2$.

It remains to prove that $\mathcal B_1\subseteq \mathcal B_3\subseteq \mathcal B_2$.  % for $G_1,G_2\in \mathcal M_p(\R^n)$, the Wasserstein distance is defined as:
% \begin{align*}%\label{eq:dwasser-d}
% 		W_{p}\left(G_1,G_2\right)=  \inf_{\pi \in \Pi(G_1,G_2)}
% 		\left(\mathbb{E}^\pi [\|\bm\xi_1- \bm\xi_2\|^p] \right)^{ {1}/{p}},
% 	\end{align*}
% and the max-sliced Wasserstein distance as:  
For  $G_1,G_2\in \mathcal M_p(\R^n)$, denote \begin{align*} %\label{eq:dwasser-d}
 {W}_p^{\rm ms}(G_1,G_2) := \sup_{\bm\gamma:\|\bm\gamma\|_*=1}\inf_{\pi\in\Pi(G_1,G_2)}\left(\E^{\pi}[|\bm\gamma^\top \bm\xi_1-\bm\gamma^\top \bm\xi_2|^p]\right)^{1/p},
	\end{align*}
    and thus,
$ 
   \mathcal B_3=\{F_{\bm \beta^\top \mathbf Z}:  {W}_p^{\rm ms}(F_{\bf Z},F_{Y_0\mathbf X_0})  \le \epsilon\}.
$ For $F\in\mathcal B_1$ and using \eqref{eq-B_1}, there exists $\mathbf X$ such that $Y_0\cdot\bm\beta^\top \mathbf X\sim F$ and $\E[\|\mathbf X-\mathbf X_0\|^p]\le \epsilon^p$. It holds that
\begin{align*}
{W}_p^{\rm ms}(F_{Y_0 \mathbf X},F_{Y_0\mathbf X_0})^p&=\sup_{\bm\gamma:\|\bm\gamma\|_*=1}\inf_{\pi\in \Pi(F_{ Y_0 \mathbf X},F_{Y_0\mathbf X_0})}\E^{\pi}[|\bm\gamma^\top \bm\xi_1-\bm\gamma^\top\bm\xi_2|^p]\\[3pt]
&\le \sup_{\bm\gamma:\|\bm\gamma\|_*=1} \E[|Y_0\cdot \bm\gamma^{\top}(\mathbf X-\mathbf X_0)|^p]\\[3pt]
&\le \sup_{\bm\gamma:\|\bm\gamma\|_*=1} \|\bm\gamma\|_*^p\E[\|\mathbf X-\mathbf X_0\|^p]\\[3pt]
&=\E[\|\mathbf X-\mathbf X_0\|^p]\le \epsilon^p.
\end{align*}
This implies $F=F_{Y_0\cdot \bm\beta^{\top}\mathbf X}\in\mathcal B_3$, and hence, $\mathcal B_1\subseteq \mathcal B_3$. Suppose now $F\in \mathcal B_3$. %, and denote by $\overline{\bm\beta}=\bm\beta/\|\bm\beta\|_*$. 
 There exists $\mathbf Z$ such that $\bm\beta^{\top} \mathbf Z\sim F$ and  ${W}_p^{\rm ms}(F_{\mathbf Z},F_{Y_0\mathbf X_0}) \le \epsilon$. It follows that 
% \begin{align*}
% \epsilon^p& \ge  \sup_{\bm\gamma:\|\bm\gamma\|_*=1}\inf_{\pi\in \Pi\left(F_{\mathbf Z},F_{Y_0\cdot\mathbf X_0}\right)}\E^{\pi}[|\bm\gamma^\top \bm\xi_1-\bm\gamma^\top\bm\xi_2|^p]\\
% &\ge \inf_{\pi\in \Pi\left(F_{\mathbf Z},F_{Y_0\cdot\mathbf X_0}\right)}\E^{\pi}[|\overline{\bm\beta}^\top \bm\xi_1-\overline{\bm\beta}^\top\bm\xi_2|^p]\\
% &\ge \inf_{\pi\in \Pi\left(F_{\overline{\bm\beta}^\top\mathbf Z},F_{Y_0\cdot\overline{\bm\beta}^\top\mathbf X_0}\right)}\E^{\pi}[|\xi_1-\xi_2|^p]\\
% &=\frac{1}{\|\bm\beta\|_*^p}\inf_{\pi\in \Pi\left(F_{{\bm\beta}^\top\mathbf Z},F_{Y_0\cdot{\bm\beta}^\top\mathbf X_0}\right)}\E^{\pi}[|\xi_1-\xi_2|^p],
% \end{align*}
\begin{align*}
\epsilon^p
& \ge  \sup_{\bm\gamma:\|\bm\gamma\|_*=1}\inf_{\pi\in \Pi\left(F_{\mathbf Z},F_{Y_0\mathbf X_0}\right)}\E^{\pi}[|\bm\gamma^\top \bm\xi_1-\bm\gamma^\top\bm\xi_2|^p]\\[3pt]
&\ge \inf_{\pi\in \Pi\left(F_{\mathbf Z},F_{Y_0\mathbf X_0}\right)}\frac1{\|\bm\beta\|_*^p} \E^{\pi}[|{\bm\beta}^\top \bm\xi_1- {\bm\beta}^\top\bm\xi_2|^p]\\[3pt]
%&\ge \inf_{\pi\in \Pi\left(F_{\overline{\bm\beta}^\top\mathbf Z},F_{Y_0\cdot\overline{\bm\beta}^\top\mathbf X_0}\right)}\E^{\pi}[|\xi_1-\xi_2|^p]\\
&\ge\frac{1}{\|\bm\beta\|_*^p}\inf_{\pi\in \Pi\left(F_{{\bm\beta}^\top\mathbf Z},F_{Y_0\cdot{\bm\beta}^\top\mathbf X_0}\right)}\E^{\pi}[|\xi_1-\xi_2|^p],
\end{align*}
where the second inequality follows from $\bm \beta/\|\bm \beta\|_*\in \{\bm \gamma:\|\bm \gamma\|_*=1\}$.
This implies 
\begin{align*}
W_p \left(F_{{\bm\beta}^\top\mathbf Z},F_{Y_0\cdot{\bm\beta}^\top\mathbf X_0}\right) =\inf_{\pi\in \Pi\left(F_{{\bm\beta}^\top\mathbf Z},F_{Y_0\cdot{\bm\beta}^\top\mathbf X_0}\right)} \left(\E^{\pi}[|\xi_1-\xi_2|^p] \right)^{1/p} \le \|\bm\beta\|_* \epsilon .
\end{align*}
Hence, we have $F=F_{\bm\beta^\top \mathbf Z}\in\mathcal B_2$, which yields $\mathcal B_3\subseteq\mathcal B_2$. This completes the proof.
\endproof

~~
%
%\begin{align*}
%	\E^{F^*_{\bm\beta} } [\exp({|Z|})] = \E^{F^*}[\exp(|Y\cdot\bm \beta^\top \mathbf X|^a)]
% \le \E^{F^*}[\exp(\|\bm \beta\|_*^a \|\mathbf X\|^a)]&\le\E^{F^*}[\exp(U_D^a \|\mathbf X\|^a)]\\
%	&\le \E^{F^*}[\exp(b\|\mathbf X\|^a)]=A<\infty,
%\end{align*}
%where the first inequality follows from H\"{o}lder inequality, that is, $F^*_{\bm\beta}$ is a light-tailed distribution on $\R$ which satisfies the condition of Theorem 2 of \cite{FG15}. Further, noting that  $\widehat{F}_{N,\bm\beta}$ is the empirical distribution based on the the independent samples in the population whose true distribution is $F^*_{\bm\beta}$, by Theorem 2 of \cite{FG15}, we obtain \begin{align}\label{eq-MC1}
	%	\p\left(W_p\left(\widehat{F}_{N,\bm\beta},F^*_{\bm\beta}\right)\ge \epsilon\right)\le \begin{cases}
		%		c_1\exp(-c_2 N\epsilon^2)~~&\text{if}~\epsilon\le 1,\\
		%		c_1\exp(-c_2 N\epsilon^{a/p})~~&\text{if}~\epsilon> 1,
		%	\end{cases}
	%\end{align}
	%where $c_1,c_2$ are positive constants that only depend on $a$, $A$, and $p$. Substituting $\epsilon=\widehat{\epsilon}_{p,N}(\eta)$ defined by \eqref{eq-epsilon}  yields the desired result.
	%\qed
	
~~

%\proof{\bf Proof of Theorem \ref{eq-UGB}}
The proof of Theorem \ref{eq-UGB} relies on our projection result in Proposition \ref{prop-equi} and a concentration result for the max-sliced Wasserstein distance in Theorem 3 of \cite{ORVW22}, which is given as follows. %To state it, for  $G_1,G_2\in \mathcal M_p(\R^n)$, denote \begin{align*} %\label{eq:dwasser-d}
%  {W}_p^{\rm ms}(G_1,G_2) := \sup_{\bm\gamma:\|\bm\gamma\|_*=1}\inf_{\pi\in\Pi(G_1,G_2)}\left(\E^{\pi}[|\bm\gamma^\top \bm\xi_1-\bm\gamma^\top \bm\xi_2|^p]\right)^{1/p}.
% 	\end{align*}

\begin{lemma}[Theorem 3 of \cite{ORVW22}]\label{lm-ORVW}
Let $p\ge 1$ and $\eta\in(0,1)$. Suppose that $\widetilde{F}^*\in\mathcal M_p(\R^n)$ satisfies $\Gamma:=\E^{\widetilde{F}^*}[\|\mathbf X\|^s]<\infty$ for some $s>2p$ and $\epsilon_{p,N}(\eta)$ is defined in Theorem \ref{eq-UGB}. Then, we have
\begin{align*}
\p\left(\widetilde{F}^* \in {\cal B}_p^{\rm ms}\big(\widetilde{F}_N, \epsilon_{p,N}(\eta)\big)\right)\ge 1-\eta,
\end{align*}
where $\widetilde{F}_N$ is the empirical distribution of $\widetilde{F}^*.$
\end{lemma}

%Now we are ready to give the proof of Theorem \ref{eq-UGB}.
\noindent{\bf Proof of Theorem \ref{eq-UGB}.}
Denote by $\widetilde{F}_{N}=\frac{1}{N}\sum_{i=1}^N \delta_{\widehat{y}_i \widehat{\mathbf x}_i}$, %$\widehat{F}_{N,\bb}=\frac{1}{N}\sum_{i=1}^N \delta_{\widehat{y}_i\cdot \bb^{\top} \widehat{\mathbf x}_i}$,
 % $\widetilde{F}^*=F_{Y^* \mathbf X^*}$ and
%$\widehat{F}_{N,\bm\beta}=\frac{1}{N}\sum_{i=1}^N \delta_{\widehat{y}_i\cdot \bm\beta^\top\widehat{\mathbf x}_i}$ and 
$F^*_{\bm\beta} =F_{Y^*\cdot \bm\beta^\top \mathbf X^*}$, where $(Y^*,\mathbf X^*)\sim F^*$, and $\epsilon_{N} = \epsilon_{p,N}(\eta)$. 
It is clear that $\widetilde{F}_{N}$ is an empirical distribution of $F_{Y^* \mathbf X^*}$. 
It holds that
% We first show that
% \begin{align}\label{eq-msge}
% W_p^{\rm ms}(\widetilde{F}_N,\widetilde{F}^*)\ge \sup_{\|\bb\|_*=1} W_p(\widehat{F}_{N,\bb},F^*_{\bb}).
% \end{align}
% Denote by $A_{\bb}=\{(\bb^\top\bm\xi_1,\bb^\top\bm\xi_2):\bm\xi_1\sim \widetilde{F}_N,~\bm\xi_2\sim \widetilde{F}^*\}$ and $B_{\bb}=\{(Z_1,Z_2): Z_1\sim \widehat{F}_{N,\bb},~Z_2\sim F^*_{\bb}\}$. It is straightforward to check that $A_{\bb}\subseteq B_{\bb}$. Hence, we have
% \begin{align*}
% W_p^{\rm ms}(\widetilde{F}_N,\widetilde{F}^*)^p&=\sup_{\|\bb\|_*=1}\inf_{\pi\in\Pi(\widetilde{F}_N,\widetilde{F}^*)}\E^{\pi}[|\bb^\top\bm\xi_1-\bb^\top\bm\xi_2|^p]
% =\sup_{\|\bb\|_*=1}\inf_{(Z_1,Z_2)\in A_{\bb}}\E[|Z_1-Z_2|^p]\\
% &\ge \sup_{\|\bb\|_*=1}\inf_{(Z_1,Z_2)\in B_{\bb}}\E[|Z_1-Z_2|^p]=\sup_{\|\bb\|_*=1}W_p(\widehat{F}_{N,\bb},F^*_{\bb})^p.
% \end{align*}
% Next,
%We have %the subsequent chain of inequalities:
\begin{align*}
 &\p\left(\rho^{F^*}(Y\cdot\bm \beta^\top\mathbf X)\le
			\sup_{F\in \overline{\cal B}_p(\widehat{F}_N,\epsilon_{N})}\rho^{F}(Y\cdot\bm \beta^\top\mathbf X),~\forall \bm\beta\in\mathcal D\right) \\[3pt]
			&=  \p\left(\rho^{F^*_{\bm\beta}}(Z)\le
			\sup_{F\in \overline{\mathcal B}_{p|\bb}(\widehat{F}_N,\epsilon_{N}) }\rho^{F}(Z),~\forall \bm\beta\in\mathcal D\right) \\[3pt]
			&\ge \p\left(F^*_{\bm\beta}\in\overline{\mathcal B}_{p|\bb}(\widehat{F}_N,\epsilon_{N}) ,~\forall \bm\beta\in\mathcal D\right)\\[3pt]
    &= \p\left(F^*_{\bm\beta}\in  \{ F_{\bb^\top \bf Z}:  F_{\bf Z}\in  {\cal B}_p^{\rm ms}(\widetilde{F}_N,\epsilon_{N} ), %{W}_p^{\rm ms}({F}_{\bf Z},\widetilde{F}_N) \le \epsilon_{N} \},
       ~\forall \bm\beta\in\mathcal D \right)\\[3pt]
%&={\color{blue}\p\left(F^*_{\bm\beta/\|\bb\|_*}\in  \{ F_{(\bb/\|\bb\|_*)^\top \bf Z}: {W}_p^{\rm ms}({F}_{\bf Z},\widetilde{F}_N) \le \epsilon_{N} \},~\forall \bm\beta\in\mathcal D \right)}\\
%&\ge   \p\left(  {W}_p^{\rm ms}(\widetilde{F}_N,F_{Y^* \mathbf X^*})\le \epsilon_{N}\right) \ge 1-\eta,
&\ge   \p\left( F_{Y^* \mathbf X^*}\in {\cal B}_p^{\rm ms}\big(\widetilde{F}_N, \epsilon_{N}\big)\right) \ge 1-\eta,
\end{align*}
where the second equality follows from $\overline{\mathcal B}_{p|\bb}(\widehat{F}_N,\epsilon_{N}) = \{ F_{\bb^\top \bf Z}: F_{\bf Z}\in  {\cal B}_p^{\rm ms}(\widetilde{F}_N,\epsilon_{N} )\}$ by (ii) of Proposition \ref{prop-equi}  
%, the second inequality is due to \eqref{eq-msge}, 
% the second inequality holds because ${W}_p^{\rm ms}({F}_{\bf Z},\widetilde{F}_N) \le \epsilon_{N}$ implies 
and we have used Lemma \ref{lm-ORVW} in the last inequality. This completes the proof.
\endproof

{\bf Proof of Theorem \ref{co-Lnonunion}.}
Let $(\widehat{Y}_N,\widehat{\mathbf X}_N)\sim \widehat{F}_N$.  
%For $\bb\in\mathcal D$ and $\epsilon\ge 0$, it holds that
By the definition of $\overline{\mathcal B}_{p|\bb} ( \widehat{F}_N,\epsilon)$, it follows that 
\begin{align*}%\label{eq-th21}
\sup_{F\in \overline{{\cal B}}_p(\widehat{F}_N,\epsilon)}\rho^{F}(Y\cdot\bm \beta^\top\mathbf X)
&=\sup_{F\in \overline{\mathcal B}_{p|\bb} ( \widehat{F}_N,\epsilon) }\rho^F(Z)\\
& =\sup\left\{\rho(Z): \left(\E[|Z-\widehat{Y}_N\cdot\bb^{\top}\widehat{\mathbf X}_N|^p]\right)^{1/p}\le \epsilon\|\bb\|_*\right\}\notag\\
% \sup_{(\E[|Z-Y_N\cdot\bb^{\top}\mathbf X_N|^k])^{1/k}\le \epsilon\|\bb\|_*} \rho(Z)\notag\\
&=\sup\left\{\rho\left(\widehat{Y}_N\cdot\bb^{\top}\widehat{\mathbf X}_N+V\right):(\E[|V|^p])^{1/p}\le \epsilon\|\bb\|_*\right\},
% \sup_{(\E[|V|^k])^{1/k}\le \epsilon\|\bb\|_*} \rho(Y_N\cdot\bb^{\top}\mathbf X_N+V),
\end{align*}
where %the first equality follows from the definition of ${{\cal B}}_{\bb}(\widehat{F}_N,\epsilon) $  and  
the second inequality follows from  \eqref{eq:1010-2} in Proposition \ref{prop-equi}, which states that  
$$ \overline{\mathcal B}_{p|\bb} ( \widehat{F}_N,\epsilon)=\mathcal B_p(F_{\widehat{Y}_N\cdot\bb^{\top}\widehat{\mathbf X}_N},\epsilon\|\bb\|_*)
=\left\{F_{Z}: \left(\E[|Z-\widehat{Y}_N\cdot\bb^{\top}\widehat{\mathbf X}_N|^p]\right)^{1/p}\le \epsilon\|\bb\|_*\right\}.$$
In particular,  we have for any $\epsilon>0$
\begin{align}\label{eq:v2-1016-1}
\sup_{F\in \overline{{\cal B}}_\infty(\widehat{F}_N,\epsilon)}\rho^{F}(Y\cdot\bm \beta^\top\mathbf X)
&=\sup\left\{\rho\left(\widehat{Y}_N\cdot\bb^{\top}\widehat{\mathbf X}_N+V\right): |V| \le \epsilon\|\bb\|_*\right\}\\
\sup_{F\in \overline{{\cal B}}_1(\widehat{F}_N,\epsilon)}\rho^{F}(Y\cdot\bm \beta^\top\mathbf X)
&=\sup\left\{\rho\left(\widehat{Y}_N\cdot\bb^{\top}\widehat{\mathbf X}_N+V\right): \E[|V|] \le \epsilon\|\bb\|_*\right\}
.\label{eq:v2-1016-2}
\end{align}
Therefore, we have 
\begin{align}\label{eq-th2chain}
\sup_{F\in \overline{{\cal B}}_p(\widehat{F}_N,\lambda\epsilon)}\rho^{F}(Y\cdot\bm \beta^\top\mathbf X)
&\ge \sup_{F\in \overline{{\cal B}}_{\infty}(\widehat{F}_N,\lambda\epsilon)}\rho^{F}(Y\cdot\bm \beta^\top\mathbf X)\notag\\[3pt]
&=\sup\left\{\rho(\widehat{Y}_N\cdot\bb^{\top}\widehat{\mathbf X}_N+V) : |V| \le \lambda\epsilon\|\bb\|_*\right\}\notag\\[3pt]
&\ge\sup\left\{\rho(\widehat{Y}_N\cdot\bb^{\top}\widehat{\mathbf X}_N+V) : \E[|V|] \le \epsilon\|\bb\|_*\right\}\notag\\[3pt]
&= \sup_{F\in \overline{{\cal B}}_1(\widehat{F}_N,\epsilon)}\rho^{F}(Y\cdot\bm \beta^\top\mathbf X),~~\forall \bb\in\mathcal D,~\epsilon\ge 0,
\end{align}
where  the first inequality is due to $\overline{{\cal B}}_{\infty}(\widehat{F}_N,\lambda\epsilon)\subseteq \overline{{\cal B}}_{p}(\widehat{F}_N,\lambda\epsilon)$,
and the second inequality is due to Assumption \ref{ass:order1},  and   the two equalities come from \eqref{eq:v2-1016-1} and \eqref{eq:v2-1016-2}, respectively. Combining \eqref{eq-th2chain} and Theorem \ref{eq-UGB} with type-$1$ Wasserstein ball yields the desired result.
\endproof

{\color{blue}\section{Proofs of Section \ref{reg}}\label{sec:proofs4}
}
	%\begin{lemma}[Theorem 7 of \cite{MWW22}]\label{th-reduction}
	%	Let $p\in[1,\infty]$, $\epsilon\ge0$, $F_0\in\mathcal M_p(\R^n)$, and $\bm \beta \in\R^n$ be such that $\bm \beta^\top \bm \beta\neq0$.  We have $\mathcal  B_{\bm \beta,p}(F_0,\epsilon)=\mathcal  B_p(F_{\bm \beta^\top \mathbf X_0},\epsilon\|\bm \beta\|_*)$ where $\mathcal  B_{\bm \beta,p}(F_0,\epsilon)$ is defined by \eqref{eq-reductivefixw} and
	%	$\mathbf X_0\sim F_0$.
	%	%As a consequence,
	%	%\begin{align}\label{eq-equivalence}
	%	%\sup_{F\in \mathcal B_{p}^d(F_0,\epsilon)} \rho^F(\bm \beta^\top \mathbf X)
	%	%=\sup_{F\in \mathcal B_p\left(F_{\bm \beta^\top \mathbf X_0},\|\bm \beta\|_*\epsilon\right)} \rho^F(X).
	%	%\end{align}
	%\end{lemma}

	For ease of exposition, we need the following notations.	%Denote by $\delta_{\mathbf x}$ the point-mass at $\mathbf x$.
	For a random variable $Z$, we define $\|Z\|_p=(\E[|Z|^p])^{1/p}$.
	We use $\id_A$ to represent the indicator function, i.e., $\id_A(\omega)=1$ if $\omega\in A$, and $\id_A(\omega)=0$ otherwise.
	The sign function on $\R$ is defined as
	$$
	{\rm sign}(x)=-\id_{(-\infty,0)}(x)+\id_{[0,\infty)}(x).
	$$
	%The main purpose in Section \ref{reg} is to study the tractability of the Wasserstein distributionally robust optimization problem  %for various risk functions, i.e., solving the following problem:
	%\begin{align}\label{eq-appmain}
	%	\inf_{\bm \beta\in \mathcal D} \sup_{F\in {\cal B}_{p}(F_0,\epsilon)} \rho^F(\bb^\top \mathbf Z).
%	\end{align}
%	Below we propose a lemma which demonstrates that solving the inner supremum problem of \eqref{eq-appmain} is equivalent to solve a one-dimensional Wasserstein robust optimization problem.
	To prove the results in Section \ref{reg},  we need the following auxiliary lemmas. The first lemma   follows immediately from Proposition  \ref{thm:221010-1}.
%$\mathcal  B(F_{\beta,0} , \epsilon\|\bb\|_*)\mathcal  B_p\left(F_{\bb^\top \mathbf Z_0},\epsilon\|\bb\|_*\right)$
	\begin{lemma}\label{lm-eqonehigh} 
 Given $\bb \in \R^n$, we have
		%The inner supremum problem of \eqref{eq-appmain}, i.e., $\sup_{F\in {\cal B}_{p}(F_0,\epsilon)} \rho^F(\bb^\top \mathbf Z)$, is equivalent to
		\begin{align*}
			\sup_{F\in {\cal B}_{p}(G_0,\epsilon)} \rho^F(\bb^\top \mathbf Z)=	\sup_{F\in \mathcal  B_p\left(F_{\bb^\top \mathbf Z_0},\epsilon\|\bb\|_*\right)}
			\rho^F(Z),
		\end{align*}
		where $\mathbf Z_0\sim G_0$ and $F_{\bb^\top \mathbf Z_0}$ is the distribution of $\bb^\top \mathbf Z_0$.
	\end{lemma}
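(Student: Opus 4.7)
The plan is to reduce the lemma to the set identity
\[
\bigl\{F_{\bb^\top \mathbf Z}:F\in {\cal B}_p(F_0,\epsilon)\bigr\}=\mathcal B_p\bigl(F_{\bb^\top\mathbf Z_0},\epsilon\|\bb\|_*\bigr),
\]
after which the claim follows at once, because $\rho^F$ is law-invariant and the two suprema are therefore taken over the same collection of distributions. This identity is the ``no-$Y$'' analogue of \eqref{eq:221010-2} in Theorem \ref{thm:221010-1}, and the argument specializes that proof. The degenerate case $\|\bb\|=0$ is trivial (both sides collapse to a single Dirac), so assume $\|\bb\|>0$.

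For the forward inclusion $(\subseteq)$, take any $F\in {\cal B}_p(F_0,\epsilon)$ and a coupling $(\mathbf Z,\mathbf Z_0)$ with marginals $F$ and $F_0$ such that $\E[\|\mathbf Z-\mathbf Z_0\|^p]\le\epsilon^p$. The dual-norm inequality $|\bb^\top(\mathbf Z-\mathbf Z_0)|\le\|\bb\|_*\|\mathbf Z-\mathbf Z_0\|$ gives $\E[|\bb^\top\mathbf Z-\bb^\top\mathbf Z_0|^p]\le\|\bb\|_*^p\epsilon^p$, which places $F_{\bb^\top\mathbf Z}$ in $\mathcal B_p(F_{\bb^\top\mathbf Z_0},\epsilon\|\bb\|_*)$.

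The reverse inclusion $(\supseteq)$ is where the real content sits and is the step I expect to require the most care. Given $G\in\mathcal B_p(F_{\bb^\top\mathbf Z_0},\epsilon\|\bb\|_*)$, choose a random variable $Z\sim G$ on the same probability space as $\mathbf Z_0$ with $\E[|Z-\bb^\top\mathbf Z_0|^p]\le\epsilon^p\|\bb\|_*^p$. By the definition of the dual norm, pick $\bb_0\in\R^n$ with $\|\bb_0\|=1$ and $\bb^\top\bb_0=\|\bb\|_*$, and set
\[
\mathbf Z:=\mathbf Z_0+\frac{\bb_0\,(Z-\bb^\top\mathbf Z_0)}{\|\bb\|_*}.
\]
Then $\bb^\top\mathbf Z=Z$, so $F_{\bb^\top\mathbf Z}=G$, while
\[
\E[\|\mathbf Z-\mathbf Z_0\|^p]=\frac{\E[|Z-\bb^\top\mathbf Z_0|^p]}{\|\bb\|_*^p}\le\epsilon^p,
\]
showing that $F_{\mathbf Z}\in{\cal B}_p(F_0,\epsilon)$ and hence $G$ lies in the projected set. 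The key idea, reused from the proof of \eqref{eq:221010-2}, is to transport mass only along the fixed direction $\bb_0$ attaining the dual norm: this simultaneously makes the linear functional $\bb^\top\cdot$ recover $Z$ and keeps the $L^p$ displacement cost tight, so no slack is lost in the reduction to one dimension.
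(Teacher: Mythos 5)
Your proof is correct and is essentially the paper's own argument: the paper disposes of this lemma by citing Theorem \ref{thm:221010-1}, and your two inclusions (H\"older/dual-norm for the forward direction, transport along a dual-norm-attaining direction $\bb_0$ for the reverse) reproduce verbatim the proof of \eqref{eq:221010-2} with the $Y$-component stripped out, followed by the same appeal to law-invariance of $\rho$.
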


% {\color{red}
% \proof{Proof.}
% By Proposition  \ref{thm:221010-1}, we have
% $$
% \{F_{\bb^{\top}\mathbf Z}: F_{\mathbf Z}\in{\cal B}_{p}(F_0,\epsilon)\}=\mathcal B_p(F_{\bb^{\top}\mathbf Z_0},\epsilon\|\bb\|_*).
% $$
% Hence, it holds that
% \begin{align*}
% \sup_{F\in {\cal B}_{p}(F_0,\epsilon)} \rho^F(\bb^\top \mathbf Z)
% =\sup_{F\in \left\{F_{\bb^{\top}\mathbf Z}: F_{\mathbf Z}\in{\cal B}_{p}(F_0,\epsilon)\right\}} \rho^F(Z)
% =\sup_{F\in \mathcal  B_p\left(F_{\bb^\top \mathbf Z_0},\epsilon\|\bb\|_*\right)} \rho^F(Z).
% \end{align*}
% This completes the proof.
% \Halmos\hfill\endproof
% }		
%Based on the fact illustrated in Lemma \ref{lm-eqonehigh}, we will directly solve a one-dimensional Wasserstein robust optimization problem
	%%, i.e $\sup_{F\in \mathcal  B_p\left(F_0,\epsilon\right)} \rho^F(X)$,
	%for the risk functions $\rho^F$ that need to be handled in Section \ref{reg}.
	Based on Lemma \ref{lm-eqonehigh}, we present the following lemma, which will be used in the proofs of the main results.
	
	\begin{lemma}\label{lm-onehigheq}
		Let $p\in[1,\infty]$ and $C>0$. For $\rho: L^p\to\R$, the following statements are equivalent.
		\begin{itemize}
			\item[(i)] For any $G_0\in\mathcal M_p(\R^n)$, $\epsilon\ge 0$ and $\mathcal D\subseteq \R^n$,
			\begin{align}\label{eq-WDROeq}
				\inf_{\bb\in\mathcal D}	\sup_{F\in {\cal B}_p(G_0,\epsilon)} \rho^F(\bb^{\top}\mathbf Z)=\inf_{\bb\in\mathcal D} \left\{\rho^{G_0}(\bb^{\top}\mathbf Z)+C\epsilon\|\bb\|_*\right\}.
			\end{align}
			\item [(ii)] For any $Z\in L^p$ and $\epsilon\ge 0$,
			%		\begin{align*}
				%			\sup_{F\in B_p(G_0,\epsilon)}\rho^F(X)=\rho^{G_0}(X)+C\epsilon.
				%		\end{align*}
			\begin{align}\label{eq-regEU}
				\sup_{\|V\|_p\le \epsilon}\rho(Z+V)=\rho(Z)+C\epsilon.
			\end{align}
		\end{itemize}
	\end{lemma}

\proof{Proof.}
\underline{(i) $\Rightarrow$ (ii)}: For $Z\in L^p$, take $\mathcal D=\{\bb_0\}$ with $\bb_0=(1,0,\cdots,0)/\|(1,0,\cdots,0)\|_*$, and let $G_0$ be the distribution of $(Z,0,\dots,0)$.  It holds that
$\|\bb_0\|_*=1$,  and thus, the left-hand side of \eqref{eq-WDROeq} reduces to 
\begin{align*}
\inf_{\bb\in\mathcal D}	\sup_{F\in {\cal B}_p(G_0,\epsilon)} \rho^F(\bb^{\top}\mathbf Z)=	\sup_{F\in {\cal B}_p(G_0,\epsilon)} \rho^F(\bb_0^{\top}\mathbf Z)
=\sup_{F\in\mathcal B_p(F_Z,\epsilon)}\rho^F(Y)
=\sup_{\|Y-Z\|_p\le \epsilon}\rho(Y)
=\sup_{\|V\|_p\le \epsilon} \rho(Z+V),
\end{align*}
where the second equality follows from Lemma \ref{lm-eqonehigh}.
Note that the right-hand side of \eqref{eq-WDROeq} reduces to 
\begin{align*}
\inf_{\bb\in\mathcal D} \left\{\rho^{G_0}(\bb^{\top}\mathbf Z)+C\epsilon\|\bb\|_*\right\}=\rho(Z)+C\epsilon.
\end{align*}
Combining the above two equations with \eqref{eq-WDROeq}
yields \eqref{eq-regEU}. This completes the proof of the implication (i) $\Rightarrow$ (ii).

\underline{(ii) $\Rightarrow$ (i)}: For $G_0\in\mathcal M_p(\R^n)$, $\epsilon\ge 0$ and $\mathcal D\subseteq \R^n$, let $\mathbf Z_0\sim G_0$. We have for any $\bb\in\mathcal D$,
\begin{align*}
\sup_{F\in {\cal B}_p(G_0,\epsilon)} \rho^F(\bb^{\top}\mathbf Z)
&=\sup_{F\in \mathcal  B_p\left(F_{\bb^\top \mathbf Z_0},\epsilon\|\bb\|_*\right)}\rho^F(Z)\\[3pt]
& =\sup_{ \|Z-\bb^{\top}\mathbf Z_0\|_p\le \epsilon\|\bb\|_*} \rho(Z)\\[3pt]
&=\sup_{\|V\|_p\le \epsilon\|\bb\|_*}\rho(\bb^\top \mathbf Z_0+V) 
 =\rho(\bb^\top \mathbf Z_0)+C\epsilon\|\bb\|_*,
\end{align*}
where the first equality follows from Lemma \ref{lm-eqonehigh} and the last equality is due to \eqref{eq-regEU}. Hence, \eqref{eq-WDROeq} holds immediately, and thus, we complete the proof.\endproof

% \proof{Proof.}
% 		The implication of (i)$\Rightarrow$(ii) follows from Lemma \ref{lm-eqonehigh} by taking $\mathcal D=\{(1,0,\cdots,0)\}$ and $F_0$ as the distribution of  $(Z,0,\cdots,0)$. For the implication (ii)$\Rightarrow$(i), let $\mathbf Z_0\sim F_0$. By Lemma \ref{lm-eqonehigh}, we have
% 		\begin{align*}
% 			\sup_{F\in {\cal B}_p(F_0,\epsilon)} \rho^F(\bb^{\top}\mathbf Z)&=\sup_{F\in \mathcal  B_p}
% 			\rho^F(Z) =\sup_{Z\in \mathcal Z_\beta}\rho(Z)
% 			=\sup_{\|V\|_p\le \epsilon\|\bb\|_*}\rho(\bb^\top \mathbf Z_0+V) =\rho(\bb^\top \mathbf Z_0)+C\epsilon\|\bb\|_*,
% 		\end{align*}
% 	where  $\mathcal  B_p:=\mathcal  B_p(F_{\bb^\top \mathbf Z_0},\epsilon\|\bb\|_*)$, $\mathcal Z_\beta:=\{Z: \|Z-\bb^\top \mathbf Z_0\|_p\le \epsilon\|\bb\|_*\}$,  the second equality follows from the definition of the Wasserstein ball and the law-invariance of $\rho$, and the last equality is due to \eqref{eq-regEU}.	This completes the proof. \hfill\Halmos
% 	\endproof
% ~~

\begin{remark}\label{re:EC1}
In Lemma \ref{lm-onehigheq}, neither the distribution $G_0\in\mathcal M_p(\R^n)$ or the set $\mathcal D\subseteq \R^n$ is fixed. If we fix  $G_0\in\mathcal M_p(\R^n)$ and $\mathcal D \subseteq \R^n$, then  by similar arguments, we can show that \eqref{eq-WDROeq} holds for all $\epsilon\ge 0$  if   \eqref{eq-regEU}  holds with  $Z=\bb^{\top} \mathbf{Z}_0$ for all $\bb\in\mathcal D$ and $\epsilon\ge 0$, where $\mathbf{Z}_0\sim G_0$.
%$\sup_{\|V\|_p\le t}\rho(\bb^{\top} \mathbf{Z}_0+V)=\rho(\bb^{\top} \mathbf{Z}_0)+Ct$ for all $\bb\in\mathcal D$ and $t\ge 0$, where $\mathbf{Z}_0\sim G_0$. %Such a result will be used in the proof of Proposition \ref{prop:noncx} in Section \ref{subsec:appDT}.
\end{remark}

\subsection{Proofs of Section \ref{subsec:EU}}\label{app:EU}

We present the following lemma, which will be used in the proofs of Theorems \ref{th-RegularizationEL} and \ref{th-necessity-highorderloss}.
 
\begin{lemma}\label{lm-regnecessity}
		Let $p\in(1,\infty)$, $\epsilon>0$ and $\eta\in(0,\epsilon]$.
		%Define $\mathcal V=\{V\in L^p: \|V\|_p\le \epsilon\}$ and
		Define $\mathcal V=\{V\in L^p: \|V\|_p\le\epsilon,~\E[|V|\id_{\{|V|\le 2\epsilon\}}]\le \eta\}$. We have
		$
		\E[|V|]\le \epsilon 2^{-p/q}+\eta
		$
		for all $V\in\mathcal V$.
	\end{lemma}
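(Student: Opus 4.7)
The plan is straightforward: split the expectation $\E[|V|]$ at the threshold $2\epsilon$ that already appears in the definition of $\mathcal V$, bound the lower part directly by $\eta$, and bound the upper tail via H\"older's inequality combined with Markov's inequality.

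Concretely, I would write
\begin{equation*}
\E[|V|]=\E\bigl[|V|\id_{\{|V|\le 2\epsilon\}}\bigr]+\E\bigl[|V|\id_{\{|V|> 2\epsilon\}}\bigr].
\end{equation*}
By the definition of $\mathcal V$, the first summand is at most $\eta$, so the work is all in the tail term. For that, H\"older's inequality with exponents $p$ and $q$ gives
\begin{equation*}
\E\bigl[|V|\id_{\{|V|> 2\epsilon\}}\bigr]\le \|V\|_p\,\bigl(\p(|V|>2\epsilon)\bigr)^{1/q}\le \epsilon\,\bigl(\p(|V|>2\epsilon)\bigr)^{1/q},
\end{equation*}
using $\|V\|_p\le\epsilon$.

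Next I would apply Markov's inequality in the form
$\p(|V|>2\epsilon)\le \E[|V|^p]/(2\epsilon)^p\le \epsilon^p/(2\epsilon)^p = 2^{-p}$,
so that $(\p(|V|>2\epsilon))^{1/q}\le 2^{-p/q}$. Substituting back yields $\E[|V|\id_{\{|V|> 2\epsilon\}}]\le \epsilon\,2^{-p/q}$, and adding the two pieces gives the claimed bound $\E[|V|]\le \epsilon\,2^{-p/q}+\eta$.

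There is no real obstacle here; the only subtlety worth noting is that the threshold $2\epsilon$ is chosen so that Markov on $|V|^p$ produces the dimensionless factor $2^{-p}$, which is what converts, via H\"older's exponent $q$, into the sharp-looking factor $2^{-p/q}$ in the conclusion. The condition $\eta\le\epsilon$ is not used in the bound itself, consistent with the statement being a pure consequence of the two constraints defining $\mathcal V$.
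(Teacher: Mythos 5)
Your proposal is correct and follows essentially the same argument as the paper: the same split of $\E[|V|]$ at the threshold $2\epsilon$, the same Chebyshev/Markov bound $\p(|V|>2\epsilon)\le 2^{-p}$, and the same application of H\"older's inequality to the tail term. Nothing is missing.
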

	
	\proof{Proof.}
		%	First, we will show that $\p(|V|>2\epsilon)\le 2^{-p}$ for all $\|V\|_p\le \epsilon$.
		Using the Chebyshev's inequality, we have
		$
		(2\epsilon)^p\p(|V|>2\epsilon)\le \|V\|_p^p,
		$
		which implies $\p(|V|>2\epsilon)\le 2^{-p}$ for all $\|V\|_p\le \epsilon$. Hence, for any $V\in\mathcal V$, it holds that
		\begin{align*}
			\E[|V|]&=\E\left[|V|\id_{\{|V|> 2\epsilon\}}\right]+\E\left[|V|\id_{\{|V|\le 2\epsilon\}}\right]\le \|V\|_p (\p(|V|> 2\epsilon))^{1/q}+\eta\le \epsilon 2^{-p/q}+\eta,
		\end{align*}
		where the first inequality follows from H\"{o}lder's inequality. This completes the proof.
	\endproof
~~

 % Now we are ready to prove Theorem \ref{th-RegularizationEL}.
{\bf Proof of Theorem \ref{th-RegularizationEL}.} 
Throughout the proof, we assume  $C=1$ without loss of generality.
	By Lemma \ref{lm-onehigheq}, %we know that (i) is equivalent to the following statement.
	it suffices to show that  (ii) holds  if and only if the following (i)' holds. %the following statement
	\begin{itemize}
		\item [(i)'] For any $Z\in L^p$ and $\epsilon\ge 0$, we have \eqref{eq-regEU} holds, i.e.,
  \begin{align}\label{eq-ELeq}
  \sup_{\|V\|_p\le \epsilon}\E[\ell(Z+V)]
  =
\E[\ell(Z)]+\epsilon,~~\forall Z\in L^p,~\epsilon\ge 0.
  \end{align}
		%\begin{align}\label{eq-regEU}
		%		\sup_{\|V\|_p\le \epsilon}\E[\ell(Z+V)]=\E[\ell(Z)]+C\epsilon.
		%	\end{align}
\end{itemize}

To see (ii)$\Rightarrow$(i)', note that $\ell_1(Z+V) =\ell_1(Z)\pm V $ and thus,   %with  $C_1=C$ and $C_2=-C$ we have that for $i=1,2$,
\begin{align*}
	%	\sup_{F\in B_p(F_0,\epsilon)}\E^F[\ell_1(Z)]
	%	&=
	\sup_{\|V\|_p\le \epsilon}\E[\ell_1(Z+V)]
	%& =\sup_{\|V\|_p\le \epsilon}\left\{\E[C_i  Z+b]+C_i\E[V]\right\}\\
	%&	= \E[C_i  Z+b]+\sup_{\|V\|_p\le \epsilon}\E[C_i V]
	= \E[\ell_1(Z)]+\sup_{\|V\|_p\le \epsilon}\E[\pm V] =\E[\ell_1(Z)]+\epsilon
\end{align*}
%This implies that $\ell_1$ satisfies \eqref{eq-regEU}.
and
\begin{align*}
	%	\sup_{F\in B_p(F_0,\epsilon)}\E^F[\ell_2(Z)]
	%	&=
	\sup_{\|V\|_p\le \epsilon}\E[\ell_2(Z+V)]
	&= \sup_{\|V\|_p\le \epsilon}\E[|Z+V-b_1|+b_2]\\[3pt]
	&\le \sup_{\|V\|_p\le \epsilon}\left\{\E[|Z-b_1|+b_2]+\E[|V|]\right\}=\E[\ell_2(Z)]+\epsilon.
	%	=\E[\ell_2(Z_0)]+C\epsilon.
\end{align*}
The inequality  reduces to equality if $V=\epsilon\ {\rm sign}(Z-b_1)
%\id_{\{Z_0-m\ge 0\}}-\epsilon\id_{\{Z_0-m< 0\}}
$. Hence, we complete the proof of (ii)$\Rightarrow$(i)'.

To see (i)'$\Rightarrow$(ii),  we first show that  ${\rm Lip}(\ell)\le 1$. Otherwise, by that a convex function has derivative almost everywhere, there exists $x$ such that $|\ell'(x)|>1$. If $\ell'(x)>1$, then we have %we can construct a sequence ${x_n}_{n\in\N}\subseteq \R$ such that $|\ell'(x_n)|\to \infty$. It follows from Lemma \ref{lm-WBrv} that for large enough $n$,
\begin{align*}
	\sup_{\|V\|_p\le \epsilon}\E[\ell(x+V)]&\ge \ell(x+\epsilon )\ge \ell(x)+|\ell'(x)|\epsilon>\ell(x)+\epsilon.
\end{align*}
If $\ell'(x)<-1$, then we have %we can construct a sequence ${x_n}_{n\in\N}\subseteq \R$ such that $|\ell'(x_n)|\to \infty$. It follows from Lemma \ref{lm-WBrv} that for large enough $n$,
\begin{align*}
	\sup_{\|V\|_p\le \epsilon}\E[\ell(x+V)]&\ge \ell(x-\epsilon )\ge \ell(x)- \ell'(x)\epsilon>\ell(x)+\epsilon.
\end{align*}
%where ${\rm sgn}: \R\to\{-1,1\}$ is a sign function.
Those two cases both yield a contradiction to \eqref{eq-ELeq}. 

Next, we aim to verify the following fact
\begin{align}\label{eq-assEU}
	|\ell'(x)|=1~{\rm{for~all}}~x\in\R~{\rm as~long~as}~\ell~{\rm is~differentiable~at}~x.
\end{align}
This will complete the proof since a convex function that satisfies \eqref{eq-assEU} must be one of the forms of $\ell_1$ and $\ell_2$ with $C=1$.
To see \eqref{eq-assEU},
assume by contradiction that there exists $x_0\in\R$ such that $|\ell'(x_0)|<1$. We first consider the case  $p=\infty$. In this case, we have
\begin{align*}
	\sup_{\|V\|_\infty\le \epsilon}\E[\ell(x_0+V)] %&=\sup_{|V|\le \epsilon}\E[\ell(x_0+V)]
	=\max\{\ell(x_0-\epsilon),\ell(x_0+\epsilon)\}<\ell(x_0)+\epsilon,
	%   &\le \sup_{|V|\le \epsilon}\E\left[\ell(x_0)+aV\right]\\
	%   &=\ell(x_0)+a\sup_{|V|\le \epsilon}\E[V]\\
	%   &<\ell(x_0)+\epsilon.
\end{align*}
where
%$$
%a=\max\left\{\frac{\ell(x_0+\epsilon)-\ell(x_0)}{\epsilon},
%\frac{\ell(x_0)-\ell(x_0-\epsilon)}{\epsilon}\right\}
%$$ and
the strict inequality follows from $|\ell'(x_0)|<1$ and ${\rm Lip}(\ell)\le 1$, which yields a contradiction.
Suppose now $p\in(1,\infty)$. Define %ivide $\{V\in L^p: \|V\|_p\le \epsilon\}$ into the following two sets:
 $$
k=\max\left\{\left|\frac{\ell(x_0+2\epsilon)-\ell(x_0)}{2\epsilon}\right|,\left|\frac{\ell(x_0)-\ell(x_0-2\epsilon)}
{2\epsilon}\right|\right\}. 
$$
%We will verify the following two assertions:
%%\begin{itemize}
%%	\item[]
%\begin{align}\label{eq-asser1}
%	\p(|V|\le 2\epsilon)\ge 1-2^{-p}~{\rm for~all}~ \|V\|_p\le \epsilon,
%\end{align}
%and
%\begin{align}\label{eq-asser2}
%	\E[|V|]\le \frac{(1+2^{-p/q})\epsilon}{2}~{\rm for~all}~ V\in\mathcal V_2.
%\end{align}
%%\end{itemize}
%To see \eqref{eq-asser1}, for any $\|V\|_p\le \epsilon$,
%%denote by $\alpha=\p(|V|>2\epsilon)$ for $\|V\|_p\le \epsilon$.
%it follows from the Chebyshev's inequality that
%$$
%\epsilon^p\ge \|V\|_p^p\ge (2\epsilon)^p\p(|V|> 2\epsilon)=(2\epsilon)^p(1-\p(|V|\le 2\epsilon)),
%$$
%which implies $\p(|V|\le 2\epsilon)\ge 1-2^{-p}$. Hence, \eqref{eq-asser1} holds. To see  \eqref{eq-asser2}, for any $V\in\mathcal V_2$, we have
%\begin{align*}
%	\E\left[|V|\id_{\{|V|> 2\epsilon\}}\right]&\le \|V\|_p (\p(|V|> 2\epsilon))^{1/q}\le \epsilon 2^{-p/q},
%\end{align*}
%where the first inequality follows from H\"{o}lder inequality, and the second inequality is due to the result in assertion \eqref{eq-asser1}. Combining with the definition of $\mathcal V_2$, we have
%\begin{align*}
%	\E[|V|]&=\E\left[|V|\id_{\{|V|\le 2\epsilon\}}\right]+\E\left[|V|\id_{\{|V|> 2\epsilon\}}\right]
%	\le \frac{(1-2^{-p/q})\epsilon}{2}+\epsilon 2^{-p/q}
%	=\frac{(1+2^{-p/q})\epsilon}{2}.
%\end{align*}
%This verifies  \eqref{eq-asser2}.
We have that    $|\ell(x_0+v)-\ell(x_0)|\le k|v|$ for all $v\in[-2\epsilon,2\epsilon]$ by the convexity of $\ell$, and thus,
\begin{align*}
	\sup_{\|V\|_p\le \epsilon}\E[\ell(x_0+V)]
	&=\sup_{\|V\|_p\le \epsilon}\{\E[\ell(x_0+V)\id_{\{|V|\le 2\epsilon\}}]+\E[\ell(x_0+V)\id_{\{|V|> 2\epsilon\}}]\}\\[3pt]
	&\le \sup_{\|V\|_p\le \epsilon}\{\E[(\ell(x_0)+k|V|)\id_{\{|V|\le 2\epsilon\}}]
	+\E[(\ell(x_0+V)\id_{\{|V|> 2\epsilon\}}]\}\\[3pt]
 &\le \sup_{\|V\|_p\le \epsilon}\{\E[(\ell(x_0)+k|V|)\id_{\{|V|\le 2\epsilon\}}]
	+\E[(\ell(x_0)+|V|)\id_{\{|V|> 2\epsilon\}}]\}\\[3pt]
	&=\ell(x_0)+\sup_{\|V\|_p\le \epsilon}\{\E[|V|]-(1-k)\E[|V|\id_{\{|V|\le 2\epsilon\}}]\}=:I,
	%\max\{\sup_{V\in\mathcal V_1} \E[\ell(x_0+V)],
	%\sup_{V\in\mathcal V_2} \E[\ell(x_0+V)]\}=:\max\{I_1,I_2\},
\end{align*}
where the  second inequality follows from ${\rm Lip}(\ell)\le 1$.
% $$
% k=\max\left\{\left|\frac{\ell(x_0+2\epsilon)-\ell(x_0)}{2\epsilon}\right|,\left|\frac{\ell(x_0)-\ell(x_0-2\epsilon)}
% {2\epsilon}\right|\right\},
% $$
% $$
% I_i=\ell(x_0)+\sup_{V\in\mathcal V_i} \{\E[|V|]-(1-k)\E[|V|\id_{\{|V|\le 2\epsilon\}}]\},~~i=1,2.
% $$
By defining \begin{align}\label{eq-V2}
	\mathcal V_1=\left\{V\in L^p: \|V\|_p\le \epsilon,~\E\left[|V|\id_{\{|V|\le 2\epsilon\}}\right]\le\frac{(1-2^{-p/q})\epsilon}{2}\right\}
	,~~~
	\mathcal V_2=\{V: \|V\|_p\le \epsilon\}\setminus \mathcal V_1,
\end{align}
we can rewrite $I$ as  $I= \max\{I_1,I_2\}$ with
$$
I_i=\ell(x_0)+\sup_{V\in\mathcal V_i} \{\E[|V|]-(1-k)\E[|V|\id_{\{|V|\le 2\epsilon\}}]\},~~i=1,2.
$$
Since $|\ell'(x_0)|<1$ and ${\rm Lip}(\ell)\le 1$, we know that $k<1$. Hence, it holds that
\begin{align}\label{eq-regEUI1}
	I_1&\le \ell(x_0)+\sup_{V\in\mathcal V_1}\E[|V|]
	\le \ell(x_0)+\frac{(1+2^{-p/q})\epsilon}{2}<\ell(x_0)+\epsilon,
\end{align}
where the second inequality follows from Lemma \ref{lm-regnecessity}. %Below we calculate the value of the left hand side of \eqref{eq-regEU} by letting $Z=x_0$ a.s.
Further note that
\begin{align}\label{eq-regEUI2}
	I_2&\le \ell(x_0)+\sup_{V\in\mathcal V_2}\left\{\E[|V|]-(1-k) \frac{(1-2^{-p/q})\epsilon}{2}\right\}\notag\\
	&\le \ell(x_0)+\epsilon-(1-k) \frac{(1-2^{-p/q})\epsilon}{2}\notag\\
 &<\ell(x_0)+\epsilon,
\end{align}
where the first inequality follows from the definition of $\mathcal V_2$, and the second one holds because $\|V\|_p\le \epsilon$ implies $\E[|V|]\le \epsilon$.
Combining  \eqref{eq-regEUI1} and \eqref{eq-regEUI2}, we conclude that
$$
\sup_{\|V\|_p\le \epsilon}\E[\ell(x_0+V)]\le \max\{I_1,I_2\}<\ell(x_0)+\epsilon.
$$
This yields a contradiction. Hence,  \eqref{eq-assEU} is verified, and thus we complete the proof.
\endproof
~~

To prove Theorem \ref{th-necessity-highorderloss}, we begin by introducing two auxiliary lemmas. Specifically, Lemma \ref{lm-supmain} will be employed in the proof of Theorem \ref{th-necessity-highorderloss}, and Lemma \ref{lem:v2-1} serves to establish Lemma \ref{lm-supmain}.

\begin{lemma} \label{lem:v2-1} 
For $p\in(1,\infty)$, $\epsilon>0$ and $\eta\in (0,\epsilon)$, the following problem has a positive optimal value.
\begin{align} \label{eq:1015-2}
    \min &~~\E [(Z-\epsilon)_+]\\
   {\rm s.t.} &~~\E[Z]=\epsilon, ~~(\E[Z^{1/p}])^{p} \le \epsilon-\eta,~~Z\ge0. \notag
\end{align} 
\end{lemma}
\begin{proof}{Proof.} 
Note that Problem \eqref{eq:1015-2} has two moment constraints. By \cite{S95}, the optimal value of Problem \eqref{eq:1015-2} is equivalent to 
\begin{align} \label{eq:1016-2}
    \min &~~\E [(Z-\epsilon)_+]\\
   {\rm s.t.} &~~\E[Z]=\epsilon, ~~(\E[Z^{1/p}])^{p} \le \epsilon-\eta,~~Z\ge0,~~Z\in\mathcal X_3, \notag
\end{align} 
where $\mathcal X_3$ is the set of all random variables that take at most three values. Let $Z$ be a feasible solution of Problem \eqref{eq:1016-2}. 
%We aim to show that there exist $C>0$ that only depends on $p,\epsilon,\eta$ such that $\E[(Z-\epsilon)_+]\ge C$.
%To see this,
Denote by $\overline{z}=\esssup Z$, and let $Z^*$ be a random variable such that
\begin{align}\label{eq:1015-1} 
Z^* \sim (p_1+p_2)\delta_{\epsilon} + q\delta_{\overline{z}}+(1-(p_1+p_2+q))\delta_0
\end{align}
with the conditions that $p_1 \epsilon = \E[Z\id_{\{Z<\epsilon\}}]$,  $ p_2 \epsilon + \overline{z} q = \E[Z\id_{\{Z\ge \epsilon\}}]$ and $p_2 =\p(Z\ge \epsilon)-q $.
It is straightforward to check that $\E[Z^*]=\E[Z]=\epsilon$ and 
$\E[(t-Z)_+]\le \E[(t-Z^*)_+]$ for all $t\in\R$. It follows from Theorem 3.A.1 of \cite{SS07} that $Z^*\le_{\rm cv} Z$.\footnote{For two random variables $Z_1$ and $Z_2$, $Z_1$ is said to be smaller than $Z_2$ in the concave order, denoted by $Z_1\le_{\rm cv} Z_2$, if $\E[\phi(Z_1)]\le \E[\phi(Z_2)]$ for any concave function $\phi$.} This implies that $ \E[(Z^*)^{1/p}] \le   \E[Z^{1/p}]$ since $x\mapsto x^{1/p}$ is a concave function on $\R_+$. Therefore, $Z^*$ is also a feasible solution of Problem \eqref{eq:1016-2}. In addition, we have 
\begin{align*}
\E [(Z-\epsilon)_+]
%=\E[(Z-\epsilon)\id_{Z\ge \epsilon}]
=\E[Z\id_{\{Z\ge \epsilon\}}]-\epsilon\p(Z\ge \epsilon)
=p_2\epsilon+\overline{z}q-\epsilon(p_2+q)=q(\overline{z}-\epsilon)=\E [(Z^*-\epsilon)_+].
\end{align*}
Hence, to solve Problem \eqref{eq:1016-2}, it suffices to consider all random variables whose distributions take the form of \eqref{eq:1015-1}, i.e., $p_1\delta_0+p_2\delta_{\epsilon}+p_3\delta_x$ with $p_1,p_2,p_3\ge 0$, $p_1+p_2+p_3=1$ and $x\ge \epsilon$. The corresponding problem can be represented as follows:
\begin{align}\label{eq-1016-4}
    \min_{p_i\ge 0, {x}} &~~( {x}-\epsilon)p_3 ~~~
   {\rm s.t.}  ~~\epsilon p_2 +  {x} p_3 =\epsilon, 
     ~~\epsilon^{1/p} p_2 + {x}^{1/p} p_3 \le (\epsilon-\eta)^{1/p},~~  {x}\ge \epsilon,~~p_1+p_2+p_3= 1. 
\end{align} 
Note that  $({x}-\epsilon)p_3  =\epsilon(1-p_2) -\epsilon p_3 = \epsilon p_1$ whenever $(p_1,p_2,p_3,x)$ is a feasible solution of Problem \eqref{eq-1016-4},
 and thus, Problem \eqref{eq-1016-4} is equivalent to
\begin{align*}
    \min_{p_i\ge 0,  {x}} &~~ \epsilon p_1~~~
   {\rm s.t.}  ~~\epsilon p_2 + {x} p_3 =\epsilon, 
     ~~\epsilon^{1/p} p_2 +  {x}^{1/p} p_3 \le (\epsilon-\eta)^{1/p},~~  {x}\ge \epsilon,~~p_1+p_2+p_3=1. 
\end{align*} 
Further, letting $x=\lambda\epsilon$ yields the following equivalent problem
\begin{align}\label{eq-1016-3}
    \min_{p_i\ge 0, \lambda} &~~ \epsilon p_1~~~
   {\rm s.t.}  ~~ p_2 + \lambda p_3 =1, 
     ~~ p_2 +  {\lambda}^{1/p} p_3 \le \left(1-\frac{\eta}{\epsilon}\right)^{1/p},~~  {\lambda}\ge 1,~~p_1+p_2+p_3=1. 
\end{align} 
Let $(p_1,p_2,p_3,\lambda)$ be a feasible solution
of Problem \eqref{eq-1016-3}. It holds that $p_1=(\lambda-1)p_3$ and $p_2=1-\lambda p_3$ and $\lambda\ge 1$. Substituting $p_2=1-\lambda p_3$ into the second constraint of Problem \eqref{eq-1016-3} yields
\begin{align*}
\left(1-\frac{\eta}{\epsilon}\right)^{1/p}\ge 1-\lambda p_3+\lambda^{1/p}p_3=-(\lambda-1)p_3-p_3+1+\lambda^{1/p}p_3
=-p_1+(\lambda^{1/p}-1)p_3+1
\ge -p_1+1,
\end{align*}
which implies $p_1\ge 1-(1-\eta/\epsilon)^{1/p}$. Hence, the optimal value of Problem \eqref{eq-1016-3} is no less than $\epsilon-\epsilon(1-\eta/\epsilon)^{1/p}$ which is a positive value. This completes the proof.
\endproof
\end{proof}

\begin{lemma}\label{lm-supmain}
Let $p\in(1,\infty)$, $t,\,\epsilon>0$ and $\eta\in(0,\epsilon)$. For $V\in L^p$, the following statements hold.
\begin{itemize}
\item[(i)] If $\|V\|_p\le \epsilon$, then $\E[(|V|+t)^p]\le (\epsilon+t)^p$.

\item[(ii)] If $\|V\|_p\le \epsilon$ and $\E[|V|]\le \epsilon-\eta$, then there exists $\Delta>0$ that only depends on $p,t,\epsilon,\eta$ such that $\E[(|V|+t)^p]\le (\epsilon+t)^p-\Delta$. In particular, if $p$ is an integer, then $\E[(|V|+t)^p]\le (\epsilon+t)^p-pt^{p-1}\eta$. 
\end{itemize}
\end{lemma}
%{\color{red} I only prove the case that $p$ is an integer. This guarantees Theorem \ref{th-highorderloss} holds for integer $p$. If this lemma is true for general $p\in(1,\infty)$, then I can illustrate that Theorem \ref{th-highorderloss} holds for general $p$.}

\proof{Proof.}
(i) Suppose that $\|V\|_p\le \epsilon$ and denote by $Z=|V|^p$. It holds that $\E[Z]\le \epsilon^p$ and
\begin{align*}
\E[(|V|+t)^p]=\E[(Z^{1/p}+t)^p]\le ((\E[Z])^{1/p}+t)^p\le (\epsilon+t)^p,
\end{align*}
where we have used Jensen's inequality in the first inequality by noting that $x\mapsto (x^{1/p}+t)^p$ is concave on $\R_+$.

(ii) The case that $p$ is an integer can be verified by the following direct calculations:
\begin{align*}
		\E[(|V|+t)^p]&=\E\left[\sum_{i=0}^p {p\choose i} t^{p-i}|V|^i\right] \\
  &=\sum_{i\neq 1} {p\choose i} t^{p-i} \E[|V|^i]+pt^{p-1}\E[|V|]\\
		&\le \sum_{i\neq 1} {p\choose i} t^{p-i}\epsilon^i+pt^{p-1}(\epsilon-\eta) \\
  &=\sum_{i=0}^p {p\choose i} t^{p-i}\epsilon^i-pt^{p-1}\eta
  =(\epsilon+t)^p-pt^{p-1}\eta,
\end{align*}
where the inequality holds because  $\|V\|_p\le \epsilon$ implies $\|V\|_i\le \epsilon$ for $i=1,2,\cdots,p$.

%The case that $p$ is not an integer poses greater challenges, and we will use Lemma \ref{lem:v2-1}, as presented earlier. 
% Suppose that $\|V\|_p\le \epsilon$
% and $\E[|V|]\le \epsilon-\eta$ with $\epsilon>0$ and $\eta\in(0,\epsilon)$. Denote by $Z=|V|^p$. It holds that $Z\ge 0$, $\E[Z]\le \epsilon^p$ and $\E[Z^{1/p}]\le \epsilon-\eta$. 
For general case $p$,  it suffices to verify that the optimal value of the  following problem  is  smaller than $(\epsilon+t)^p$:
\begin{align}\label{eq-1016-21}
\sup \,\E[(Z+t)^p]~~{\rm s.t.}~~\E[Z^p]\le \epsilon^p,~~\E[Z] \le \epsilon -\eta,~~Z\ge 0.
\end{align}
We first assert that Problem \eqref{eq-1016-21} is equivalent to 
\begin{align} \label{eq:1017-1}
 \sup \,\E[(Z+t)^p]~~{\rm s.t.}~~\E[Z^p]= \epsilon^p,~~\E[Z] \le \epsilon -\eta,~~Z\ge 0.
\end{align}
To see this, 
%It is straightforward to check that
%to prove (ii), it suffices to verify that the optimal value of Problem \eqref{eq-1016-21} is strictly less than $(\epsilon+t)^p$. 
let $Z$ be a feasible solution of Problem \eqref{eq-1016-21} with $\E[Z]>0$. %Suppose that $Z=F_{Z}^{-1}(U)$ a.s.,  where $F_{Z}^{-1}$ is the left-quantile function of $Z$ and
Let $U$ be a uniform random variable on $(0,1)$ %(see Lemma A.28 of \cite{FS16} for the existence of $U$). 
and for $\alpha\in (0,1)$, define
\begin{align*}
Z_{\alpha}=(F_{Z}^{-1}(U)+f(\alpha))\id_{\{U\ge \alpha\}},~~\alpha\in[0,1),~~{\rm where}~f(\alpha)=\frac{\int_0^{\alpha}F_Z^{-1}(s)\d s}{1-\alpha}.
\end{align*}
It is straightforward to check that $\E[Z_{\alpha}]=\E[Z]$ for all $\alpha\in[0,1)$. 
Moreover, it holds that
\begin{align}\label{eq-1016-22}
\E[(Z_{\alpha})^p] %=\E[(F_{Z}^{-1}(U)+f(\alpha))^p\id_{\{U\ge \alpha\}}]
\ge \E[f(\alpha)^p\id_{\{U\ge \alpha\}}]
=\frac{\left(\int_0^{\alpha}F_Z^{-1}(s)\d s\right)^p}{(1-\alpha)^{p-1}}\to \infty~~{\rm as}~\alpha\uparrow 1.
\end{align}
Note that the mapping $\alpha\mapsto \E[(Z_{\alpha})^p]$ is continuous as $f(\alpha)$ is continuous, and $\E[(Z_{\alpha})^p]=\E[Z^p]\le \epsilon$ if $\alpha=0$. Combining with \eqref{eq-1016-22}, there exists $\alpha^*\in[0,1)$ such that $\E[Z_{\alpha^*}^p]=\epsilon^p$. %Hence, $Z_{\alpha^*}$ is a feasible solution of Problem \eqref{eq-1016-21}.
Moreover, one can check that for any $\alpha\in[0,1)$, $\int_{\gamma}^1 F_Z^{-1}(s)\d s\le \int_{\gamma}^1 F_{Z_{\alpha}}^{-1}(s)\d s$ for all $\gamma\in(0,1)$. It follows from Theorem 2.5 of \cite{BB06} that $Z_{\alpha}\le_{\rm cv}Z$. %\footnote{For two random variables $Z_1$ and $Z_2$, $Z_1$ is said to be smaller than $Z_2$ in the convex order, denoted by $Z_1\le_{\rm cx} Z_2$, if $\E[\phi(Z_1)]\le \E[\phi(Z_2)]$ for any convex function $\phi$.} 
Hence, we have $\E[(Z+t)^p]\le \E[(Z_{\alpha^*}+t)^p]$ as $x\mapsto -(x+t)^p$ is a concave function on $\R_+$.  Therefore, Problem \eqref{eq-1016-21} is equivalent to Problem \eqref{eq:1017-1}, 
which is further equivalent to
\begin{align}\label{eq:1015-3}
 \sup \,\E[g(Z)]~~{\rm s.t.}~~\E[Z ]=\epsilon^p,~~\E[Z^{1/p}] \le \epsilon -\eta,~~Z\ge 0,  
\end{align}
where $g(x) = (x^{1/p}+t)^p$ is a strictly concave function on $\R_+$. 
It remains to verify that the optimal value of Problem \eqref{eq:1015-3} is strictly less than $g(\epsilon^p)$.

To see this, let $Z$ be a feasible solution of Problem \eqref{eq:1015-3}. %We aim to verify that   $g(\epsilon^p)-\E[g(Z)]>C>0$, where $C$ is independent of the choice of $Z$.  
Define 
$$Z^* = z_1 \id_{\{Z <\epsilon^p\}} + z_2 \id_{\{Z \ge \epsilon^p\}},$$
where $z_1 =\E[Z|Z<\epsilon^p]$ and $z_2 = \E[Z|Z\ge \epsilon^p]$ satisfy $z_1\lambda + z_2 (1-\lambda) =\epsilon^p$ with $\lambda = \p(Z <\epsilon^p)$. 
%where $\lambda\in(0,1)$ satisfies $\lambda z_1+(1-\lambda)z_2=\epsilon^p$. 
Note that by Lemma \ref{lem:v2-1}, there exists $\Delta>0$ that only depends on $p,t,\epsilon,\eta$ such that $\E[(\epsilon^p-Z)_+]=\E[(Z-\epsilon^p)_+]\ge \Delta$. By definition of $z_1$ and $z_2$, this implies % $\E[(\epsilon^p-Z^*)_+]=\E[(\epsilon^p-Z)_+]$, we have
$
\Delta\le %\E[(\epsilon^p-Z^*)_+]%=\E[(\epsilon^p-Z^*)_+]%=\epsilon^p\p(Z<\epsilon^p)-\E[Z\id_{Z<\epsilon^p}]
 (\epsilon^p-z_1) \p(Z< \epsilon^p)\le \epsilon^p-z_1
$ and $\Delta\le %\E[(Z-\epsilon^p)_+]%=\E[Z\id_{Z\ge \epsilon^p}]-\epsilon^p\p(Z\ge \epsilon^p)
 (z_2-\epsilon^p)\p(Z\ge \epsilon^p)\le z_2-\epsilon^p
 $. Therefore, we have %$z_1\le \epsilon^p-\Delta.$
%Since $\E[Z]=\epsilon^p$, we also have $\E[(\epsilon^p-Z)_+]\ge \Delta$.
%Similarly, we have  $z_2\ge \epsilon^p+\Delta.$
% and
% \begin{align*}
% \Delta\le\E[(Z-\epsilon^p)_+]%=\E[Z\id_{Z\ge \epsilon^p}]-\epsilon^p\p(Z\ge \epsilon^p)
% =(z_2-\epsilon^p)\p(Z\ge \epsilon^p)\le z_2-\epsilon^p.
% \end{align*}
% Hence, we have 
\begin{align}\label{eq-1016-23}
z_1\le \epsilon^p-\Delta~~{\rm and}~~z_2\ge \epsilon^p+\Delta.
\end{align}
Further note that $Z\le_{\rm cv} Z^*$ and also note that  $g$ is  concave on $\R_+$. This implies
\begin{align*}
\E[g(Z)]  \le \E[g(Z^*)]=\lambda g(z_1)+(1-\lambda)g(z_2) 
 \le  \frac{g(\epsilon^p-\Delta)+ g(\epsilon^p+\Delta)}2 <g(\epsilon^p) ,
 \end{align*}
% Combining  \eqref{eq:1017-3}  and \eqref{eq-1016-23}, we have
% \begin{align*}
% \E[g(Z)]\le \sup\{\lambda g(a)+(1-\lambda)g(b): a\le \epsilon^p-\Delta,~b\ge \epsilon^p+\Delta,~\lambda a+(1-\lambda)b=\epsilon^p\}.
% \end{align*}
% It is straightforward to check that the optimal solution to the maximization problem above is achieved when $a=\epsilon^p-\Delta$, $b=\epsilon^p+\Delta$ and $\lambda=1/2$. Therefore,
%and thus,
% \begin{align*}
% (\epsilon+t)^p-\E[g(Z)]
% =g(\epsilon^p)-\E[g(Z)]
% \ge g(\epsilon^p)-\frac{1}{2}(g(\epsilon^p-\Delta)+g(\epsilon^p+\Delta))>0,
% \end{align*}
where  the second inequality is due to the concavity of $g$,  $\lambda z_1+(1-\lambda) z_2 = (\epsilon^p-\Delta + \epsilon^p +\Delta )/2$ and \eqref{eq-1016-23}; and 
the strict inequality is due to $\Delta>0$ and the strict concavity of $g$. Noting that $\Delta$ is independent of the random variable $Z$, this completes the proof.
% Now we consider the problem  
% $$ \sup \,\E^F[(X+t)^p]~~{\rm s.t.}~~\E^F[X^p]\le \epsilon^p,~~\E^F[X] \le \epsilon -\eta,~~X\ge 0 $$
% Now we consider the problem  
% $$ \sup \,\E^F[(X+t)^p]~~{\rm s.t.}~~\E^F[X^p]= \epsilon^p,~~\E^F[X] \le \epsilon -\eta,~~X\ge 0 $$
% which is equivalent to 
% \begin{align}\label{eq:1015-3}
%  \sup \,\E^F[g(X)]~~{\rm s.t.}~~\E^F[X ]=\epsilon^p,~~\E^F[X^{1/p}] \le \epsilon -\eta,~~X\ge 0,  
% \end{align}
% where $g(x) = (x^{1/p}+t)^p$ is a strictly concave function on $\R_+$.  Take any feasible distribution $F$ and $X\sim F$. Define 
% $$X^*  = x_1 1_{\{X <\epsilon^p\}} + x_2 1_{\{X \ge \epsilon^p\}},$$
% where $x_1 =\E[X|X<\epsilon^p]$ and $x_2 = \E[X|X\ge \epsilon^p]$. 
%  Note that the optimal solution to the problem \eqref{eq:1015-2} is   $\epsilon^p p^*>0$. It follows that  $(x_2 -\epsilon^p) \p(X \ge \epsilon^p)\ge \epsilon^p p^*$, and thus, we can define 
%  $$ X_1^* =  x_1^* 1_{\{X <\epsilon^p\}} + \left( \epsilon^p+ \frac{\epsilon^p p^*}{\p(X \ge \epsilon^p)}\right) 1_{\{X \ge \epsilon^p\}} ,$$
% where $x_1^*\ge 0$ satisfies that $\E[X_1^*] = \epsilon^p$. 
%   We have $X \le_{\rm cv} X^* \le_{\rm cv} X^*_1$,  and $\E[X^*_1]=\E[X]=\epsilon^p$,  and thus, $$\E[ g(X )] - g(\epsilon^p) \le \E[ g(X^*)] - g(\epsilon^p)  = x_1 - g(\epsilon^p) =\Delta<0.$$ 

 \endproof

~~

\paragraph{Proof of Theorem \ref{th-necessity-highorderloss}.} 

\underline{(ii) $\Rightarrow$ (i)}:
By Lemma \ref{lm-onehigheq}, it suffices to show that \eqref{eq-regEU} holds for any $Z\in L^p$ and $\epsilon\ge0$. %verify that for any $Z\in L^p$ and $\epsilon\ge 0$,
%\begin{align}\label{eq-reghighEU}
%	\sup_{\|V\|_p\le \epsilon}\|\ell(Z+V)\|_p=\|\ell(Z)\|_p+\epsilon.
%\end{align}
For $\ell=\ell_1$, we have
\begin{align}\label{eq-highl1}
	\sup_{\|V\|_p\le \epsilon}\|\ell_1(Z+V)\|_p
	&=\sup_{\|V\|_p\le \epsilon}\|(Z+V-b_1)_+\|_p\notag\\[3pt]
 &\le \sup_{\|V\|_p\le \epsilon}\|(Z-b_1)_++|V|\|_p\notag\\[3pt]
	&\le \sup_{\|V\|_p\le \epsilon}\left\{\|(Z-b_1)_+\|_p+\|V\|_p\right\}\notag\\[3pt]
 &\le \|(Z-b_1)_+\|_p+\epsilon=\|\ell_1(Z)\|_p+\epsilon.
\end{align}
If $\p(Z>b_1)>0$, then
all the inequalities are equalities, and the maximizer can be chosen as
$V=\lambda(Z-b_1)_+$ with some $\lambda> 0$ such that $\|V\|_p=\epsilon$. If $Z\le b_1$ a.s., then we take $\{V_n\}_{n\in\N}$ such that $V_n$ has distribution $(1-1/n^p)\delta_0+(1/n^p)\delta_{n\epsilon}$, and $\{V_n=n\epsilon\}\subseteq\{Z\ge F^{-1}_Z(1-1/n^p)\}$ where $F_Z^{-1}$ is the left-quantile function of $Z$
%$Z$ and $V_n$ are comonotonic\footnote{Two random variables $Z$ and $V$ are said to be comonotonic if $(Z,V)$ is distributionally equivalent to $(q_Z(U),q_V(U))$, where $U$ is a random variable uniformly distributed on the interval $[0,1]$ (see e.g., \cite{D02} for a discussion of comonotonic random variables).}
for all $n$. We have $\|V_n\|_p=\epsilon$ and
\begin{align*}
	\sup_{\|V\|_p\le \epsilon}\|\ell_1(Z+V)\|_p
	&\ge \|(Z+V_n-b_1)_+\|_p\\[3pt]
 &\ge  \frac{1}{n}\left(n\epsilon+F_{Z}^{-1}\left(1-\frac{1}{n^p}\right)-b_1\right)_+
	%	&=\left(\epsilon+\left(F_{Z}^{-1}\left(1-\frac{1}{n^p}\right)-b_1\right)\big/n\right)_+\\
	\to \epsilon=\|\ell_1(Z)\|_p+\epsilon,
	~~{\rm as~}n\to\infty.
\end{align*}
Combining with \eqref{eq-highl1}, we verify that
 \eqref{eq-regEU} holds with $\ell_1$
for all $Z\le b_1$ a.s. This completes the proof of the case $\ell=\ell_1$.
For $\ell=\ell_2$, the proof is similar to that of $\ell_1$.
For $\ell=\ell_3$, we have
\begin{align}\label{eq-highl3}
	\sup_{\|V\|_p\le \epsilon}\|\ell_3(Z+V)\|_p
	&=\sup_{\|V\|_p\le \epsilon}\|(|Z+V-b_1|-b_2)_+\|_p\notag\\[3pt]
 &\le \sup_{\|V\|_p\le \epsilon}\|(|Z-b_1|-b_2+|V|)_+\|_p\notag\\[3pt]
	&\le \sup_{\|V\|_p\le \epsilon}\|(|Z-b_1|-b_2)_++|V|\|_p\notag\\[3pt]
 &\le \sup_{\|V\|_p\le \epsilon}\left\{\|(|Z-b_1|-b_2)_+\|_p+\|V\|_p\right\}\notag\\[3pt]
	&\le \|(|Z-b_1|-b_2)_+\|_p+\epsilon=\|\ell_3(Z)\|_p+\epsilon.
\end{align}
If $\p(|Z-b_1|>b_2)>0$, then
one can check that
all inequalities are equalities, and the maximizer can be chosen as
$V=\lambda(|Z-b_1|-b_2)_+{\rm sign}(Z-b_1)$ with some $\lambda\ge 0$ such that $\|V\|_p=\epsilon$. If $|Z-b_1|\le b_2$ a.s., then we have $Z\in[b_1-b_2,b_1+b_2]$. Taking a sequence $\{V_n\}_{n\in\N}$ as shown in the case of $\ell_1$, i.e., $V_n$ with distribution $(1-1/n^p)\delta_0+(1/n^p)\delta_{n\epsilon}$, it holds that for large enough $n$ such that $n\epsilon\ge \max\{b_1,b_2\}$,
\begin{align*}
	\sup_{\|V\|_p\le \epsilon}\|\ell_3(Z+V)\|_p
	&\ge \|(|Z+V_n-b_1|-b_2)_+\|_p\\
	&\ge  \left(\E[\left(|Z-b_1|-b_2\right)_+^p\id_{\{V_n=0\}}]
	+\E[\left(|Z-b_1+n\epsilon|-b_2\right)_+^p\id_{\{V_n=n\epsilon\}}]\right)^{1/p}\\
	&\ge \left(\E[\left(n\epsilon-b_2-b_2\right)_+^p\id_{\{V_n=n\epsilon\}}]\right)^{1/p}\\
	%&=\left(\frac{(n\epsilon-2b_2)_+^p}{n^p}\right)^{1/p}\\
	&=\left(\epsilon-\frac{2b_2}{n}\right)_+\to \epsilon=\|\ell_3(Z)\|_p+\epsilon,~~{\rm as}~n\to\infty.
\end{align*}
Combining with \eqref{eq-highl3}, we  have that
\eqref{eq-regEU} holds with $\ell_3$
for all $|Z-b_1|\le b_2$ a.s. This completes the proof of the case $\ell_3$.
For $\ell=\ell_4$, we have
\begin{align*}
	\sup_{\|V\|_p\le \epsilon}\|\ell_4(Z+V)\|_p
	&=\sup_{\|V\|_p\le \epsilon}\||Z+V-b_1|+b_2\|_p\\[3pt]
 &\le \sup_{\|V\|_p\le \epsilon}\||Z-b_1|+b_2+|V|\|_p\\[3pt]
	&\le \sup_{\|V\|_p\le \epsilon}\left\{\||Z-b_1|+b_2\|_p+\|V\|_p\right\}\\[3pt]
 &\le \||Z-b_1|+b\|_p+\epsilon=\|\ell_4(Z)\|_p+\epsilon,
\end{align*}
where all inequalities can be equality, and the maximizer can be chosen as $V=\lambda (|Z-b_1|+b_2){\rm sign}(Z-b_1)$ for some $\lambda\ge 0$ such that $\|V\|_p=\epsilon$. Hence, we conclude that  \eqref{eq-regEU} holds with $\ell=\ell_4$ and  complete the proof of (ii) $\Rightarrow$ (i).

\underline{(i) $\Rightarrow$ (ii)}:
The proof of this direction is in the same spirit to that of Theorem \ref{th-RegularizationEL}.  Assume without loss of generality that $C=1$.
By Lemma \ref{lm-onehigheq}, we have for any $Z\in L^p$ and $\epsilon\ge 0$,
%\begin{itemize}
%	\item [(i*)] For any $Z\in L^p$ and $\epsilon\ge 0$, we have
\begin{align}\label{eq-reghighEU1}
	\sup_{\|V\|_p\le \epsilon}\|\ell(Z+V)\|_p=\|\ell(Z)\|_p+\epsilon.
\end{align}
%\end{itemize}
%We first consider the implication (ii)$\Rightarrow$(i*).
%Assume without loss of generality that $C=1$.
%Conversely,
%the proof is similar to  Theorem \ref{th-RegularizationEL}.
By the same arguments in the proof of Theorem \ref{th-RegularizationEL}, we can show that ${\rm Lip}(\ell)\le 1$. %Otherwise, there exists $x\in\R$ such that $|\ell'(x)|>1$. It holds that
%\begin{align*}
%	\sup_{\|V\|_p\le \epsilon}\|\ell(x+V)\|_p&\ge \ell(x+\epsilon{\rm sign}({\ell'(x)}))\ge %\ell(x)+|\ell'(x)|\epsilon>\ell(x)+\epsilon,
%\end{align*} where the second inequality follows from the convexity of $\ell$.
%where ${\rm sgn}: \R\to\{-1,1\}$ is a sign function.
%This yields a contradiction to \eqref{eq-reghighEU1}, and thus, ${\rm Lip}(\ell)\le 1$.
Next, assume that $\ell$ is differential at $x$ when we use the notation $\ell'(x)$, and we show the following facts.
\begin{align}\label{eq-asshigh}
	{\rm If}~|\ell'(x)|>0,~{\rm then}~ |\ell'(x)|=1.~~~{\rm If}~ \ell'(x)=0,~{\rm then}~ \ell(x)=0.
\end{align}
This will complete the proof since one can check that \eqref{eq-asshigh} implies that $\ell$ has one of the forms of $\ell_1$, $\ell_2$, $\ell_3$ and $\ell_4$ with $C=1$.
To see \eqref{eq-asshigh},
we assume by contradiction that there exists $x_0\in\R$ such that $|\ell'(x_0)|<1$ and $\ell(x_0)>0$ (note that $|\ell'(x_0)|\in(0,1)$ implies $\ell(x_0)>0$). 
Define 
$$
k=\max\left\{\left|\frac{\ell(x_0+2\epsilon)-\ell(x_0)}{2\epsilon}\right|,\left|\frac{\ell(x_0)-\ell(x_0-2\epsilon)}{2\epsilon}\right|\right\},
%\le {\rm Lip}(\ell)\le 1,
$$
and it holds that $0\le \ell(x_0+v)\le \ell(x_0)+k|v|$ for all $v\in[-2\epsilon,2\epsilon]$ as $\ell$ is nonnegative and convex. Further, noting that $|\ell'(x_0)|<1$ and ${\rm Lip}(\ell)\le 1$, we have $k<1$ and
\begin{align}\label{eq-highcx}
(\ell(x_0)+|v|)^p- (\ell(x_0)+k|v|)^p \ge %(1-k)|v|p(\ell(x_0))^{p-1}=
p\ell^{p-1}(x_0)(1-k)|v|,~\forall v\in\R.
\end{align}
Therefore,
\begin{align}
\sup_{\|V\|_p\le \epsilon}\E[\ell^p(x_0+V)]
	&=	\sup_{\|V\|_p\le \epsilon}\left\{\E[\ell^p(x_0+V)\id_{\{|V|\le 2\epsilon\}}]+\E[\ell^p(x_0+V)\id_{\{|V|> 2\epsilon\}}]\right\}\notag\\[3pt]
&\le	\sup_{\|V\|_p\le \epsilon}\left\{ \E[(\ell(x_0)+k|V|)^p\id_{\{|V|\le 2\epsilon\}}]+\E[\ell^p(x_0+V)\id_{\{|V|> 2\epsilon\}}]\right\}\notag\\[3pt]
&\le \sup_{\|V\|_p\le \epsilon}\left\{ \E[(\ell(x_0)+k|V|)^p\id_{\{|V|\le 2\epsilon\}}]+\E[(\ell(x_0)+|V|)^p\id_{\{|V|> 2\epsilon\}}]\right\}\label{eq-high1}\\[3pt]
	&= 	\sup_{\|V\|_p\le \epsilon}\left\{\E[(\ell(x_0)+|V|)^p]-\E[((\ell(x_0)+|V|)^p-(\ell(x_0)+k|V|)^p)\id_{\{|V|\le 2\epsilon\}}]\right\}\notag\\[3pt]
	&\le \sup_{\|V\|_p\le \epsilon}\left\{\E[(\ell(x_0)+|V|)^p]-p\ell^{p-1}(x_0)(1-k)\E[|V|\id_{\{|V|\le 2\epsilon\}}]\right\}
	\label{eq-high2}\\[3pt]
	&=:I\notag,
\end{align}
where \eqref{eq-high1} holds because $\ell$ is nonnegative with ${\rm Lip}(\ell)\le 1$, and \eqref{eq-high2} follows from \eqref{eq-highcx}.
% $$
% k=\max\left\{\left|\frac{\ell(x_0+2\epsilon)-\ell(x_0)}{2\epsilon}\right|,\left|\frac{\ell(x_0)-\ell(x_0-2\epsilon)}{2\epsilon}\right|\right\},
% %\le {\rm Lip}(\ell)\le 1,
% $$
% $$
% I_i=\sup_{V\in\mathcal V_i}\left\{\E[(\ell(x_0)+|V|)^p]-p\ell^{p-1}(x_0)(1-k)\E[|V|\id_{\{|V|\le 2\epsilon\}}]\right\},~~i=1,2,
% $$
% $\mathcal V_1$ and $\mathcal V_2$ are defined in the proof of Theorem \ref{th-RegularizationEL}, \eqref{eq-high1} follows from that $0\le \ell(x_0+v)\le \ell(x_0)+k|v|$ for all $v\in[-2\epsilon,2\epsilon]$ as $\ell$ is nonnegative and convex,   $0\le \ell(x_0+v)\le \ell(x_0)+v$ as ${\rm Lip}(\ell)\le 1$,
% and  \eqref{eq-high2} follows from    $|\ell'(x_0)|<1$ and ${\rm Lip}(\ell)\le 1$ which imply $k<1$ and $(\ell(x_0)+|v|)^p- (\ell(x_0)+k|v|)^p \ge %(1-k)|v|p(\ell(x_0))^{p-1}=
% p\ell^{p-1}(x_0)(1-k)|v|,~\forall v\in\R $. 
Recalling $\mathcal V_1$ and $\mathcal V_2$ defined by \eqref{eq-V2} in the proof of Theorem \ref{th-RegularizationEL}, we can rewrite $I=\max\{I_1,I_2\}$ with
\begin{align*}
I_i=\sup_{V\in\mathcal V_i}\left\{\E[(\ell(x_0)+|V|)^p]-p\ell^{p-1}(x_0)(1-k)\E[|V|\id_{\{|V|\le 2\epsilon\}}]\right\},~~i=1,2.
\end{align*}
By Lemma \ref{lm-regnecessity} and the definition of $\mathcal V_1$, we have 
\begin{align}\label{eq-lmEC5}
\E[|V|]\le \epsilon 2^{-p/q}+\frac{(1-2^{-p/q})\epsilon}{2}=\frac{(1+2^{-p/q})\epsilon}{2}<\epsilon,~~\forall~V\in\mathcal V_1.
\end{align}
It holds that
\begin{align}\label{eq-highEUI1}
	I_1&\le \sup_{V\in\mathcal V_1}\E[(\ell(x_0)+|V|)^p]
 %\le (\ell(x_0)+\epsilon)^p-p\ell^{p-1}(x_0)\epsilon \left(1-\frac{1+2^{-p/q}}{2}\right)
 <(\ell(x_0)+\epsilon)^p,
\end{align}
where the strict inequality follows from \eqref{eq-lmEC5} and Statement (ii) of Lemma \ref{lm-supmain} by noting that $\ell(x_0)>0$. For $I_2$, we have
\begin{align}\label{eq-highEUI2}
	I_2&\le \sup_{V\in\mathcal V_2}\E[(\ell(x_0)+|V|)^p]-\inf_{V\in\mathcal V_2}p\ell^{p-1}(x_0)(1-k)\E[|V|\id_{\{|V|\le 2\epsilon\}}]\notag\\[3pt]
	&\le  (\ell(x_0)+\epsilon)^p-p\ell^{p-1}(x_0)(1-k)\inf_{V\in\mathcal V_2}\E[|V|\id_{\{|V|\le 2\epsilon\}}]\notag\\[3pt]
	&\le (\ell(x_0)+\epsilon)^p-p\ell^{p-1}(x_0)(1-k)\frac{(1-2^{-p/q})\epsilon}{2}\notag\\[3pt]
 &<(\ell(x_0)+\epsilon)^p,
\end{align}
where the second inequality follows from Statement (i) of Lemma \ref{lm-supmain}, and the third inequality is due to the definition of $\mathcal V_2$.
Combining  \eqref{eq-highEUI1} and \eqref{eq-highEUI2}, we have
\begin{align*}
	\sup_{\|V\|_p\le \epsilon}\|\ell(x_0+V)\|_p\le I^{1/p}= \max\{I_1^{1/p},I_2^{1/p}\}<\ell(x_0)+\epsilon,
\end{align*}
which yields a contradiction to \eqref{eq-reghighEU1}. Hence,  \eqref{eq-asshigh} holds, which completes the proof.
\endproof
~~

As outlined in Remark \ref{re-SADK23}, 
we detail below the steps to verify that for the function $\ell$ listed in (ii) of Theorems \ref{th-RegularizationEL} and  \ref{th-necessity-highorderloss}, 
%the implication  (ii) $\Rightarrow$ 
 the regularization version given by Proposition 3.9 of \cite{SADK23} can be reformed as the corresponding regularization version in our framework.
% the right-hand side of \eqref{eq-EL1eq} in Theorem \ref{th-RegularizationEL}.
To be specific, Proposition 3.9 of \cite{SADK23} states that for $\ell$ that is proper, upper-semicontinuous and satisfies $\ell(x)\le C(1+|x|^p)$ for some $C>0$ and all $x\in\R$,
\begin{align*}%\label{eq-eqreg}x
\sup_{F\in\mathcal B_p(G_0,\epsilon)} \E^F[\ell(\bm\beta^\top \mathbf Z)]=\inf_{\lambda\ge 0} \left\{\E^{G_0}[\ell_p(\bm\beta^\top \mathbf Z,\lambda)]+\lambda \epsilon^p\|\bm\beta\|_*^p\right\},
\end{align*}
where $\ell_p(x,\lambda)=\sup_{y\in\R} \{\ell(y)-\lambda |y-x|^p\}$ and $\|\cdot\|_*$ is the dual norm of $\|\cdot\|$. 
When $\ell$ is given  by (ii) of Theorem \ref{th-RegularizationEL}, we have 
\begin{align}\label{eq-eqreg-1}%\label{eq-eqreg}
\inf_{\lambda\ge 0} \left\{\E^{G_0}[\ell_p(\bm\beta^\top \mathbf Z,\lambda)]+\lambda \epsilon^p\|\bm\beta\|_*^p\right\} = \E^{G_0}[\ell (\bm\beta^\top \mathbf Z)]+C\epsilon\|\bm\beta\|_*;
\end{align}
when $\ell$  is given  by (ii) of Theorem \ref{th-necessity-highorderloss}, we have
\begin{align}\label{eq-eqreg-1-Thm4}%\label{eq-eqreg}
\inf_{\lambda\ge 0} \left\{\E^{G_0}[(\ell^p)_p(\bm\beta^\top \mathbf Z,\lambda)]+\lambda \epsilon^p\|\bm\beta\|_*^p\right\} =  \left(\left(\mathbb{E}^{G_0}[\ell^p(\bb^\top \mathbf Z)]\right)^{1/p} + C\epsilon||\bb||_*\right)^p.
\end{align}
\proof{Proof.}
We only give the proof of \eqref{eq-eqreg-1} for the case $\ell(x)=|x-b|$ for some $b\in\R$ and the proof of \eqref{eq-eqreg-1-Thm4} for $\ell(x)=(|x-b_1|-b_2)_+^p$ for some $b_1\in\R$ and $b_2\ge 0$
as the other cases can be proved similarly. 

In the case that $\ell(x)=|x-b|$,  the function $\ell_p$ reduces to $\ell_p(x,\lambda)=\sup_{y\in\R} f_{\lambda,x}(y)$, where
%It is straightforward to check that the loss function $\ell$ satisfies all conditions in  
% Note that
% \begin{align*}
% \ell_p(x,\lambda)=\sup_{y\in\R} \{\ell(y)-\lambda |y-x|^p\}=\sup_{y\in\R} \{|y-b|-\lambda |y-x|^p\}.
% \end{align*}
  $f_{\lambda,x}(y):=|y-b|-\lambda|y-x|^p$ for $\lambda\ge 0$ and $x,y\in\R$.
We first show 
\begin{align}\label{eq-ell_p}
\ell_p(x,\lambda)=|x-b|+(\lambda p)^{-1/(p-1)}-\lambda(\lambda p)^{-p/(p-1)}
\end{align}
%solve the maximization problem in the representation of $\ell_p(x,\lambda)$ 
by considering the following two cases: % that $x\ge b$ and $x<b$.
\begin{itemize}\item [(i)]
If $x\ge b$, then we have $\ell_p(x,\lambda)=\sup_{y\in\R}f_{\lambda,x}(y)=\max\{I_1,I_2,I_3\}$ where 
\begin{align*}
I_1=\sup_{y\le b}f_{\lambda,x}(y),~~I_2=\sup_{b<y<x}f_{\lambda,x}(y)~~{\rm and}~~ I_3=\sup_{y\ge x}f_{\lambda,x}(y).
\end{align*} 
% we have
% \begin{align*}
% \sup_{y\in\R}f_{\lambda,x}(y)&=\sup_{y\in\R} \{|y-b|-\lambda |y-x|^p\}\\
% &=\max\left\{\sup_{y\le b} \{b-y-\lambda(x-y)^p\},\sup_{b<y<x} \{y-b-\lambda(x-y)^p\}, \sup_{y\ge x} \{y-b-\lambda(y-x)^p\}\right\}\\
% &=:\max\{I_1,I_2,I_3\}.
% \end{align*}
For $I_1$, by calculating the derivative of $f_{\lambda,x}(y)$ for $y < b $, %we have
% \begin{align*}
% f_{\lambda,x}'(y)=-1+\lambda p(x-y)^{p-1}~~{\rm for}~y < b~~{\rm and}~~f_{\lambda,x}'(y)=1-\lambda p(y-x)^{p-1}~~{\rm for}~y> x,
% \end{align*}
one can verify that  
\begin{align*}
I_1= 
f_{\lambda,x}(b)%=-\lambda|x-b|^p
=-\lambda|b-x|^p %\le -\lambda (\lambda p)^{-p/(p-1)} 
\end{align*}
if $x\ge b+(\lambda p)^{-1/(p-1)}$,  and  
\begin{align*}
I_1 =
f_{\lambda,x}\left(x-(\lambda p)^{-1/(p-1)}\right)
=b-x+(\lambda p)^{-1/(p-1)}-\lambda(\lambda p)^{-p/(p-1)}, %~~& x< b+(\lambda p)^{-1/(p-1)},
\end{align*}
otherwise. 
For $I_2$, since $f_{\lambda,x}$ is increasing on $[b,x]$, we have $$I_2 = f_{\lambda,x}(b)\le I_3.$$ %, which implies $\sup_{y\in\R}f_{\lambda,x}(y)=\max\{I_1,I_3\}$. 
Similarly,  by calculating the derivative of $f_{\lambda,x}(y)$ for $y >x$, we have
\begin{align*}
I_3=f_{\lambda,x}\left(x+(\lambda p)^{-1/(p-1)}\right)
=x-b+(\lambda p)^{-1/(p-1)}-\lambda(\lambda p)^{-p/(p-1)}.
\end{align*}
%$I_3=f_{\lambda,x}\left(x+(\lambda p)^{-1/(p-1)}\right)$. % It is straightforward to check that $I_3\ge I_2$. 
Note that $x\ge b$. Comparing $I_1$, $I_2$ and $I_3$, it is straightforward to check that $I_1\le I_3$ and $I_2\le I_3$, and hence,
\begin{align*}
\ell_p(x,\lambda) %=\sup_{y\in\R}f_{\lambda,x}(y)
=I_3
%=f_{\lambda,x}\left(x+(\lambda p)^{-1/(p-1)}\right)
=x-b+(\lambda p)^{-1/(p-1)}-\lambda(\lambda p)^{-p/(p-1)}.
\end{align*}

\item [(ii)] If $x<b$,  then by similar analysis, we obtain
\begin{align*}
\ell_p(x,\lambda)%=\sup_{y\in\R}f_{\lambda,x}(y)
=f_{\lambda,x}\left(x-(\lambda p)^{-1/(p-1)}\right)=b-x+(\lambda p)^{-1/(p-1)}-\lambda(\lambda p)^{-p/(p-1)}.
\end{align*}
\end{itemize}
Combining the above two cases, we have \eqref{eq-ell_p} holds. 
Substituting \eqref{eq-ell_p} into  the left-hand side of \eqref{eq-eqreg-1} yields
\begin{align*}
&\inf_{\lambda\ge 0} \left\{\E^{G_0}[\ell_p(\bm\beta^\top \mathbf Z,\lambda)]+\lambda \varepsilon^p\|\bm\beta\|_*^p\right\}\\
&=\inf_{\lambda\ge 0} \left\{\E^{G_0}[|\bm\beta^\top\mathbf Z-b|]+(\lambda p)^{-1/(p-1)}-\lambda(\lambda p)^{-p/(p-1)}+\lambda \varepsilon^p\|\bm\beta\|_*^p\right\}\\
&=\E^{G_0}[\ell(\bm\beta^\top \mathbf Z)]+\inf_{\lambda\ge 0}\{p^{-p/(p-1)}(p-1)\lambda^{-1/(p-1)}+\lambda\varepsilon^p\|\bm\beta\|_*^p\}\\
&=\E^{G_0}[\ell(\bm\beta^\top \mathbf Z)]+\varepsilon\|\bm\beta\|_*.
\end{align*}
This completes the proof of \eqref{eq-eqreg-1}. % and shows that the regularization model is independent of parameter $p$.

In the case that $\ell(x)=(|x-b_1|-b_2)_+ $, that is,  $\ell^p(x)=(|x-b_1|-b_2)_+^p$, we have 
\begin{align*}
(\ell^p)_p(x,\lambda)=\sup_{y\in\R} f_{\lambda,x}(y)
=\max\{I_1,I_2,I_3\},
\end{align*}
where $ f_{\lambda,x}(y)=(|y-b_1|-b_2)_+^p-\lambda|y-x|^p$ and 
\begin{align*}
I_1=\sup_{y\le b_1-b_2}f_{\lambda,x}(y),~~I_2=\sup_{b_1-b_2<y<b_1+b_2}f_{\lambda,x}(y)~~{\rm and}~~ I_3=\sup_{y\ge b_1+b_2}f_{\lambda,x}(y).
\end{align*}  
It is straightforward to check that
%$I_2\le \max\{f_{\lambda,x}(b_1-b_2),f_{\lambda,x}(b_1+b_2)\}\le \max\{I_1,I_3\}$, and thus, $\ell_p(x,\lambda)=\max\{I_1,I_3\}$. Moreover, one can verify that 
$(\ell^p)_p(x,\lambda)=\infty$ if $\lambda<1$. Hence, the constraint of $\lambda$ can be replaced by $\lambda\ge 1$. Let now $\lambda \ge 1$ and denote by $k=\lambda^{1/(p-1)}$. By considering the first-order condition, we have
\begin{align*}
I_1=\begin{cases}
-k^{p-1}(x-b_1+b_2)^p,~~&x>b_1-b_2,\\[5pt]
\left(\frac{k}{k-1}\right)^{p-1}(b_1-b_2-x)^p,~~&x\le b_1-b_2,
\end{cases}
\end{align*}
\begin{align*}
I_2=\begin{cases}
0,~~&b_1-b_2<x<b_1+b_2,\\
-k^{p-1}(b_1-b_2-x)^p,~~&x\le b_1-b_2,\\
-k^{p-1}(x-b_1-b_2)^p,~~&x\ge b_1+b_2,
\end{cases}
\end{align*}
and 
\begin{align*}
I_3=\begin{cases}
-k^{p-1}(b_1+b_2-x)^p,~~&x<b_1+b_2,\\[5pt]
\left(\frac{k}{k-1}\right)^{p-1}(x-b_1-b_2)^p,~~&x\ge b_1+b_2
\end{cases}
\end{align*}
where we use the convention that $s/0=\infty$ if $s>0$.
Comparing $I_1$, $I_2$ and $I_3$, % and noting that $k=\lambda^{1/(p-1)}$,
we have
\begin{align*}
%\widetilde{\ell}_p(x,k):=
(\ell^p)_p(x,\lambda)
=\left(\frac{k}{k-1}\right)^{p-1}(|x-b_1|-b_2)_+^p~~{\rm with}~k=\lambda^{1/(p-1)}.
\end{align*}
Denote by $A=\E^{G_0}[\ell^p(\bb^{\top}\mathbf Z)]=\E^{G_0}[(|\bb^{\top}\mathbf Z-b_1|-b_2)_+^p]$.
Substituting the above equation into the left-hand side of \eqref{eq-eqreg-1-Thm4} and noting that the constraint of $k$ can be replaced by $k\ge 1$ yield 
\begin{align*}
\inf_{\lambda\ge 0} \left\{\E^{G_0}[(\ell^p)_p(\bm\beta^\top \mathbf Z,\lambda)]+\lambda \varepsilon^p\|\bm\beta\|_*^p\right\}
&=\inf_{k\ge 1} \left\{\left(\frac{k}{k-1}\right)^{p-1}\E^{G_0}[ (|\bb^{\top}\mathbf Z-b_1|-b_2)_+^p ]+k^{p-1} \varepsilon^p\|\bm\beta\|_*^p\right\}\\
%&=\inf_{k\ge 1} \left\{\E^{G_0}[\widetilde{\ell}_p(\bm\beta^\top \mathbf Z,k)]+k^{p-1} \varepsilon^p\|\bm\beta\|_*^p\right\}\\
&=\inf_{k\ge 1} \left\{\left(\frac{k}{k-1}\right)^{p-1}A+k^{p-1}\varepsilon^p\|\bm\beta\|_*^p\right\}\\
&=\left(A^{1/p}+\epsilon\|\bb\|_*\right)^p=\left(\left(\mathbb{E}^{G_0}[\ell^p(\bb^\top \mathbf Z)]\right)^{1/p} + \epsilon||\bb||_*\right)^p.
\end{align*}
This completes the proof of \eqref{eq-eqreg-1-Thm4}.
\endproof
~~

The proof of Corollary \ref{cor1} relies on the following lemma and the regularization results in Theorem \ref{th-highorderloss}. Recall that
\begin{align*}
	\pi_{1,\ell}(F,t)=t+\left(\E^F[\ell^p(\bb^\top \mathbf Z,t)]\right)^{1/p}~~{\rm and}~~\pi_{2,\ell}(F,t)=\E^F[\ell^p(\bb^\top \mathbf Z,t)].
\end{align*}

\begin{lemma}\label{lm-minmaxeq}
	For any $p\in[1,\infty)$, $G_0\in \mathcal M_p(\R^n)$ and $\epsilon\ge 0$,
	the following two statements hold.
	\begin{itemize}
		\item[(i)] Suppose that $\ell(z,t)$ is nonnegative on $\R^2$, and convex in $t$ with $\lim_{t\to-\infty}{\partial \ell(z,t)}/{\partial t}<-1$ for all $z\in\R$, and Lipschitz continuous in $z$ for all $t\in\R$ with a uniform Lipschitz constant, i.e., there exists $M>0$ such that
		$$
		|\ell(z_1,t)-\ell(z_2,t)|\le M|z_1-z_2|,~~\forall t,z_1,z_2\in\R.
		$$
		Then we have
		\begin{align*}
			\sup_{F\in \mathcal  B_p(G_0,\epsilon)} \inf_{t \in \R} \pi_{1,\ell}(F,t)=
			\inf_{t \in \R} \sup_{F\in \mathcal  B_p(G_0,\epsilon)}\pi_{1,\ell}(F,t).
		\end{align*}
		
		\item[(ii)] Suppose that $\ell(z,t)$ is convex in $t$ with $\lim_{t\to-\infty}{\partial \ell(z,t)}/{\partial t}<0$ and $\lim_{t\to\infty}{\partial \ell(z,t)}/{\partial t}>0$ for all $z\in\R$, and Lipschitz continuous in $z$ for all $t\in\R$ with a uniform Lipschitz constant.
		Then we have
		\begin{align*}
			\sup_{F\in \mathcal  B_p(G_0,\epsilon)} \inf_{t \in \R} \pi_{2,\ell}(F,t)=
			\inf_{t \in \R} \sup_{F\in \mathcal  B_p(G_0,\epsilon)}\pi_{2,\ell}(F,t).
		\end{align*}
	\end{itemize}
\end{lemma}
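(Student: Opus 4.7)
My plan is to establish both minimax identities by applying Fan's theorem \cite{F53} after reducing the inner infimum over $t$ to a compact interval via the coercivity hypotheses on $\ell$. The direction $\sup_F\inf_t\pi_{i,\ell}(F,t)\le\inf_t\sup_F\pi_{i,\ell}(F,t)$ is automatic, so only the reverse inequality requires argument.

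For the compact reduction, the uniform Lipschitz constant $M$ in $z$ together with the reverse Minkowski inequality gives
\begin{align*}
\bigl(\E^F[\ell^p(\bb^\top\mathbf Z,t)]\bigr)^{1/p}\ \ge\ \ell(0,t)-M\bigl(\E^F[|\bb^\top\mathbf Z|^p]\bigr)^{1/p}\ \ge\ \ell(0,t)-M\bigl(\|\bb^\top\mathbf Z_0\|_p+\epsilon\|\bb\|_*\bigr)
\end{align*}
for every $F\in\mathcal B_p(F_0,\epsilon)$ and $\mathbf Z_0\sim F_0$, where the last step uses Theorem \ref{thm:221010-1} to transfer the Wasserstein moment bound to the one-dimensional projection. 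In case (i), convexity of $\ell(0,\cdot)$ and $\lim_{t\to-\infty}\partial_t\ell(0,t)<-1$ yield $\ell(0,t)\ge -ct+C_1$ for $t$ sufficiently negative and some $c>1$, whence $\pi_{1,\ell}(F,t)=t+(\E^F[\ell^p])^{1/p}\ge(1-c)t+C_2\to+\infty$ as $t\to-\infty$ uniformly over the Wasserstein ball; nonnegativity of $\ell$ gives $\pi_{1,\ell}(F,t)\ge t$ and hence uniform coercivity at $+\infty$ as well. In case (ii), the two-sided slope conditions force $\ell(0,t)\to+\infty$ as $|t|\to\infty$, so the same argument applies to $\pi_{2,\ell}$. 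There is thus a compact interval $[t_-,t_+]$, independent of $F\in\mathcal B_p(F_0,\epsilon)$, on which both the inner infimum $\inf_t\pi_{i,\ell}(F,t)$ and the outer $\inf_t\sup_F\pi_{i,\ell}(F,t)$ are attained.

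With $t$ restricted to $[t_-,t_+]$, Fan's minimax theorem applies: the Wasserstein ball $\mathcal B_p(F_0,\epsilon)$ is convex (no topology required on the non-compact side in Fan's theorem) and $[t_-,t_+]$ is compact and convex. The map $F\mapsto\pi_{i,\ell}(F,t)$ is concave: linear for $\pi_{2,\ell}$, and for $\pi_{1,\ell}$ it is the sum of the $F$-independent term $t$ and the composition of the linear functional $\E^F[\ell^p]$ with the concave map $x\mapsto x^{1/p}$ on $[0,\infty)$ (valid since $p\ge 1$ and $\ell\ge 0$). The map $t\mapsto\pi_{i,\ell}(F,t)$ is convex and continuous on $[t_-,t_+]$: $\ell^p(z,\cdot)$ is convex (as $\ell\ge 0$ is convex in $t$ and $p\ge 1$), $\E^F$ preserves convexity, and in case (i) Minkowski's inequality renders $t\mapsto(\E^F[\ell^p(\cdot,t)])^{1/p}$ convex for a convex family of nonnegative functions, after which adding $+t$ is harmless. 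Fan's theorem then yields the swap on the compact interval, which the compact-interval reduction upgrades to the full identity.

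The delicate step is the uniform-in-$F$ coercivity carried out above: one must simultaneously exploit the Lipschitz bound in $z$, the slope behaviour of $\ell(0,\cdot)$ at infinity, and the $p$-Wasserstein bound on $F$ transferred through Theorem \ref{thm:221010-1}. Once this uniform coercivity is secured, the concave–convex structure built into $\pi_{1,\ell}$ and $\pi_{2,\ell}$ delivers Fan's hypotheses immediately.
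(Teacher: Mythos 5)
Your proof is correct and follows essentially the same route as the paper's: establish concavity in $F$ and convexity plus coercivity in $t$, confine the inner infimum to a compact interval by combining the uniform Lipschitz constant $M$ in $z$ with the $\epsilon\|\bm\beta\|_*$ moment bound of the projected Wasserstein ball, and then invoke a Fan/Sion-type minimax theorem on that interval. The only immaterial difference is the mechanism for the compact reduction: the paper shows $|\pi_{i,\ell}(F,t)-\pi_{i,\ell}(F_0,t)|\le M\|\bm\beta\|_*\epsilon$ uniformly in $t$ and uses coercivity of $\pi_{i,\ell}(F_0,\cdot)$, whereas you obtain a uniform-in-$F$ coercive lower bound by comparing $\ell(\bm\beta^\top\mathbf Z,t)$ with $\ell(0,t)$.
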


\proof{Proof.}
	%	The proof of (ii) is similar to (i). Hence, we only give the proof of (i) here.
	(i) First, we show three facts below. (a) $\pi_{1,\ell}(F,t)$ is concave in $F$ for all $t\in\R$; (b) $\pi_{1,\ell}(F,t)$ is convex in $t$ for all $F\in\mathcal M_p(\R^n)$; (c) $\lim_{t\to\pm\infty}\pi_{1,\ell}(F,t)=\infty$ for all $F\in\mathcal M_p(\R^n)$.
	%\begin{itemize}
	%	\item [(a)] $\pi_{1,\ell}(F,t)$ is concave in $F$ for all $t\in\R$.
	%	\item [(b)] $\pi_{1,\ell}(F,t)$ is convex in $t$ for all $F\in\mathcal M_p(\R^n)$.
	%	\item [(c)] $\lim_{t\to\pm\infty}\pi_{1,\ell}(F,t)=\infty$ for all $F\in\mathcal M_p(\R^n)$.
	%	\end{itemize}
The fact (a) is trivial. For $F\in\mathcal M_p(\R^n)$, $\lambda\in[0,1]$ and $t_1,t_2\in\R$, it holds that
\begin{align*}
	\left(\E^F[\ell^p(\bb^\top \mathbf Z,\lambda t_1+(1-\lambda)t_2)]\right)^{1/p}&\le 	
	\left(\E^F\left[\left(\lambda\ell(\bb^\top \mathbf Z, t_1)+(1-\lambda)\ell(\bb^\top \mathbf Z, t_2)\right)^p\right]\right)^{1/p}\\[3pt]
	&\le  \lambda\left(\E^F[\ell^p(\bb^\top \mathbf Z, t_1)]\right)^{1/p}+(1-\lambda) \left(\E^F[\ell^p(\bb^\top \mathbf Z,t_2)]\right)^{1/p},
\end{align*}
where the first step holds because $\ell$ is nonnegative, and the second follows from the triangle inequality. This implies (b). To see (c), it is obvious that $\lim_{t\to\infty}\pi_{1,\ell}(F,t)=\infty$. Note that
$
(\E^F[\ell^p(\bb^\top \mathbf Z,t)])^{1/p}\ge \E^F[\ell(\bb^\top \mathbf Z,t)].
$
Combining with $\lim_{t\to-\infty}{\partial \ell(z,t)}/{\partial t}<-1$ and the convexity of $\ell(z,t)$ in $t$, we have $\lim_{t\to-\infty}\pi_{1,\ell}(F,t)=\infty$. Hence, we conclude the proof of (c). Using (b) and (c), the set of all minimizers of the problem $\inf_{t \in \R} \pi_{1,\ell}(F,t)$ is a closed interval. Denote by $t(F):=\inf \arg\min_t \pi_{1,\ell}(F,t)$. We will show that $\{t(F): F\in\mathcal  B_p(G_0,\epsilon)\}$ is a subset of a compact set. For any $F\in B_p(G_0,\epsilon)$ and $t\in\R$, let $\mathbf Z\sim F$ and $\mathbf{Z_0}\sim G_0$, and we have
\begin{align}\label{eq-minuspi}
	|\pi_{1,\ell}(F,t)-\pi_{1,\ell}(G_0,t)|&=\left|\left(\E[\ell^p(\bb^\top \mathbf Z,t)]\right)^{1/p}-\left(\E^F[\ell^p(\bb^\top \mathbf {Z_0},t)]\right)^{1/p}\right|\notag\\[3pt]
	&\le \left(\E[|\ell(\bb^\top \mathbf Z,t)-\ell(\bb^\top \mathbf Z_0,t)|^p]\right)^{1/p}\notag\\[3pt]
 &\le \left(\E[M^p|\bb^\top (\mathbf Z-\mathbf Z_0)|^p]\right)^{1/p}\notag\\[3pt]
	&\le M\|\bb\|_*\left(\E[\|\mathbf Z-\mathbf Z_0\|^p]\right)^{1/p} \le M\|\bb\|_*\epsilon,
\end{align}
where the first and the third inequalities follow from the triangle inequality and H\"{o}lder's inequality, respectively, and we have used the definition of the Wasserstein ball $\mathcal B_p(G_0,\epsilon)$ in the last step. Hence, it holds that
\begin{align}\label{eq-piF1}
	\pi_{1,\ell}(F,t(G_0))\le \pi_{1,\ell}(G_0,t(G_0))+M\|\bb\|_*\epsilon.
\end{align}
Note that $\pi_{1,\ell}(G_0,t)\to\infty$ as $t\to\pm\infty$. There exists $\Delta>0$ such that $\pi_{1,\ell}(G_0,t)>\pi_{1,\ell}(G_0,t(G_0))+2M\|\bb\|_*\epsilon$ for all $t\notin[t(G_0)-\Delta,t(G_0)+\Delta]$. This, combined with \eqref{eq-minuspi}, imply that
\begin{align}\label{eq-piF2}
	\pi_{1,\ell}(F,t)\ge \pi_{1,\ell}(G_0,t)-M\|\bb\|_*\epsilon>\pi_{1,\ell}(G_0,t(G_0))+M\|\bb\|_*\epsilon,~~\forall t\notin[t(G_0)-\Delta,t(G_0)+\Delta].
\end{align}
Applying \eqref{eq-piF1} and \eqref{eq-piF2}, we have $\{t(F): F\in\mathcal  B_p(G_0,\epsilon)\}\subseteq[t(G_0)-\Delta,t(G_0)+\Delta]$. Using a minimax theorem (see e.g., \cite{S58}), it holds that
\begin{align*}
	\sup_{F\in \mathcal  B_p(G_0,\epsilon)} \inf_{t \in \R} \pi_{1,\ell}(F,t)&=\sup_{F\in \mathcal  B_p(G_0,\epsilon)} \inf_{t\in[t(G_0)-\Delta,t(G_0)+\Delta]} \pi_{1,\ell}(F,t)\\[3pt]
	&= \inf_{t\in[t(G_0)-\Delta,t(G_0)+\Delta]} \sup_{F\in \mathcal  B_p(G_0,\epsilon)}\pi_{1,\ell}(F,t)\ge \inf_{t\in\R} \sup_{F\in \mathcal  B_p(G_0,\epsilon)}\pi_{1,\ell}(F,t).
\end{align*}
The converse direction is trivial. Hence, we complete the proof.

(ii) The proof is similar to (i). %by considering the minimax problem of  $(\pi_{2,\ell}(F,t))^{1/p}$.\hfill\Halmos
\endproof
~~

{\bf Proof of Corollary \ref{cor1}.}
(i) For the case $\ell(z,t):=c\ell(z-t)$  with $\ell=\ell_3$ or $\ell_4$ in Theorem \ref{th-highorderloss},  one can verify that $\ell(z,t)$ satisfies the conditions in Lemma \ref{lm-minmaxeq} (ii). Hence, the equivalency result follows immediately from Lemma \ref{lm-minmaxeq} and Theorem \ref{th-highorderloss}. For the case $\ell(z,t):=c\ell(z-t)$  with $\ell =\ell_1$ or $\ell_2$ in Theorem \ref{th-highorderloss},  we assume without loss of generality that $\ell_1(x)=x_+$ and $\ell_2(x)=x_-$. In this case, we have 
\begin{align*}
\mathcal V^F (\bb^\top \mathbf Z)=\inf_{t \in \R} c^p\E^F[\ell_i^p(\bb^\top \mathbf Z-t)]
=\begin{cases}
\lim_{t\to\infty} c^p\E^F[(\bb^\top \mathbf Z-t)_+^p]=0,~~&i=1,\\[8pt]
\lim_{t\to-\infty} c^p\E^F[(\bb^\top \mathbf Z-t)_-^p]=0,~~&i=2.
\end{cases}
\end{align*}

(ii) For $\ell(z,t):=c\ell(z-t)$  with $\ell=\ell_1$, $\ell_3$ or $\ell_4$ in Theorem \ref{th-highorderloss}, it holds that $\ell(z,t)$ satisfies the conditions in Lemma \ref{lm-minmaxeq} (i). Applying Lemma \ref{lm-minmaxeq} and Theorem \ref{th-highorderloss}, the equivalency result holds. For $\ell(z,t):=c\ell(z-t)$  with $\ell=\ell_2$ in Theorem \ref{th-highorderloss}, we assume without loss of generality that $\ell_2(x)=x_-$. In this case,    for any $c>1$, $\bb\in\mathcal D$ and $F\in\mathcal M_p(\R^n)$, it holds that 
\begin{align*}
\rho^F(\bb^\top \mathbf Z)
=\inf_{t \in \R} \left\{t+c\left(\E^F[(\bb^\top\mathbf Z-t)_-^p]\right)^{1/p}\right\}
=\lim_{t\to-\infty}\left\{t+c\left(\E^F[(\bb^\top\mathbf Z-t)_-^p]\right)^{1/p}\right\}=-\infty.
\end{align*}
This completes the proof.
% Apart from the ``Moreover" parts,
% one can check that all forms of $\ell$ in this corollary (i) and (ii) satisfy the conditions in Lemma \ref{lm-minmaxeq} (i) and (ii), respectively. Hence, the result follows immediately from Lemma \ref{lm-minmaxeq} and Theorem \ref{th-highorderloss}.
% In the ``Moreover" part of (i), we assume without loss of generality that $\ell_1(x)=x_+$ and $\ell_2(x)=x_-$.
% % \ell(z,t)=\ell_1(z-t)=(z-t)_+$ or $\ell(z,t)=\ell_2(z-t)=(z-t)_-$. 
% Then, for any $c>0$, $\bb\in\mathcal D$ and $F\in\mathcal M_p(\R^n)$, we have
% \begin{align*}
% \mathcal V^F (\bb^\top \mathbf Z)=\inf_{t \in \R} c^p\E^F[\ell_i^p(\bb^\top \mathbf Z-t)]
% =\begin{cases}
% \lim_{t\to\infty} c^p\E^F[(\bb^\top \mathbf Z-t)_+^p]=0,~~&i=1\\
% \lim_{t\to-\infty} c^p\E^F[(\bb^\top \mathbf Z-t)_-^p]=0,~~&i=2.
% \end{cases}
% \end{align*}
% In the ``Moreover" part of (ii), we assume without loss of generality that $\ell_2(x)=x_-$.
% % $\ell(z,t)=c\ell_2(z-t)=c(z-t)_-$ with some $c>1$. 
% Then, for any $c>1$, $\bb\in\mathcal D$ and $F\in\mathcal M_p(\R^n)$, we have
% \begin{align*}
% \rho^F(\bb^\top \mathbf Z)
% =\inf_{t \in \R} \left\{t+c\left(\E^F[(\bb^\top\mathbf Z-t)_-^p]\right)^{1/p}\right\}
% =\lim_{t\to-\infty}\left\{t+c\left(\E^F[(\bb^\top\mathbf Z-t)_-^p]\right)^{1/p}\right\}=-\infty.
% \end{align*}
% This completes the proof.
\endproof
~~

%\subsection{Proofs of the cases of exponential functions and distortion functionals}

\subsection{Proofs of Section \ref{subsec:DT}}\label{subsec:appDT}

To prove the results in  Section \ref{subsec:DT}, we list the definition of \emph{comonotonicity} and
some basic properties of distortion functionals as follows; see e.g.,\cite {WWW20}.
%$\rho_h$ which is defined in Definition \ref{def-distortion}.

\begin{definition}[Comonotonicity]\label{def-com}
Two random variables $Z_1$ and $Z_2$ are said to be \emph{comonotonic} if $(Z_1,Z_2)$ is distributionally equivalent to $(F^{-1}_{Z_1}(U),F^{-1}_{Z_2}(U))$, where $F_Z^{-1}$ denotes the left-quantile function of $Z$, and
$U$ is a random variable uniformly distributed on the interval $[0,1]$ (see e.g., \cite{D02} for a discussion of comonotonic random variables).
\end{definition}

\begin{lemma} \label{lm-propertydis}
Let $h:[0,1]\to\R$ be a distortion function. The following statements hold.
\begin{itemize}
\item [(i)] $\rho_h$ is monotone, i.e., $\rho_h(Z_1)\le \rho_h(Z_2)$ for $Z_1\le Z_2$ a.s. if $h$ is increasing.
\item[(ii)] $\rho_h$ is translation invariant, i.e., $\rho_h(Z+c)=\rho_h(Z)+(h(1)-h(0))c$ for any $Z$ and $c\in\R$.
\item[(iii)] $\rho_h$ is positively homogeneous, i.e., $\rho_h(\lambda Z)=\lambda\rho_h(Z)$ for any $Z$ and $\lambda\ge 0$.
\item[(iv)] $\rho_h$ is subadditive, i.e., $\rho_h(Z_1+Z_2)\le \rho_h(Z_1)+\rho_h(Z_2)$ for any $Z_1$ and $Z_2$   if $h$ is convex. The equality holds when $Z_1$ and $Z_2$ are comonotonic.
%\footnote{Two random variables $Z_1$ and $Z_2$ are said to be comonotonic if $(Z_1,Z_2)$ is distributionally equivalent to $(F^{-1}_{Z_1}(U),F^{-1}_{Z_2}(U))$, where $U$ is a random variable uniformly distributed on the interval $[0,1]$ (see e.g., \cite{D02} for a discussion of comonotonic random variables).}.
%\item[(v)] $\rho_h(-Z)=\rho_h(Z)$
\end{itemize}
\end{lemma}

{\bf Proof of Theorem \ref{th-DTp=1}.}
We first consider the case $p=1$. Assume without loss of generality that ${\rm Lip}(\ell)=1$.
By Lemma \ref{lm-onehigheq}, it suffices to prove that for any $Z\in L^1$ and  $\epsilon\ge 0$,
\begin{align*}
\sup_{\|V\|_1\le \epsilon}\rho_h(\ell(Z+V))=\rho_h(\ell(Z))+\|h'_-\|_\infty\epsilon.
\end{align*}
To see it, first note that
\begin{align}\label{eq-FDp=1}
\sup_{\|V\|_1\le \epsilon}\rho_h(\ell(Z+V))
&\le \sup_{\|V\|_1\le \epsilon}\rho_h(\ell(Z)+|V|)
\le \rho_h(\ell(Z))+\sup_{\|V\|_1\le \epsilon}\rho_h(|V|)\notag\\[3pt]
&\le \rho_h(\ell(Z))+\sup_{\|V\|_1\le \epsilon}\|h'_-\|_\infty\|V\|_1=\rho_h(\ell(Z))+\|h'_-\|_\infty\epsilon,
\end{align}
where  the first inequality follows from $\ell(Z+V)-\ell(Z)\le |V|$ and the monotonicity of $\rho_h$, the second inequality follows from the subadditivity 
%and positive homogeneity 
of $\rho_h$, and the last inequality is due to H\"{o}lder's inequality. Let us now verify the other direction. 
%We assume without loss of generality that ${\rm Lip}(\ell)=1$. 
Suppose that $\lim_{x\to\infty}\ell'_-(x)=1$ where $\ell'_-$ is the left-derivative of $\ell$. 
Denote by $U$ a uniform random variable on $[0,1]$ such that $U$ and $\ell(Z)$ are comonotonic. For $n\in\N$ and $\delta>0$, define 
%a set of events as
\begin{align}\label{eq-A-V}
A_{n,\delta}
= \left\{\omega: 1-\frac{\epsilon}{n}-\delta<U(\omega)\le 1-\delta\right\}~~{\rm and}~~V_{n,\delta}=n\id_{A_{n,\delta}}.
\end{align}
One can check that $\|V_{n,\delta}\|_1=\epsilon$. 
Moreover, we have
% \begin{align}\label{eq-1DTp1}
% \rho_h(\ell(Z+V_{n,\delta})) -\rho_h (\ell(Z))
% &= \int_0^1 \left[\ell(F_{Z+V_{n,\delta}}^{-1}(u)) h'_-(u) - \ell(F_Z^{-1}(u))h'_-(u)\right] d u \notag\\
% &\ge \E\left[\ell(Z+V_{n,\delta})h'_-(U) - \ell(Z)h'_-(U)\right]\notag\\
% &=\E\left[\frac{\ell(Z+n)-\ell(Z)}{n}V_{n,\delta}h'_-(U)\right],
% \end{align}
\begin{align}\label{eq-1DTp1}
\rho_h(\ell(Z+V_{n,\delta}))&=\rho_h\left(\ell(Z)\id_{A_{n,\delta}^c}+\ell(Z+n)\id_{A_{n,\delta}}\right)\notag\\[3pt]
&=\rho_h\left(\ell(Z)+ (\ell(Z+n)-\ell(Z)) \id_{A_{n,\delta}}\right)\notag\\[3pt]
&\ge \E\left[ \left( \ell(Z)+ (\ell(Z+n)-\ell(Z)) \id_{A_{n,\delta}}\right)   h_-'(U)\right] \notag\\[3pt]
& =  \rho_h(\ell(Z))+\E\left[ (\ell(Z+n)-\ell(Z)) \id_{A_{n,\delta}}   h_-'(U)\right],
\end{align}
where the inequality follows from the   dual representation of $\rho_h$ (see e.g., Theorem 4.79 of \cite{FS16}). 
Noting that $\ell(Z)$ must be uniformly bounded on  $A_{n,\delta}$ for all  $n>\epsilon/(1-\delta)$ as $\ell(Z)$ and $U$ are comonotonic, %, there exists a  compact set $B\subseteq \R$ such that 
%which has the form $[a,b]$ with $\essinf \ell(Z)<a<b<\esssup \ell(Z)$ such that 
%$\ell(Z)\in B$ on $A_{n,\delta}$ for large enough $n\in\N$. This implies that 
there exists $x_0\in\R$ such that $Z\ge x_0$ on $A_{n,\delta}$ for all $n>\epsilon/(1-\delta)$  as $\ell$ is convex.
% there exists a compact set $B_2\subseteq \R$ such that $Z\in B_2$ on $A_{n,\delta}$ for large enough $n\in\N$. Let $x_0\in\R$ be a lower bound of $B_2$, and we have $Z\ge x_0$ on $A_{n,\delta}$ for large enough $n\in\N$. 
Therefore, we have
\begin{align*}
\E\left[ (\ell(Z+n)-\ell(Z)) \id_{A_{n,\delta}}   h_-'(U)\right] &=\E\left[\frac{\ell(Z+n)-\ell(Z)}{n}V_{n,\delta}h'_-(U)\right]\\[3pt]
&\ge \frac{\ell(x_0+n)-\ell(x_0)}{n}\E\left[V_{n,\delta}h'_-(U)\right]\\[3pt]
&=\frac{\ell(x_0+n)-\ell(x_0)}{n} \frac{\int_{1-\epsilon/n-\delta}^{1-\delta}h'_-(s)\d s}{\epsilon/n}\epsilon\rightarrow h'_-(1-\delta)\epsilon~~{\rm as}~n\to\infty,
\end{align*}
where the inequality follows from the convexity of $\ell$ and $Z\ge x_0$ on $A_{n,\delta}$, and the convergence is due to  
$\lim_{x\to\infty}\ell'_-(x)=1$. Note that $h'_-$ is left continuous on $(0,1]$ (see e.g., Proposition A.4 of \cite{FS16}). It holds that $\lim_{x\to 1-}h'_-(x)=h'_-(1)$.
Therefore, letting $\delta\to 0$ and combining with \eqref{eq-1DTp1}, we have concluded that 
$$
\sup_{\|V\|_1\le \epsilon}\rho_h(\ell(Z+V))\ge \rho_h(\ell(Z))+h'_-(1)\epsilon=\rho_h(\ell(Z))+\|h'_-\|_\infty\epsilon.
$$
Hence, we have verified the other direction for the case of $\lim_{x\to\infty}\ell'_-(x)=1$. If $\lim_{x\to-\infty}\ell'_-(x)=-1$, the proof is similar by letting $V_{n,\delta}=-n\id_{A_{n,\delta}}$.
%By the similar arguments in the previous, we can demonstrate the converse direction for this case as well. 
Hence, we complete the proof of the case $p=1$.

Let us now focus on the case $p>1$. In the rest of the proof, we assume  $C=1$ and $h(0)=1-h(1)=0$ without loss of generality.
By Lemma \ref{lm-onehigheq}, %we know that (i) is equivalent to the following statement.
it suffices to show that (ii) holds  if and only if the following (i)' holds.
\begin{itemize}
\item [(i)'] For any $Z\in L^p$ and $\epsilon\ge 0$, we have \eqref{eq-regEU} holds, i.e.,
\begin{align}\label{eq-uniqDT1}
  \sup_{\|V\|_p\le \epsilon}\rho_h(\ell(Z+V))
  =
\rho_h(\ell(Z))+\epsilon\|h'_-\|_q,~~\forall Z\in L^p,~\epsilon\ge 0.
  \end{align}
\end{itemize}

To see (ii) $\Rightarrow$ (i)', 
we note that $\rho_h(-Z)=\rho_{\widetilde{h}}(Z)$, where $\widetilde{h}(s)=h(1-s)$ for $s\in[0,1]$. It is easy to see that $\widetilde{h}$ is convex whenever $h$ is convex. Hence,  (a) is a special case of Proposition \ref{prop:disabsolute}, and we choose to omit its proof here for brevity.
% we omit the proof of the case in (a) since we will prove a generalized result in Proposition \ref{prop:disabsolute}. 
In the case of (b), we assume without loss of generality that $\ell(x)=|x|$ for $x\in\R$.
Note that $h$ is an increasing and convex distortion function.
%It holds that $\rho_h$ is monotone and subadditive (see e.g., Theorem 2.2 of \cite{WH01}), i.e., Monotonicity: $\rho(Z_1)\le \rho(Z_2)$ for $Z_1\le Z_2$ a.s.; Subadditivity:
%$\rho_h(Z_1+Z_2)\le \rho_h(Z_1)+\rho_h(Z_2)$ for any $Z_1$ and $Z_2$. The equality in subadditivity holds if $Z_1$ and $Z_2$ are comonotonic. Hence,
By Lemma \ref{lm-propertydis}, we have for any  $V$ with $\|V\|_p\le \epsilon$,
\begin{align}\label{eq-disabsolute2}
	\rho_h(|Z+V|)%&\le \rho_h(|Z|+|V|)\notag\\
	&\le \rho_h(|Z|)+\rho_h(|V|)%\\
	%&= \rho_h(|Z|)+\int_0^1 F^{-1}_{|V|}(s)h'_-(s)\d s\notag\\
	\le \rho_h(|Z|)+\|V\|_p\|h'_-\|_q
	\le \rho_h(|Z|)+\epsilon\|h'_-\|_q,
\end{align}
where the second inequality follows from H\"{o}lder's inequality. Suppose that $|Z|=F_{|Z|}^{-1}(U)$ a.s. where $U$ is a uniform random variable on $[0,1]$. Define
$
\widetilde{V}={\rm sign}(Z){\epsilon(h'_-(U))^{q/p}}/{\|h'_-\|_q^{q/p}}.
$
One can verify that %the following facts: (1) $Z$ and $\widetilde{V}$ always have the same sign; (2) $|Z|$ and $|\widetilde{V}|$ are comonotonic; (3) $(h'_-(s))^q=(\|h'_-\|_q^q/\epsilon^p) (F^{-1}_{|V|}(s))^p$ for all $s\in(0,1)$; 4.
$\|\widetilde{V}\|_p=\epsilon$ and $\rho_h(|Z+\widetilde{V}|) = \rho_h(|Z|)+\epsilon\|h'_-\|_q$.  Therefore, we conclude that %all inequalities are equality when substituting $\widetilde{V}$ into \eqref{eq-disabsolute2}, which implies that
\eqref{eq-uniqDT1} holds. Hence, we complete the proof of (i)'.

To see (i)' $\Rightarrow$ (ii), 
% note that $\rho_h$ satisfies positive homogeneity. Assume without loss of generality that $h(0)=1-h(1)=0$ and $C=1$. 
note that $h$ is increasing with $h(0)=1-h(1)=0$. It holds that $\rho_h$ satisfies monotonicity and translation invariance, i.e., $\rho_h(Z+c)=\rho_h(Z)+c$ for all $Z\in\ L^p$ and $c\in\R$.
% By Lemma \ref{lm-onehigheq}, we have for any $Z\in L^p$ and $\epsilon>0$,
% \begin{align}\label{eq-uniqDT1}
% \sup_{\|V\|_p\le \epsilon}\rho_h(\ell(Z+V))=\rho_h(\ell(Z))+\epsilon\|h'_-\|_q.
% \end{align}
We first show that ${\rm Lip}(\ell)\le 1$ by contradiction. Otherwise, by that a convex function has derivative almost everywhere, there exists $x$ such that $|\ell'(x)|>1$. In this case, define $V_1\in L^p$ with quantile function as 
$
F_{V_1}^{-1}(s)={\epsilon(h'_-(s))^{q/p}}/{\|h'_-\|_q^{q/p}},~~s\in(0,1),
$
and one can easily check that $V_1\ge 0$ and 
$\|V_1\|_p= \epsilon$. Hence, 
\begin{align*}
\sup_{\|V\|_p\le \epsilon}\rho_h(\ell(x+V))&\ge 
\sup_{\|V\|_p\le \epsilon,V\ge 0}\rho_h(\ell(x)+\ell'(x)V)\\[3pt]
&=\ell(x)+\ell'(x)\sup_{\|V\|_p\le \epsilon,V\ge 0}\rho_h(V)\\[3pt]
&\ge \ell(x)+\ell'(x)\rho_h(V_1)\\[3pt]
&=\ell(x)+\ell'(x)\|h'_-\|_q\epsilon
>\ell(x)+\|h'_-\|_q\epsilon.
\end{align*}
% &=\ell(x)+\ell'(x)\sup_{\|V\|_p\le \epsilon,V\ge 0}\|h'_-\|_q\|V\|_p
% =\ell(x)+\ell'(x)\|h'_-\|_q\epsilon
% >\ell(x)+\|h'_-\|_q\epsilon,
% where the third step follows from H\"{o}lder's inequality and note that the equality can be attained for some $V\ge 0$ with $\|V\|_p=\epsilon$.
If $\ell'(x)<-1$, then define $V_2=-V_1$, and we have $V_2\le 0$ and $\|V_2\|_p=\epsilon$.
Hence,
\begin{align*}
\sup_{\|V\|_p\le \epsilon}\rho_h(\ell(x+V))&\ge 
\sup_{\|V\|_p\le \epsilon,V\le 0}\rho_h(\ell(x)+\ell'(x)V)\\[3pt]
&=\ell(x)-\ell'(x)\sup_{\|V\|_p\le \epsilon,V\le 0}\rho_h(-V)\\[3pt]
&\ge \ell(x)-\ell'(x)\rho_h(-V_2)\\[3pt]
&=\ell(x)-\ell'(x)\rho_h(V_1)\\[3pt]
&=\ell(x)-\ell'(x)\|h'_-\|_q\epsilon>\ell(x)+\|h'_-\|_q\epsilon.
% &=\ell(x)-\ell'(x)\sup_{\|V\|_p\le \epsilon,V\le 0}\|h'_-\|_q\|V\|_p
% =\ell(x)-\ell'(x)\|h'_-\|_q\epsilon
% >\ell(x)+\|h'_-\|_q\epsilon,
\end{align*}
Those two cases both yield a contradiction to \eqref{eq-uniqDT1}. 

Next, we aim  to show
\begin{align}\label{eq-assDT}
	|\ell'(x)|=1~{\rm{for~all}}~x\in\R~{\rm as~long~as}~\ell~{\rm is~differentiable~at}~x,
\end{align}
as this will complete the proof since a convex function that satisfies \eqref{eq-assDT} must be one of the forms of $\ell_1$ and $\ell_2$ with $C=1$.
To show \eqref{eq-assDT},  
we assume by contradiction that there exists $x_0\in\R$ such that $|\ell'(x_0)|<1$. If $p=\infty$, then we have
\begin{align*}
\sup_{\|V\|_\infty\le \epsilon}\rho(\ell(x_0+V)) 
=\max\{\ell(x_0-\epsilon),\ell(x_0+\epsilon)\}<\ell(x_0)+\epsilon,
\end{align*}
where
the strict inequality follows from $|\ell'(x_0)|<1$ and ${\rm Lip}(\ell)\le 1$, which yields a contradiction. Suppose now $p\in(1,\infty)$. To yield a contradiction, it suffices to prove that 
\begin{align}\label{eq-verify}
\sup_{\|V\|_p\le \epsilon}\rho_h(\ell(x_0+V))
<\ell(x_0)+\epsilon\|h'_-\|_q.
\end{align}
Let $U\sim {\rm U}[0,1]$ be a uniformly distributed random variable, and define 
$$
\mathcal X_p=\left\{F^{-1}(U): \int_0^1 |F^{-1}(u)|^pd u<\infty,~F^{-1}(U)\ge 0\right\}.
$$
We have that $\{F_X: X\in L^p\}=\{F_X: X\in\mathcal X_p\}$ and the random variables in $\mathcal X_p$ are all comonotonic. 
%Hence, we have $\sup_{\|V\|_p\le \epsilon} \rho_h(\ell(x_0+V))=\sup_{\|V\|_p\le \epsilon,V\in\mathcal X_p}\rho_h(\ell(x_0+V))$ 
% Since $\rho_h$ is distribution-invariant, it suffices to verify that
% \begin{align*}
% \sup_{\|V\|_p\le \epsilon, V\in\mathcal X_p}\rho_h(\ell(x_0+V))
% <\ell(x_0)+\epsilon\|h'_-\|_q.
% \end{align*}
Define
\begin{equation}\label{eq-alpha}
\alpha^*=\sup\{\alpha\in[0,1]: h(\alpha)=0\}<1~~{\rm and}~~M=\epsilon\left(\frac{1-\alpha^*}{2}\right)^{-1/p}.
\end{equation}
% If a random variable$V$ satisfies 
It follows from Chebyshev's inequality that 
\begin{equation}\label{eq-Chebyshev}
\p(V>M)\le \frac{\E[V^p]}{M^p}\le \frac{\epsilon^p}{M^p}= \frac{1-\alpha^*}{2}~~{\rm if}~\|V\|_p\le \epsilon~{\rm and}~V\ge 0.
\end{equation}
%Further, for $\delta>0$, we 
Define
% \begin{align}\label{eq-V1V2}
% \mathcal V_1=\left\{V\in \mathcal X_p: \|V\|_p\le \epsilon,~\E\left[V\id_{\{V\le M\}}h'_-(U)\right]\le \delta\right\},~~\mathcal V_2=\{V\in \mathcal X_p: \|V\|_p\le \epsilon\}\setminus \mathcal V_1,
% \end{align}
% and 
$$
k=\max\left\{\left|\frac{\ell(x_0+M)-\ell(x_0)}{M}\right|,\left|\frac{\ell(x_0)-\ell(x_0-M)}
{M}\right|\right\}.
$$
We have that 
$|\ell(x_0+v)-\ell(x_0)|\le k|v|$ for all $v\in[-M,M]$ as 
$\ell$ is convex. Further, it holds that $k<1$ as $|\ell'(x_0)|<1$ and ${\rm Lip}(\ell)=1$.
Therefore,
\begin{align}\label{eq-chain}
\sup_{\|V\|_p\le \epsilon}\rho_h(\ell(x_0+V))
&=\sup_{\|V\|_p\le \epsilon}\rho_h\left(\ell(x_0+V)\id_{\{|V|\le M\}}+\ell(x_0+V)\id_{\{|V|> M\}}\right)\notag\\[3pt]
&\le \sup_{\|V\|_p\le \epsilon}\rho_h\left((\ell(x_0)+k|V|)\id_{\{|V|\le M\}}+\ell(x_0+V)\id_{\{|V|> M\}}\right)\notag\\[3pt]
&\le\sup_{\|V\|_p\le \epsilon}\rho_h\left((\ell(x_0)+k|V|)\id_{\{|V|\le M\}}+(\ell(x_0)+|V|)\id_{\{|V|> M\}}\right)\notag\\[3pt]
&=\ell(x_0)+\sup_{\|V\|_p\le \epsilon}\rho_h\left(k|V|\id_{\{|V|\le M\}}+|V|\id_{\{|V|> M\}}\right)\notag\\[3pt]
&=\ell(x_0)+\sup_{\|V\|_p\le \epsilon, V\ge 0}\rho_h\left(kV\id_{\{V\le M\}}+V\id_{\{V> M\}}\right)\notag\\[3pt]
&=\ell(x_0)+\sup_{\|V\|_p\le \epsilon,V\in\mathcal X_p}\rho_h\left(kV\id_{\{V\le M\}}+V\id_{\{V> M\}}\right)\notag\\[3pt]
&=\ell(x_0)+\sup_{\|V\|_p\le \epsilon,V\in\mathcal X_p}\E\left[\left(kV\id_{\{V\le M\}}+V\id_{\{V> M\}}\right)h'_-(U)\right]\notag\\[3pt]
&=\ell(x_0)+\sup_{\|V\|_p\le \epsilon,V\in\mathcal X_p}\E\left[\left(V-(1-k)V\id_{\{V\le M\}}\right)h'_-(U)\right]\notag\\[3pt]
&=\ell(x_0)+\sup_{\|V\|_p\le \epsilon,V\in\mathcal X_p} \{\rho_h(V)-(1-k)\E[V\id_{\{V\le M\}}h'_-(U)]\}\notag\\[3pt]
&=:\ell(x_0)+I,
\end{align}
% with 
% $$
% I_i=\sup_{V\in \mathcal V_i} \{\rho_h(V)-(1-k)\E[V\id_{\{V\le M\}}h'_-(U)]\}, ~~i=1,2,
% $$
where the second inequality follows from ${\rm Lip}(\ell)\le 1$,
the fourth equality is due to the distribution-invariance of $\rho_h$, and the fifth equality holds because $V\in\mathcal X_p$ implies that $kV\id_{\{V\le M\}}+V\id_{\{V> M\}}$ and $h'_-(U)$ are comonotonic. 
% and 
% \begin{align*}
% I_i=\sup_{V\in \mathcal V_i} \{\rho_h(V)-(1-k)\E[V\id_{\{V\le M\}}h'_-(U)]\}, ~~i=1,2.
% \end{align*}
For $\delta>0$, define
\begin{align}\label{eq-V1V2}
\mathcal V_1=\left\{V\in \mathcal X_p: \|V\|_p\le \epsilon,~\E\left[V\id_{\{V\le M\}}h'_-(U)\right]\le \delta\right\},~~\mathcal V_2=\{V\in \mathcal X_p: \|V\|_p\le \epsilon\}\setminus \mathcal V_1.
\end{align}
We can rewrite $I=\max\{I_1,I_2\}$ with 
$$
I_i=\sup_{V\in \mathcal V_i} \{\rho_h(V)-(1-k)\E[V\id_{\{V\le M\}}h'_-(U)]\}, ~~i=1,2.
$$
Below we aim to demonstrate that $I_i < \epsilon  \|h'_-\|_q$ for $i=1,2$ by selecting an appropriate $\delta$.
%Since $|\ell'(x_0)|<1$ and ${\rm Lip}(\ell)\le 1$, we know that $k<1$. Hence, it holds that
Note that
\begin{align}%\label{eq-regDTI1}
I_1&=\sup_{V\in\mathcal V_1} \{\rho_h(V)-(1-k)\E[V\id_{\{V\le M\}}h'_-(U)]\}\notag\\[3pt]
&\le 
\sup_{V\in\mathcal V_1} \rho_h(V)\notag\\[3pt]
&=\sup_{V\in\mathcal V_1}\left\{\E[V\id_{\{V>M\}}h'_-(U)]+\E[V\id_{\{V\le M\}}h'_-(U)]\right\}\notag\\[3pt]
&\le \sup_{V\in\mathcal V_1}\|h'_-(U)\id_{\{V>M\}}\|_q\|V\|_p+\delta\notag\\[3pt]
&\le \sup_{V\in\mathcal V_1}\|h'_-(U)\id_{\{V>M\}}\|_q\epsilon+\delta
\notag\\[3pt]
&\le \left(\int_{\frac{1+\alpha^*}{2}}^1 (h'_-(s))^q\d s\right)^{1/q}\epsilon+\delta
=\left(\|h'_-\|_q^q-\int_{\alpha^*}^{\frac{1+\alpha^*}{2}}(h'_-(s))^q\d s\right)^{1/q}\epsilon+\delta,\label{eq-regDT2}
\end{align}
where the second inequality follows from H\"{o}lder's inequality and the definition of $\mathcal V_1$ in \eqref{eq-V1V2},  the last inequality  is due to \eqref{eq-Chebyshev}, and the last equality  follows from the definition of $\alpha^*$ in \eqref{eq-alpha}. Denote by 
\begin{align*}
A=\int_{\alpha^*}^{\frac{1+\alpha^*}{2}}(h'_-(s))^q\d s ~~{\rm and}~~ \Delta=\left[\|h'_-\|_q-(\|h'_-\|_q^q-A)^{1/q}\right]\epsilon.
\end{align*}
Recalling the definition of $\alpha^*$ in \eqref{eq-alpha} again, we have $A>0$ and $\Delta>0$, and hence, 
$I_1<\epsilon\|h'_-\|_q$ whenever $\delta<\Delta$.
For $\delta<\Delta$, we have
\begin{align*}
I_2=\sup_{V\in\mathcal V_2} \{\rho_h(V)-(1-k)\E[V\id_{\{V\le M\}}h'_-(U)]\}
\le 
\sup_{V\in\mathcal V_2} \|h'_-\|_q\|V\|_p-(1-k)\delta
<\epsilon \|h'_-\|_q,~~\forall \delta>0,
\end{align*}
where the first inequality follows from H\"{o}lder's inequality. Hence, we have  that $\max\{I_1,I_2\}<\epsilon\|h'_-\|_q$. Combining with \eqref{eq-chain}, we have verified \eqref{eq-verify}.
This completes the proof. \endproof

~~ 
 
%{\color{blue} 
To substantiate our claim that our analysis is not limited to the examples presented in this paper, we present the following proposition to demonstrate that the arguments in the proof of Theorem \ref{th-DTp=1} can be applied to establish the equivalence between Wasserstein DRO and regularization for a class of  non-convex loss functions, when the support of the reference distribution is a finite set.  %We formally present the result below. 
%This result includes the truncated pinball loss from \cite{CLT24} as a special case.
%  \begin{proposition}\label{prop:noncx}
% For $p=1$, let $h$ be  given in Theorem  \ref{th-DTp=1}. If the support of $F_0$ is  finite, and   $\ell:\R\to\R$ with ${\rm Lip}(\ell)\in\R$ satisfies that for any $x\in\R$, there exist $\{t_n\}_{n\in\N}$, such that $\lim_{n\to\infty}t_n=\infty$ such that
% $\lim_{n\to\infty} (\ell(x+t_n)-\ell(x))/|t_n|={\rm Lip}(\ell)$, % for all $x\in\R$ or  $\lim_{t\to-\infty}(\ell(x)-\ell(x+t))/t={\rm Lip}(\ell)$ for all $x\in\R$, 
% then  for $\epsilon\ge 0$ and $\mathcal D\subseteq\R^n$
% \begin{align*}%\label{eq:231126-1}
% \inf_{\bm\beta\in \mathcal D}\sup_{F\in \mathcal B_1(F_0,\epsilon)}\rho_h^F(\ell(\bm\beta^\top \mathbf Z))=\inf_{\bm\beta\in\mathcal D}\left\{\rho_h^{F_0}(\ell(\bm\beta^\top\mathbf Z))+{\rm Lip}(\ell)\|h'_-\|_\infty\epsilon\|\bm\beta\|_*\right\}.
% \end{align*}
% \end{proposition}

 \begin{proposition}\label{prop:noncx}
For $p=1$, let $h$ be  given in Theorem  \ref{th-DTp=1}. If the support of $G_0\in\mathcal M_1(\R)$ is  finite, and $\ell:\R\to\R$ with ${\rm Lip}(\ell)\in\R$ satisfies $\lim_{t\to\infty} (\ell(x+t)-\ell(x))/t={\rm Lip}(\ell)$ for all $x\in\R$ or  $\lim_{t\to-\infty}(\ell(x)-\ell(x+t))/t={\rm Lip}(\ell)$ for all $x\in\R$, then, for $\epsilon\ge 0$ and $\mathcal D\subseteq\R^n$
\begin{align*}%\label{eq:231126-1}
\inf_{\bm\beta\in \mathcal D}\sup_{F\in \mathcal B_1(G_0,\epsilon)}\rho_h^F(\ell(\bm\beta^\top \mathbf Z))=\inf_{\bm\beta\in\mathcal D}\left\{\rho_h^{G_0}(\ell(\bm\beta^\top\mathbf Z))+{\rm Lip}(\ell)\|h'_-\|_\infty\epsilon\|\bm\beta\|_*\right\}.
\end{align*}
\end{proposition}

% \begin{proposition}\label{prop:noncx}
% Let $h:[0,1]\to\R$ be an increasing and convex distortion function satisfying
% $\lim_{x\to 1-}h(x)=h(1)$ and
% $\|h'_-\|_\infty\in(0,\infty)$, and let $\ell:\R\to\R$ with ${\rm Lip}(\ell)<\infty$. Suppose that the support of $F_0$ is a finite set of vectors, and one of the following statements holds: (a) $\lim_{t\to\infty} (\ell(x+t)-\ell(x))/t={\rm Lip}(\ell)$ for all $x\in\R$; (b) $\lim_{t\to-\infty}(\ell(x)-\ell(x+t))/t={\rm Lip}(\ell)$ for all $x\in\R$. Then, for $\epsilon\ge 0$ and $\mathcal D\subseteq\R^n$
% \begin{align*}%\label{eq:231126-1}
% \inf_{\bm\beta\in \mathcal D}\sup_{F\in \mathcal B_1(F_0,\epsilon)}\rho_h^F(\ell(\bm\beta^\top \mathbf Z))=\inf_{\bm\beta\in\mathcal D}\left\{\rho_h^{F_0}(\ell(\bm\beta^\top\mathbf Z))+{\rm Lip}(\ell)\|h'_-\|_\infty\epsilon\|\bm\beta\|_*\right\}.
% \end{align*}
% \end{proposition}

\proof{Proof.} 
Without loss of generality,  assume  ${\rm Lip}(\ell)=1$.  We only give the proof for the case $\lim_{t\to\infty} (\ell(x+t)-\ell(x))/t=1$ for $x\in\R$   as the other case is similar. 
By Remark \ref{re:EC1}, it suffices to show that 
\eqref{eq-regEU}  holds with $p=1$ and $Z=\bb^{\top} \mathbf{Z}_0$ where $\mathbf{Z}_0\sim G_0$. For $n\in\N$ and $\delta>0$, define  $A_{n,\delta}$ and $V_{n,\delta}$ as  \eqref{eq-A-V}. First note that both \eqref{eq-FDp=1}  and \eqref{eq-1DTp1} hold  for $Z=\bb^{\top} \mathbf{Z}_0$. 
% \begin{align}\label{eq:1017-4}
% \sup_{\|V\|_1\le \epsilon}\rho_h(\ell(\bb^{\top}\mathbf{Z_0}+V))=\rho_h(\ell(\bb^{\top}\mathbf{Z_0}))+\|h'_-\|_\infty \epsilon,~~\forall \bb\in\mathcal D,~\epsilon\ge 0,
% \end{align}
% where $\mathbf{Z}_0\sim F_0$.
% By similar arguments of \eqref{eq-FDp=1} in the proof of Theorem \ref{th-DTp=1}, we have  
% \begin{align*}
% \sup_{\|V\|_1\le \epsilon}\rho_h(\ell(\bb^{\top}\mathbf{Z}_0+V))
% \le \rho_h(\ell(\bb^{\top}\mathbf{Z}_0))+\|h'_-\|_\infty \epsilon,~~\forall \bb\in\mathcal D,~\epsilon\ge 0.
% \end{align*}
% On the other hand, we first assume that Statement (a) holds, that is
% \begin{align*}%\label{ass:Lipinfty}
%  \lim_{t\to\infty}\frac{\ell(x+t)-\ell(x)}{t}=1,~~ \forall x\in\R. 
% \end{align*}
%  On the other hand, for any $\bb\in\mathcal D$, 
% denote by $U$ a uniform random variable on $[0,1]$ such that $U$ and $\ell(\bb^{\top}\mathbf{Z}_0)$ are comonotonic. For $n\in\N$ and $\delta>0$, define 
% $A_{n,\delta}$ and $V_{n,\delta}$ as  \eqref{eq-A-V}, which satisfies $\|V_{n,\delta}\|_1=\epsilon$.  % in the proof of Theorem \ref{th-DTp=1}.
% Similar to \eqref{eq-1DTp1}, we have % and
% \begin{align}\label{eq-1DTp2}
% \rho_h(\ell(\bb^{\top}\mathbf{Z}_0+V_{n,\delta})) \ge \rho_h(\ell(\bb^{\top}\mathbf{Z}_0))+\E\left[ (\ell(\bb^{\top}\mathbf{Z}_0+n)-\ell(\bb^{\top}\mathbf{Z}_0)) \id_{A_{n,\delta}}  h_-'(U)\right].
% \end{align}
Noting that the support of $Z=\bb^\top\mathbf{Z}_0$ is finite, we denote it by $\{x_1,\dots,x_k\}$. For $Z=\bb^\top\mathbf{Z}_0$, it holds that  
\begin{align*}
\E\left[ (\ell(Z+n)-\ell(Z)) \id_{A_{n,\delta}}   h_-'(U)\right] &=\E\left[\frac{\ell(Z+n)-\ell(Z)}{n}V_{n,\delta}h'_-(U)\right]\\[3pt]
&\ge \min_{i\in [k]} \frac{\ell(x_i+n)-\ell(x_i)}{n}\E\left[V_{n,\delta}h'_-(U)\right]\\[3pt]
&=\min_{i\in [k]} \frac{\ell(x_i+n)-\ell(x_i)}{n} \frac{\int_{1-t/n-\delta}^{1-\delta}h'_-(s)\d s}{t/n}t\\[3pt]
& \rightarrow h'_-(1-\delta)t~~{\rm as}~n\to\infty.
\end{align*}
Letting $\delta\to 0$ and combining this with  \eqref{eq-1DTp1}, we have concluded that 
\begin{align*}%\label{eq-lowerbound}
\sup_{\|V\|_1\le \epsilon}\rho_h(\ell(Z+V))\ge \rho_h(\ell(Z))+h'_-(1)\epsilon=\rho_h(\ell(Z))+\|h'_-\|_\infty \epsilon.
\end{align*}
 Hence, \eqref{eq-regEU}  holds with $p=1$ and $Z=\bb^{\top} \mathbf{Z}_0$, and thus we complete the proof.
\endproof

~~

%\proof{Proof of Proposition \ref{prop:disabsolute}}
 {\bf Proof of Proposition \ref{prop:disabsolute}.}
By Lemma \ref{lm-onehigheq}, it suffices to prove that 
\begin{align}\label{eq-dis0}
	\sup_{\|V\|_p\le \epsilon}\rho_h(Z+V)=\rho_h(Z)+\epsilon\|h'_-\|_q,~~\forall Z\in L^p.
\end{align}
By Lemma \ref{lm-propertydis}, we have for any $\|V\|_p\le \epsilon$,
\begin{align}\label{eq-dis1}
	\rho_h(Z+V)&\le \rho_h(Z)+\rho_h(V)%\notag\\
	%&=\rho_h(Z)+\int_0^1 F_{V}^{-1}(s)h'_-(s)\d s\notag\\
	\le \rho_h(Z)+\|V\|_p\|h'_-\|_q%\notag\\
	\le \rho_h(Z)+\epsilon\|h'_-\|_q,
\end{align}
where the second inequality follows from H\"{o}lder's inequality. Suppose that $Z=F_{Z}^{-1}(U)$ a.s.,  where $U$ is a uniform random variable on $[0,1]$ (see Lemma A.28 of \cite{FS16} for the existence of $U$).
In the following, we show \eqref{eq-dis0} by considering the following two cases.
\begin{itemize}
	\item[(a)] If $p=1$, then $q=\infty$ and $\|h'_-\|_{\infty}=\max\{|h'_-(0+)|,|h'_-(1)|\}=\max\{-h'_-(0+),h'_-(1)\}$, where the second equality follows from $h'_-$ is increasing.  We first assume that $h'_-(1)\ge -h'_-(0+)$. Define a sequence $\{V_n\}_{n\in\N}$ such that $V_n=n\epsilon\id_{\{U>1-1/n\}}$.
	%$$
	%V_n=0,~{\rm if}~U\le 1-\frac{1}{n};~~V_n=n\epsilon,~{\rm if}~U> 1-\frac{1}{n}.
	%$$
	For all $n\in\N$, it holds that $\|V_n\|_1=\epsilon$, and $V_n$ and $Z$ are comonotonic. Hence, we have
	\begin{align}\label{eq-dis2}
		\sup_{\|V\|_p\le \epsilon}\rho_h(Z+V)&\ge \rho_h(Z+V_n)%\notag\\
		%&=\rho_h(Z)+\rho_h(V_n)\notag\\
		=\rho_h(Z)+n\epsilon\int_{1-1/n}^1 h'_-(s)\d s\notag\\[3pt]
		&\to\rho_h(Z)+\epsilon h'_-(1)
		=\rho_h(Z)+\epsilon \|h'_-\|_\infty,~~{\rm as}~n\to\infty.
	\end{align}
	Combining \eqref{eq-dis1} and \eqref{eq-dis2}, we have verified \eqref{eq-dis0}.
	Assume now $h'_-(1)< -h'_-(0+)$. We can construct a sequence $\{V_n\}_{n\in\N}$ as $V_n=-n\epsilon\id_{\{U<1/n\}}$, and then follow the same analysis as the previous argument to verify the result.

	\item[(b)] If $p\in(1,\infty]$, then we define
	$
	\widetilde{V}={\rm sgn}(h'_-(U)){\epsilon|h'_-(U)|^{q/p}}/{\|h'_-\|_q^{q/p}}.
	$
	One can verify that $\|\widetilde{V}\|_p=\epsilon$ and $\rho_h(Z+\widetilde{V})=\rho_h(Z)+\epsilon\|h'_-\|_q$. % check that
	%	$$
	%	F^{-1}_{\widetilde{V}}(s)={\rm sgn}(h'_-(s))\frac{\epsilon|h'_-(s)|^{q/p}}{\|h'_-\|_q^{q/p}}.
	%	$$
	%	Moreover, the following facts hold: 1. $Z$ and $\widetilde{V}$ are
	%comonotonic; 2. $(h'_-(s))^q=(\|h'_-\|_q^q/\epsilon^p) (F^{-1}_{|V|}(s))^p$ for all $s\in(0,1)$; 3. $\|\widetilde{V}\|_p=\epsilon$. Therefore, we conclude that all inequalities are equality when substituting $\widetilde{V}$ into \eqref{eq-dis1},
	This, together with \eqref{eq-dis1}, implies that  \eqref{eq-dis0} holds.
\end{itemize}
Combining (a) and (b),  we complete the proof of \eqref{eq-dis0}, and thus complete the proof. \endproof % of the result. \hfill\Halmos

{\color{blue}
\section{Proofs of Section \ref{beyond}}\label{sec:proof6}
}

Before presenting the proofs in Section \ref{beyond}, we first highlight some key elements required in our analysis for the non-affine case.  As stated in Section \ref{beyond}, while our analysis for the non-affine case continues to rely on a projection set perspective, the equivalence between the projection sets of a Wasserstein ball and a max-sliced Wasserstein ball no longer holds. Instead, we explore the possibility of first establishing a confidence bound for a fixed decision $\bb$ by identifying a set inclusion relationship with a one-dimensional Wasserstein ball and leveraging its measure concentration property (Lemma \ref{lm-CBWD}). We then extend these results to derive generalization bounds that hold uniformly across all $\bb \in {\cal D}$ using covering number techniques (see e.g., \cite{G22} and Section 3.5 of \cite{MRT18}). For $\tau>0$, a $\tau$-cover of $\mathcal D$, denoted by $\mathcal D_\tau$, is a subset of $\mathcal D$ such that
for each $\bm \beta\in \mathcal D$, there exists $\widetilde{\bm \beta}\in\mathcal D_\tau$ satisfying $\|\widetilde{\bm \beta}-\bm \beta\|_\mathcal D\le \tau$, where $\|\cdot\|_\mathcal D$ is a norm on $\mathcal D$.
The covering number $\mathcal N(\tau;\mathcal D,\|\cdot\|_\mathcal D)$ of $\mathcal D$ with respect to $\|\cdot\|_\mathcal D$ is the smallest cardinality of such a $\tau$-cover of $\mathcal D$.

\begin{lemma}[Theorem 2 of \cite{FG15}]\label{lm-CBWD}
Let $p\ge 1$ and $\eta\in(0,1)$. For $\widetilde{F}^*\in\mathcal M_p(\R)$, suppose that $A_0:=\E^{\widetilde{F}^*}[\exp(|Z|^a)]<\infty$ for some $a>p$ and $\widetilde{F}_N$ is the empirical distribution based on the independent sample drawn from $\widetilde{F}^*$. Then, we have
\begin{align*}
\p\left(W_p(\widetilde{F}_N, \widetilde{F}^*)\le \epsilon_{p,N}(\eta)\right)\ge 1-\eta,
\end{align*}
where 
\begin{align}\label{eq-epsilon}
\epsilon_{p,N}(\eta)=\begin{cases}
\left(\frac{\log(c_1/\eta)}{c_2 N}\right)^{1/(2p)}~~&\text{if}~N\ge \frac{\log(c_1/\eta)}{c_2},\\
\left(\frac{\log(c_1/\eta)}{c_2 N}\right)^{1/a}~~&\text{if}~N< \frac{\log(c_1/\eta)}{c_2},
\end{cases}
\end{align}
	and $c_1,c_2$ are positive constants that only depend on $a$, $A_0$, and $p$.
\end{lemma}

With these tools in place, we now proceed to the proofs for the generalization results in Section \ref{beyond}.

\subsection{Proof of Theorem \ref{co-nonunion}}
% Recall the definition of $\mathcal B_{\bb}(\epsilon)$ in Section \ref{beyond}:
% \begin{align}\label{appeq-HONE}
% \mathcal B_{\bb} (\epsilon)=\{F_{Y\cdot f_{\bb}(\mathbf X)}: F_{(Y,\mathbf X)}\in\overline{\mathcal B}_p(F_0,\epsilon)\}.
% \end{align}
% The following result provides an explicit function $h:\R_+\to\R_+$ such that 
% $\mathcal B_{\bb} (\epsilon) \supseteq \mathcal B_p(F_{Y_0\cdot f_{\bb}(\mathbf X_0)},h(\epsilon))$ under Assumption \ref{ass:UB03}, which is useful in establishing the confidence bounds for a fixed decision $\bb$.

% recall the projection set defined by \eqref{eq-HONE-1} in  Section \ref{beyond}:
% \begin{align}\label{appeq-HONE}
% \overline{\mathcal B}_{p|f_{\bb}}(F_0, \epsilon)=\{F_{Y\cdot f_{\bb}(\mathbf X)}: F_{(Y,\mathbf X)}\in\overline{\mathcal B}_p(F_0,\epsilon)\},
% \end{align}
% and 
%$\overline{\mathcal B}_{p|f_{\bb}}(  F_0,\epsilon) \supseteq \mathcal B_p(F_{Y_0\cdot f_{\bb}(\mathbf X_0)},g(\epsilon))$ w
% \begin{align}\label{eq-setinall}
% \overline{\mathcal B}_{p|f_{\bb}}(  F_0,\epsilon) \supseteq \mathcal B_p(F_{Y_0\cdot f_{\bb}(\mathbf X_0)},g(\epsilon))~ {\rm with}~ (Y_0,\mathbf X_0)\sim F_0
% \end{align}

We proceed in three steps. First, we verify the following set inclusion result: 
\begin{align}\label{eq-setinall-0}
\overline{\mathcal B}_{p|f_{\bb}}(  F_0,\epsilon) \supseteq \mathcal B_p(F_{Y_0\cdot f_{\bb}(\mathbf X_0)},g(\epsilon)),
\end{align}
under Assumption \ref{ass:UB03},  where   $(Y_0,\mathbf X_0)\sim F_0$ and $\overline{\mathcal B}_{p|f_{\bb}}$  is the projection set defined by \eqref{eq-HONE-1} in  Section \ref{beyond}:
\begin{align}\label{appeq-HONE}
\overline{\mathcal B}_{p|f_{\bb}}(F_0, \epsilon)=\{F_{Y\cdot f_{\bb}(\mathbf X)}: F_{(Y,\mathbf X)}\in\overline{\mathcal B}_p(F_0,\epsilon)\}.
\end{align}  
Second, using the set inclusion result \eqref{eq-setinall-0} and the concentration result for the one-dimensional Wasserstein ball (Lemma \ref{lm-CBWD}), we establish a confidence bound for a fixed $\bb$. Finally, we apply covering number techniques to derive union bounds from the confidence bounds established in the second step, overcoming the challenge posed by the non-finiteness of the set ${\cal D}$.

~~

\noindent{\bf Step 1: Proving the set inclusion result \eqref{eq-setinall-0}.} 
We  first give a lemma which will be used in the proof of the   set inclusion result \eqref{eq-setinall-0}.  
\begin{lemma}\label{lm-auxcxcv}
Let $p\ge 1$ and $g:\R_+\to \R_+$ be an increasing function with $g(0)=0$. If $g$ is convex (resp. concave), then the function $x\mapsto (g(x^{1/p}))^{p}$ is convex (resp. concave) on $\R_+$. 
\end{lemma}

\proof{Proof.}
We provide the proof of convexity only for the case where 
$g$ is twice differentiable, as the concavity case is analogous and the general case can be established by approximating $g$
  with a twice differentiable convex function.
%We only give the proof of convexity for the case where $g$ is twice differentiable as the concavity is similar  and the general case can be shown by approximating $g$ with a twice differentiable convex function. %consider the convex case as the other case is similar.
%First, 
%we assume that $g$ is twice differentiable. 
Denote by $h(x)=(g(x^{1/p}))^{p}$, $x\in\R_+$.
By standard calculations, we have
\begin{align*}
h''(x) &=\frac{p-1}{p}g'(y)(g(y))^{p-2}y^{1-2p}[yg'(y)-g(y)]
+\frac{1}{p} g''(y)(g(y))^{p-1}y^{2-2p}\\
& \ge \frac{p-1}{p}g'(y)(g(y))^{p-2}y^{1-2p}[yg'(y)-g(y)] \\
& \stackrel{\rm sgn}= yg'(y)-g(y),
\end{align*}
where we have denoted by $y=x^{1/p}\in\R_+$, the inequality follows from  $g''\ge 0$ and $g\ge 0$,  $A\stackrel{\rm sgn}=B$ means that $A$ and $B$ have the same sign, and $\stackrel{\rm sgn}= $ comes from  $ (p-1)g'(y)(g(y))^{p-2}y^{1-2p}\ge 0$.  %By $g$ is convex, we have $g''\ge 0$ and thus, %it follows that 
% \begin{align*}
% h''(x).
% \end{align*}
By $g(0)=0$ and the convexity of $g$
, we have that $g(y)/y=(g(y)-g(0))/y\le g'(y)$, that is, $yg'(y)-g(y)\ge 0$. It then follows that $h''\ge 0$, that is, $h$ is convex. %, as wanted.
% Similalry, if $g$ is concave, we have 
% \begin{align*}
% h''(x)\le \frac{p-1}{p}g'(y)(g(y))^{p-2}y^{1-2p}[yg'(y)-g(y)].
% \end{align*}
% In this case, we have $yg'(y)-g(y)\le 0$, and hence, $h''\le 0$.  %That is, $h$ is concave.
% % Let us now study the general cases. We only consider $g$ as a general convex function as the concave case is similar. For such $g$, we can construct a sequence of $\{g_m\}_{m\in\N}$ such that $g_m$ is twice differentiable with $g_m(0)=0$ for each $m\in\N$ and $g_m$ converges to $g$ point-wisely (reference on this result may need to be cited). Denote by $h(x)=(g(x^{1/p}))^{p}$ and $h_m(x)=(g_m(x^{1/p}))^{p}$ for $x\in\R_+$ and $m\in\N$. It holds that $h_m$ converges to $h$ point-wisely, and by the previous arguments, we know that $h_m$ is convex on $\R_+$. Hence, we have
% \begin{align*}
% h_m\left(\frac{x}{2}+\frac{y}{2}\right)\le \frac{1}{2}h_m(x)+\frac{1}{2}h_m(y),~~\forall x,y\in\R_+,~m\in\N.
% \end{align*}
% Letting $m\to \infty$ on above equation yields
% \begin{align*}
% h\left(\frac{x}{2}+\frac{y}{2}\right)\le \frac{1}{2}h(x)+\frac{1}{2}h(y),~~\forall x,y\in\R_+,~m\in\N.
% \end{align*}
This completes the proof. \endproof

Now we are ready to show  \eqref{eq-setinall-0} under Assumption \ref{ass:UB03}. 
\begin{lemma}\label{lm-setinclusion}
For $p\in[1,\infty]$, $\epsilon\ge0$,   $F_0\in\mathcal M_p(\Xi)$, %with $\Xi=\{-1,1\}\times \R^n$,
and $(Y_0,\mathbf X_0)\sim F_0$, %let $\overline{\cal B}_{p|f_{\bb}}( F_0,\epsilon)$ be defined by \eqref{appeq-HONE}. I
if Assumption \ref{ass:UB03} holds, then for  $\bb\in\mathcal D$,
we have   \eqref{eq-setinall-0} holds. 
% \begin{align}\label{eq-setinall}
% \overline{\mathcal B}_{p|f_{\bb}}(  F_0,\epsilon) \supseteq \mathcal B_p(F_{Y_0\cdot f_{\bb}(\mathbf X_0)},g(\epsilon)).
% \end{align}
%where $\overline{\cal B}_{p|f_{\bb}}$ and $\mathcal B_p$ are defined by \eqref{appeq-HONE} and \eqref{eq-WU-multi-d}, respectively.
\end{lemma}

\proof{Proof.}
For $\bb\in\mathcal D$,
let $F\in\mathcal B_p(F_{Y_0\cdot f_{\bb}(\mathbf X_0)},g(\epsilon))$. There exists $Z$ such that $Z\sim F$ and $\E[|Z-Y_0\cdot f_{\bb}(\mathbf X_0)|^p]\le (g(\epsilon))^p$. Our aim is to construct a random vector $(Y,\mathbf X)$ such that $F_{(Y,\mathbf X)}\in\overline{\mathcal B}_p(F_0,\epsilon)$ and $Y\cdot f_{\bb}(\mathbf X)\sim F$ as this implies that $F\in \overline{\mathcal B}_{p|f_{\bb}}(F_0,\epsilon)$.  Denote by $T=Z/Y_0-f_{\bb}(\mathbf X_0)$ and we have
\begin{align} \label{eq:1007-1}
\E[|T|^p]=\E\left[\left|\frac{Z}{Y_0}-f_{\bb}(\mathbf X_0)\right|^p\right]
=\E[|Z-Y_0\cdot f_{\bb}(\mathbf X_0)|^p]\le (g(\epsilon))^p.
\end{align} 
We first assert that there exist measurable mappings $\mathbf S_1$ 
%{\color{red}on $A_+$} 
and  $\mathbf S_2$ 
%{\color{red}on $A_-$}  
such that 
\begin{align*}
\mathbf S_1(\omega)\in\argmax_{\|\mathbf y\|\le g^{-1}(|T(\omega)|)} f_{\bb}({\bf X}_0(\omega)+\mathbf y)
%\end{align*}
~~{\rm and}~~
%Similarly, there exists a measurable mapping such that 
%\begin{align*}
\mathbf S_2(\omega)\in\argmin_{\|\mathbf y\|\le g^{-1}(|T(\omega)|)} f_{\bb}({\bf X}_0(\omega)+\mathbf y),~~\omega\in \Omega.
\end{align*}
 To see  this, we  apply a measurable selection theorem  (Theorem 3.5  in \cite{R78}) to show the existence of ${\bf S}_1$ and that of ${\bf S}_2$ is similar. 
Define 
\begin{align*}
v(\omega)=\sup_{\mathbf y\in D(\omega)} u(\omega,\mathbf y),~~\omega\in \Omega,
%~~{\rm and}~~v_-(\omega)=\inf_{\mathbf y\in D_-(\omega)} u(\omega,\mathbf y),~~\omega\in A_-,
\end{align*}
where $D(\omega)=\{\mathbf y\in\R^n: \|\mathbf y\|\le g^{-1}(|T(\omega)|)\}$
%, $D_-(\omega)=\{\mathbf y: \|\mathbf y\|\le g^{-1}(-T(\omega))\}$ 
and $u(\omega,\mathbf y)=f_{\bb}(\mathbf X_0(\omega)+\mathbf y)$.
Further define
\begin{align*}
E=\{(\omega,\mathbf y)\in\Omega\times\R^n: \|\mathbf y\|\le g^{-1}(|T(\omega)|)\}.
\end{align*}
Let ${\rm P_1}(E)$ be the projection set of $E$ on the first argument.
The first three conditions in Theorem 3.5 of \cite{R78} are trivial in our  setting. To see the last condition,
for $c\in\R$ and $\omega\in {\rm P_1}(E)$, we have 
\begin{align*}
\{\mathbf y\in D(\omega): u(\omega,\mathbf y)\ge c\}
=D(\omega)\cap \{\mathbf y\in\R^n: f_{\bb}(\mathbf X_0(\omega)+\mathbf y)\ge c\}.
\end{align*}
Note that $D(\omega)$ is a compact set and the continuity of $f_{\bb}$ implies that $\{\mathbf y\in\R^n: f_{\bb}(\mathbf X_0(\omega)+\mathbf y)\ge c\}$ is a closed set. Hence, we conclude that $\{\mathbf y\in D(\omega): u(\omega,\mathbf y)\ge c\}$ is a compact set, which verifies the last condition in Theorem 3.5 of \cite{R78}. This implies the existence of a measurable maximizer ${\bf S}_1$.  
Denote by  $A_+=\{\omega: T(\omega)\ge 0\}$ and $A_-=\{\omega: T(\omega)<0\}$.
We define $\mathbf X'$ as 
\begin{align*}
\mathbf X'(\omega)=\mathbf X_0(\omega)+\mathbf S_1(\omega),~~\omega\in A_+~~{\rm and}~~
\mathbf X'(\omega)=\mathbf X_0(\omega)+\mathbf S_2(\omega),~~\omega\in A_-.
\end{align*}
By that $\mathbf S_1$ and $\mathbf S_2$ are measurable, we have $\mathbf X'$ is measurable.
Note that $\Omega=A_+\cup A_-$, and hence,
\begin{align}\label{eq-mon1}
\|\mathbf X'(\omega)-\mathbf X_0(\omega)\|
\le \max\{\|\mathbf S_1(\omega)\|,\|\mathbf S_2(\omega)\|\}
\le g^{-1}(|T(\omega)|),~~\omega\in\Omega.
\end{align}
By Assumption \ref{ass:UB03}, we have
\begin{align*}
f_{\bb}(\mathbf X'(\omega))-f_{\bb}(\mathbf X_0(\omega))\ge  g (g^{-1}(|T(\omega)|)) = T(\omega),~~\omega\in A_+
\end{align*}
and 
\begin{align*}
f_{\bb}(\mathbf X_0(\omega))-f_{\bb}(\mathbf X'(\omega)) \ge  g (g^{-1}(|T(\omega)|))=-T(\omega),~~\omega\in A_-.
\end{align*}
Note that $f_{\bb}$ is continuous. There exists a random variable $R$ taking values on $[0,1]$ such that 
\begin{align}\label{eq-moninclusion}
f_{\bb}(\mathbf X_0 +R (\mathbf X' -\mathbf X_0 ))-f_{\bb}(\mathbf X_0 )=T.
\end{align}
Define $\widetilde{\mathbf X}=\mathbf X_0+R(\mathbf X'-\mathbf X_0)$. It follows from \eqref{eq-moninclusion} that 
\begin{align}\label{eq-consdis}
Y_0\cdot f_{\bb}(\widetilde{\mathbf X})=Y_0\cdot f_{\bb}(\mathbf X_0)+Y_0T=Z\sim F.
\end{align}
% Using (i) of Proposition \ref{prop-equi}, we know that $F_{(Y_0,\widetilde{\mathbf X})}\in\overline{\mathcal B}_p(F_0,\epsilon)\}$ if and only if 
Moreover, we have
\begin{align*}%\label{eq-conscons}
\E[\|\widetilde{\mathbf X}-\mathbf X_0\|^p] =\E[\|R (\mathbf X' -\mathbf X_0 )\|^p] 
\le \E\left[(g^{-1}(|T|))^p\right]
%=\E\left[(g^{-1}((|T|^{p})^{1/p}))^p\right]
\le \left(g^{-1}\big((\E[|T|^p])^{1/p}\big)\right)^p
\le \epsilon^p,
\end{align*}
where the first inequality is due to \eqref{eq-mon1};
the second inequality follows from the Jensen's inequality and the concavity of the function $x\mapsto (g^{-1}(x^{1/p}))^p$ by Lemma \ref{lm-auxcxcv}; and the last inequality follows from \eqref{eq:1007-1}. Hence,   $F_{(Y_0,\widetilde{\mathbf X})}\in\overline{\mathcal B}_p(F_0,\epsilon)$.
Combining this with \eqref{eq-consdis} yields $F\in \overline{\mathcal B}_{p|f_{\bb}}(F_0, \epsilon)$. This completes the proof.
\endproof

~~

\noindent{\bf Step 2: Establishing confidence bounds for fixed $\bb$.} 
Based on Lemma \ref{lm-setinclusion} and applying Lemma \ref{lm-CBWD}, we are able to derive the following confidence bounds under Assumptions \ref{ass:UB021} and \ref{ass:UB03}.
\begin{lemma}\label{lm-CB1}
Let $p\ge 1$ and $\eta\in(0,1)$. Suppose that Assumptions \ref{ass:UB021} and \ref{ass:UB03} hold. Then for each $\bb\in\mathcal D$, we have
\begin{align*}
\p\left(\rho^{F^*}(Y\cdot f_{\bb}(\mathbf X))\le \sup_{F\in \overline{\cal B}_p(\widehat{F}_N,g^{-1}(\epsilon_{p,N}(\eta)))}\rho^{F}(Y\cdot f_{\bb}(\mathbf X))\right) \ge 1-\eta,
\end{align*}
where $\epsilon_{p,N}(\eta)$ is defined by \eqref{eq-epsilon} with  constants $c_1,c_2$  depending only on $a$, $A$ and $p$. %   in Assumption \ref{ass:UB021}.
\end{lemma}

\proof{Proof.}
%Let $\bb\in\mathcal D$.
Denote by $\widehat{F}_{\bb,N}=\frac{1}{N}\sum_{i=1}^N \delta_{\widehat{y}_i\cdot f_{\bb}(\widehat{\mathbf x}_i)}$ and $F_{\bb}^*=F_{Y^*\cdot f_{\bb}(\mathbf X^*)}$, where $(Y^*,\mathbf X^*)\sim F^*$.  It holds that $\widehat{F}_{\bb,N}$ is an empirical distribution of $F_{\bb}^*$. 
%The following chain of inequalities holds for each $\bb\in\mathcal D$:
Note that
\begin{align*}
	& \p\left(\rho^{F^*}(Y\cdot f_{\bb}(\mathbf X))\le
			\sup_{F\in \overline{\cal B}_p(\widehat{F}_N,g^{-1}(\epsilon_{p,N}(\eta)))}\rho^{F}(Y\cdot f_{\bb}(\mathbf X))\right)\\[3pt]
			% &=  \p\left(\rho^{F^*_{\bm\beta}}(Z)\le
			% \sup_{F\in \left\{F_{Y \cdot f_{\bb}(\mathbf X)}: F_{(Y,\mathbf X)}\in \overline{\cal B}_p(\widehat{F}_N,g^{-1}(\epsilon_{p,N}(\eta)))\right\} }\rho^{F}(Z)\right) \notag \\
   &=  \p\left(\rho^{F^*_{\bm\beta}}(Z)\le
			\sup_{F\in \overline{\mathcal B}_{p|f_{\bb}}\left(\widehat{F}_N,g^{-1}(\epsilon_{p,N}(\eta))\right) }\rho^{F}(Z)\right) \\[3pt]
			% &\ge \p\left(F^*_{\bm\beta}\in\left\{F_{Y \cdot f_{\bb}(\mathbf X)}: F_{(Y,\mathbf X)}\in \overline{\cal B}_p (\widehat{F}_N,g^{-1}(\epsilon_{p,N}(\eta)))\right\}\right)
			% \notag\\
  & \ge \p\left(F^*_{\bm\beta}\in  \overline{\mathcal B}_{p|f_{\bb}}\left(\widehat{F}_N,g^{-1}(\epsilon_{p,N}(\eta))\right)\right)\notag\\[3pt]
&\ge \p\left(F_{\bb}^*\in \mathcal B_p(\widehat{F}_{\bb,N},\epsilon_{p,N}(\eta))\right)\ge 1-\eta,
\end{align*}
where $\overline{\mathcal B}_{p |f_{\bb}}(F_0,\epsilon)$ is defined by \eqref{appeq-HONE},
the second inequality follows from Lemma \ref{lm-setinclusion}, which states that $\mathcal B_p(\widehat{F}_{\bb,N},\epsilon)\subseteq \overline{\mathcal B}_{p |f_{\bb}}(\widehat{F}_N,g^{-1}(\epsilon))$,
and the last inequality is due to Lemma \ref{lm-CBWD} by noting that Assumption \ref{ass:UB021} implies $\E^{F_{\bb}^*}[\exp(|Z|^a)]=\E^{F^*}[\exp(|f_{\bb}(\mathbf X)|^a)]\le A<\infty$. Hence, we complete the proof.
\endproof

~~~

\noindent{\bf Step 3: Union bounds for $\bb\in \mathcal D$.}  Now we are ready to prove union bounds in Theorem \ref{co-nonunion}.

{\bf Proof of Theorem \ref{co-nonunion}.}
Note that for any $(Y,\mathbf X)\in \{-1,1\}\times\R^n$ and $\bb,\widetilde{\bb}\in \mathcal D$, we have
	\begin{align}\label{eq-nonunion1}
		\left|\rho^F(Y \cdot f_{\bb}(\mathbf X))-\rho^F(Y \cdot f_{\widetilde{\bm \beta}}(\mathbf X))\right|
		&\le M \left(\E^F[|f_{\bb}(\mathbf X)-f_{\widetilde{\bm \beta}}(\mathbf X)|^k]\right)^{1/k}\notag\\[3pt]
		&\le M \|\bm \beta-\widetilde{\bm \beta}\|_{\mathcal D}(\E^F[(a_1\|\mathbf X\|^r+a_2)^k])^{1/k}\notag\\[3pt]
& \le M \|\bm \beta-\widetilde{\bm \beta}\|_{\mathcal D}\left(a_1\left(\E^F[|\| \mathbf X\||^{rk}]\right)^{1/k}+a_2\right), %\\
%&=M \|\bm \beta-\widetilde{\bm \beta}\|_{\mathcal D}\left(a_1\left(\E^F[|\|\mathbf X\||^{rk}]\right)^{1/k}+a_2\right)\notag,
	\end{align}
 where the three inequalities follow from Assumption \ref{assum-4}, Assumption \ref{ass:UB01} and triangle's inequality for the $L^k$-norm, respectively.
% Denote by  $\widehat{\mathbf X}_N\sim \widehat{F}_{N,\mathbf X}$, where $\widehat{F}_{N,\mathbf X}$ is the empirical distribution on $\mathbf X$. 
Then, it holds that
	\begin{align}\label{eq-nonunion2}
	\lefteqn{ \left|\sup_{F\in \overline{\cal B}_p(\widehat{F}_N,\epsilon)}\rho^{F}(Y\cdot f_{\bb}(\mathbf X))-
	\sup_{F\in \overline{\cal B}_p(\widehat{F}_N,\epsilon)}\rho^{F}(Y\cdot  f_{\widetilde{\bm \beta}}(\mathbf X))\right| }\notag\\[3pt]
		&\le \sup_{F\in \overline{\cal B}_p(\widehat{F}_N,\epsilon)}\left|\rho^{F}(Y\cdot  f_{\bm \beta}(\mathbf X))-\rho^{F}(Y\cdot f_{\widetilde{\bm \beta}}(\mathbf X))\right| \notag\\[3pt]
  &\le M\|\bm \beta-\widetilde{\bm \beta}\|_{\mathcal D}\sup_{F\in \overline{\cal B}_p(\widehat{F}_N,\epsilon)} \left(a_1 \left(\E^{F}[\|\mathbf X\|^{rk}]\right)^{1/k}+a_2\right)\notag\\[3pt]
		% &= M\|\bm \beta-\widetilde{\bm \beta}\|_{\mathcal D}\sup_{F\in{\cal B}_p(\widehat{F}_{N,\mathbf X},\epsilon)} \left(\E^F[|a_1\|\mathbf X\|+a_2|^k]\right)^{1/k}  \le   
  % &=M\|\bm \beta-\widetilde{\bm \beta}\|_{\mathcal D}\sup_{F\in{\cal B}_p(\widehat{F}_{N,\mathbf X},\epsilon)} \left(a_1\left(\E^F[\|\mathbf X\|^{rk}]\right)^{1/k}+a_2\right)\notag\\
 & \le M\|\bm \beta-\widetilde{\bm \beta}\|_{\mathcal D}\left(2^{r-1}a_1\left(\E^{\widehat{F}_{N}}[\|\mathbf X\|^{rk}]\right)^{1/k}+2^{r-1}a_1\epsilon^r+a_2\right),
\end{align}
where the second inequality follows from \eqref{eq-nonunion1},
and the last inequality holds because  $ (\E^F[\|\mathbf X\|^{rk}] )^{1/(rk)}\le  (\E^{\widehat{F}_N} [\|\mathbf X\|^{rk}] )^{1/(rk)}+\epsilon $ for any $F \in \overline{\cal B}_p(\widehat{F}_N,\epsilon)$ with $rk\in [1,p]$, which further implies 
$$
(\E^F[\|\mathbf X\|^{rk}] )^{1/k}\le  ((\E^{\widehat{F}_N} [\|\mathbf X\|^{rk}] )^{1/(rk)}+\epsilon)^r\le 2^{r-1}((\E^{\widehat{F}_N} [\|\mathbf X\|^{rk}] )^{1/k}+\epsilon^r).
$$
For $0<x\le y$ and $t\ge 0$,  it holds that $y-x>t^{1/k}$ implies $y^k-x^k>t$. Hence, we have
	\begin{align}\label{eq-nonunion3}
		&\p\left(\left(\E^{\widehat{F}_N}[\|\mathbf X\|^{rk}]\right)^{1/k}-\left(\E^{F^*}[\|\mathbf X\|^{rk}]\right)^{1/k}>\left({\rm\mathbb{V}ar}^{F^*}(\|\mathbf X\|^{rk})\right)^{1/(2k)}\right)\notag\\[3pt]
		&\le	\p\left(\E^{\widehat{F}_N}[\|\mathbf X\|^{rk}]-\E^{F^*}[\|\mathbf X\|^{rk}]>\sqrt{{\rm\mathbb{V}ar}^{F^*}(\|\mathbf X\|^{rk})}\right)\le \frac{1}{N},
	\end{align}
	where the second inequality follows from Chebyshev’s inequality.
Let $\tau>0$ and $\mathcal D_\tau$ be an $\tau$-cover of $\mathcal D$ with respect to the norm $\|\cdot\|_{\mathcal D}$. Denote by $\tau=1/N$,  $t=\mathcal N(\tau;\mathcal D,\|\cdot\|_{\mathcal D})$
and $\epsilon_N(\eta/t):=g^{-1}(\epsilon_{p,N}(\eta/t))$, where $s\mapsto \epsilon_{p,N}(s)$ is defined by \eqref{eq-epsilon} in Lemma \ref{lm-CBWD}.
It holds that
	\begin{align}
&1-\p\left(	\rho^{F^*}(Y\cdot f_{\bb}(\mathbf X))\le \sup_{F\in \overline{\cal B}_p(\widehat{F}_N,\epsilon_N(\eta/t))}\rho^{F}(Y\cdot f_{\bb}(\mathbf X))+N\tau\tau_N,~~\forall~\bm \beta\in\mathcal D \right)\notag\\[3pt]
&=\p\left(\exists \bb\in\mathcal D ~{\rm s.t.}~\rho^{F^*}(Y\cdot f_{\bb}(\mathbf X))> \sup_{F\in \overline{\cal B}_p(\widehat{F}_N,\epsilon_N(\eta/t))}\rho^{F}(Y\cdot f_{\bb}(\mathbf X))
		\right.\notag\\[3pt]
		&~~~~~~~~~~~\left.
		+\tau M\left[(2^{r-1}+1)a_1\left(\E^{F^*}[\|\mathbf X\|^{rk}]\right)^{1/k}\!\!\!\!\!+2^{r-1}a_1\left({\rm\mathbb{V}ar}^{F^*}(\|\mathbf X\|^{rk})\right)^{1/(2k)}\!\!\!\!\!+2^{r-1}a_1\epsilon_N^r(\eta/t)+2a_2\right]\right)\notag\\[3pt]
		&\le \frac{1}{N}+\p\left(\exists \bb\in\mathcal D ~{\rm s.t.}~\rho^{F^*}(Y\cdot f_{\bb}(\mathbf X))> \sup_{F\in \overline{\cal B}_p(\widehat{F}_N,\epsilon_N(\eta/t))}\rho^{F}(Y\cdot f_{\bb}(\mathbf X))
		\right.\notag\\[3pt]
		&~~~~~~~~~~~~~~~~~~~~~~~~~\left.+\tau M\left[a_1\left(\E^{F^*}[\|\mathbf X\|^{rk}]\right)^{1/k}\!\!\!+2^{r-1}a_1\left(\E^{\widehat{F}_{N}}[\|\mathbf X\|^{rk}]\right)^{1/k}\!\!\!+2^{r-1}a_1\epsilon_N^r(\eta/t)+2a_2\right]\right)\notag\\[3pt]
        &=\frac{1}{N}+\p\left(\exists \bb\in\mathcal D ~{\rm s.t.}~\rho^{F^*}(Y\cdot f_{\bb}(\mathbf X))-\tau M\left[a_1\left(\E^{F^*}[\|\mathbf X\|^{rk}]\right)^{1/k}\!\!\!\!+a_2\right]>\!\!\!\! \sup_{F\in \overline{\cal B}_p(\widehat{F}_N,\epsilon_N(\eta/t))}\!\!\!\rho^{F}(Y\cdot f_{\bb}(\mathbf X))
		\right.\notag\\[3pt]
		&~~~~~~~~~~~~~~~~~~~~~~~~~~\left.+\tau M\left[2^{r-1}a_1\left(\E^{\widehat{F}_{N}}[\|\mathbf X\|^{rk}]\right)^{1/k}\!\!\!+2^{r-1}a_1\epsilon_N^r(\eta/t)+a_2\right]\right)\label{eq-ineq21}\\[3pt]
  %       &\le\frac{1}{N}+\p\left(\exists \bb\in\mathcal D ~{\rm s.t.}~\rho^{F^*}(Y\cdot f_{\bb}(\mathbf X))> \sup_{F\in \overline{\cal B}_p(\widehat{F}_N,\epsilon_N(\eta/t))}\rho^{F}(Y\cdot f_{\bb}(\mathbf X))
		% \right.\\[3pt]
		% &~~~~~~~~~~~~~~~~~~~~~~~~~\left.+\tau M\left[2^{r-1}a_1\left(\E^{\widehat{F}_{N}}[\|\mathbf X\|^{rk}]\right)^{1/k}+2^{r-1}a_1\epsilon_N^r(\eta/t)+a_2\right]\right)\\[3pt]
		&\le \frac{1}{N}+\p\left(\exists {\widetilde{\bb}}\in\mathcal D_\tau~{\rm s.t.}~\rho^{F^*}(Y\cdot f_{{\widetilde{\bb}}}(\mathbf X))> \sup_{F\in \overline{\cal B}_p(\widehat{F}_N,\epsilon_N(\eta/t))}\rho^{F}(Y\cdot f_{\mathbf {\widetilde{\bb}}}(\mathbf X))\right)\label{eq-ineq22}\\[3pt]
		&\le \frac{1}{N}+\sum_{{\widetilde{\bb}}\in\mathcal D_\tau}\p\left(\rho^{F^*}(Y\cdot f_{{\widetilde{\bb}}}(\mathbf X))> \sup_{F\in \overline{\cal B}_p(\widehat{F}_N,\epsilon_N(\eta/t))}\rho^{F}(Y\cdot {f_{\widetilde{\bb}}}(\mathbf X))\right)\notag\\[3pt]
  &\le \frac{1}{N}+\frac{\mathcal N(\tau;\mathcal D,\|\cdot\|_{\mathcal D})\eta}{t}=\frac{1}{N}+\eta,\notag
	\end{align}
	where the first inequality follows from \eqref{eq-nonunion3}, the last inequality is due to Lemma \ref{lm-CB1}, and the second inequality holds because if the event in \eqref{eq-ineq21} happens with some $\bb\in\mathcal D$, then there exists $\widetilde{\bb}\in\mathcal D_{\tau}$ such that $\|\bb-\widetilde{\bb}\|_{\mathcal D}\le \tau$ and
\begin{align*}
\rho^{F^*}(Y\cdot f_{\widetilde{\bb}}(\mathbf X))&\ge
\rho^{F^*}(Y\cdot f_{\bb}(\mathbf X))-\tau M\left[a_1\left(\E^{F^*}[\|\mathbf X\|^{rk}]\right)^{1/k}+a_2\right]\\
&> \sup_{F\in \overline{\cal B}_p(\widehat{F}_N,\epsilon_N(\eta/t))}\rho^{F}(Y\cdot f_{\bb}(\mathbf X))+\tau M\left[2^{r-1}a_1\left(\E^{\widehat{F}_{N}}[\|\mathbf X\|^{rk}]\right)^{1/k}+2^{r-1}a_1\epsilon_N^r(\eta/t)+a_2\right]\\
&\ge \sup_{F\in \overline{\cal B}_p(\widehat{F}_N,\epsilon_N(\eta/t))}\rho^{F}(Y\cdot f_{\mathbf {\widetilde{\bb}}}(\mathbf X)),
\end{align*}
where the first and the last inequalities follow from 
\eqref{eq-nonunion1} and \eqref{eq-nonunion2}, respectively, which implies that the event in \eqref{eq-ineq22} happens.
    % the second inequality is due to \eqref{eq-nonunion1}, \eqref{eq-nonunion2} and the fact that
    % for $\bb\in\mathcal D$, there exists $\widetilde{\bb}\in\mathcal D_{\tau}$ such that $\|\bb-\widetilde{\bb}\|_{\mathcal D}\le \tau$, and we have used Lemma \ref{lm-CB1} in the last inequality. 
%Let $\tau=1/N$ and denote by $t=\mathcal N(\tau;\mathcal D,\|\cdot\|_{\mathcal D})$. Further we replace $\eta$ by $\eta/t$. Then, 
Hence, we have
\begin{align}\label{eq-th8}
\p\left(\rho^{F^*}(Y\cdot f_{\bb}(\mathbf X))\le \sup_{F\in \overline{\cal B}_p(\widehat{F}_N,g^{-1}(\epsilon_{p,N}(\eta/t)))}\rho^{F}(Y\cdot f_{\bb}(\mathbf X))+\tau_N,~~\forall~\bm \beta\in\mathcal D \right)\ge 1-\eta-\frac{1}{N}.
\end{align}
We recall that $t=\mathcal N(\tau;\mathcal D,\|\cdot\|_{\mathcal D})$ with $\tau=1/N$.
% Letting $\tau=1/N$ and replacing $\eta$ by
% $\eta/\mathcal N(\tau;\mathcal D,\|\cdot\|_{\mathcal D})$ yield the result.
% Recalling the definition \eqref{eq-epsilon} in Lemma \ref{lm-CBWD}, we have
% \begin{align*}
% \epsilon_{p,N}(\eta/t)=\begin{cases}
% \left(\frac{\log(c_1/\eta)+\log(t)}{c_2 N}\right)^{1/(2p)}~~&\text{if}~N\ge \frac{\log(c_1/\eta)+\log(t)}{c_2},\\
% \left(\frac{\log(c_1/\eta)+\log(t)}{c_2 N}\right)^{1/a}~~&\text{if}~N< \frac{\log(c_1/\eta)+\log(t)}{c_2}.
% \end{cases}
% \end{align*}
Note that $\mathcal D$ is contained in $U_{\mathcal D} B$ where $B$ is a unit ball of $\R^m$.
For any $\epsilon>0$, we have (see Example 5.8 of \cite{W19})
% \begin{align*}
% \log(t)=\log\left(\mathcal N\left(\frac{1}{N};\mathcal D,\|\cdot\|_{\mathcal D}\right)\right)
% \le m \log\left(1+{2U_{\mathcal D}N}\right).
% \end{align*}
\begin{align*}
t=\mathcal N\left(\frac{1}{N};\mathcal D,\|\cdot\|_{\mathcal D}\right)
\le \left(1+{2U_{\mathcal D}N}\right)^m=:\widetilde{t}.
\end{align*}
Let $\epsilon_{p,N}$ be defined in Theorem \ref{co-nonunion}.
Noting that $g^{-1}$ is increasing and $s\mapsto \epsilon_{p,N}(s)$ is decreasing, we obtain 
$$
g^{-1}(\epsilon_{p,N})=g^{-1}\left(\epsilon_{p,N}\left(\eta/\widetilde{t}\right)\right)\ge g^{-1}(\epsilon_{p,N}(\eta/t)).
$$
Combining with \eqref{eq-th8}, we complete the proof.
\endproof

\subsection{Proof of Theorem \ref{co-nonunion1}} % and \ref{co-nonunion2}}\label{app:th10}
The proof of Theorem \ref{co-nonunion1} closely follows the three-step structure used for Theorem \ref{co-nonunion}, with a key difference in the first step. Specifically, establishing the inclusion relation \eqref{eq-setinall-0} becomes challenging because Assumption \ref{ass:UB04} is weaker than Assumption \ref{ass:UB03}. However, as shown in the first step of the proof below, the ``full" inclusion relation \eqref{eq-setinall-0} is unnecessary; instead, only a partial inclusion relation is required. Specifically, the worst-case risk evaluated over the full Wasserstein ball coincides with that evaluated over a subset of the Wasserstein ball, and this subset can be shown to belong to the projection set. This enables the derivation of confidence bounds for fixed $\bb$ in the second step, and subsequently, the generalization bounds using the same covering number arguments as in Theorem \ref{co-nonunion}.

~~

{\bf Step 1. Proving partial set inclusion and coincidence of worst-case risk.}  % We introduce the following definition.
%{With Assumption \ref{ass:UB05} that is weaker than Assumption \ref{ass:UB03}, it may not be possible to establish a set inclusion result between the projection set and a one-dimensional Wasserstein ball. To address this issue, we first introduce some notation and definitions. 
%For a distribution $H$ on $\R$,
%denote by $H^{-1}$ the left-quantile function of $H$.
For $H_0\in\mathcal M_p(\R)$ and $\epsilon\ge 0$,
define a subset of a one-dimensional Wasserstein ball as
\begin{align}\label{eq-Bbeta+}
\mathcal B_{p}^+(H_0,\epsilon)=  \{F\in\mathcal B_{p}(H_0,\epsilon): F\succeq_{\rm st}H_0\},
\end{align}
where $\succeq_{\rm st}$ denotes the \emph{usual stochastic order} (see \cite{SS07}), also known as the first-order stochastic dominance, and is defined as:  for two distributions $H_1,H_2$ on $\R$,  $H_1\succeq_{\rm st} H_2$ if $H_1(t)\le H_2(t)$ for all $t\in\R$.
%With the aid of the monotonicity of the  measure, we only
We aim to show the following %assert that to derive the confidence bounds for a fixed $\bb \in \mathcal{D}$, it suffices to establish the following weaker 
set inclusion result:
\begin{align}\label{eq-setinclusion+}
\mathcal B_{p|f_{\bb}}( G_0,\epsilon)\supseteq \mathcal B_{p}^+(F_{f_{\bb}(\mathbf X_0)},g(\epsilon)),
\end{align}
where $\mathbf X_0\sim G_0\in \mathcal M_p(\R^n)$, $\mathcal B_{p}^+(F_{f_{\bb}(\mathbf X_0)},g(\epsilon))$ is defined by 
\eqref{eq-Bbeta+},
and  %$\mathcal B_{p|f_{\bb}}( F_0,\epsilon)$ is the projection set defined as
\begin{align}\label{appeq-HONE1}
\mathcal B_{p|f_{\bb}}( G_0,\epsilon): =\{F_{f_{\bb}(\mathbf X)}: F_{\mathbf X}\in{\mathcal B}_p(G_0,\epsilon)\} .
\end{align}
%  and 
% \begin{align}\label{eq-Bbeta+}
% \mathcal B_{p}^+(F_{f_{\bb}(\mathbf X_0)},h(\epsilon)):=\mathcal B_{p}(F_{f_{\bb}(\mathbf X_0)},h(\epsilon))\cap \{F\in\mathcal M_p(\R): F\succeq_{\rm st}F_{f_{\bb}(\mathbf X_0)}\},
% \end{align}
% where $\succeq_{\rm st}$ denotes the \emph{usual stochastic order} (see \cite{SS07}), also known as the first-order stochastic dominance, and is defined as: $F\succeq_{\rm st} G$ if and only if $F(t)\le G(t)$ for all $t\in\R$. 
%Note that $\mathcal B_{p}^+(F_{f_{\bb}(\mathbf X_0)},h(\epsilon))$ is the set of all distributions within the Wasserstein ball $\mathcal B_{p}(F_{f_{\bb}(\mathbf X_0)},h(\epsilon))$ that are larger than the reference distribution with respect to the usual stochastic order. Thus, the set inclusion result \eqref{eq-setinclusion+} is weaker than \eqref{eq-setinall} in Lemma \ref{lm-setinclusion}.

% Next, we provide an explicit form of $h$ in \eqref{eq-setinclusion+} under Assumption \ref{ass:UB05} in the following lemma.

%Next, we show that \eqref{eq-setinclusion+} holds under Assumption \ref{ass:UB05} in the following lemma.

\begin{lemma}\label{lm-setinclusion+}
For $p\in[1,\infty]$, $\epsilon\ge0$, $G_0\in\mathcal M({\mathbb R}^n)$, and $\mathbf X_0\sim G_0$, the set inclusion result \eqref{eq-setinclusion+} holds 
%with $h=g$ 
for each $\bb\in\mathcal D$ under Assumption \ref{ass:UB05}.
\end{lemma}

\proof{Proof.}
Let $\bb\in\mathcal D$.
For $F\in \mathcal B_{p}^+(F_{f_{\bb}(\mathbf X_0)},g(\epsilon))$, we have $F\in \mathcal B_{p}(F_{f_{\bb}(\mathbf X_0)},g(\epsilon))$ and $F\succeq_{\rm st} F_{f_{\bb}(\mathbf X_0)}$. Let $U$ be a uniform random variable on $[0,1]$ such that $f_{\bb}(\mathbf X_0)=F_{f_{\bb}(\mathbf X_0)}^{-1}(U)$ almost surely (see Lemma A.28 of \cite{FS16} for the exsistence of $U$).
%that is comonotonic with $f_{\bb}(\mathbf X_0)$. %; see Definition \ref{def-com} in Appendix \ref{subsec:appDT} for the definition of comonotonicity. 
%It holds that $f_{\bb}(\mathbf X_0)=F_{f_{\bb}(\mathbf X_0)}^{-1}(U)$ almost surely.
Let $Z=F^{-1}(U)\sim F$ and $T=Z-f_{\bb}(\mathbf X_0)$. Since $F\succeq_{\rm st} F_{f_{\bb}(\mathbf X_0)}$, we have 
$T\ge 0$. Moreover, it holds that
\begin{align}\label{eq-+T}
\E[|T|^p]=\E[|Z-f_{\bb}(\mathbf X_0)|^p]=\int_0^1 |F^{-1}(s)-F_{f_{\bb}(\mathbf X_0)}^{-1}(s)|^p\d s
=W_p(F,F_{f_{\bb}(\mathbf X_0)})^p\le g(\epsilon)^p,
\end{align}
where %the third equality follows from Theorem 2.10 of \cite{BL19} and 
the inequality is due to $F\in \mathcal B_{p}(F_{f_{\bb}(\mathbf X_0)},g(\epsilon))$.
%Denote by $T=Z-f_{\bb}(\mathbf X_0)\ge 0$.
%there exists a random vector $\mathbf X$ such that $f_{\bb}(\mathbf X)\sim F$  and $\E[T^p]\le g(\epsilon)^p$ with $T:=f_{\bb}(\mathbf X)-f_{\bb}({\mathbf X}_0)\ge 0$. 
Using the similar arguments in the proof of Lemma \ref{lm-setinclusion} where a measurable selection theorem in \cite{R78} has been used, we know that there exists a measurable mapping $\mathbf S$ such that 
\begin{align*}
\mathbf S(\omega)\in\argmax_{\|\mathbf y\|\le g^{-1}(T(\omega))} f_{\bb}({\mathbf X}_0(\omega)+\mathbf y),~~\omega\in \Omega.
\end{align*}
% Define
% \begin{align*}
% \mathbf X'(\omega)={\mathbf X}_0(\omega)+\mathbf S(\omega),~~\omega\in\Omega.
% \end{align*}
% It holds that $\mathbf X'$ is measurable and
% \begin{align}\label{eq-mon11}
% \|\mathbf X'(\omega)-{\mathbf X}_0(\omega)\|= \|\mathbf S(\omega)\|\le 
% g^{-1}(T(\omega)),~~\omega\in\Omega.
% \end{align}
Using Assumption \ref{ass:UB05}, we have
\begin{align*}
f_{\bb}(\mathbf X_0(\omega) + \mathbf S(\omega))-f_{\bb}({\mathbf X}_0(\omega))\ge g(g^{-1}(T(\omega)))=T(\omega),~~\omega\in \Omega.
\end{align*}
Note that $f_{\bb}$ is continuous. There exists a random variable $R$ taking values on $[0,1]$ such that 
\begin{align}\label{eq-mon2}
f_{\bb}({\mathbf X}_0+R\mathbf S)-f_{\bb}({\mathbf X}_0)=T.
\end{align}
Define $\widetilde{\mathbf X}={\mathbf X}_0+R\mathbf S$. By \eqref{eq-mon2}, we have $f_{\bb}(\widetilde{\mathbf X})=f_{\bb}({\mathbf X}_0)+T=Z\sim F$. By  $ \|\mathbf S\|\le g^{-1}(T)$, we have
\begin{align*}
\E[\|\widetilde{\mathbf X}-{\mathbf X}_0\|^p]
\le \E\left[(g^{-1}(T))^p\right]
%\le \E\left[(h^{-1}(T))^p\right]\\
%=\E\left[(g^{-1}(T^{p\cdot{1/p}}))^p\right]
\le \left(g^{-1}((\E[T^p])^{1/p})\right)^p
\le \epsilon^p,
\end{align*}
where 
%the second inequality is due to $g\ge h$, and 
the second inequality follows from Jensen's inequality and   the concavity of the mapping $x\mapsto (g^{-1}(x^{1/p}))^p$ by Lemma \ref{lm-auxcxcv}, and the last inequality is due to \eqref{eq-+T}.
Hence, we have $F_{\widetilde{\mathbf X}}\in \mathcal B_p(G_0,\epsilon)$, which implies 
$F=F_Z=F_{f_{\bb}(\widetilde{\mathbf X})}\in  \mathcal B_{p|f_{\bb}}(G_0,\epsilon)$. This completes the proof.\endproof

Below we demonstrate that the worst-case values of a monotone risk mapping are identical under $\mathcal B_p(H_0,\epsilon)$ and $\mathcal B_p^+(H_0,\epsilon)$ for any $H_0\in\mathcal M_p(\R)$ and $\epsilon\ge 0$. % This confirms that the set inclusion result \eqref{eq-setinclusion+} is sufficient for establishing a confidence bound.

\begin{lemma}\label{lm-eq+}
Let $p\in[1,\infty]$ and 
suppose that $\rho: L^p\to\R$ satisfies monotonicity. For $H_0\in\mathcal M_p(\R)$ and $\epsilon\ge 0$, we have
\begin{align}\label{appeq-1proofth7}
\sup_{F\in \mathcal B_p^+(H_0,\epsilon)}\rho^F(Z)=\sup_{F\in \mathcal B_p(H_0,\epsilon)}\rho^F(Z).
\end{align}
\end{lemma}
\proof{Proof.}
%It is straightforward to see that the left-hand side of \eqref{appeq-1proofth7} is smaller than the right-hand side
%as $\mathcal B_{p}^+(G_0,\epsilon)\subseteq \mathcal B_{p}(G_0,\epsilon)$. On the other hand, 
Note that $\mathcal B_p^+(H_0,\epsilon) \subseteq \mathcal B_p(H_0,\epsilon)$. It suffices to show \begin{align}\label{appeq-1proofth7-1}
\sup_{F\in \mathcal B_p^+(H_0,\epsilon)}\rho^F(Z)\ge \sup_{F\in \mathcal B_p(H_0,\epsilon)}\rho^F(Z).
\end{align}
For any $F\in\mathcal B_{p}(H_0, \epsilon)$, define $H(x)=\min\{F(x),H_0(x)\}$ for $x\in\R$. It holds that %$G\succeq_{\rm st} F$, $G\succeq_{\rm st} G_0$ and 
$H^{-1}(s)=\max\{F^{-1}(s), H_0^{-1}(s)\}$ for $s\in [0,1]$, and thus, %. Further, we have
\begin{align*}
W_p(H,H_0)^p
%&=\int_0^1 |G^{-1}(s)-H_0^{-1}(s)|^p\d s\\
&=\int_0^1 \left|\max\{F^{-1}(s), H_0^{-1}(s)\}-H_0^{-1}(s)\right|^p\d s\\
%&=\int_0^1 |\max\{F^{-1}(s)-H_0^{-1}(s), 0\}|^p\d s\\
&\le\int_0^1 \left|F^{-1}(s)-H_0^{-1}(s)\right|^p\d s =W_p(F,H_0)^p\le \epsilon^p,
\end{align*}
where %we have used Theorem 2.10 of \cite{BL19} in the first and the fourth equalities, and 
the last inequality is due to $F\in \mathcal B_p(H_0,\epsilon)$. This means $H\in \mathcal B_p(H_0,\epsilon)$.
This, together with  $H\le H_0$, that is, $H\succeq_{\rm st} H_0$,  implies $H\in \mathcal B_{p}^+(H_0, \epsilon)$. 
% we 
% aim to find $G\in \mathcal B_{p}^+(G_0, \epsilon)$ such that $\rho^G(Z)\ge \rho^F(Z)$. To see this,
% let $U$ be a uniform random variable on $[0,1]$, and define $Y_0=G_0^{-1}(U)$, $X=F^{-1}(U)$ and $Z=\max\{Y_0,X\}$. It holds that $Y_0\sim G_0$, $X\sim F$ and $F_Z\succeq_{\rm st} G_0$. %Since $\widehat{\mathbf X}_N$ and $Y$ are comonotonic, 
% Moreover, we have
% \begin{align}\label{eq-consth7}
% \E[|X-Y_0|^p]
% =\int_0^1 |F^{-1}(s)-G_0^{-1}(s)|^p\d s
% =W_p(F,G_0)^p\le \epsilon^p,
% \end{align}
% where the second equality follows from Theorem 2.10 of \cite{BL19} and the inequality is due to $F\in  \mathcal B_{p}(G_0, \epsilon)$.
% Hence, we have
% \begin{align*}
% W_p(F_Z,G_0)^p
% &=\int_0^1 |F_Z^{-1}(s)-G_0^{-1}(s)|^p\d s
% =\E[|Z-Y_0|^p]\\
% &=\E[|\max\{Y_0,X\}-Y_0|^p]
% =\E[|\max\{0,X-Y_0\}|^p]
% \le \E[|X-Y_0|^p]\le \epsilon^p,
% \end{align*}
% where we have used Theorem 2.10 of \cite{BL19} again in the first equality and the last inequality is due to \eqref{eq-consth7}. This, combining with $F_Z\succeq_{\rm st} G_0$, yields $F_Z\in \mathcal B_{p}^+(G_0, \epsilon)$. Note that $Z\ge X$, and 
By the monotonicity of $\rho$, we have  $\rho^H(Z)\ge \rho^F(Z)$ as $H\succeq_{\rm st} F$.
Therefore, we have \eqref{appeq-1proofth7-1} holds, and thus,  we complete the proof.
\endproof

~~

{\bf Step  2: Establishing confidence bounds for fixed $\bb$.}  Similar to Lemma \ref{lm-CB1} in Step 2 for Theorem \ref{co-nonunion},  we can obtain the confidence bounds for fixed $\bb$ as stated in the following lemma.

\begin{lemma}\label{lm-thm8-3}
For $p\ge 1$ and $\eta\in(0,1)$, suppose that Assumptions \ref{ass:UB021} and \ref{ass:UB04} hold. Then for each $\bb\in\mathcal D$ and a monotone measure $\rho$,  we have
\begin{align*}
\p\left(\rho^{F^*}( f_{\bb}(\mathbf X))\le \sup_{F\in  {\cal B}_p(\widehat{F}_{\mathbf X, N},g^{-1}(\epsilon_{p,N}(\eta)))}\rho^{F}( f_{\bb}(\mathbf X))\right) \ge 1-\eta,
\end{align*}
where $\epsilon_{p,N}(\eta)$ is defined by \eqref{eq-epsilon}  with  constants $c_1,c_2$  depending only on $a$, $A$ and $p$.
\end{lemma}
\proof{Proof.}  
Denote by $\widehat{F}_{\bb,N}=\frac{1}{N}\sum_{i=1}^N\delta_{f_{\bb}(\widehat{\mathbf x}_i)}$ and $F^*_{\bb}=F_{f_{\bb}(\mathbf X^*)}$, where $(Y^*,\mathbf X^*)\sim F^*$. 
Note that by Lemma \ref{lm-setinclusion+}, we have 
\begin{align*} %\label{appeq-proofth7}
\mathcal B_{p|f_{\bb}}( \widehat{F}_{\mathbf X, N},\epsilon)\supseteq \mathcal B_{p}^+(\widehat{F}_{\bb,N},g(\epsilon)),
\end{align*}
where $\mathcal B_{p|f_{\bb}}( \widehat{F}_{\mathbf X, N},\epsilon)=\{F_{f_{\bb}(\mathbf X)}: F_{\mathbf X}\in \mathcal B_p(\widehat{F}_{\mathbf X, N},\epsilon)\}$ and % is defined by \eqref{appeq-HONE1}, and
$ 
\mathcal B_{p}^+(\widehat{F}_{\bb,N},g(\epsilon))=  \{F\in \mathcal B_{p}(\widehat{F}_{\bb,N},g(\epsilon)): F\succeq_{\rm st} \widehat{F}_{\bb,N}\}.
$  
Therefore,  we have 
%Below we verify that 
\begin{align}\label{appeq-1proof0th7}
 \sup_{F\in\mathcal B_{p|f_{\bb}}( \widehat{F}_{\mathbf X, N},\epsilon)}\rho^{F}(Z)\ge \sup_{F\in\mathcal B_{p}^+(\widehat{F}_{\bb,N}, g(\epsilon))}\rho^{F}(Z) =\sup_{F\in\mathcal B_{p}(\widehat{F}_{\bb,N}, g(\epsilon))}\rho^{F}(Z),
\end{align}
where the  equality follows from  Lemma \ref{lm-eq+} as $\rho$ is monotone. %\eqref{appeq-proofth7}.
It then follows that %Then, we have %the following chain of inequalities:
\begin{align*}%\label{eq-chainmon}
\p\left(\rho^{F^*}(f_{\bb}(\mathbf X))\le \sup_{F\in {\cal B}_p(\widehat{F}_{\mathbf X,N},\epsilon)}\rho^{F}(f_{\bb}(\mathbf X))\right)
&=\p\left(\rho^{F_{\bb}^*}(Z)\le \sup_{F\in {\mathcal B}_{p|f_{\bb}}(\widehat{F}_{\mathbf X, N},\epsilon)}  \rho^{F}(Z)\right)\notag\\[3pt]
%&\ge \p\left(\rho^{F_{\bb}^*}(Z)\le \sup_{F\in \mathcal B_{p}^+(\widehat{F}_{\bb,N}, g(\epsilon))}  \rho^{F}(Z)\right)\notag\\[3pt]
&\ge \p\left(\rho^{F_{\bb}^*}(Z)\le \sup_{F\in\mathcal B_{p}(\widehat{F}_{\bb,N}, g(\epsilon))}\rho^{F}( Z)\right)\notag\\[3pt]
&\ge \p\left( F^*_{\bb} \in  {{\cal B}}_p(\widehat{F}_{\bb,N} ,g(\epsilon))\right),
\end{align*}
where the first inequality follows from %\eqref{appeq-proofth7} and the second equality is due to 
\eqref{appeq-1proof0th7}.
Substituting $\epsilon: = \epsilon_N=g^{-1}(\epsilon_{p,N}(\eta))$ into the above inequalities, we have 
\begin{align*} 
\p\left(\rho^{F^*}(f_{\bb}(\mathbf X))\le \sup_{F\in {\cal B}_p(\widehat{F}_{\mathbf X,N},\epsilon_N)}\rho^{F}(f_{\bb}(\mathbf X))\right)
& \ge \p\left( F^*_{\bb} \in  {{\cal B}}_p(\widehat{F}_{\bb,N} ,\epsilon_{p,N}(\eta))\right)\ge 1-\eta.
\end{align*}
where the last inequality follows from  Lemma \ref{lm-CBWD}, similar to the proof of Lemma \ref{lm-CB1}. This completes the proof.
\endproof

{\bf Step 3:  Union bounds for $\bb \in {\cal D}$.} Now we are ready to prove the last step, union bounds, for Theorem \ref{co-nonunion1}.

{\bf Proof of Theorem \ref{co-nonunion1}.} The proof of union bounds based on confidence bounds is similar to that  of Theorem \ref{co-nonunion} by applying the covering number arguments. 
 \endproof

\subsection{Proof of Theorem \ref{co-nonunion2}} \label{app:th10}

{

{\it Proof of Theorem \ref{co-nonunion2}.} 
Note that  for $Z_1,Z_2\in L^1,$
\begin{align*}
\rho(Z_1)\le \sup_{\E[|V|]\le \E[|Z_1-Z_2|]}\rho(Z_2+V)\le \sup_{|V|\le \lambda\E[|Z_1-Z_2|]}\rho(Z_2+V)\le \rho(Z_2)+M\lambda\E[|Z_1-Z_2|],
\end{align*}
where the second inequality follows from Assumption
\ref{ass:order1} and the last inequality is due to the Lipschitz continuity of $\rho$ with respect to $L^\infty$-norm, which implies
 $
|\rho(Z_2 +V)-\rho(Z_2)|\le M\esssup|V|$.
  By symmetry, we conclude that 
\begin{align*}
|\rho(Z_1)-\rho(Z_2)|\le \lambda M \E[|Z_1-Z_2|],~~\forall Z_1,Z_2\in L^1.
\end{align*}
Hence, Assumption \ref{assum-4} holds with $k=1$ and $M$ being replaced by $\lambda M$. 
% It follows from Theorem \ref{co-nonunion1} with type-$1$ Wasserstein ball that 
% \begin{align}\label{eq-type1UB}
% 		\p\left(\rho^{F^*}(f_{\bb}(\mathbf X))\le \sup_{F\in {\cal B}_1(\widehat{F}_N,g^{-1}(\epsilon_{1,N}))}\rho^{F}(f_{\bb}(\mathbf X))+\tau_N,~~\forall~\bm \beta\in\mathcal D \right) \ge 1-\eta-\frac1N,
% \end{align}
% where $\tau_N= {\lambda M[2a_1\E^{F^*}[\|\mathbf X\|]+a_1({\rm\mathbb{V}ar}^{F^*}(\|\mathbf X\|))^{1/2}+a_1g^{-1}(\epsilon_{1,N})+2a_2]}/{N}$ and $\epsilon_{1,N}$ is defined in Theorem \ref{co-nonunion}.
%Denote by $\widehat{F}_{\bb,N}=\frac{1}{N}\sum_{i=1}^N\delta_{f_{\bb}(\widehat{\mathbf x}_i)}$. 
%Similar to \eqref{eq-th2chain} in the proof of Theorem \ref{co-Lnonunion}, we have
Further, let $Z_N\sim \widehat{F}_{\bb,N}$, and it holds for any $\epsilon\ge 0$ that 
\begin{align}\label{eq-morethan}
\sup_{\mathcal B_{\infty}(\widehat{F}_{\bb,N},\epsilon)}\rho^F(Z)
=\sup_{ |Y-Z_N|\le  \epsilon} \rho(Y) %\notag\\[3pt]
&=\sup_{ |V|\le  \epsilon} \rho(Z_N+V)\notag\\[3pt]
&\ge \sup_{ \E[|V|]\le \epsilon/\lambda} \rho(Z_N+V)\notag\\[3pt]
&=\sup_{ \E[|Y-Z_N|]\le \epsilon/\lambda} \rho(Y)
=\sup_{\mathcal B_{1}(\widehat{F}_{\bb,N},\epsilon/\lambda)}\rho^F(Z),
\end{align}
where the inequality is due to Assumption \ref{ass:order1}.
% Similar to the proof of Theorem \ref{co-Lnonunion}, we have that Assumption \ref{ass:order1} implies
% \begin{align}\label{eq-morethan}
% \sup_{\mathcal B_{\infty}(\widehat{F}_{\bb,N},\lambda\epsilon)}\rho^F(Z)
% \ge \sup_{\mathcal B_{1}(\widehat{F}_{\bb,N},\epsilon)}\rho^F(Z).
% \end{align}
%Letting $F_{\bb}^*=F_{f_{\bb}(\mathbf X^*)}$ with $\mathbf X^*\sim F^*$, 
Then,
we have the following chain of inequalities:
\begin{align}\label{eq-chainmon}
\p\left(\rho^{F^*}(f_{\bb}(\mathbf X))\le \sup_{F\in {\cal B}_p(\widehat{F}_{\mathbf X,N},\epsilon)}\rho^{F}(f_{\bb}(\mathbf X))\right)
&\ge \p\left(\rho^{F^*}(f_{\bb}(\mathbf X))\le \sup_{F\in {\cal B}_{\infty}(\widehat{F}_{\mathbf X,N},\epsilon)}\rho^{F}(f_{\bb}(\mathbf X))\right)\notag\\[3pt]
&=\p\left(\rho^{F_{\bb}^*}(Z)\le \sup_{F\in \mathcal B_{\infty|f_{\bb}}(\widehat{F}_{\mathbf X, N},\epsilon)}  \rho^{F}(Z)\right)\notag\\[3pt]
&\ge \p\left(\rho^{F_{\bb}^*}(Z)\le \sup_{F\in \mathcal B_{\infty}^+(\widehat{F}_{\bb,N}, g(\epsilon))}  \rho^{F}(Z)\right)\notag\\[3pt]
&= \p\left(\rho^{F_{\bb}^*}(Z)\le \sup_{F\in\mathcal B_{\infty}(\widehat{F}_{\bb,N}, g(\epsilon))}\rho^{F}( Z)\right)\notag\\[3pt]
&\ge\p\left(\rho^{F_{\bb}^*}(Z)\le \sup_{F\in\mathcal B_1(\widehat{F}_{\bb,N}, \, g(\epsilon)/\lambda)}\rho^{F}( Z)\right)\notag\\[3pt]
&\ge \p\left( F^*_{\bb} \in  {{\cal B}}_1(\widehat{F}_{\bb,N} ,\, g(\epsilon)/\lambda)\right),
\end{align}
where the first inequality is due to ${\mathcal B}_{\infty}(\widehat{F}_{\mathbf X,N},\epsilon_N)\subseteq {\mathcal B}_p(\widehat{F}_{\mathbf X,N},\epsilon_N)$, the second inequality follows from Lemma \ref{lm-setinclusion+}  which states that $\mathcal B_{\infty}^+(\widehat{F}_{\bb,N},g(\epsilon))\subseteq \mathcal B_{\infty|f_{\bb}}(\widehat{F}_{\mathbf X, N},\epsilon)$, the second equality is due to Lemma \ref{lm-eq+}, and the third inequality follows from \eqref{eq-morethan}. For $\eta\in(0,1)$, combining \eqref{eq-chainmon} with Lemma \ref{lm-CBWD} yields the following confidence bound:
\begin{align*}
\p\left(\rho^{F^*}(f_{\bb}(\mathbf X))\le \sup_{F\in {\cal B}_p(\widehat{F}_{\mathbf X,N},g^{-1}(\lambda\epsilon_{1,N}(\eta)))}\rho^{F}(f_{\bb}(\mathbf X))\right)
\ge \p\left( F^*_{\bb} \in  {{\cal B}}_1(\widehat{F}_{\bb,N} ,\epsilon_{1,N}(\eta))\right)\ge 1-\eta,
\end{align*}
where $\epsilon_{1,N}(\eta)$ is defined by \eqref{eq-epsilon}.
Note that Assumptions \ref{ass:UB2} and \ref{ass:UB03} hold, Assumption \ref{ass:UB02} holds with $r=1$, and Assumption \ref{assum-4} holds with $k=1$ and $M$ being replaced by $\lambda M$.
The rest proof is similar to that  of Theorem \ref{co-nonunion} by applying the covering number arguments. 
% Recall that $\epsilon_N=g^{-1}(\lambda Lg^{-1}(\epsilon_{1,N}))$.
% It holds that 
% \begin{align}\label{eq-eqchains}
% \sup_{F\in\mathcal B_p(\widehat{F}_N,\epsilon_N)}\rho^F(f_{\bb}(\mathbf X))
% &\ge \sup_{F\in\mathcal B_{\infty}(\widehat{F}_N,\epsilon_N)}\rho^F(f_{\bb}(\mathbf X))\notag\\
% &\ge \sup_{\mathcal B_{\infty}^+(\widehat{F}_{\bb,N},g(\epsilon_N))}\rho^F(Z)\notag\\
% &=\sup_{\mathcal B_{\infty}(\widehat{F}_{\bb,N},g(\epsilon_N))}\rho^F(Z)\notag\\
% &\ge \sup_{\mathcal B_{1}(\widehat{F}_{\bb,N},g(\epsilon_N)/\lambda)}\rho^F(Z)\notag\\
% &=\sup_{\mathcal B_{1}(\widehat{F}_{\bb,N},L g^{-1}(\epsilon_{1,N}))}\rho^F(Z)\notag\\
% &\ge  \sup_{F\in\mathcal B_{1}(\widehat{F}_N,g^{-1}(\epsilon_{1,N}))}\rho^F(f_{\bb}(\mathbf X)),
% \end{align}
% where the first inequality is due to $\mathcal B_{\infty}(\widehat{F}_N,\epsilon_N)\subseteq \mathcal B_p(\widehat{F}_N,\epsilon_N)$, the second inequality follows from Lemma \ref{lm-setinclusion+}, the first equality is due to Lemma \ref{lm-eq+}, the third inequality follows from \eqref{eq-morethan}, and we have used Lemma \ref{lm:insetCon} in the last inequality. Combining \eqref{eq-type1UB} and \eqref{eq-eqchains} yields the desired result.
% % The rest proof is similar to the proof of Theorem \ref{co-Lnonunion}, where Theorem \ref{co-nonunion1} will be applied.
\endproof
}

\subsection{A sufficient condition for Assumption \ref{ass:UB2}}\label{sec:sufcoU2}

Below we present a sufficient condition for Assumption \ref{ass:UB2} that is more straightforward to verify.

\begin{proposition}\label{prop-lighttail}
Assumption \ref{ass:UB2} holds if
  there exist $b_1,b_2\ge 0$,  $s\in[1,p]$ and $a>p$ such that 
  \begin{align}\label{eq:Assumpt2-suff}
|f_{\bb}(\mathbf x)|\le b_1\|\mathbf x\|^s+b_2,  ~~\forall \mathbf x\in\R^n,~\bb\in\mathcal D,
\end{align}
and
  \begin{align}\label{eq:Assumpt2-suff-2}
\E^{F^*}[\exp(2^{a-1}b_1^a\|\mathbf X\|^{as})]<\infty.
\end{align}
% the following two conditions hold:
% \begin{itemize}
% \item[(a)] There exist $b_1,b_2\ge 0$ and $s\in[1,p]$ such that $|f_{\bb}(\mathbf x)|\le b_1\|\mathbf x\|^s+b_2$ for all $\mathbf x\in\R^n$ and $\bb\in\mathcal D$;

% \item[(b)] There exists $a>p$ such that $\E^{F^*}[\exp(2^{a-1}b_1^a\|\mathbf X\|^{as})]<\infty$, where $b_1$ and $s$ are defined in (a).
% \end{itemize}
\end{proposition}

{\it Proof.} 
Note that %(a) and (b) together imply the chain of inequalities as follows:
\begin{align*}
\sup_{\bb\in\mathcal D}\E^{F^*}[\exp(|f_{\bb}(\mathbf X)|^a)]
&\le \sup_{\bb\in\mathcal D}\E^{F^*}[\exp(|b_1\|\mathbf x\|^s+b_2)^a]\\[3pt]
&\le \sup_{\bb\in\mathcal D}\E^{F^*}[\exp(2^{a-1}b_1^a\|\mathbf x\|^{as}+2^{a-1}b_2^a)]\\[3pt]
&=\exp(2^{a-1}b_2^a)\sup_{\bb\in\mathcal D}\E^{F^*}[\exp(2^{a-1}b_1^a\|\mathbf x\|^{as})]<\infty,
\end{align*}
which confirms Assumption \ref{ass:UB2}. 
This completes the proof.
\endproof

Proposition \ref{prop-lighttail} shows that Assumption \ref{ass:UB2} can be verified through a combination of a union-bound condition (\ref{eq:Assumpt2-suff}) on $\{f_{\bb}\}_{\bb\in\mathcal D}$ and a light-tail condition (\ref{eq:Assumpt2-suff-2}) on the data-generating distribution $F^*$. %, where the first condition requires verification and the second one can be assumed to hold directly. 
It is worth noting that (\ref{eq:Assumpt2-suff})  holds if $f_{\bb_0}(\mathbf x)\equiv0$ when $\bb_0=\mathbf 0\in \R^m$, Assumption \ref{ass:UB02} holds on $\mathcal D\cup\{\bb_0\}$ and $U_{\mathcal D}:=\sup_{\bb\in\mathcal D}\|\bb\|_{\mathcal D}<\infty$. %, then  Assumption \ref{ass:UB2} holds.
%Suppose now that $f_{\bb_0}(\mathbf x)=0$ for all $\mathbf x\in\R^n$, Assumption \ref{ass:UB02} holds on $\mathcal D\cup \bb_0$ and $U_{\mathcal D}=\sup_{\bb\in\mathcal D}\|\bb\|_{\mathcal D}<\infty$. Using
To see this, note that by Assumption \ref{ass:UB02}, we have
\begin{align*}
|f_{\bb}(\mathbf x)|\le (a_1\|\mathbf x\|^r+a_2)\|\bb\|_{\mathcal D}\le a_1U_{\mathcal D}\|\mathbf x\|^r+a_2U_{\mathcal D},~~\forall \mathbf x\in\R^n,~\bb\in\mathcal D,
\end{align*}
and thus, \eqref{eq:Assumpt2-suff} holds with $b_1=a_1U_{\mathcal D}$, $b_2=a_2U_{\mathcal D}$ and $s=r$.% By Proposition \ref{prop-lighttail}, we have   Assumption \ref{ass:UB2} holds.
% Furthermore,
% Proposition \ref{prop-lighttail} demonstrates that 
% if $f_{\bb_0}(\mathbf x)=0$ for $\bb_0=\mathbf 0\in\R^m$ and $\mathcal D$ is bounded, then verifying Assumption \ref{ass:UB02} alone suffices to derive the union-bound condition. 

\bibliographystyle{nonumber}

\begin{thebibliography}{10}
\small

%\bibitem[\protect\citeauthoryear{Acerbi}{Acerbi}{2002}]{A02}
%Acerbi, C. (2002). Spectral measures of risk: A coherent representation of subjective risk aversion. \emph{Journal of Banking and Finance}, \textbf{26}(7), 1505--1518.


\bibitem[\protect\citeauthoryear{Abu-Mostafa et al.}{2012}]{AML12}
Abu-Mostafa, Y. S., Magdon-Ismail, M. and Lin, H. T. (2012). \emph{Learning from data}. New York: AMLBook.



\bibitem[\protect\citeauthoryear{Aolarite et al.}{2022}]{ALCD02} Aolaritei, L., Lanzetti, N., Chen, H., and D\"orfler, F. (2022). Distributional uncertainty propagation via optimal transport. \emph{arXiv:2205.00343.} 

%\bibitem[\protect\citeauthoryear{Artzner et al.}{Artzner et al.}{1999}]{ADEH99}
%{Artzner, P., Delbaen, F., Eber, J.-M. and Heath, D.} (1999). Coherent measures of risk. \emph{Mathematical Finance}, \textbf{9}(3), 203--228.



%\bibitem[\protect\citeauthoryear{Bartl et al.}{2020}]{BDT20}
%Bartl, D., Drapeau, S. and Tangpi, L. (2020). Computational aspects of robust optimized certainty equivalents and option pricing. \emph{ Mathematical Finance}, \textbf{30}(1), 287--309.

\bibitem[\protect\citeauthoryear{Ban and Rudin}{2018}]{BR18} Ban, G. -Y. and Rudin, C. (2018) The big data newsvendor:practical insights from machine learning. \emph{Operations Research}, \textbf{67}(1), 90--108.

\bibitem[\protect\citeauthoryear{Bartl et al.}{2020}]{BDSW20} Bartl, D., Drapeau, S., Obloj, J. and Wiesel, J. (2020). Robust uncertainty sensitivity analysis. \emph{arXiv:2006.12022}.

\bibitem[\protect\citeauthoryear{Bawa}{1975}]{B75}
Bawa, V. S. (1975). Optimal rules for ordering uncertain prospects.
\emph{Journal of Financial Economics}, \textbf{2}(1), 95--1.

%\bibitem[\protect\citeauthoryear{Basak and Shapiro}{2001}]{BS01} Basak, S. and Shapiro, A. (2001). Value-at-Risk based risk management: Optimal policies and asset prices. \emph{Review of Financial Studies}, \textbf{14}(2), 371--405.

%%%Portfolio selection has a long history since  \cite{M52}; see \cite{BS01} for portfolio selection with VaR.

%
%\bibitem[\protect\citeauthoryear{Azenes and Effixis}{2018}]{AE18}
%Azenes and Effixis (2018). \emph{Finma's Swiss Solvency Test 2019: Major Changes in the
	%Market Risk Standard Model}.
%\texttt{https://effixis.ch/downloads/2018-10-31-finma-new-sst.pdf}
%


%\bibitem[\protect\citeauthoryear{Atkinson and Bourguignon}{2000}]{AB00}
%Atkinson, A.B. and Bourguignon, F. (2000).
%\textit{Handbook of Income Distribution}, vol.~1.
%Elsevier, Amsterdam.
%
%
%\bibitem[\protect\citeauthoryear{Atkinson and Bourguignon}{2015}]{AB15}
%Atkinson, A.B. and Bourguignon, F. (2015).
%\textit{Handbook of Income Distribution}, vol.~2.
%Elsevier, Amsterdam.

%
%\bibitem[\protect\citeauthoryear{{Basel Committee on Banking
		%  Supervision}}{{BCBS}}{2013}]{BASEL13}
%{BCBS} (2013).
%  {\em Consultative Document October 2013. Fundamental review of the
	%  trading book: A revised market risk framework.}
% Basel Committee on Banking
%  Supervision. Basel: Bank for International Settlements.






% \bibitem[\protect\citeauthoryear{Ben-Tal et al.}{Ben-Tal et al.}{2013}]{B13}
% Ben-Tal, A., Den Hertog, D., De Waegenaere, A., Melenberg, B. and Rennen, G. (2013). Robust solutions of optimization problems affected by uncertain probabilities. \emph{Management Science}, \textbf{59}(2), 341--357.

%
%
%\bibitem[\protect\citeauthoryear{Ben-Tal and Nemirovski}{2001}]{BN01}
%Ben-Tal, A. and Nemirovski, A. (2001) . \emph{Lectures on Modern Convex Optimization: Analysis, Algorithms, and Engineering Applications}. SIAM, Philadelphia, USA.
%
%
%\bibitem[\protect\citeauthoryear{Ben-Tal and Teboulle}{2007}]{BT07}
%Ben-Tal, A. and Teboulle, M. (2007). An old-new concept of convex risk measures: the optimized certainty equivalent. \emph{Mathematical Finance}, \textbf{17}(3), 449--476.


%\bibitem[\protect\citeauthoryear{Bickel and Freedman}{1981}]{BF81}
%Bickel, P. J. and Freedman, D. A. (1981). Some asymptotic theory for the bootstrap. \emph{The annals of statistics}, \textbf{9}(6), 1196--1217.




\bibitem[\protect\citeauthoryear{Blanchet et~al.}{2022}]{BCZ22}
Blanchet, J., Chen, L. and Zhou,  X. (2022). Distributionally robust mean-variance portfolio selection with Wasserstein distances. \emph{Management Science}, \textbf{68}(9), 6382--6410.

\bibitem[\protect\citeauthoryear{Blanchet and Kang}{2021}]{BK21}		
Blanchet, J. and Kang, Y. (2021). Sample out-of-sample inference based on Wasserstein distance. \emph{Operations Research}, {\bf 69}(3), 985--1013.

\bibitem[\protect\citeauthoryear{Blanchet, Kang and Murthy}{Blanchet et~al.}{2019}]{BKM19} Blanchet, J., Kang, Y. and Murthy, K. (2019). Robust Wasserstein profile inference and applications to machine learning. {\em Journal of Applied Probability}, {\bf 56}(3), 830--857.

%\bibitem[\protect\citeauthoryear{Blanchet and Murthy}{Blanchet and Murthy}{2019}]{BM19} Blanchet, J. and Murthy, K. (2019). Quantifying distributional model risk via optimal transport. {\em Mathematics of Operations Research}, {\bf 44}(2), 565--600.


% \bibitem[\protect\citeauthoryear{Blanchet and Murthy}{Blanchet and Murthy}{2019}]{BM19}
% Blanchet, J. and Murthy, K. (2019). Quantifying distributional model risk via optimal transport. \emph{Mathematics of Operations Research}, \textbf{44}(2), 565--600.

\bibitem[\protect\citeauthoryear{Blanchet and Murthy}{Blanchet et al.}{2022}]{BMS22}
Blanchet, J. Murthy, K. and Si, N. (2022).
Confidence regions in Wasserstein distributionally robust estimation. \emph{Biometrika}, \textbf{109}(2), 295--315.



% \bibitem[\protect\citeauthoryear{Bolley et al.}{Bolley et al.}{2007}]{BGV07}
% Bolley, F., Guillin, A. and Villani, C. (2007). Quantitative concentration inequalities for empirical measures on non-compact spaces. \emph{Probability Theory and Related Fields}, \textbf{137}, 541--593.



\bibitem[\protect\citeauthoryear{Breiman}{Breiman}{1996}]{B96}
Breiman, L. (1996). Bias, variance, and arcing classifiers (Technical Report 460). \emph{Statistics Department, University of California}.



%\bibitem[\protect\citeauthoryear{Borwein and Lewis}{Borwein and Lewis}{1992}]{BL92}
%Borwein, J. M. and Lewis, A. S. (1992). Partially finite convex programming, part I: Quasi relative interiors and duality theory. \emph{Mathematical Programming}, 57(1), 15-48.

\bibitem[\protect\citeauthoryear{Cai et al.}{Cai et al.}{2023}]{CLM23}	
Cai, J., Li, J. Y. M. and Mao, T. (2023). Distributionally robust optimization under distorted expectations. \emph{Operations Research}, forthcoming.

\bibitem[\protect\citeauthoryear{Chen and  Paschalidis}{Chen and  Paschalidis}{2018}]{CP18}	
Chen, R. and Paschalidis, I. C. (2018). A Robust Learning Approach for Regression Models Based
on Distributionally Robust Optimization. \emph{Journal of Machine Learning Research}, \textbf{19}(1), 1--48.

\bibitem[\protect\citeauthoryear{Chen et al.}{2011}]{CSS11}			
Chen, L., He, S. and Zhang, S. (2011). Tight Bounds for Some Risk Measures, with Applications to Robust Portfolio
Selection.
\emph{Operations Research}, \textbf{59}(4), 847--865.	

\bibitem[\protect\citeauthoryear{Chew et al.}{1987}]{CKS87} Chew, H. C., Karni, E., and Safra, Z. (1987). Risk aversion in the theory of expected utility with rank dependent probabilities. \emph{Journal of Economic theory}, \textbf{42}(2), 370--381.
%
%
%%\bibitem[\protect\citeauthoryear{Bernard et al.}{2014}]{BJW14}
%%{Bernard, C., Jiang, X. and Wang, R.} (2014).  Risk aggregation with dependence uncertainty. \emph{Insurance: Mathematics and Economics}, \textbf{54}, 93--108.
%
%
%%\bibitem[\protect\citeauthoryear{Bellini and Bignozzi}{Bellini and Bignozzi}{2015}]{BB15}
%%Bellini, F. and Bignozzi, V. (2015).
%%On elicitable risk measures.
%%\textit{Quantitative Finance}, \textbf{15}(5), 725--733.





%\bibitem[\protect\citeauthoryear{Chen et al.}{Chen et al.}{2011}]{C11} Chen, L., He, S. and Zhang, S. (2011). Tight bounds for some risk measures, with applications to robust portfolio selection. \emph{Operations Research}, \textbf{59}, 847--865.




%\bibitem[\protect\citeauthoryear{Cherny  and Madan}{Cherny  and Madan}{2009}]{CM09}
%Cherny, A. S. and Madan, D. (2009). New measures for performance evaluation. \emph{Review of Financial Studies}, \textbf{22}(7), 2571--2606.



%\bibitem[\protect\citeauthoryear{Cont et al.}{Cont et al.}{2010}]{C10}
%Cont, R., Scandolo, R. and Scandolo, G. (2010). Robustness and sensitivity analysis of risk measurement procedures. \emph{Quantitative Finance}. \textbf{10}, 593--606.




%\bibitem[\protect\citeauthoryear{Delage et al.}{Delage et al.}{2019}]{DKW19}
%Delage, E., Kuhn, D. and Wiesemann, W. (2019). ``Dice"-sion-making under uncertainty: When can a random decision reduce risk? \emph{Management Science}, \textbf{65}(7), 3282--3301.
%
%
%\bibitem[\protect\citeauthoryear{Delage and Ye}{Delage and Ye}{2010}]{DY10}
%Delage, E. and Ye, Y. (2010). Distributionally robust optimization under moment uncertainty with
%application to data-driven problems. \emph{Operations Research}, \textbf{58}(3): 595--612.



%\bibitem[\protect\citeauthoryear{Dentcheva et al.}{Dentcheva et al.}{2010}]{DPR10}
%Dentcheva, D., Penev, S., Ruszczynski, A. (2010). Kusuoka representation
%of higher order dual risk measures. \emph{Annals of Operations Research}. \textbf{181}(1):
%325--335.

%\bibitem[\protect\citeauthoryear{Denuit et al.}{Denuit et al.}{2005}]{D05}
%{Denuit, M., Dhaene, J., Goovaerts, M.J., Kaas, R. } (2005). \emph{Actuarial Theory for Dependent Risks}.  Wiley: Chichester, UK.

%\bibitem[\protect\citeauthoryear{Cherny  and Madan}{Cherny  and Madan}{2009}]{CM09}
%Cherny, A. S. and Madan, D. (2009). New measures for performance evaluation. \emph{Review of Financial Studies}, \textbf{22}(7), 2571--2606.

%\bibitem[\protect\citeauthoryear{Embrechts et al.}{2018}]{ELW18}
%Embrechts, P., Liu, H. and Wang, R. (2018). Quantile-based risk sharing. \emph{Operations Research}, \textbf{66}(4), 936--949.
%
%
%


%\bibitem[\protect\citeauthoryear{Embrechts and Puccetti}{Embrechts and Puccetti}{2006}]{EP06}
%Embrechts, P. and Puccetti, G. (2006). Bounds for functions of multivariate risks.
%\emph{Journal of Multivariate Analysis}, \textbf{97}(2), 526--547.

%\bibitem[\protect\citeauthoryear{Embrechts et al.}{Embrechts et al.}{2013}]{EPR13}  Embrechts, P.,  Puccetti, G. and R\"{u}schendorf, L.  (2013). Model uncertainty and VaR aggregation. {\em Journal of Banking and Finance}, \textbf{37}(8), 2750--2764.

%\bibitem[\protect\citeauthoryear{Embrechts et al.}{Embrechts et al.}{2022}]{ESW21} {Embrechts, P., Schied, A. and Wang, R.} (2022). Robustness in the optimization of risk measures. \emph{Operations Research}, forthcoming.


%\bibitem[\protect\citeauthoryear{Embrechts et al.}{Embrechts et al.}{2021}]{EMWW21}
% {Embrechts, P., Mao, T., Wang, Q. and Wang, R.} (2021). Bayes risk, elicitability, and the Expected Shortfall. \emph{Mathematical Finance}, \textbf{31}, 1190--1217.

%\bibitem[\protect\citeauthoryear{Embrechts et al.}{Embrechts et al.}{2013}]{EPR13}  Embrechts, P.,  Puccetti, G. and R\"{u}schendorf, L.  (2013). Model uncertainty and VaR aggregation. {\em Journal of Banking and Finance}, \textbf{37}(8), 2750--2764.

%\bibitem[\protect\citeauthoryear{Esfahani and Kuhn}{Esfahani and Kuhn}{2018}]{EK18}
%Esfahani, P. M. and Kuhn, D. (2018). Data-driven distributionally robust optimization using the Wasserstein metric: performance guarantees and tractable reformulations. \emph{Mathematical Programming}, \textbf{171}(1), 115--166.

%\bibitem[\protect\citeauthoryear{Dhaene et al.}{2002}]{D02}
%Dhaene, J., Denuit, M., Goovaerts, M.J., Kaas, R. and Vyncke, D. (2002). The concept of comonotonicity in actuarial science and finance: Theory.
%\emph{Insurance: Mathematics and Economics}. \textbf{31}, 3--33.



\bibitem[\protect\citeauthoryear{Chu et al.}{2024}]{CLT24}
Chu, H., Lin, M. and Toh, K. C. (2024). Wasserstein distributionally robust optimization and its tractable regularization formulations. \emph{arXiv:2402.03942.}


% \bibitem[\protect\citeauthoryear{Devroye and Lugosi}{2001}]{DL01}
% Devroye, L. and Lugosi, G. (2001). \emph{Combinatorial methods in density estimation}. Springer Science $\&$ Business Media.






\bibitem[\protect\citeauthoryear{Drucker et al.}{1997}]{DBKSV97}
Drucker, H., Burges, C. J. C., Kaufman, L., Smola, A. and Vapnik, V. (1997). Support vector regression machines.
\emph{Advances in Neural Information Processing Systems}, \textbf{28}(7), 779--784.


\bibitem[\protect\citeauthoryear{Esfahani and Kuhn}{Esfahani and Kuhn}{2018}]{EK18}
Esfahani, P. M. and Kuhn, D. (2018). Data-driven distributionally robust optimization using the Wasserstein metric: Performance guarantees and tractable reformulations. \emph{Mathematical Programming}, \textbf{171}(1), 115--166.


%\bibitem[\protect\citeauthoryear{Fan}{Fan}{1952}]{F52}
%Fan, K. (1952). Minimax theorems. \emph{Proceedings of the National Academy of Sciences of the United States of America}, \textbf{39}(1), 42--47.

% \bibitem[\protect\citeauthoryear{Fan}{Fan}{1953}]{F53}
% Fan, K. (1953). Minimax theorems. \emph{Proceedings of the National Academy of Sciences of the United States of America}, \textbf{39}(1), 42.

\bibitem[\protect\citeauthoryear{Fishburn}{1977}]{F77}
Fishburn, P. C. (1977). Mean-risk analysis with risk associated with below target returns.
\emph{American Economic Review}, \textbf{67}(2), 116--126.

%\bibitem[\protect\citeauthoryear{Filipovi\'{c} and Svindland}{Filipovi\'{c} and Svindland}{2012}]{FS12}
%Filipovi\'{c}, D. and Svindland, G. (2012). The canonical model space for law-invariant convex risk measures is $L^1$. \emph{Mathematical Finance}, \textbf{22}(3), 585--589.


%
%
% \bibitem[\protect\citeauthoryear{F\"ollmer and Schied}{F\"ollmer and Schied}{2002}]{FS02}
% F\"{o}llmer, H. and Schied, A. (2002).
% Convex measures of risk and trading constraints
% \textit{Finance and Stochastics}, \textbf{6}(4), 429--447.

%\bibitem[\protect\citeauthoryear{F\"ollmer and Schied}{F\"ollmer and Schied}{2016}]{FS16}
%    F\"ollmer, H.~and Schied, A.~(2016). \emph{Stochastic Finance. An Introduction in Discrete Time}. (Fourth Edition.) {Walter de Gruyter, Berlin}.


%\bibitem[\protect\citeauthoryear{F\"ollmer and Schied}{F\"ollmer and Schied}{2016}]{FS16} F\"ollmer, H.~and Schied, A.~(2016). \emph{Stochastic Finance. An Introduction in Discrete Time}. Fourth Edition.  {Walter de Gruyter, Berlin}.


% \bibitem[\protect\citeauthoryear{Fournier and Guillin}{Fournier and Guillin}{2015}]{FG15}
% Fournier, N. and Guillin, A. (2015). On the rate of convergence in Wasserstein distance of the empirical measure. \emph{Probability Theory and Related Fields}, \textbf{162}(3), 707--738.

% \bibitem[\protect\citeauthoryear{Freund and Schapire}{1997}]{FS97}
% Freund, Y. and Schapire, R. E. (1997). A decision-theoretic generalization of on-line learning and an application
% to boosting. \emph{Journal of Computer and System Sciences}, \textbf{55}, 119--139.

\bibitem[\protect\citeauthoryear{Gao}{Gao}{2022}]{G22}
Gao, R. (2022). Finite-Sample Guarantees for Wasserstein Distributionally Robust Optimization: Breaking the Curse of Dimensionality. \emph{Operations Research}, \textbf{71}(6), 2291--2306.


\bibitem[\protect\citeauthoryear{Gao et al.}{Gao et al.}{2017}]{GCK17}
Gao, R., Chen, X. and Kleywegt, A. J. (2017). Distributional robustness and regularization in statistical learning. \emph{arXiv:1712.06050}.

\bibitem[\protect\citeauthoryear{Gao et al.}{Gao et al.}{2022}]{GCK22}
Gao, R., Chen, X. and Kleywegt, A. J. (2022). Wasserstein Distributionally Robust Optimization and Variation Regularization. \emph{Operations Research}, \textbf{72}(3), 1177--1191.


% \bibitem[\protect\citeauthoryear{Gao and Kleywegt}{Gao and Kleywegt}{2016}]{GK16}
% Gao, R. and Kleywegt, A. J. (2016). Distributionally robust stochastic optimization with Wasserstein distance. \emph{Mathematics of Operations Research}, \textbf{48}(2), 603--655.



% \bibitem[\protect\citeauthoryear{Gneiting}{2011}]{G11}
% Gneiting T. (2011). Making and evaluating point forecasts. {\it Journal of the American Statistical Association}, {\bf 106}(494), 746--762.

%\bibitem[\protect\citeauthoryear{Guan et al.}{Guan et al.}{2022}]{GJW22}
%Guan, Y., Jiao, Z. and Wang, R. (2022). A reverse Expected Shortfall optimization formula. \emph{arXiv preprint arXiv}:2203.02599.

%\bibitem[\protect\citeauthoryear{Ghaoui et al.}{2003}]{G03}
%Ghaoui, L. M. E., Oks, M. and Oustry, F. (2003). Worst-case value-at-risk and robust portfolio optimization. \emph{Operations Research}, \textbf{51}, 543--556.

%\bibitem[\protect\citeauthoryear{Glasserman and Xu}{Glasserman and Xu}{2014}]{GX14} Glasserman, P. and Xu, X. (2014). Robust risk measurement and model risk. {\em Quantitative Finance}, {\bf 14}(1), 29--58.

%\bibitem[\protect\citeauthoryear{Hadar and Russell}{1969}]{HR69}
%Hadar, J. and Russell, W. (1969). Rules for ordering uncertain prospects. \emph{American Economic Review},
%\textbf{59}(1), 25--34.


%
%\bibitem[\protect\citeauthoryear{Frittelli and Rosazza Gianin}{Frittelli and Rosazza Gianin}{2002}]{FRG02}
%Frittelli, M. and Rosazza Gianin, E. (2002).
%Putting order in risk measures
%\textit{Journal of Banking and Finance}, \textbf{26}(7), 1473--1486.

%\bibitem[\protect\citeauthoryear{Han et al.}{2021}]{H21}
%Han, X., Wang, B., Wang, R. and Wu, Q. (2021). Risk concentration and the mean-Expected Shortfall criterion. \emph{arXiv}: 2108.05066.



%\bibitem[\protect\citeauthoryear{Hansen and Sargent}{Hansen and Sargent}{2001}]{HS01}
%Hansen, L. P. and Sargent, T. J.  (2001). Robust control and model uncertainty.
%\newblock {\em American Economic Review}. \textbf{91}(2), 60--66.

%{\color{red}\bf
	%\bibitem[\protect\citeauthoryear{Hara, Ok and Riella.}{Hara et~al.}{2019}]{HOR19}
	%Hara, K., Ok, E.A. and Riella, G. (2019)
	%Coalitional expected multi-utility theory.
	%\textit{Econometrica}, \textbf{87}(3), 933--980.
	%}


%\bibitem[\protect\citeauthoryear{Hansen et al.}{Hansen et al.}{2001}]{HTFF09}
%Hastie, T., Tibshirani, R., Friedman, J. H. and Friedman, J. H. (2009). \emph{The elements of statistical learning: data mining, inference, and prediction} (Vol. 2, pp. 1--758). New York: springer.



%\bibitem[\protect\citeauthoryear{He and Peng}{2018}]{HP18}
%He, X. and Peng, X. (2018).
%Surplus-invariant, law-invariant, and conic acceptance sets must be  the sets induced by
%Value-at-Risk. \emph{Operations Research},  \textbf{66}(5), 1268--1276.

\bibitem[\protect\citeauthoryear{Gotoh and Uryasev}{2017}]{GU17}
Gotoh, J. and Uryasev, S. (2017). Support vector machines based on convex risk functions
and general norms. \emph{Annals of Operations Research}, \textbf{249}, 301--328.

% \bibitem[\protect\citeauthoryear{Hu and Hong}{2018}]{HH18}
% Hu, Z. and Hong, L. J. (2018). Kullback-Leibler divergence constrained distributionally robust optimization. \emph{Available at Optimization Online}, 1695--1724.


%\bibitem[\protect\citeauthoryear{Huang et al.}{2020}]{HTZ20}
%    Huang, R. J., Tzeng, L. Y.  and  Zhao, L. (2020).  Fractional degree stochastic dominance. \emph{Management Science}, \textbf{66}(10), 4630--4647.

% \bibitem[\protect\citeauthoryear{Jiang and Guan}{2016}]{JG16}
% Jiang, R. and Guan, Y. (2016). Data-driven chance constrained stochastic program. \emph{Mathematical Programming}, \textbf{158}(1), 291--327.

% \bibitem[\protect\citeauthoryear{Kantorovich}{1942}]{K42} Kantorovich, L.V.  (1942). On the translocation of masses. \emph{Doklady Akademii Nauk USSR}, 37:199--201.

\bibitem[\protect\citeauthoryear{Krokhmal}{2007}]{K07}
Krokhmal, P. (2007). Higher moment coherent risk measures. \emph{Quantitative Finance}, \textbf{7}(4), 373--387.

\bibitem[\protect\citeauthoryear{Kuhn et al.}{2019}]{K19}
Kuhn, D., Esfahani, P. M., Nguyen, V. A. and Shafieezadeh-Abadeh, S. (2019). Wasserstein distributionally robust optimization: theory and applications in machine learning. \emph{INFORMS TutORials in Operations Research}, 130--166.



%\bibitem[\protect\citeauthoryear{Jagannathan}{1977}]{J77}
%Jagannathan, R. (1977). Minimax procedure for a class of linear programs under uncertainty. \emph{Operations Research}, \textbf{25}(1), 173--177.


%\bibitem[\protect\citeauthoryear{Jia et al.}{2020}]{J20}
%Jia, G., Xia, J. and Zhao, R. (2020). Monetary risk measures. \emph{arXiv}: 2012.06751.




%
%\bibitem[\protect\citeauthoryear{Kou et al.}{2013}]{KPH13}
%Kou, S., Peng, X. and Heyde, C.C. (2013).
%External risk measures and Basel accords.
%\textit{Mathematics of Operations Research}, \textbf{38}(3), 393--417.
%



%\bibitem[\protect\citeauthoryear{Jouini et al.}{2006}]{J06}
%Jouini, E., Schachermayer, W. and Touzi, N. (2006). Law invariant risk measures have the Fatou property. \emph{Advances in Mathematical Economics}, 49--71.


%\bibitem[\protect\citeauthoryear{Kaina and R\"{u}schendorf}{2009}]{KR09}
%Kaina, M. and R\"{u}schendorf, L. (2009). On convex risk measures on $L^p$-spaces. \emph{Mathematical Methods of Operations Research}, \textbf{69}(3), 475--495.




%\bibitem[\protect\citeauthoryear{Kertz and R\"{o}sler}{2000}]{KR00}
%Kertz, R. P. and R\"{o}sler, U. (2000). Complete lattices of probability measures with applications to martingale theory. \emph{IMS Lecture Notes Monograph Series}, \textbf{35}, 153-177.

%	\bibitem[\protect\citeauthoryear{Klibanoff et al.}{2005}]{KMM05}
%	Klibanoff, P., Marinacci, M. and Mukerji, S. (2005).
%	A smooth model of decision making under uncertainty. \emph{Econometrica}, \textbf{73}(6), 1849--1892.


%\bibitem[\protect\citeauthoryear{Kou and Peng}{2016}]{KP16}
%{Kou, S. and Peng, X.} (2016). On the measurement of economic tail risk. \emph{Operations Research}, \textbf{64}(5), 1056--1072.



%\bibitem[\protect\citeauthoryear{Kupper and Zapata}{2021}]{KZ20}
%Kupper, M. and  Zapata, J. M. (2021). Large deviations built on max-stability. \emph{Bernoulli}, \textbf{27}(2), 1001--1027.



%\bibitem[\protect\citeauthoryear{Kusuoka}{2001}]{K01}
%Kusuoka, S. (2001). On law invariant coherent risk measures. \emph{Advances in Mathematical Economics}. \textbf{3}, 83--95.

%
%\bibitem[\protect\citeauthoryear{Kr\"atschmer, Schied and
	%  Z\"ahle}{Kr\"atschmer et~al.}{2014}]{KSZ14}
%Kr\"atschmer, V., Schied, A. and ~Z\"ahle, H. (2014).
%  Comparative and quantitiative robustness for law-invariant risk
%  measures.
%  \emph{Finance and Stochastics}, \textbf{18}(2), 271--295.
%
%
%
%\bibitem[\protect\citeauthoryear{Kusuoka}{Kusuoka}{2001}]{K01}
%{Kusuoka, S.} (2001). On law invariant coherent risk measures. \emph{Advances in Mathematical Economics}, \textbf{3}, 83--95.


%
%\bibitem[\protect\citeauthoryear{Liu and Wang}{2017}]{LW17}
%Liu, H. and Wang, R. (2017).
%Collective risk models with dependence uncertainty.
% {\em ASTIN Bulletin}, \textbf{47}, 361--389.


%\bibitem[\protect\citeauthoryear{Li et al.}{2018}]{Li18}
%Li, L., Shao, H., Wang, R. and Yang, J. (2018). Worst-case range value-at-risk with partial information. \emph{SIAM Journal on Financial Mathematics}. \textbf{9}(1), 190--218.


%\bibitem[\protect\citeauthoryear{Li}{2018}]{L18}
%Li, Y. M. J. (2018). Technical note: closed-form solutions for worst-case law invariant risk measures with
%application to robust portfolio optimization. \emph{Operations Research}, \textbf{66}(6), 1533--1541.



%\bibitem[\protect\citeauthoryear{Liu et al.}{2022}]{LMWW21}
%Liu, F., Mao, T., Wang, R. and Wei, L. (2022). Inf-convolution, optimal allocations, and model uncertainty for tail risk measures. \emph{Mathematics of Operations Research}, forthcoming.

%\bibitem[\protect\citeauthoryear{Liu and Wang}{2021}]{LW20}
%Liu, F. and Wang, R. (2021). A theory for measures of tail risk.  \emph{Mathematics of Operations Research},  \textbf{46}(3), 1109--1128.

%\bibitem[\protect\citeauthoryear{Liu et al.}{2021}]{LSW20}
% Liu, P. and Schied, A. and Wang, R. (2021). Distributional transforms, probability distortions, and their applications. \textit{Mathematics of Operations Research}, \textbf{46}(4), 1490--1512.





%\bibitem[\protect\citeauthoryear{Lo}{1987}]{L87}
%Lo, A. (1987). Semiparametric upper bounds for option prices and expected
%payoffs. \emph{Journal of Financial Economics}, \textbf{19}(2), 373--388.

%\bibitem[\protect\citeauthoryear{Mao and Wang}{2020}]{MW20}
%Mao, T.  and Wang, R.  (2020). Risk aversion in regulatory capital principles.\emph{ SIAM Journal on Financial Mathematics}, \textbf{11}(1), 169-200.

\bibitem[\protect\citeauthoryear{Lee and Mangasarian}{2001}]{LM01}
Lee, Y. J., and Mangasarian (2001). SSVM: A smooth support vector machine for classification. \emph{Computational Optimization and Applications}, \textbf{20}, 5--22.

\bibitem[\protect\citeauthoryear{Lee et al.}{2005}]{LHH05}
Lee, Y. J., Hsieh, W. F. and Huang, C. M. (2005). $\epsilon$-SSVR: A smooth support vector machine for $\epsilon$-insensitive regression.
\emph{IEEE Transactions on Knowledge and Data Engineering}, \textbf{17}(5), 678--685.

% \bibitem[\protect\citeauthoryear{Li}{2018}]{L18}
% Li, J. Y. M. (2018). Closed-form solutions for worst-Case law invariant risk measures with application to robust portfolio optimization. \emph{Operations Research}, \textbf{66}(6), 1533--1541.

%\bibitem[\protect\citeauthoryear{Mao et al.}{2022}]{MWW22}
%Mao, T.,  Wang, R. and Wu, Q. (2022). Model Aggregation for Risk Evaluation and Robust Optimization. \emph{http://arxiv.org/abs/2201.06370}

%\bibitem[\protect\citeauthoryear{Markowitz}{Markowitz}{1952}]{M52}
%Markowitz, H. (1952). Portfolio selection, \emph{Journal of Finance}, \textbf{7}(1), 77--91.

% \bibitem[\protect\citeauthoryear{Massart}{1990}]{M90}
% Massart, P. (1990). The tight constant in the Dvoretzky-Kiefer-Wolfowitz inequality. \emph{The annals of Probability}, \textbf{18}(3), 1269--1283.


\bibitem[\protect\citeauthoryear{Mohri et al.}{2018}]{MRT18}
Mohri, M., Rostamizadeh, A. and Talwalkar, A. (2018). \emph{Foundations of machine learning}. MIT press.




\bibitem[\protect\citeauthoryear{Olea et al.}{2022}]{ORVW22}
Olea, J. L. M., Rush, C., Velez, A. and Wiesel, J. (2022). The out-of-sample prediction error of the square-root-LASSO and related estimators. \emph{arXiv:2211.07608}.



% \bibitem[\protect\citeauthoryear{Postek et al.}{2016}]{PDB16}
% Postek, K., Den Hertog, D. and Melenberg, B. (2016). Computationally tractable counterparts of distributionally robust constraints on risk measures, \emph{SIAM Review}, \textbf{58}(4), 603--650.


\bibitem[\protect\citeauthoryear{Quiggin}{1982}]{Q82}Quiggin, J. (1982). A theory of anticipated utility. \emph{Journal of Economic Behavior \& Organization}, \textbf{3}(4), 323--343.


%\bibitem[\protect\citeauthoryear{Lorenz}{1905}]{L05}
%Lorenz, M.O. (1905).
%Methods of measuring the concentration of wealth.
%{\it  Journal of the American Statistical Association,} \textbf{9}(70), 209--219.

%
%\bibitem[\protect\citeauthoryear{Maccheroni, Marinacci and Rustichini}{Maccheroni et al.}{2006}]{MMR06}
%Maccheroni, F., Marinacci, M. and Rustichini, A. (2006). Ambiguity aversion, robustness, and the variational representation of preferences. \emph{Econometrica}, \textbf{74}(6), 1447--1498.

%\bibitem[\protect\citeauthoryear{Marinacci and Montrucchio}{Marinacci and Montrucchio}{2004}]{MM04}
%{Marinacci, M. and Montrucchio L.} (2004). Introduction to the mathematics of ambiguity.  In \emph{Uncertainty in Economic Theory}, I. Gilboa, Ed.  Routledge, New York, NY, USA. 46--107.





%\bibitem[\protect\citeauthoryear{McNeil et al.}{McNeil et al.}{2015}]{MFE15}
%{McNeil, A. J., Frey, R. and Embrechts, P.} (2015). \emph{Quantitative
	%Risk Management: Concepts, Techniques and Tools}. (Revised Edition.)
%Princeton University Press, Princeton, NJ.

%\bibitem[\protect\citeauthoryear{Millossovich et al.}{Millossovich et al.}{2021}]{MTW21}
%Millossovich, P., Tsanakas, A. and Wang, R.  (2021).
%  {A theory of multivariate stress testing}. \emph{SSRN}: 3966204.

%\bibitem[\protect\citeauthoryear{M\"uller et al.}{2017}]{MSTW17}
%M\"uller, A., Scarsini, M., Tsetlin, I. and Winkler. R. L. (2017).
%Between first and second-order stochastic dominance. \emph{Management Science}, \textbf{63}(9), 2933--2947.




%\bibitem[\protect\citeauthoryear{M\"{u}ller and Scarsini}{2006}]{SM06}
% M\"{u}ller, A. and Scarsini, M. (2006). Stochastic order relations and lattices of probability measures. \emph{SIAM Journal on Optimization}. \textbf{16}(4), 1024--1043.


%\bibitem[\protect\citeauthoryear{M\"uller et al.}{2021}]{M20}
%M\"{u}ller, A., Scarsini, M., Tsetlin, I. and Winkler, R. L. (2021). Ranking distributions when only means and variances are known. \emph{Operations Research}, forthcoming.


%\bibitem[\protect\citeauthoryear{M\"uller and Stoyan}{2002}]{MS02}
%M\"{u}ller, A. and Stoyan, D. (2002). \emph{Comparison Methods for Stochastic Models and Risks}. John Wiley \& Sons, Hoboken.
%
%
%\bibitem[\protect\citeauthoryear{Nelsen}{Nelsen}{2006}]{N06} {Nelsen, R.} (2006).
%\emph{An Introduction to Copulas}. (Second Edition.) Springer, New York.
%
%



%\bibitem[\protect\citeauthoryear{Natarajan et al.}{Natarajan et al.}{2008}]{NPS08}
%Natarajan, K., Pachamanova, D. and Sim, M. (2008). Incorporating asymmetric distributional information in robust value-at-risk optimization. \emph{Management Science}, \textbf{54}(3), 573--585.

%\bibitem[\protect\citeauthoryear{Natarajan et al.}{2010}]{N10}
%Natarajan, K., Sim, M. and Uichanco, J. (2010). Tractable robust expected
%utility and risk models for portfolio optimization. \emph{Mathematical Finance}, \textbf{20}, 695--731.

%\bibitem[\protect\citeauthoryear{Newey and Powell}{Newey and Powell}{1987}]{NP87}
%{Newey, W. K., Powell, J. L.} (1987).  Asymmetric least squares estimation and testing. \emph{Econometrica}, {\bf 55}(4), 819--847.



%\bibitem[\protect\citeauthoryear{Pflug et al.}{Pflug et al.}{2012}]{PPW12}
%Pflug, G. C., Pichler, A. and Wozabal, D. (2012). The $1/N$ investment strategy is optimal under high model ambiguity. \emph{Journal of Banking and Finance}, \textbf{36}(2), 410--417.

%\bibitem[\protect\citeauthoryear{Ogryczak and Ruszczynski}{2002}]{OR02}
%Ogryczak, W. and Ruszczynski, A. (2002). Dual stochastic dominance and related mean-risk models. \emph{Siam Journal on Optimization}, \textbf{13}(1), 60--78.



%\bibitem[\protect\citeauthoryear{Popescu}{2007}]{P07}
%Popescu, I. (2007). Robust mean-covariance solutions for stochastic optimization. \emph{Operations Research}, \textbf{55}(1), 98--112.


% \bibitem[\protect\citeauthoryear{Qi et al.}{2022}]{QCS22}
% Qi, M., Cao, Y. and Shen, Z. J. (2022). Distributionally robust conditional quantile prediction with fixed design. \emph{Management Science}, \textbf{68}(3), 1639--1658.





% \bibitem[\protect\citeauthoryear{Rockafellar}{1970}]{R70}
% Rockafellar, R. T. (1970). \emph{Convex analysis}. Princeton New Jersey: Princeton University Press.

\bibitem[\protect\citeauthoryear{Rockafellar and Uryasev}{2002}]{RU02}
Rockafellar, R. T. and Uryasev, S. (2002). Conditional value-at-risk for general loss distributions. \emph{Journal of
	Banking and Finance}, \textbf{26}(7), 1443--1471.

\bibitem[\protect\citeauthoryear{Rockafellar et al.}{2008}]{RUZ08}
Rockafellar, R. T., Uryasev, S. and Zabarankin, M. (2008). Risk tuning with generalized linear regression. \emph{Mathematics of Operations Research}, \textbf{33}(3), 712--729.

\bibitem[\protect\citeauthoryear{Rockafellar and Uryasev}{2013}]{RU13}
Rockafellar, R. T. and Uryasev, S. (2013). The fundamental risk quadrangle in risk management, optimization and statistical estimation. \emph{Surveys in Operations Research and Management Science}, {\bf{18}}(1-2), 33--53.


\bibitem[\protect\citeauthoryear{Sch\"olkopf et al.}{1998}]{S98}
Sch\"olkopf, B., Bartlett, P., Smola, A. and Williamson, R. (1998). Support vector regression with automatic accuracy control. \emph{In ICANN 98; Springer: Heidelberg, Germany}, 111--116.

\bibitem[\protect\citeauthoryear{Sch\"olkopf et al.}{2000}]{SSWB00}
Sch\"olkopf, B., Smola, A. J., Williamson, R. C. and Bartlett, P. L. (2000). New support vector algorithms.
\emph{Neural Computation}, \textbf{12}(5), 1207--1245.

% \bibitem[\protect\citeauthoryear{Shafieezadeh-Abadeh et al.}{2015}]{SEK15}
% Shafieezadeh-Abadeh, S., Esfahanim P. M. and Kuhn, D. (2015). Distributionally robust logistic regression. \emph{Advances in Neural Information Processing Systems}, 1576--1584.

\bibitem[\protect\citeauthoryear{Schmeidler}{1989}]{S89}
 Schmeidler, D. (1989). Subjective probability and expected utility without additivity. \emph{Econometrica}, \textbf{57}(3), 571--587.



\bibitem[\protect\citeauthoryear{Shafieezadeh-Abadeh et al.}{2023}]{SADK23} Shafieezadeh-Abadeh, S., Aolaritei, L., D\"orfler, F. and  Kuhn, D. (2023). New perspectives on regularization and computation in optimal transport-based distributionally robust optimization. \emph{arXiv:2303.03900.}



\bibitem[\protect\citeauthoryear{Shafieezadeh-Abadeh et al.}{2019}]{SKE19}
Shafieezadeh-Abadeh, S., Kuhn, D. and Esfahani, P. M. (2019). Regularization via mass transportation. \emph{Journal of Machine Learning Research}, \textbf{20}, 1--68.




\bibitem[\protect\citeauthoryear{Shalev-Shwartz and Ben-David}{2014}]{SB14}
Shalev-Shwartz, S. and Ben-David, S. (2014). \emph{Understanding Machine Learning: From Theory to Algorithms.}
Cambridge University Press, Cambridge.


% \bibitem[\protect\citeauthoryear{Shapiro et al.}{2021}]{SDR21}
% Shapiro, A., Dentcheva, D. and Ruszczynski, A. (2021). \emph{Lectures on stochastic programming: modeling and theory}. Society for Industrial and Applied Mathematics.





\bibitem[\protect\citeauthoryear{Sim et al.}{2021}]{SZZ21}
Sim, M., Zhao, L. and Zhou, M. (2021). Tractable robust supervised learning models.
\emph{SSRN:3981205}.
%\bibitem[\protect\citeauthoryear{Sim et al.}{2021}]{S21}
%Sim, M., Zhao, L., Zhou, M. (2021). Tractable robust supervised learning models.
%https://ssrn.com/abstract=3981205

%\bibitem[\protect\citeauthoryear{Quiggin}{1993}]{Q93}
%Quiggin, J. (1993).  \emph{Generalized Expected Utility Theory: The Rank-dependent Model.}  Kluwer, the Netherlands.


%\bibitem[\protect\citeauthoryear{Rockafellar and Uryasev}{2002}]{RU02}
%Rockafellar, R.T. and Uryasev, S.  (2002). Conditional value-at-risk for general loss distributions. \emph{Journal of Banking and Finance}, \textbf{26}(7), 1443--1471.

%\bibitem[\protect\citeauthoryear{Rothschild and Stiglitz}{1970}]{RS70}
%Rothschild, M. and Stiglitz, J. (1970). Increasing risk I. A definition. \emph{Journal of Economic Theory}, \textbf{2}(3),
%225--243.

%\bibitem[\protect\citeauthoryear{R{\"u}schendorf}{R{\"u}schendorf}{2013}]{R13}
%R{\"u}schendorf, L. (2013).
%  {\em Mathematical Risk Analysis. Dependence, Risk Bounds, Optimal
	%  Allocations and Portfolios}.
%  Springer, Heidelberg.






%\bibitem[\protect\citeauthoryear{Scarf}{1958}]{S58}
%Scarf, H. (1958). A min-max solution of an inventory problem. K. J. Arrow, S. Karlin and H. E. Scarf eds. \emph{Studies in the Mathematical Theory of
	%Inventory and Production}. Stanford University Press, Stanford, CA,
%201--209.




%\bibitem[\protect\citeauthoryear{Savage}{1954}]{S54}
%Savage, L.J. (1954).
%\emph{The Foundations of Statistics.}
%Wiley, New York.

%\bibitem[\protect\citeauthoryear{Scaillet}{2004}]{S04}
%Scaillet, O. (2004). Nonparametric estimation and sensitivity analysis of Expected Shortfall. \emph{Mathematical Finance}, \textbf{14}(1), 115--129.


%\bibitem[\protect\citeauthoryear{Schmeidler}{1989}]{S89}
%Schmeidler, D. (1989).
%Subjective probability and expected utility without additivity.
%\textit{Econometrica}, \textbf{57}(3), 571--587.


%\bibitem[\protect\citeauthoryear{Sen}{1997}]{S97}
%Sen, A. (1997).
%\textit{On Economic Inequality.}
%(Expanded edition with a substantial annexe by J.E.~Foster and A.~Sen.)
%Clarendon Press, Oxford.

%
%\bibitem[\protect\citeauthoryear{Silber}{1999}]{S99}
%Silber, J. (1999).
%{\it Handbook on Income Inequality Measurement.}
%Kluwer Academic Publishers, Boston.


%
%\bibitem[\protect\citeauthoryear{Sklar}{1959}]{S59}
%Sklar, A. (1959).
%Fonctions de r\'{e}partition \`{a} $n$ dimensions et leurs marges.
%\textit{Publications de l'Institut Statistique de l'Universit\'{e} de Paris}, \textbf{8}, 229--231.



%\bibitem[\protect\citeauthoryear{M\"{u}ller and Scarsini}{2006}]{SM06}
% M\"{u}ller, A. and Scarsini, M. (2006). Stochastic order relations and lattices of probability measures. \emph{SIAM Jounal on Optimization}. \textbf{16}(4), 1024--1043.



%\bibitem[\protect\citeauthoryear{Shaked and Shanthikumar}{Shaked and Shanthikumar}{2007}]{SS07} Shaked, M. and Shanthikumar, J. G. (2007). \emph{Stochastic Orders}. Springer Series in Statistics.


% \bibitem[\protect\citeauthoryear{Sion}{Sion}{1958}]{S58}
% Sion, M. (1958). On general minimax theorems. \emph{Pacific Journal of Mathematics}, \textbf{8}(1), 171--176.



\bibitem[\protect\citeauthoryear{Suykens and Vandewalle}{1999}]{SV99}
Suykens, J. and Vandewalle, J. (1999). Least squares support vector machine classifiers.
\emph{Neural Processing Letters}, \textbf{9}, 293--300.



%\bibitem[\protect\citeauthoryear{Sion}{Sion}{1958}]{S58}
%Sion, M. (1958). On general minimax theorems. \emph{Pacific Journal of Mathematics}, \textbf{8}(1), 171--176.




% \bibitem[\protect\citeauthoryear{Villani}{Villani}{2009}]{V09}
% Villani, C. (2009). \emph{Optimal Transport: Old and New} (Vol. 338, p. 23). Berlin: springer.


\bibitem[\protect\citeauthoryear{Volpi et al.}{2018}]{VNSDMS18} Volpi, R., Namkoong, H., Sener, O., Duchi, J. C., Murino, V. and Savarese, S. (2018). Generalizing to unseen domains via adversarial data augmentation. \emph{Advances in neural information processing systems}, \textbf{31}, 5334--5344.
%\bibitem[\protect\citeauthoryear{Villani}{Villani}{2021}]{V21}
%Villani, C. (2021). \emph{Topics in optimal transportation} (Vol. 58). American Mathematical Society.

%\bibitem[\protect\citeauthoryear{von-Nermann and  Morgrnstern}{1947}]{NM47}
%von-Neumann, J. and Morgenstern, O. (1947). \emph{Theory of Games and Economic Behavior}, New Jersey: Princeton University Press.


%\bibitem[\protect\citeauthoryear{Wang}{Wang}{1995}]{W95}
%Wang, S. (1995). Insurance pricing and increased limits ratemaking by proportional hazards transforms. \emph{Insurance: Mathematics and Economics}, \textbf{17}(1), 43--54.


%\bibitem[\protect\citeauthoryear{Wang, Peng and Yang}{Wang
	%  et~al.}{2013}]{WPY13}
%Wang, R., Peng, L. and Yang, J. (2013).
% Bounds for the sum of dependent risks and worst Value-at-Risk with
%  monotone marginal densities.
% {\em Finance and Stochastics}, \textbf{17}(2), 395--417.


\bibitem[\protect\citeauthoryear{Wainwright}{Wainwright}{2019}]{W19}
Wainwright, M. J. (2019). \emph{High-Dimensional Statistics: A Non-asymptotic Viewpoint}. Volume 48 (Cambridge University Press).



\bibitem[\protect\citeauthoryear{Wang et al.}{Wang et al.}{2020}]{WWW20}
Wang, R., Wei, Y. and Willmot, G. E. (2020). Characterization, robustness and aggregation of signed Choquet integrals. \emph{Mathematics of Operations Research}, \textbf{45}(3), 993--1015.

%\bibitem[\protect\citeauthoryear{Wirch and Hardy}{Wirch and Hardy}{2001}]{WH01}
%Wirch, J. L. and Hardy, M. R. (2001). Distortion risk measures. coherence and stochastic dominance. \emph{Insurance Mathematics and Economics}, \textbf{32}(1), 168--168.

%\bibitem[\protect\citeauthoryear{Wang and Ziegel}{2021}]{WZ18}
% Wang, R., and Ziegel, J.~F. (2021). Scenario-based risk
%  evaluation.  \emph{Finance and Stochastics}, \textbf{25}, 725--756.

%\bibitem[\protect\citeauthoryear{Wang and Zitikis}{2021}]{WZ20}
% Wang, R. and Zitikis, R. (2021). An axiomatic foundation for the Expected Shortfall. \emph{Management Science}, \textbf{67}, 1413--1429.


%\bibitem[\protect\citeauthoryear{Wiesemann et al.}{Wiesemann et al.}{2014}]{WKS14}
%		Wiesemann, W., Kuhn, D. and Sim, M. (2014). Distributionally robust convex optimization. \emph{Operations Research}, \textbf{62}(6), 1203--1466.


\bibitem[\protect\citeauthoryear{Wozabal}{Wozabal}{2014}]{W14}
Wozabal, D. (2014). Robustifying convex risk measures for linear portfolios: A nonparametric approach. \emph{Operations Research}, \textbf{62}(6), 1302--1315.

%\bibitem[\protect\citeauthoryear{Wodzicki}{2013}]{W13}
%Wodzicki, M. (2013). Notes on ordered sets. Available at
%\url{https://math.berkeley.edu/~wodzicki/H104.F13/OrderedSets.pdf}.



\bibitem[\protect\citeauthoryear{Yaari}{1987}]{Y87}
Yaari, M.E. (1987).
The dual theory of choice under risk.
\textit{Econometrica}, \textbf{55}(1), 95--115.



\bibitem[\protect\citeauthoryear{Zhang et al.}{2024}]{ZYG24}
Zhang, L., Yang, J. and Gao, R. (2024). Optimal robust policy for feature-based newsvendor. \emph{Management Science}, \textbf{70}(4), 2315--2329.


%
%
%
%\bibitem[\protect\citeauthoryear{Ziegel}{2016}]{Z2016}
%Ziegel, J.F.
%Coherence and elicitability.
%\textit{Mathematical Finance}, \textbf{26}(4), 901--918.
%\bibitem[\protect\citeauthoryear{Yarri}{1987}]{Y87}
%Yaari, M. E. (1987). The dual theory of choice under risk. \emph{Econometrica}, \textbf{55}, 95--115.


%\bibitem[\protect\citeauthoryear{Zhao and Guan}{2018}]{ZG18}
%Zhao, C. and Guan, Y. (2018). Data-driven risk-averse stochastic optimization with Wasserstein metric. \emph{Operations Research Letters}, \textbf{46}(2), 262--267.

%\bibitem[\protect\citeauthoryear{Zhu and Fukushima}{2009}]{ZF09}
%Zhu, M. S. and Fukushima, M. (2009). Worst-case conditional value-at-risk with application to robust portfolio management. \emph{Operations Research}, \textbf{57}(5), 1155--1168.

\end{thebibliography}

\begin{thebibliography}{}


\bibitem[\protect\citeauthoryear{B\"{a}uerle and M\"{u}ller}{B\"{a}uerle and M\"{u}ller}{2006}]{BB06}
B\"{a}uerle, N. and M\"{u}ller, A. (2006). Stochastic orders and risk measures: Consistency and bounds. \emph{Insurance: Mathematics and Economics}, \textbf{38}(1), 132--148.


% \bibitem[\protect\citeauthoryear{Bobkov and G\"{o}tze}{Bobkov and G\"{o}tze}{1999}]{BG99}
% Bobkov, S. G. and G\"{o}tze, F. (1999). Exponential integrability and transportation cost related to logarithmic Sobolev inequalities. \emph{Journal of Functional Analysis}, \textbf{163}(1), 1--28.


\bibitem[\protect\citeauthoryear{Bobkov and Ledoux}{Bobkov and Ledoux}{2019}]{BL19}
Bobkov, S. and Ledoux, M. (2019). \emph{One-Dimensional Empirical Measures, Order Statistics, and Kantorovich Transport Distances}. American Mathematical Society.


% \bibitem[\protect\citeauthoryear{Bolley et al.}{Bolley et al.}{2007}]{BGV07}
% Bolley, F., Guillin, A. and Villani, C. (2007). Quantitative concentration inequalities for empirical measures on non-compact spaces. \emph{Probability Theory and Related Fields}, \textbf{137}, 541--593.


\bibitem[\protect\citeauthoryear{Dhaene et al.}{2002}]{D02}
Dhaene, J., Denuit, M., Goovaerts, M.J., Kaas, R. and Vyncke, D. (2002). The concept of comonotonicity in actuarial science and finance: Theory.
\emph{Insurance: Mathematics and Economics}. \textbf{31}(2), 3--33.

\bibitem[\protect\citeauthoryear{F\"ollmer and Schied}{F\"ollmer and Schied}{2016}]{FS16} F\"ollmer, H.~and Schied, A.~(2016). \emph{Stochastic Finance. An Introduction in Discrete Time}. Fourth Edition.  {Walter de Gruyter, Berlin}.



\bibitem[\protect\citeauthoryear{Fournier and Guillin}{Fournier and Guillin}{2015}]{FG15}
Fournier, N. and Guillin, A. (2015). On the rate of convergence in Wasserstein distance of the empirical measure. \emph{Probability Theory and Related Fields}, \textbf{162}(3), 707--738.





\bibitem[\protect\citeauthoryear{Kallenberg}{Kallenberg}{2021}]{K21}
Kallenberg, O. (2021). \emph{Foundations of Modern Probability.} Third Edition. New York: springer.

 

\bibitem[\protect\citeauthoryear{Mao et al.}{2022}]{MWW22}
Mao, T.,  Wang, R. and Wu, Q. (2022). Model aggregation for risk evaluation and robust optimization. \emph{arXiv:2201.06370}.



% \bibitem[\protect\citeauthoryear{Mohri et al.}{2018}]{MRT18}
% Mohri, M., Rostamizadeh, A. and Talwalkar, A. (2018). \emph{Foundations of machine learning}. MIT press.




\bibitem[\protect\citeauthoryear{Rieder}{Rieder}{1978}]{R78}
Rieder, U. (1978). Measurable selection theorems for optimization problems. \emph{manuscripta mathematica}, \textbf{24}(1), 115--131.



\bibitem[\protect\citeauthoryear{Shaked and Shanthikumar}{2007}]{SS07}
Shaked, M. and Shanthikumar, J. G. (2007). \emph{Stochastic Orders}. Springer New York.


\bibitem[\protect\citeauthoryear{Sion}{Sion}{1958}]{S58}
Sion, M. (1958). On general minimax theorems. \emph{Pacific Journal of Mathematics}, \textbf{8}(1), 171--176.

\bibitem[\protect\citeauthoryear{Smith} {1995}]{S95} Smith, J. E. (1995). Generalized Chebychev inequalities: theory and applications in decision analysis. \emph{Operations Research}, \textbf{43}(5), 807--825.


\bibitem[\protect\citeauthoryear{Villani}{Villani}{2009}]{V09}
Villani, C. (2009). \emph{Optimal Transport: Old and New}. Berlin: springer.



\bibitem[\protect\citeauthoryear{Wang et al.}{Wang et al.}{2020}]{WWW20}
Wang, R., Wei, Y. and Willmot, G. E. (2020). Characterization, robustness and aggregation of signed Choquet integrals. \emph{Mathematics of Operations Research}, \textbf{45}(3), 993--1015.





\end{thebibliography}

\end{document}